\pgfplotsset{compat=1.11}
\tikzset{>={Latex[length=4,width=4]}} 
\newenvironment{qaquote}{\begin{quotation}\hspace*{-1\parindent}\setlength{\parindent}{0em}\relax}{\end{quotation}}
\colorlet{mylightblue}{blue!20}
\colorlet{myblue}{blue!50!black}
\colorlet{mydarkblue}{blue!30!black}
\colorlet{mylightred}{red!10}
\colorlet{myred}{red!50!black}
\colorlet{mydarkred}{red!60!black}
\colorlet{mydarkgreen}{green!30!black}
\tikzset{
  midarr/.style={decoration={markings,mark=at position #1 with {\arrow{stealth}}},postaction={decorate}},
  midarr/.default=0.5
}
\newcommand{\Bern}{\mathbf{Bern}}
\newcommand{\sgn}{\mathrm{sgn}}
\newcommand{\normal}{\mathrm{N}}
\newcommand{\cone}{\mathrm{cone}}
\newcommand{\bone}{\boldsymbol{1}}
\newcommand{\balpha}{\boldsymbol{\alpha}}
\newcommand{\interior}{\mathbf{int}}
\newcommand{\eigmax}{\mathrm{eigmax}}
\DeclareMathSymbol{\mydash}{\mathord}{AMSa}{"39} 
\renewcommand{\xi}{\chi}
\begin{document}
%
\title{Overparameterized ReLU Neural Networks Learn the Simplest Model: Neural Isometry and Phase Transitions}
%
%
%
\author{Yifei~Wang$^1$,
       Yixuan~Hua$^2$,
       Emmanuel~J.~Cand\`es$^3$,
       Mert~Pilanci$^1$~\IEEEmembership{Member,~IEEE}
\thanks{$^1$Yifei~Wang and Mert~Pilanci are with the Department
of Electrical Engineering, Stanford University, Stanford, CA USA. E-mail: \texttt{\{wangyf18,pilanci\}@stanford.edu}}
\thanks{$^2$Yixuan~Hua is with the Department
of Operations Research \& Financial Engineering, Princeton University, Princeton, NJ USA.}
\thanks{$^3$Emmanuel~J.~Cand\`es is with the Department
of Statistics and Department of Mathematics, Stanford University, Stanford, CA USA.}%
}

\maketitle

\begin{abstract}
The practice of deep learning has shown that neural networks generalize remarkably well even with an extreme number of learned parameters. This appears to contradict traditional statistical wisdom, in which a trade-off between model complexity and fit to the data is essential. We aim to address this discrepancy by adopting a convex optimization and sparse recovery perspective. We consider the training and generalization properties of two-layer ReLU networks with standard weight decay regularization. Under certain regularity assumptions on the data, we show that ReLU networks with an arbitrary number of parameters learn only simple models that explain the data. This is analogous to the recovery of the sparsest linear model in compressed sensing. For ReLU networks and their variants with skip connections or normalization layers, we present isometry conditions that ensure the exact recovery of planted neurons. For randomly generated data, we show the existence of a phase transition in recovering planted neural network models, which is easy to describe: whenever the ratio between the number of samples and the dimension exceeds a numerical threshold, the recovery succeeds with high probability; otherwise, it fails with high probability. Surprisingly, ReLU networks learn simple and sparse models that generalize well even when the labels are noisy . The phase transition phenomenon is confirmed through numerical experiments. 

\end{abstract}

\begin{IEEEkeywords}
neural networks, deep learning, convex optimization, sparse recovery, compressed sensing, $\ell_1$ minimization, Lasso.
\end{IEEEkeywords}

%
\IEEEpeerreviewmaketitle

\section{Introduction}
%
%
%
%
\IEEEPARstart{R}{}ecent work has shown that neural networks (NNs) exhibit extraordinary generalization abilities in many machine learning tasks. Although NNs employed in practice are often over-parameterized, meaning the number of parameters exceeds the sample size, they generalize to unseen data and perform well.
In this work, we study the problem of generalization in such over-parameterized models from a convex optimization and sparse recovery perspective. We are interested in the setting where a NN with an arbitrary number of neurons is trained on datasets that have a simple structure, for instance, when the label vector is the output of a simple NN, which may have additive noise. A natural question arises: under which conditions, a neural network with an arbitrary number of neurons, in which the number of trainable parameters can be quite large, recovers the planted model exactly and thus achieve perfect generalization? We uncover a sharp phase transition in the behavior of NNs in the recovery of simple planted models. Our results imply that the weight decay regularization solely controls whether the NN recovers the underlying simple model planted in the data or fails by overfitting a more complex model, regardless of the number of parameters in the NN.

Our findings are close in spirit to classical results on sparse recovery and compressed sensing. It is known that there exists a sharp phase transition in recovering a sparse planted vector from random linear measurements. To be specific, the probability of successful recovery will be close to one when the sample number exceeds a certain threshold. Otherwise, the probability of recovery is close to zero. This can be shown by analyzing the intersection probability of a convex cone with a random subspace, which undergoes a sharp phase transition as the statistical dimension of the cone changes with respect to the ambient dimension \cite{amelunxen2014living}. By leveraging recently discovered connections between ReLU NNs and Group Lasso models \cite{nnacr,ergen2020convex,ergen2021global,wang2022convexgeometryofbp,wang2020hidden}, we show that a calculation involving statistical dimensions of convex cones implies a phase transition in two-layer ReLU NNs for recovering simple planted models. 

In addition, we consider deterministic training data and derive analogues of the \emph{irrepresentability condition} \citep{zhao2006model} and \emph{Restricted Isometry Property} \cite{candes2005decoding,candes2008restricted}, which play an important role in the recovery of sparse linear models. We develop the notion of \emph{Neural Isometry Conditions} to characterize non-random training data that allow exact recovery of planted neurons. We further show that random i.i.d. Gaussian, sub-Gaussian, and Haar distributed random matrices satisfy Neural Isometry Conditions with high probability when the number of samples is sufficiently high. The random Gaussian data assumption is widely used in practice. For instance, the subproblem in training diffusion models \cite{sohl2015deep,ho2020denoising} is equivalent to learn the function over Gaussian random data.


Although neural networks lead to non-convex optimization problems which are challenging to analyze, a recent line of work \citep{nnacr, ergen2020convex, ergen2021global} showed that regularized training problems of multilayer ReLU networks can be reformulated as convex programs. Based on the convex optimization formulations, \citep{wang2020hidden} further gives the exact characterizations of all global optima in two-layer ReLU networks. More precisely, it was shown that all globally optimal solutions of the nonconvex training problem are given by the solution set of a simple convex program up to permutation and splitting. In other words, we can find the set of optimal NNs for the regularized training problem by solving a convex optimization problem. The convex optimization formulations of NNs were also extended to NNs with batch normalization layers \citep{ergen2021demystifying}, convolutional NNs (CNNs) \citep{ ergen2020implicit, ergen2021global}, polynomial activation networks \citep{bartan2021neural}, transformers with self-attention layers \cite{sahiner2022attention} and Generative Adversarial Networks (GANs) \cite{sahiner2021hidden}.



%

\begin{table*}[!t]
    \centering
    \begin{tabular}{|c|c|c|c|c|c|}
    \hline
    \multirow{2}*{Model}&\multirow{2}*{Data}&\multicolumn{2}{c|}{Success} & \multirow{2}*{Failure} & \multirow{2}*{Results} \\\cline{3-4}
    &&strong recovery&weak recovery&&\\\hline
         \multirow{2}*{linear }& Haar& $n>7.613 d$ &$n>2d$ &$n<2d$ & Thm. \ref{thm:relu_skip_phase} Thm. \ref{thm:white_strong} \\\cline{2-6}
         &Gaussian&$n>O(d\log(n))$&$n>2d$&$n<2d$&Thm. \ref{thm:gauss_success}, Thm. \ref{prop:skip_gauss_success}\\\hline
         ReLU-normalized &Gaussian&-&$n>2d$&-&Thm. \ref{thm:relu_normal}\\\hline
         ReLU &Gaussian&-&$n\to\infty$&-&Prop. \ref{prop:asymp_relu}\\\hline
         two ReLUs-normalized &Gaussian&-&$n\to\infty$&-& Prop. \ref{prop:asymp_two_neu}\\\hline
         $k$ ReLUs &-&-&Neural Isometry Condition \eqref{eq:irr_cond}&-& Prop. \ref{prop:imply_multi}\\\hline
         $k$ ReLUs-normalized &-&-&-&-& Prop. \ref{prop:imply_multi_normal}\\\hline
         noisy linear model &sub-Gaussian& -&$n>O(d\,\log(n))$ &-& Thm. \ref{thm:noisy}\\\hline
    \end{tabular}
    \caption{Summary of our results for the success or failure of the recovery of planted neurons. Gaussian data refers to training matrices with i.i.d. standard Gaussian entries, Haar data refers to uniformly drawn training matrices from the set of orthogonal matrices. Strong recovery refers to the recovery of all possible planted neurons, whereas weak recovery refers to the recovery of fixed neurons.}
    \label{tab:summary of results}
\end{table*}

We study recovery properties of optimal two-layer ReLU neural networks by considering their equivalent convex formulations and leveraging connections to sparse recovery and compressed sensing. We also consider variants of the basic two-layer architecture with skip connections and normalization layers, which are basic building blocks of modern DNNs such as ResNets \cite{resnet}. 
We show the existence of a sharp phase transition in the recovery of simple models via ReLU NNs. To be more specific, for a data matrix $\mfX\in\mbR^{n\times d}$, there exists a critical threshold for the ratio $n/d$, above which 
a planted network with few neurons will be the unique optimal solution of the convex program of two-layer networks with probability close to $1$, as long as $n$ and $d$ are moderately high. Otherwise, this probability will be close to $0$. The same conclusion applies to the non-convex training of the two-layer network with any number of neurons, up to permutation and neuron splitting (see Appendix \ref{sec:permutation}).  Moreover, our results highlight the importance of skip-connections and normalization layers in the success of recovery as we show in Section \ref{sec:relu_normal}.  We also provide deterministic isometry conditions that guarantee the recovery of an arbitrary number of ReLU neurons. We summarize these results in Table \ref{tab:summary of results}. 

\subsection{Related works}
Recent work have investigated linearized neural network models trained with gradient descent from a kernel based learning perspective \cite{Jacot2018, Du2018, allen2018learning, liu2020linearity}. When the width of the neural network approaches infinity, it is known that NNs can fit all the training data \cite{zou2018stochastic, allen2019convergence, allen2019learning}. However, in this regime, the analysis shows that almost no hidden neurons move from their initial values to learn features \cite{Chizat2018}.
Further experiments also confirm that infinite width limits and linearized kernel approximations are insufficient to give an adequate explanation for the success of non-convex neural network models as its width goes to infinity \cite{arora2019fine}.
Due to the non-linear structure of neural networks and non-convexity of the training problem, only a few works consider the role of over-parameterization when the width of the neural network is finite \cite{chen2019much, neyshabur2018role, liu2020linearity}.

It was conjectured that models trained with simple iterative methods such as Stochastic Gradient Descent (SGD) approach flat local minima \citep{hochreiter1997flat, jastrzkebski2017three, wu2017towards} that generalize well. However, it was also shown that the behavior of the trained model heavily depends on the choice of the specific optimization algorithm and its hyper-parameters \citep{goyal2017accurate, hoffer2017train,jastrzkebski2017three,luo2020many}.

Developing algorithms for the recovery of planted two-layer neural networks has been studied in the literature. In \cite{ge2018learning, bakshi2019learning}, the authors design spectral methods for learning the weight matrices of a planted two-layer ReLU network with $k$ neurons. In contrast, our work studies the recovery properties of over-parameterized NNs with arbitrary many neurons that minimize the training objective and sheds light into the optimization landscape.  
\cite{tian2017analytical, soltanolkotabi2017learning, li2017convergence, zhang2019learning} analyze the recovery of two-layer ReLU neural networks using the gradient descent method, while \cite{zhong2017recovery} extends the analysis to other activation functions, including leaky ReLU. In comparison, our results apply to the case where neural networks have many more neurons than the planted model. 

\subsection{Notation}
We introduce notations used throughout the paper. We use the notation $[n]$ to represent the set $\{1,\dots,n\}$. We use the notation $\mbI(\cdot)$ for the $0$-$1$ valued indicator function which takes the value $1$ when its argument is a true logical statement and 0 otherwise. We reserve boldcase lower-case letters for vectors, boldcase upper-case letters for matrices and plain lower-case letters for scalars. For a vector $\mfw\in\mbR^d$, we use $\|\mfw\|_p:=\left( \sum_{i=1}^d |w_i|^{p} \right)^{1/p}$ to represent its $\ell_p$ norm. The notation $\cos\angle(\mfw,\mfv):=\frac{\mfw^T\mfv}{\|\mfw\|_2\|\mfv\|_2}\in[-1,1]$ represents the cosine angle between two vectors $\mfw\in\mathbb{R}^{d}$ and $\mfv\in\mathbb{R}^{d}$. For a matrix $\mfX\in\mbR^{n\times d}$, we use $\|\mfX\|_2$ to denote its operator norm. We use $\|\mfX\|_{\infty,\infty}$ for the matrix infinity norm, i.e., the elementwise maximum absolute value. We use $\mfI_n\in\mbR^{n\times n}$ to denote the identity matrix with size $n\times n$. We denote $\lambda_{\max}(\mfX)$ for the maximum eigenvalue of a symmetric matrix $\mfX$. Similarly, the notion $\eigmax(\mfX)$ represents the subspace spanned by the eigenvectors corresponding to the maximum eigenvalue for a symmetric matrix $\mfX$. We use the shorthand $\diag(x_1,...,x_n)$ to represent a diagonal matrix with entries $x_1,...,x_n$ on the diagonal. The notation $\mfx_i^\mathrm{row}\in\mbR^d$ and $\mfx_j^\mathrm{col}\in\mbR^n$ denoted the $i$-th row and $j$-th column of an $n\times d$ matrix $\mfX \in\mathbb{R}^{n\times d}$ respectively. We use the notation \textrm{poly}$(n)$ to denote a polynomial function of the variable $n$. We call a centered random variable $X$ sub-Gaussian with variance proxy $\sigma^2>0$ if it holds that $\mbE[e^{sX}\le e^{\frac{1}{2}\sigma^2s^2}]$ for any $s\in\mathbb{R}$.

\subsection{Organization}
In Section \ref{sec:preview}, we present a preview of our results for the special case of linear neuron recovery with ReLU NNs. We introduce variants of the ReLU NN architecture in Section \ref{sec:neural_recovery}. In Section \ref{sec:neu_iso} we develop deterministic conditions, termed \textbf{Neural Isometry Conditions}, for the recovery of linear and non-linear neurons using different NN architectures. In Section \ref{sec:relu_skip}, we investigate random ensembles of training data matrices, for which we show the existence of a sharp phase transition in satisfying these deterministic conditions via non-asymptotic probabilistic bounds. In Section \ref{sec:asymp}, we develop asymptotic results on the recovery probability for the case of multiple ReLU neurons. We present numerical simulations to corroborate our theoretical results in Section \ref{num_res:main}. We present our conclusions in Section \ref{sec:conclusion}. 
\section{A preview: an exact characterization of linear neuron recovery}\label{sec:preview}
In this section, we present a preview of our results on two-layer ReLU networks with skip connection in the special case of linear neuron recovery. We generalize our result to the recovery of nonlinear neurons using different architectures later in Sections \ref{sec:neu_iso}-\ref{sec:asymp}. We invite the reader to refer to Appendix \ref{app:review_linear} for the background on the isometry conditions in compressed sensing, which share important parallels with our analysis. We start with the following two-layer ReLU network model with a linear skip connection. 
\begin{equation*}
    f(\mfX;\Theta) =\mfX\mfw_{1,1}w_{2,1}+\sum_{i=2}^m (\mfX\mfw_{1,i})_+w_{2,i},
\end{equation*}
where $\Theta = \{\mfW_1,\mfw_2\}$ denotes trainable parameters including first layer weights $\mfW_1\in \mbR^{d\times m}$ and second layer weights $\mfw_2\in \mbR^{m}$. We will first consider the minimum norm interpolation problem
\begin{equation}
\min_{\Theta} \underbrace{\|\mfW_1\|_F^2+\|\mfw_2\|_2^2}_{\|\Theta\|_F^2}, \text{ s.t. } f(\mfX;\Theta)=\mfy. \label{prob:min_nrm_preview}
\end{equation}
where the objective $\|\Theta\|_F^2$ stands for the weight-decay regularization term.


\subsection{Hyperplane Arrangements}
We now introduce an important concept from combinatorial geometry called hyperplane arrangement patterns, in order to introduce convex optimization formulations of ReLU network training problems.
\begin{definition}[Diagonal Arrangement Patterns]\label{def:hyper}
We define $\{0,1\}$ valued diagonal matrices $\mfD_1,\dots,\mfD_p$ that contain an enumeration of the set of hyperplane arrangement patterns of the training data matrix $\mfX$ as follows. Let us define
\begin{equation}\label{equ:hyperplane}
    \mathcal{H}:=\Big \{\diag(\mbI(\mfX\mfh\geq 0))\,\,\big |\,\,\mfh\in\mbR^d,\mfh\neq 0 \Big \}.
\end{equation}
We call $\mfD_1,\dots,\mfD_p\in \mathcal{H}$, an enumeration of the elements of $\mathcal{H}$ in an arbitrary fixed order, \emph{diagonal arrangement patterns} associated with the training data matrix $\mfX$.
\end{definition}
The $\{0,1\}$ valued patterns on the diagonals of $\mfD_1,\dots,\mfD_p$ encodes a partition of $\mathbb{R}^d$ by hyperplanes passing through the origin that are perpendicular to the rows of $\mfX$.
The number of such distinct patterns is the cardinality of the set $\mathcal{H}$ and is bounded as follows
\[ p := | \mathcal{H} | \le 2 \sum_{k=0}^{r-1} {n-1 \choose k} \le 2r\Big(\frac{e(n-1)}{r}\Big)^r\,,\]
where $r=\textrm{rank}(\mfX)$, see \citep{cover1965geometrical}.
A $2$-dimensional example of diagonal arrangement patterns with three hyperplanes, i.e., $n=3$, is presented in Figure \ref{fig:diag}. Note that there are $p = 2 \big ( {2 \choose 0} + {2 \choose 1} \big ) = 6$ regions and corresponding patterns associated with this configuration. For every fixed dimension $d$ (or rank $r$), the number of patterns $p$ is bounded by \textrm{poly}$(n)$. In \cite{ergen2020implicit}, it was shown that convolutional neural networks (CNNs) have a small fixed rank. For instance, a typical convolutional layer of size $3\times 3\times 512$, e.g. $512$ filters of size $3\times 3$ implies $r\le 3\times 3=9$.
\begin{figure}
\centering
\begin{minipage}{.8\textwidth}
\begin{center}
\begin{tikzpicture}
  \begin{axis}[ymin=-5,ymax=5,xmax=6,xmin=-6,xticklabel=\empty,yticklabel=\empty,
               minor tick num=1,axis lines = middle,xlabel=$x_1$,ylabel=$x_2$,label style ={at={(ticklabel cs:1.1)}}]
\addplot [<-,>={Latex[round]},domain=-2.5:2.5,samples=2,ultra thick,color=blue] {2*x};
\addplot [->,>={Latex[round]},domain=-5:5,samples=2,ultra thick,color=blue] {-x};
\addplot [->,>={Latex[round]},domain=-6:6,samples=2,ultra thick,color=blue] {0};
\node[font=\fontsize{8}{8}\selectfont] at (axis cs:3.5,1.5) {$D_1=\begin{pmatrix}\mathbf{1}&0&0\\0&\mathbf{1}&0\\0&0&\mathbf{1}\end{pmatrix}$};
\node[font=\fontsize{8}{8}\selectfont] at (axis cs:-1,3.5) {$D_2=\begin{pmatrix}\mathbf{1}&0&0\\0&\mathbf{0}&0\\0&0&\mathbf{1}\end{pmatrix}$};
\node[font=\fontsize{8}{8}\selectfont] at (axis cs:-3.8,1) {$D_3=\begin{pmatrix}\mathbf{1}&0&0\\0&\mathbf{0}&0\\0&0&\mathbf{0}\end{pmatrix}$};
\node[font=\fontsize{8}{8}\selectfont] at (axis cs:-3.5,-1.5) {$D_4=\begin{pmatrix}\mathbf{0}&0&0\\0&\mathbf{0}&0\\0&0&\mathbf{1}\end{pmatrix}$};
\node[font=\fontsize{8}{8}\selectfont] at (axis cs:0.5,-4) {$D_5=\begin{pmatrix}\mathbf{0}&0&0\\0&\mathbf{1}&0\\0&0&\mathbf{0}\end{pmatrix}$};
\node[font=\fontsize{8}{8}\selectfont] at (axis cs:3.8,-1.2) {$D_6=\begin{pmatrix}\mathbf{0}&0&0\\0&\mathbf{1}&0\\0&0&\mathbf{1}\end{pmatrix}$};
  \end{axis}
\end{tikzpicture}
\end{center}
\end{minipage}
\caption{A $2$-dimensional example of diagonal arrangement patterns.}\label{fig:diag}
\end{figure}
\subsection{Convex Reformulations}
The non-convex optimization problem \eqref{prob:min_nrm_preview} of ReLU networks with skip connection is equivalent\footnote{Under the condition that the NN has sufficiently many neurons (see e.g., Theorem 1 in \cite{nnacr})}  to a convex program:
\begin{equation}\label{min_nrm:relu_skip}
\begin{aligned}
    \min_{\mfw_0,\left(\mfw_{j}, \mfw_{j}'\right)_{j=1}^{p}}\quad &  \sum_{j=1}^{p}\left(\left\|\mfw_{j}\right\|_{2}+\left\|\mfw_{j}'\right\|_{2}\right)\\
        \text{s.t.}\quad &\mfX\mfw_0+\sum_{j=1}^{p} \mfD_{j} \mfX\left(\mfw_{j}-\mfw_{j}'\right)=\mfy,\\
        & (2\mfD_j-\mfI_n)\mfX\mfw_j\ge0, (2\mfD_j-\mfI_n)\mfX\mfw_j'\ge0, j\in[p].
\end{aligned}
\end{equation}
Indeed, the global optimal set of the non-convex program \eqref{prob:min_nrm_preview} can be characterized by the optimal solutions of the convex program \eqref{min_nrm:relu_skip}. The next result is an extension of the earlier results \cite{wang2022convexgeometryofbp,wang2020hidden,ergen2020convex} to NNs with a skip connection.
\begin{lemma}\label{thm:landscape_skip}
All globally optimal solutions of the non-convex problem \eqref{prob:min_nrm_preview} of ReLU networks with skip connection can be found (up to splitting and permutation) via the optimal solution set of the convex program \eqref{min_nrm:relu_skip} when $m\ge m^*$. Here $m^*:=\sum_{j=1}^p \mathbb{I}[w^*_j\neq 0]+\mathbb{I}[w^{*\prime}_j\neq 0]$ is the number of non-zero neurons in the minimum norm optimal solution of \eqref{min_nrm:relu_skip}.    
\end{lemma}

\subsection{Linear Neural Isometry Condition}
We now consider the case when the labels are generated by a planted linear model, i.e., $\mfy=\mfX\mfw^*$ and ask the following question: 

\begin{qaquote}
\emph{Can ReLU NNs with a linear skip connection learn a planted linear relation as effectively as a linear model?}
\end{qaquote}
In order to prove that the linear model can be 
recovered by solving the non-convex problem \eqref{prob:min_nrm_preview} or its convex reformulation \eqref{min_nrm:relu_skip}, we introduce the following isometry condition on the training data. 

\begin{definition}[Linear Neural Isometry Condition]
The linear neural isometry condition for recovering the linear model $\mfy=\mfX\mfw^*$ from \eqref{min_nrm:grelu_skip} is given by:
\begin{equation}\label{irrep:grelu_skip}
    \norm{\mfX^T \mfD_j \mfX\pp{\mfX^T\mfX}^{-1}\hat \mfw}_2<1, \forall  j\in[p], \tag{NIC-L}
\end{equation}
where $\hat w :=\frac{\mfw^*}{\|\mfw^*\|_2}$.
Note that \eqref{irrep:grelu_skip} holds uniformly for all $\mfw^*$ only if 
\begin{equation}
    \norm{\mfI_d-\sum_{{k:\mfx_k^T\mfw< 0}}\mfx_k\mfx_k^T\pp{\sum_{k=1}^n\mfx_k\mfx_k^T}^{-1}}_{2}<1, \forall \mfw: \mfw\neq 0.
\end{equation}
The above is a spectral isometry condition on the empirical covariance
$\hat \Sigma := \sum_{k=1}^n\mfx_k\mfx_k^T$ and its subsampled version $ \hat \Sigma_{\mfw} := \sum_{{k:\mfx_k^T\mfw< 0}}\mfx_k\mfx_k^T$ that excludes samples activated by a ReLU neuron. Intuitively, for the above condition to hold, the empirical covariance should be relatively stable when samples that lie on a halfspace are removed, and consequently $\norm{\mfI_d - \hat \Sigma_{\mfw} \hat \Sigma^{-1}}_2=\norm{\big(\hat \Sigma - \hat \Sigma_{\mfw} \big) \hat \Sigma^{-1}}_2<1$. 
\end{definition}
In the following proposition, we show that the linear neural isometry condition implies the recovery of the planted linear model by solving  \eqref{min_nrm:relu_skip}.

\begin{proposition}\label{prop:imply_linear}
Suppose that $n>d$ and the neural network contains an arbitrary number of neurons, i.e., $m\ge 1$. Let $\mfy=\mfX\mfw^*$. 
Suppose that the linear neural isometry condition \eqref{irrep:grelu_skip} holds. Then, the unique optimal solution to \eqref{prob:min_nrm_preview} and \eqref{min_nrm:relu_skip} (up to permutation and splitting\footnote{Please see Appendix \ref{sec:permutation} for a precise definition of the notion of permutation and splitting.}) is given by the planted linear model, i.e., $\hat\mfW=\big\{\hat\mfw_0,\{\hat \mfw_j,\hat \mfw_j'\}_{j=1}^p\big\}$, where $\hat \mfw_{0}=\mfw^*$ and $\hat \mfw_j=\hat \mfw_j'=0$ for $j\in[p]$.
\end{proposition}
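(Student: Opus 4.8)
The plan is to reduce the claim to the convex program \eqref{min_nrm:relu_skip} via Lemma~\ref{thm:landscape_skip} and then certify the proposed solution by an explicit dual vector, in direct analogy with the irrepresentability argument reviewed above for \eqref{l1_min}. By Lemma~\ref{thm:landscape_skip} it suffices to show that $\hat\mfW$, namely $\mfw_0=\mfw^*$ and $\mfw_j=\mfw_j'=0$ for all $j\in[p]$, is the \emph{unique} global minimizer of \eqref{min_nrm:relu_skip}; uniqueness then transfers to \eqref{prob:min_nrm_preview} up to permutation and splitting. Primal feasibility of $\hat\mfW$ is immediate: $\mfX\mfw^*=\mfy$, and the constraints $(2\mfD_j-\mfI_n)\mfX\mfw_j\ge 0$, $(2\mfD_j-\mfI_n)\mfX\mfw_j'\ge 0$ hold trivially since $\mfw_j=\mfw_j'=0$. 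Throughout I use that $\mfX$ has full column rank $d$, which is implicit in \eqref{irrep:grelu_skip}, so that $(\mfX^T\mfX)^{-1}$ is well defined.

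For the dual certificate I would take $\blbd^\star:=\mfX(\mfX^T\mfX)^{-1}\hat\mfw$, the minimum-norm solution of $\mfX^T\blbd=\hat\mfw$, and set all Lagrange multipliers of the sign constraints to zero. Writing out the KKT system of \eqref{min_nrm:relu_skip}: stationarity in the skip weight $\mfw_0$ --- whose weight-decay penalty contributes a term $\|\mfw_0\|_2$ to the convex objective --- reads $\mfX^T\blbd^\star\in\partial\|\cdot\|_2(\mfw^*)=\{\hat\mfw\}$, which holds by the choice of $\blbd^\star$; stationarity in each $\mfw_j$ and $\mfw_j'$ evaluated at $0$, after the sign-constraint multipliers have been set to zero, reduces to the ball condition $\|\mfX^T\mfD_j\blbd^\star\|_2\le 1$, and substituting $\blbd^\star$ this is precisely $\|\mfX^T\mfD_j\mfX(\mfX^T\mfX)^{-1}\hat\mfw\|_2\le 1$, i.e. \eqref{irrep:grelu_skip}. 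Dual feasibility and complementary slackness are automatic because every inequality constraint is tight at $\hat\mfW$ and the chosen multipliers vanish. Hence $\hat\mfW$ is optimal.

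For uniqueness I would exploit the \emph{strict} inequality in \eqref{irrep:grelu_skip}. For any feasible point $\mfw=(\mfw_0,(\mfw_j,\mfw_j')_j)$, subtracting the identically-zero quantity $\langle\blbd^\star,\,\mfX\mfw_0+\sum_j\mfD_j\mfX(\mfw_j-\mfw_j')-\mfy\rangle$ from the objective and regrouping gives
\[
\|\mfw_0\|_2+\sum_j\!\big(\|\mfw_j\|_2+\|\mfw_j'\|_2\big)=\big(\|\mfw_0\|_2-\langle\hat\mfw,\mfw_0\rangle\big)+\sum_j\big(\|\mfw_j\|_2-\langle\mfX^T\mfD_j\blbd^\star,\mfw_j\rangle\big)+\sum_j\big(\|\mfw_j'\|_2+\langle\mfX^T\mfD_j\blbd^\star,\mfw_j'\rangle\big)+\langle\blbd^\star,\mfy\rangle .
\]
By Cauchy--Schwarz every summand on the right is nonnegative; the strict bound $\|\mfX^T\mfD_j\blbd^\star\|_2<1$ forces the $j$-th terms to vanish only if $\mfw_j=\mfw_j'=0$, and $\|\mfw_0\|_2-\langle\hat\mfw,\mfw_0\rangle=0$ only if $\mfw_0$ is a nonnegative multiple of $\hat\mfw$. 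Since $\langle\blbd^\star,\mfy\rangle=\hat\mfw^T\mfw^*=\|\mfw^*\|_2$ equals the objective value at $\hat\mfW$, the optimal value is $\|\mfw^*\|_2$ and every minimizer must have $\mfw_j=\mfw_j'=0$ and $\mfw_0=c\hat\mfw$ for some $c\ge 0$; feasibility $\mfX(c\hat\mfw)=\mfX\mfw^*$ together with full column rank then forces $\mfw_0=\mfw^*$, so $\hat\mfW$ is the unique convex optimum, and Lemma~\ref{thm:landscape_skip} delivers the statement for the non-convex problem.

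I expect the main obstacle to be the careful bookkeeping of the KKT system of \eqref{min_nrm:relu_skip} rather than any single hard inequality: one must confirm that the multipliers of the sign constraints $(2\mfD_j-\mfI_n)\mfX\mfw_j\ge 0$ can legitimately be taken to be zero, which is where the fact that $\hat\mfW$ sets $\mfw_j=0$ (so these constraints are slack ``in direction'') and that $\partial\|\cdot\|_2(0)$ is the entire unit ball are both used, leaving exactly enough room for the strict inequality \eqref{irrep:grelu_skip}; and one must invoke Lemma~\ref{thm:landscape_skip} in the strong form that recovers \emph{all} non-convex global optima, so that uniqueness at the convex level is not lost in translation. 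A secondary point is to keep track of how the weight-decay penalty on the skip connection enters the convex objective, since that term is precisely what produces the normalization $\mfX^T\blbd^\star=\hat\mfw$ (rather than $\mfX^T\blbd^\star=0$) in the stationarity condition, and hence the appearance of $\hat\mfw$ in \eqref{irrep:grelu_skip}.
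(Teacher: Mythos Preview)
Your proposal is correct and follows essentially the same route as the paper: the same dual certificate $\blbd^\star=\mfX(\mfX^T\mfX)^{-1}\hat\mfw$ is used to verify optimality, and uniqueness is obtained from the strict inequality in \eqref{irrep:grelu_skip} via the same Cauchy--Schwarz/subtraction argument. The only cosmetic difference is that the paper factors the argument through the unconstrained relaxation \eqref{min_nrm:grelu_skip} (stated as a separate Proposition~\ref{prop:irrep_grelu_skip}) and then transfers uniqueness back to \eqref{min_nrm:relu_skip} by the map $\breve\mfw_j=\bar\mfw_j-\bar\mfw_j'$, whereas you work directly on \eqref{min_nrm:relu_skip} by setting the sign-constraint multipliers to zero---these are equivalent.
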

The above result implies that minimizing the $\ell_2^2$ objective subject to the interpolation condition uniquely recovers the ground truth model by setting all the neurons except the skip connection to zero, regardless of the number of neurons in the NN. Remarkably, a NN with an arbitrary number of neurons, i.e., containing arbitrarily many parameters, optimizing the criteria \eqref{prob:min_nrm_preview} achieves perfect generalization when the ground truth is the linear model. The exact recovery follows from the equivalence of the problem \eqref{prob:min_nrm_preview} to the group sparsity minimization problem \eqref{min_nrm:relu_skip}, however, this sparsity inducing regularization is hidden in the typical non-convex formulation with weight decay regularization, i.e., $\|\Theta\|_F^2$.

\subsection{Sharp Phase Transition}
Our second main result in this paper is that there exists a sharp phase transition in training ReLU NNs when the data is generated by a random matrix ensemble and the observations are produced by a planted linear model. We precisely identify the relation between the number of samples $n$ and the feature dimension $d$ under i.i.d. training data and planted model assumptions. We first summarize our results in this section informally, and then present detailed theorems in later sections. 

\begin{theorem}[informal]
Suppose that the training data matrix $\mfX\in\mbR^{n\times d}$ is i.i.d. Gaussian, and $f(\mfX;\Theta)$ is a two-layer ReLU network containing arbitrarily many neurons with skip connection. Assume that the response is a noiseless linear model $\mfy=\mfX\mfw^*$. 
The condition $n>2d$ is sufficient for ReLU networks with skip connections or normalization layers to recover the planted model exactly with high probability. Furthermore, when $n<2d$, the recovery fails with high probability.
\end{theorem}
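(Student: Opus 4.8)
The plan is to pass to the equivalent convex program \eqref{min_nrm:relu_skip} and reduce the whole statement to a two-sided concentration bound for a single scalar random variable. By Lemma \ref{thm:landscape_skip} it suffices to decide whether the planted point $\hat{\mfw}_0=\mfw^*$, $\hat{\mfw}_j=\hat{\mfw}_j'=0$ is the unique optimum of \eqref{min_nrm:relu_skip} (with the skip weight $\mfw_0$ also penalized); the non-convex claim then follows, up to permutation and splitting, exactly as in Proposition \ref{prop:imply_linear}. Writing the first-order optimality condition of the group-$\ell_2$ objective at this point, the fact that $n>d$ makes $\mfX$ of full column rank pins the relevant dual vector to $\blbd=\mfX(\mfX^T\mfX)^{-1}\hat{\mfw}$, and since $\mfD_j\mfX\mfh=(\mfX\mfh)_+$ whenever $\mfh$ lies in the $j$-th arrangement region, one checks that the planted point is the unique optimum precisely when
\[ \rho(\mfX):=\max\bigl\{\sup_{\|\mfh\|_2=1}\langle\blbd,(\mfX\mfh)_+\rangle,\ \sup_{\|\mfh\|_2=1}\langle-\blbd,(\mfX\mfh)_+\rangle\bigr\}<1, \]
and that it fails when $\rho(\mfX)>1$; the isometry condition \eqref{irrep:grelu_skip} is merely the convenient sufficient relaxation $\max_j\|\mfX^T\mfD_j\mfX(\mfX^T\mfX)^{-1}\hat{\mfw}\|_2<1$ of $\rho(\mfX)<1$. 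The network having arbitrarily many neurons is immaterial here: extra neurons only enlarge the collection of arrangements already swept by the supremum over $\mfh$. Thus the theorem reduces to showing $\rho(\mfX)<1$ with high probability if and only if $n>2d$.

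\textbf{Success, $n>2d$.} First collect the deterministic ingredients: by Davidson--Szarek / Bai--Yin, $\mfX^T\mfX$ is well conditioned once $n$ exceeds a constant multiple of $d$, so $\|\blbd\|_2^2$ concentrates around $1/(n-d)$. A second-moment computation gives $\mathbb{E}\|(\mfX\mfh)_+\|_2^2\approx n/2$, so the Cauchy--Schwarz estimate $\langle\blbd,(\mfX\mfh)_+\rangle\lesssim\|\blbd\|_2\|(\mfX\mfh)_+\|_2\approx\sqrt{n/(2(n-d))}$ already pinpoints the threshold, this quantity being equal to $1$ exactly at $n=2d$. Turning the supremum over the sphere into a rigorous \emph{uniform} bound --- where the extremal $\mfh$ is not the one maximizing $\|(\mfX\mfh)_+\|_2$, so the Cauchy--Schwarz step above is not uniform --- is the technical heart: I would control the Gaussian process $\mfh\mapsto\langle\blbd,(\mfX\mfh)_+\rangle$ via Gordon's comparison (Gaussian min-max) inequality, or, in the spirit of the paper, by identifying the descent cone of the group-sparse objective at the planted point and invoking the statistical-dimension / approximate kinematic formula of \cite{amelunxen2014living}, for which a direct computation gives statistical dimension $2d+o(d)$; either route yields $\rho(\mfX)<1$ with probability $1-o(1)$ once $n>2d$. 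No union bound over exponentially many activation patterns is needed, since $\mfh\mapsto\langle\blbd,(\mfX\mfh)_+\rangle$ is continuous on $S^{d-1}$ (and in any case the number of distinct patterns is only $\mathrm{poly}(n)$ by \cite{cover1965geometrical}).

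\textbf{Failure, $n<2d$.} Here I would exhibit, with probability $1-o(1)$, a feasible strictly improving perturbation of the planted point, i.e.\ a unit $\mfh$ with $\langle\blbd,(\mfX\mfh)_+\rangle>1$ or $\langle-\blbd,(\mfX\mfh)_+\rangle>1$; either one produces a one-ReLU-neuron direction along which the objective of \eqref{min_nrm:relu_skip} strictly decreases, so $\mfw^*$ is not optimal. This is the matching lower bound on the same process --- Gordon's minimax lower bound (or Sudakov minoration), respectively the converse half of the approximate kinematic formula of \cite{amelunxen2014living}, showing that the descent cone meets the null space of the linear constraint of \eqref{min_nrm:relu_skip} nontrivially once $n<2d$. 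The variant with a normalization layer (Theorems \ref{thm:relu_skip_phase}, \ref{thm:relu_normal} of the body) is handled by the same reduction applied to its convex reformulation and its own isometry condition, again with threshold $2d$; by contrast, as the later propositions indicate, plain ReLU without a skip attains only weak recovery, and only as $n\to\infty$.

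\textbf{Main obstacle.} The crux is the \emph{sharpness} of the constant: pinning the threshold to exactly $2d$ rather than merely $\Theta(d)$ requires an exact evaluation of the statistical dimension of the descent cone together with matching non-asymptotic tail bounds on both sides, so that the transition window shrinks to zero width as $n,d\to\infty$. A secondary difficulty is that this descent cone is data-dependent --- the patterns $\mfD_j$ themselves depend on $\mfX$ --- so a fixed-cone-versus-random-subspace argument cannot be applied off the shelf; one must either analyze the ReLU process $\mfh\mapsto(\mfX\mfh)_+$ directly or use the identity $(\mfX\mfh)_+=\tfrac12\mfX\mfh+\tfrac12|\mfX\mfh|$ to separate off the genuinely nonlinear part before comparing dimensions.
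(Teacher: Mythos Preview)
Your reduction is reasonable and your heuristic Cauchy--Schwarz calculation correctly identifies the $n=2d$ threshold, but the route you propose is far more elaborate than what the paper actually does, and you are missing the key structural observation that makes the argument short.

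The paper does \emph{not} analyze the Gaussian process $\mfh\mapsto\langle\blbd,(\mfX\mfh)_+\rangle$, does not compute the statistical dimension of any descent cone, and does not need Gordon's inequality.  Instead, it observes that for every arrangement pattern $\mfD_j\neq\mfI_n$, one has the matrix inequality $\mfX^T\mfD_j\mfX\preceq\mfX^T\mfX$ with $\mfX^T\mfD_j\mfX\neq\mfX^T\mfX$, hence $\|\mfX^T\mfD_j\mfX(\mfX^T\mfX)^{-1}\hat\mfw\|_2=1$ can occur only if $\hat\mfw$ lies in a proper eigenspace of a fixed matrix --- a measure-zero event for each of the at most $2^n$ patterns.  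Thus the NIC-L strict inequality holds almost surely for \emph{every} pattern except possibly $\mfD_j=\mfI_n$.  The entire question therefore collapses to whether $\mfI_n$ is realizable as an arrangement pattern, i.e.\ whether $\exists\mfh\neq0:\mfX\mfh\geq0$.  This is precisely the event that a random $d$-dimensional subspace meets the nonnegative orthant $\mathbb{R}^n_+$, whose statistical dimension is exactly $n/2$; the kinematic formula of \cite{amelunxen2014living} (Lemma~\ref{lem:kinematic} / Proposition~\ref{prop:kinematic}) then gives the sharp $n=2d$ threshold immediately, with exponential tails on both sides.  The ``data-dependent descent cone'' you flag as the main obstacle simply does not arise: the cone in play is the fixed orthant $\mathbb{R}^n_+$, not a descent cone of the group-sparse objective.

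For the failure side, the paper does not produce a strictly improving perturbation via a lower bound on your process.  It argues that when $n<2d$ the pattern $\mfI_n$ is realizable with high probability, and then --- under the additional hypothesis $\mfX\mfw^*\geq0$, which the formal Theorem~\ref{prop:negative_gaussian} explicitly assumes and which the informal statement suppresses --- directly constructs an alternative feasible point with the same objective value, establishing non-uniqueness rather than strict suboptimality.  Your proposal would attempt to prove a stronger statement (strict failure for all $\mfw^*$) than what the paper actually claims; indeed Proposition~\ref{prop:positive_nd} shows that without $\mfX\mfw^*\geq0$ recovery can still succeed for $d<n<2d$, so a blanket $\rho(\mfX)>1$ bound is false in that regime.
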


Interestingly, in the regime where $n\in(d,2d)$, fitting a simple linear model instead of a NN recovers the planted linear model. In contrast, the ReLU network fails to recover the linear model due to the richness of the model class. We formalize this observation as follows:
\begin{corollary}
Suppose that the observations are given by $\mfy=\mfX\mfw^*$, where $\mfw^*\in\mbR^d$ is a fixed, unknown parameter. If $n\in(d,2d)$, a simple linear model $f^\mathrm{lin}(\mfX;\mfw)=\mfX\mfw$ recovers the planted linear neuron $\mfw^*$ exactly for any $\mfw^*$, while the ReLU network with skip connection and $m\ge 1$ neurons fitted via either \eqref{prob:reg_preview} with any $\beta\geq 0$ or \eqref{prob:min_nrm_preview} fails with high probability for all $\mfw^*$ such that $\mfX\mfw^*\ge 0$.
\end{corollary}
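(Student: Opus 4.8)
The plan is to analyze the two halves of the statement separately. For the linear model $f^{\mathrm{lin}}(\mfX;\mfw)=\mfX\mfw$, recovery of $\mfw^*$ for arbitrary $\mfw^*$ is simply the statement that $\mfX$ has full column rank, since then $\mfX\mfw=\mfX\mfw^*$ forces $\mfw=\mfw^*$ (and the minimum-norm interpolant is unique). For $n\in(d,2d)$ with $\mfX$ i.i.d.\ Gaussian, $\mfX$ has full column rank almost surely, so this half is immediate. The substantive content is the failure of the ReLU network with skip connection to recover $\mfw^*$ whenever $\mfX\mfw^*\ge 0$. Here I would invoke the convex reformulation \eqref{min_nrm:relu_skip} and its equivalence to the non-convex problems (Lemma \ref{thm:landscape_skip}) so that it suffices to exhibit a feasible point of \eqref{min_nrm:relu_skip} with strictly smaller objective than the planted solution $\hat\mfW_0$ (which has objective $\|\mfw^*\|_2$, since all group-norm terms vanish for the planted model).

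The key construction: since $\mfX\mfw^*\ge 0$, the all-ones pattern $\mfD=\mfI_n$ is a valid diagonal arrangement pattern, i.e., $\mfI_n\in\mathcal H$, say $\mfD_{j_0}=\mfI_n$ for some $j_0$. Then the assignment $\mfw_{j_0}=\mfw^*$, all other $\mfw_j=\mfw_j'=0$, and the skip weight $\mfw_0=0$ is feasible: the equality constraint reads $\mfD_{j_0}\mfX\mfw^*=\mfX\mfw^*=\mfy$, and the sign constraints $(2\mfD_{j_0}-\mfI_n)\mfX\mfw^*=\mfX\mfw^*\ge 0$ hold. This feasible point also has objective $\|\mfw^*\|_2$ — the same as the planted solution — so it only shows non-uniqueness, not strictly smaller cost. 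To get strict improvement (and hence genuine failure of recovery to the planted model) I would instead split the ReLU contribution: pick two arrangement patterns $\mfD_{j_1},\mfD_{j_2}$ with $\mfD_{j_1}+\mfD_{j_2}=\mfI_n$ (a halfspace and its complement, using that $n<2d$ or more precisely that generic data admits such a split with the corresponding linear systems solvable), write $\mfX\mfw^*=\mfD_{j_1}\mfX\mfu_1+\mfD_{j_2}\mfX\mfu_2$ with $\mfu_1,\mfu_2$ chosen so that $\|\mfu_1\|_2+\|\mfu_2\|_2<\|\mfw^*\|_2$; the slack here comes from the extra degrees of freedom when $n<2d$, exactly as in the phase-transition arguments of Theorem \ref{thm:relu_skip_phase}. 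Alternatively, and more cleanly, I would argue that whenever $\mfX\mfw^*\ge 0$ the ReLU network can represent $\mfy$ using ReLU neurons alone (as above) and that the proof of the $n<2d$ failure direction of the informal Theorem / Theorem \ref{thm:relu_skip_phase} already shows the planted solution is not optimal with high probability; the condition $\mfX\mfw^*\ge 0$ is precisely what makes the relevant cone-intersection event deterministic rather than high-probability, yielding "for all such $\mfw^*$" rather than "with high probability over $\mfw^*$."

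For the regularized formulation \eqref{prob:reg_preview} with $\beta\ge 0$: for $\beta=0$ the problem is pure interpolation and any interpolating network with minimal structure works, so the planted model is certainly not singled out; for $\beta>0$ one uses the standard scaling argument that the regularized minimizer interpolates in this overparameterized regime (or has arbitrarily small residual), reducing to the same convex comparison. The main obstacle I anticipate is making the strict inequality $\|\mfu_1\|_2+\|\mfu_2\|_2<\|\mfw^*\|_2$ rigorous and deterministic for every $\mfw^*$ with $\mfX\mfw^*\ge 0$ — this requires controlling the geometry of $\mfX$ restricted to halfspace-defined row subsets, and I expect to handle it by reusing the statistical-dimension / cone-width estimates from Section \ref{sec:relu_skip} that underlie Theorem \ref{thm:relu_skip_phase}, observing that the constraint $\mfX\mfw^*\ge0$ places $\mfw^*$ in a region where the descent cone of the group-norm objective provably meets the constraint set nontrivially when $n<2d$.
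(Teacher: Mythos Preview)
Your first half (linear model recovery via full column rank of $\mfX$ when $n>d$) and your key construction for the second half are both correct and match the paper. In particular, once $\mfX\mfw^*\ge 0$, the pattern $\mfI_n$ is in $\mathcal H$ \emph{deterministically} (witnessed by $\mfw^*$ itself), and putting $\mfw^*$ on the block $\mfD_{j_0}=\mfI_n$ with $\mfw_0=0$ is feasible with objective $\|\mfw^*\|_2$. This is exactly the argument behind Theorem~\ref{prop:negative_gaussian}.

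Where your plan goes wrong is in the search for a \emph{strictly} cheaper feasible point. This is both unnecessary and impossible. In the paper's usage, ``recovery fails'' means precisely that the planted configuration $\mfW^*=(\mfw^*,0,\dots,0)$ is \emph{not the unique} optimum; the equal-objective alternative you already constructed is a complete proof of failure. Moreover, no strict improvement exists: with probability $1$ over i.i.d.\ Gaussian $\mfX$ (the event $E$ in the proofs of Theorem~\ref{prop:skip_gauss_success} and Proposition~\ref{prop:positive_nd}), every optimum of the relaxation \eqref{min_nrm:grelu_skip} has $\mfw_j=0$ for all $j$ with $\mfD_j\neq\mfI_n$; the remaining constraint $\mfX(\mfw_0+\mfw_{i^*})=\mfX\mfw^*$ plus full column rank forces $\mfw_0+\mfw_{i^*}=\mfw^*$ and hence $\|\mfw_0\|_2+\|\mfw_{i^*}\|_2\ge\|\mfw^*\|_2$. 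The optimal value is therefore exactly $\|\mfw^*\|_2$, so your proposed two-pattern split with $\|\mfu_1\|_2+\|\mfu_2\|_2<\|\mfw^*\|_2$ is unachievable. The ``main obstacle'' you anticipate is not an obstacle to overcome but a false target; drop that step entirely.

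For the regularized problem \eqref{prob:reg_preview} with $\beta>0$, the same non-uniqueness persists in the convex reformulation: any convex combination that places mass on the skip block and on the $\mfI_n$ ReLU block yields the same objective, so the planted linear neuron is again not singled out. You do not need a separate limiting or residual argument.
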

This corollary is illustrated in Figure \ref{fig:tikz} as a phase diagram in the $(n,d)$ plane. This result clearly shows that the model complexity of ReLU networks can hurt generalization compared to simpler linear models when the number of samples is limited, but information-theoretically sufficient for exact recovery. On the other hand, ReLU networks learn the true model and close the gap in generalization when twice as many samples are available, i.e., $n>2d$.

\begin{figure}[t!]
    \centering
\subfigure{
\begin{tikzpicture}[scale=0.5]
  \message{Phase diagrams}
  
  \def\xtick#1#2{\draw[thick] (#1)++(0,.2) --++ (0,-.4) node[below=-.5pt,scale=0.7] {#2};}
  \def\ytick#1#2{\draw[thick] (#1)++(.2,0) --++ (-.4,0) node[left=-.5pt,scale=0.7] {#2};}
  
  \coordinate (O) at (0,0);
  \coordinate (N1) at (2,10);
  \coordinate (N2) at (11,10);
  \coordinate (NE) at (12,10);
  \coordinate (NW) at (0,10);
  \coordinate (SE) at (12,0);
  \coordinate (W) at (0,5);
  \coordinate (S) at (6,0);
  \coordinate (C) at (10.2,7.8); 
  \coordinate (T) at (4,3); 
  
  \coordinate (NM) at (6,10);

  \def\nd{(O) -- (NM)}
  \def\ntwod{(O) -- (NE)}
    \def\SL{(T) -- (N1)}
  
  \fill[blue!5] \nd -- (NW) -- cycle;
  \fill[mylightred] \ntwod -- (NM) -- (O) -- cycle;
  \fill[mylightblue] \ntwod -- (SE) -- cycle;
  \node at (6.1,8) {linear};
  \node at (5.7,7) {model};
  \node at (5.3,6) {recovers};
  \node at (8.5,4) {ReLU network};
  \node at (8,3) {with skip connection};
    \node at (8,2) {recovers};
  \node at (3,9) {$n<d$};
  \node at (2,7.5) {no recovery};
  \node at (8,9) {$d<n<2d$};
  \node at (10,1) {$n>2d$};
  
  
  \draw[thick] \nd;
  \draw[thick] \ntwod;
  
  \draw[thick] (O) rectangle (NE);
  \node[left=3pt,above,rotate=90] at (W) {$d$};
  \node[below=3pt] at (S) {$n$};
  
 \end{tikzpicture}
}
\subfigure[]{
      \centering
      \includegraphics[width=0.46\textwidth]{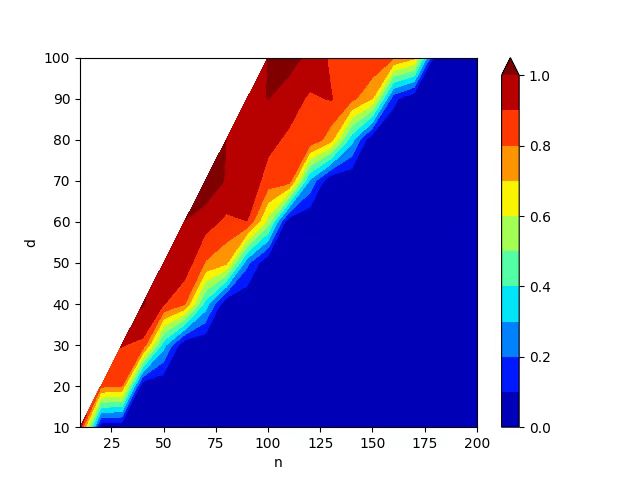}  
    }
    
\caption{Phase transition in recovering a linear neuron. Left: when $n \in (d,2d)$, ReLU network fails to recover a planted linear model, while a simple linear model succeeds in recovery. Right: Empirical generalization error in recovering a linear neuron by solving the convex program \eqref{min_nrm:relu_skip} numerically. }\label{fig:tikz}
\end{figure}

\subsection{Noisy Observations}
We also develop theoretical results when the observation vector $\mfy$ is noisy and the following regularized version of the training problem is solved. We consider the regularized training problem 
\begin{equation}\label{prob:reg_preview}
    \min_{\Theta} \frac{1}{2}\|f(\mfX;\Theta)-\mfy\|_2^2+\frac{\beta}{2}R(\Theta),
\end{equation}
where $R(\Theta)=\|\Theta\|_F^2$ is the weight-decay regularization term.
\begin{theorem}[informal]\label{thm:noisy_inform}
Suppose that the training data matrix $\mfX\in\mbR^{n\times d}$ is i.i.d. sub-Gaussian, and $f(\mfX;\Theta)$ is a two-layer ReLU network with $m$ neurons, skip connection and a normalization layer applied before the ReLU layer. Assume that the observation $\mfy$ is the sum of a linear neuron and an arbitrary disturbance term $\mfz$, i.e., $\mfy=\mfX\mfw^*+\mfz$. Then, there exists a range of values for the regularization parameter $\beta$, such that the non-convex optimization problem in \eqref{prob:reg_preview} and its corresponding convex formulation (see \eqref{reg:normal_before_relu_skip} in Section \ref{sec:relu_skip}) exactly recovers a linear model with high probability when $n$ is sufficiently large for all values of the number of neurons $m\ge 1$.
Additionally, the $\ell_2$ distance between the learned linear neuron $\mfw$ and the planted neuron is bounded by
\[\norm{\mfw -\mfw^*}_2
 \le \mathcal{O}(\beta) + \mathcal{O}(\norm{\mfz}_2).
\]
\end{theorem}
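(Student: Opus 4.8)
The plan is to pass to the convex reformulation and then exhibit an explicit primal–dual certificate, following the template of the noiseless results but carrying both the noise $\mfz$ and the regularization level $\beta$ through the argument. By the regularized analogue of Lemma~\ref{thm:landscape_skip} --- the equivalence of \eqref{prob:reg_preview} with the convex group-sparse program \eqref{reg:normal_before_relu_skip} of Section~\ref{sec:relu_skip} --- it suffices to prove that, with high probability, \eqref{reg:normal_before_relu_skip} has a \emph{unique} optimum in which every ReLU block $\mfw_j,\mfw_j'$ vanishes and only the linear skip weight is active. Exact recovery of a linear model by the non-convex problem then follows, since its global optima are obtained from the convex optima up to permutation and splitting (Appendix~\ref{sec:permutation}).

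For the primal candidate, fix $\mfw_j=\mfw_j'=0$ for all $j$; this reduces \eqref{reg:normal_before_relu_skip} to a convex problem in the skip weight alone, whose minimizer $\mfw$ is a (scaled) ridge-type estimate $\mfw=(\mfX^T\mfX+c\,\mfI_d)^{-1}\mfX^T\mfy$ for some $c\ge 0$ of order $\beta$ (with $c=0$ if the skip path carries no penalty, in which case the $\mathcal{O}(\beta)$ term below instead enters through the dual-feasibility slack). Substituting $\mfy=\mfX\mfw^*+\mfz$ and using the sub-Gaussian lower bound $\sigma_{\min}(\mfX)\ge c_0\sqrt{n}$, valid once $n$ exceeds a constant multiple of $d\log n$, the bias from $c$ contributes $\mathcal{O}(\beta)$ and the noise term $(\mfX^T\mfX)^{-1}\mfX^T\mfz$ has norm at most $\|\mfz\|_2/\sigma_{\min}(\mfX)=\mathcal{O}(\|\mfz\|_2)$; hence $\|\mfw-\mfw^*\|_2\le\mathcal{O}(\beta)+\mathcal{O}(\|\mfz\|_2)$, which is exactly the claimed estimate.

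It then remains to verify the KKT conditions of \eqref{reg:normal_before_relu_skip} at this candidate. Stationarity in the skip weight and the inactive cone constraints $(2\mfD_j-\mfI_n)\mfX\mfw_j\ge 0$ hold by construction, so the only nontrivial requirements are bounds of the form $\|\mathbf{U}_j^T\mathbf{r}\|_2<\beta$ for every arrangement block $j\in[p]$, where $\mathbf{r}=\mfy-\mfX\mfw$ is the residual and $\mathbf{U}_j$ is the normalized block associated with $\mfD_j\mfX$ --- these are precisely Neural Isometry Conditions of the kind introduced in Section~\ref{sec:neu_iso} and \eqref{irrep:grelu_skip}. Invoking the non-asymptotic sub-Gaussian bounds of Section~\ref{sec:relu_skip}, one expects $\max_{j\in[p]}\|\mathbf{U}_j^T\mathbf{r}\|_2$ to concentrate at a scale $\gamma_n\|\mathbf{r}\|_2$ with $\gamma_n$ shrinking as $n/d$ grows; hence there is a threshold $\beta_{\min}$, of order $\gamma_n\|\mathbf{r}\|_2$, above which dual feasibility holds with strict inequality, and strictness together with full column rank of $\mfX$ forces uniqueness. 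Any $\beta$ in the window $[\beta_{\min},\beta_{\max}]$, with $\beta_{\max}$ small enough that the error bound of the previous step is informative, then works, and the window is nonempty once $n$ is large enough.

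The main obstacle is this dual-certificate step. The residual $\mathbf{r}$ and the blocks $\mathbf{U}_j$ both depend on $\mfX$ --- and the arrangement patterns $\mfD_1,\dots,\mfD_p$ are themselves \emph{determined} by $\mfX$ --- so $\mathbf{U}_j$ and $\mathbf{r}$ are not independent and a naive Gaussian-projection argument does not apply. The real work will be to decouple this dependence uniformly over all $p=\mathrm{poly}(n)$ patterns, for instance via a union bound over a net of arrangement patterns combined with concentration of the projection of a $\mfz$-dependent direction, or by reusing the conditioning arguments already developed for the noiseless case, and then to balance $\gamma_n\|\mathbf{r}\|_2$ simultaneously against $\beta$ and the target accuracy so that the admissible window for $\beta$ is nonempty. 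This balancing is exactly where the requirement $n=\Omega(d\log n)$ enters.
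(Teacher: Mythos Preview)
Your high-level template (pass to the convex program, construct a primal candidate with all ReLU blocks zero, verify KKT with an explicit dual) matches the paper's Theorem~\ref{thm:noisy} and Proposition~\ref{prop:noisy}. But two key points of the actual argument are misidentified in your proposal.

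First, the dual-feasibility bound does not rely on a quantity $\gamma_n$ that \emph{shrinks} as $n/d$ grows. With the normalization layer, the blocks are $\mfA_{i^*}=\mfU$ and $\mfA_j=\mfD_j\mfU$, so $\mfA_{i^*}^T\mfA_{i^*}=\mfI_d$ and $\|\mfA_j\|_2\le 1$ deterministically. The paper's crucial probabilistic input is the \emph{uniform constant gap} $\max_{j}\|\mfU^T\mfD_j\mfU\|_2\le 6/7$ (from Lemma~\ref{lem:net} combined with the singular-value bound on $\mfX$), not a vanishing one; indeed $\|\mfU^T\mfD_j\mfU\|_2$ concentrates near $1/2$, never near $0$. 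This constant gap $\gamma=1/7$ is what makes the window for $\beta$ nonempty, and the $n\gtrsim d\log n$ requirement enters only to secure that uniform operator-norm bound over all $p$ patterns.

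Second, the ``main obstacle'' you flag---statistical dependence between the residual $\mathbf r$ and the blocks $\mfD_j\mfU$---is not the difficulty, because $\mfz$ is treated as an \emph{arbitrary deterministic} vector. The paper writes the dual variable in closed form,
\[
\blbd=-\frac{\beta}{r+\beta}\,\mfU\tilde\mfw^*+\Big(\mfI-\frac{r}{r+\beta}\mfU\mfU^T\Big)\mfz,\qquad r+\beta=\|\tilde\mfw^*+\mfU^T\mfz\|_2,
\]
with $\tilde\mfw^*=\bSigma\mfV^T\mfw^*$, and bounds each piece separately: the signal piece by $\frac{\beta}{r+\beta}\|\mfU^T\mfD_j\mfU\|_2\|\tilde\mfw^*\|_2\le \beta(1-\gamma)\|\tilde\mfw^*\|_2/(\|\tilde\mfw^*\|_2-\|\mfz\|_2)$, and the noise piece deterministically by $\|\mfA_j\|_2\|\mfz\|_2\le\|\mfz\|_2$. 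Requiring the sum to be $\le\beta$ yields the explicit interval for $\beta$ in Theorem~\ref{thm:noisy}. No decoupling or net over arrangements against the noise is needed; the only union bound is the one inside Lemma~\ref{lem:net}, which concerns $\mfX$ alone. Your ridge-type primal formula and the ``shrinking $\gamma_n$'' balancing would not reproduce this structure; exploiting $\mfU^T\mfU=\mfI_d$ and the constant operator-norm gap is what makes the argument go through cleanly.
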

Considering the non-convex optimization problem \eqref{prob:reg} with weight decay, i.e., $\ell_2$-regularization
, with a proper choice of the regularization parameter $\beta$, the optimal NN with arbitrary number of neurons consists of only a linear weight. 
The key to understand this result involves techniques from sparse recovery combined with the convex re-formulation of the regularized training problem \eqref{prob:reg_preview} as group $\ell_1$ regularization. 

The above result shows the bias of two-layer ReLU networks towards simple models even when the relation between the data and labels is not exactly linear. Our proof also shows that skip connection and normalization layers in these two-layer networks are essential, which are the building blocks of the ResNets popularly applied in the practice. The details of Theorem \ref{thm:noisy_inform} can be found in Section \ref{sec:noisy}.

\begin{corollary}
By combining the above proposition with Lemma \ref{thm:landscape_skip}, we note that the global optima of the nonconvex problem \eqref{prob:min_nrm} consists of exactly the planted linear neuron up to permutation and splitting.
\end{corollary}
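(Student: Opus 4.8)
The plan is to chain together two results already in hand: Proposition~\ref{prop:imply_linear}, which says that the linear neural isometry condition \eqref{irrep:grelu_skip} forces the unique optimal solution of the convex program \eqref{min_nrm:relu_skip} (and hence of the non-convex problem \eqref{prob:min_nrm_preview}) to be the planted linear model $\hat\mfW$, and Lemma~\ref{thm:landscape_skip}, which says that the global optima of the non-convex training problem \eqref{prob:min_nrm_preview} are exactly the images of the convex-program optima under the splitting-and-permutation equivalence (once $m$ is large enough). Composing them immediately gives: under the hypotheses of Proposition~\ref{prop:imply_linear}, every global optimum of \eqref{prob:min_nrm_preview} is the planted linear neuron $\mfw^*$ in the skip branch with all ReLU neurons set to zero, modulo permutation and splitting. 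That is exactly the assertion of the corollary, since \eqref{prob:min_nrm} is \eqref{prob:min_nrm_preview} (the reference without the \texttt{\_preview} suffix presumably points to the general-section restatement of the same minimum-norm problem).

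Concretely, the steps I would carry out are: (i) invoke Proposition~\ref{prop:imply_linear} to get that the convex program \eqref{min_nrm:relu_skip} has $\hat\mfW=\{\hat\mfw_0,\{\hat\mfw_j,\hat\mfw_j'\}_{j=1}^p\}$ with $\hat\mfw_0=\mfw^*$, $\hat\mfw_j=\hat\mfw_j'=0$ as its \emph{unique} optimizer; (ii) invoke Lemma~\ref{thm:landscape_skip} to translate this unique convex optimizer into the full set of non-convex global optima of \eqref{prob:min_nrm_preview}, which is precisely the orbit of $\hat\mfW$ under permutation of neurons and splitting of the skip weight into parallel copies summing to $\mfw^*$; (iii) observe that this orbit still "consists of exactly the planted linear neuron," in the sense that every such optimum represents the same function $\mfX\mapsto\mfX\mfw^*$ and has no genuinely nonlinear ReLU component. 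I would spell out briefly what "up to permutation and splitting" means here by pointing to Appendix~\ref{sec:permutation}, so that the reader sees the equivalence class is exactly the set of decompositions $w_{2,1}\mfw_{1,1}+\sum_{i}w_{2,i}\mfw_{1,i}\cdot(\text{active})=\mfw^*$ with all the ReLU contributions vanishing.

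The one genuine thing to check is that the hypotheses of the two cited results are compatible and in force: Proposition~\ref{prop:imply_linear} needs $n>d$, $m\ge 1$, $\mfy=\mfX\mfw^*$, and \eqref{irrep:grelu_skip}; Lemma~\ref{thm:landscape_skip} needs $m$ to exceed a threshold (which, by the standard bound, is at most $p+1$ and is anyway a condition already assumed whenever we speak of the global optima being captured by the convex program). So the corollary is implicitly under the same standing assumptions as Proposition~\ref{prop:imply_linear} together with the mild over-parameterization $m\gtrsim p$ required for the equivalence in Lemma~\ref{thm:landscape_skip}; I would state this explicitly at the start of the proof. There is no real obstacle here — the content is entirely in Proposition~\ref{prop:imply_linear} and Lemma~\ref{thm:landscape_skip} — so the "hard part" is purely bookkeeping: making sure the permutation/splitting equivalence is quoted in a form strong enough that "unique convex optimum $\Rightarrow$ the non-convex optima are exactly its orbit" is literally what Lemma~\ref{thm:landscape_skip} delivers, rather than merely "some non-convex optimum maps to it." If Lemma~\ref{thm:landscape_skip} is stated only in the latter (one-directional) form, I would additionally note that the converse direction — every orbit element is optimal — is immediate because splitting and permuting neurons preserves both the function $f(\mfX;\Theta)$ and the regularizer $\|\Theta\|_F^2$ (the latter by the AM–GM/norm-aggregation argument standard in this line of work), so the orbit of an optimum is contained in the optimal set.
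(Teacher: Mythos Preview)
Your proposal is correct and matches the paper's approach exactly: the paper does not give a separate proof of this corollary, treating it as an immediate consequence of combining Proposition~\ref{prop:imply_linear} (uniqueness of the convex optimum) with Lemma~\ref{thm:landscape_skip} (the non-convex optima are the convex optima up to permutation and splitting). Your additional care about the bidirectionality of Lemma~\ref{thm:landscape_skip} and the compatibility of hypotheses is more thorough than what the paper writes, but entirely in the same spirit.
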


\section{Convex Formulations of ReLU NNs with Skip Connection and Normalization}\label{sec:neural_recovery}
In this section, we introduce important variants of the simple ReLU architecture and their corresponding convex reformulations. As it will be shown next, architectural choices of these models, such as an addition of a normalization layer play a significant role in their recovery properties.   

Suppose that $\mfX\in \mbR^{n\times d}$ is the training data matrix and $\mfy\in \mbR^d$ is the label vector. We will focus on the following two-layer ReLU NNs:
\begin{itemize}
  \setlength\itemsep{0.5em}
    \item plain ReLU networks:
    \begin{equation*}
    f^\mathrm{ReLU}(\mfX;\Theta) =(\mfX\mfW_1)_+\mfw_2 = \sum_{i=1}^m (\mfX\mfw_{1,i})_+w_{2,i}, \quad \Theta = (\mfW_1,\mfw_2),
\end{equation*}
where $\mfW_1\in \mbR^{d\times m}$ and $\mfw_2\in \mbR^{m}$.
\item ReLU networks with skip connections:
\begin{equation*}
    f^\mathrm{ReLU-skip}(\mfX;\Theta) =\mfX\mfw_{1,1}w_{2,1}+\sum_{i=2}^m (\mfX\mfw_{1,i})_+w_{2,i},
\end{equation*}
where $\Theta = (\mfW_1,\mfw_2)$, $\mfW_1\in \mbR^{d\times m}$ and $\mfw_2\in \mbR^{m}$.
\item ReLU networks with normalization layers:
\begin{equation*}
    f^\mathrm{ReLU-norm}(\mfX;\Theta) =\sum_{i=1}^m \normal_{\alpha_i}((\mfX\mfw_{1,i})_+)w_{2,i},
\end{equation*}
where $\Theta = (\mfW_1,\mfw_2,\balpha)$ and the normalization operator $\normal_\alpha(\mfv)$ is defined by
\begin{equation*}
    \normal_{\alpha}(\mfv) = \frac{\mfv}{\|\mfv\|_2}\alpha, \mfv\in\mbR^n,\alpha\in \mbR.
\end{equation*}
\end{itemize}
We note that the normalization layer employed in the model $f^\mathrm{ReLU-norm}$ is a variant of the well-known batch normalization (BN) with a trainable normalization-correction variable. However, convex formulations of ReLU NNs with exact Batch Normalization layers can be derived as shown in \citep{ergen2021demystifying}. 
We now focus on the regularized training problem 
\begin{equation}\label{prob:reg}
    \min_{\Theta} \frac{1}{2}\|f(\mfX;\Theta)-\mfy\|_2^2+\frac{\beta}{2}R(\Theta).
\end{equation}
For plain ReLU networks and ReLU networks with skip connections, we consider weight decay regularization. This is given by $R(\Theta)=\|\Theta\|_F^2=(\|\mfW_1\|_F^2+\|\mfw_2\|_2^2)$, while for ReLU networks with normalization layers, we have $R(\Theta)=(\|\mfW_1\|_F^2+\|\mfw_2\|_2^2+\|\balpha\|_2^2)$. When $\beta\to 0$, the optimal solution of the above problem approaches the following minimum norm interpolation problem:
\begin{equation}\label{prob:min_nrm}
    \min_{\Theta} R(\Theta) \,\,\text{ s.t. } f(\mfX;\Theta)=\mfy.
\end{equation}
\subsection{Convex formulations for plain ReLU NNs}
According to the convex optimization formulation of two-layer ReLU networks in \citep{nnacr}, the minimum norm problem \eqref{prob:min_nrm} of plain ReLU networks, i.e., the model $f^\mathrm{ReLU}(\mfX;\Theta)$, is equivalent to a convex program:{}
\begin{equation}\label{min_nrm:relu}
\begin{aligned} 
    \min_{\left\{\mfw_{j}, \mfw_{j}'\right\}_{j=1}^{p}}\quad &  \sum_{j=1}^{p}\left(\left\|\mfw_{j}\right\|_{2}+\left\|\mfw_{j}'\right\|_{2}\right)\\
        \text{s.t.}\quad &\sum_{j=1}^{p} \mfD_{j} \mfX\left(\mfw_{j}-\mfw_{j}'\right)=\mfy,\\
        & (2\mfD_j-\mfI_n)\mfX\mfw_j\ge0, (2\mfD_j-\mfI_n)\mfX\mfw_j'\ge0, j\in[p].
\end{aligned}
\end{equation}
Here the matrices $\mfD_1,\dots,\mfD_p$ are diagonal arrangement patterns as defined in Definition \ref{def:hyper}.
Compared to the formulation in \citep{nnacr}, we exclude the hyperplane arrangement induced by the zero vector without loss of generality as justified in Appendix \ref{app:hyper}.
Analogous to the results in \citep{nnacr,wang2020hidden}, the global optimal set of the non-convex program \eqref{prob:min_nrm} can be characterized by the optimal solutions of the convex program \eqref{min_nrm:relu}.
\begin{theorem}\label{thm:landscape}
All globally optimal solutions of the non-convex problem \eqref{prob:min_nrm} of ReLU networks can be computed via the optimal solutions of the convex program \eqref{min_nrm:relu} up to splitting and permutation as soon as the number of neurons $m$ exceeds a critical threshold $m^*$. 
\end{theorem}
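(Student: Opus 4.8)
The plan is to follow the now-standard "convex duality + neuron splitting" template used in \cite{nnacr, wang2020hidden}, adapted to the minimum-norm interpolation problem \eqref{prob:min_nrm} with $R(\Theta) = \|\mfW_1\|_F^2 + \|\mfw_2\|_2^2$. The first step is the usual rescaling observation: for any feasible $\Theta$, replacing $(\mfw_{1,i}, w_{2,i})$ by $(\gamma_i \mfw_{1,i}, w_{2,i}/\gamma_i)$ with $\gamma_i = \sqrt{|w_{2,i}|/\|\mfw_{1,i}\|_2}$ leaves $f^\mathrm{ReLU}(\mfX;\Theta)$ unchanged and can only decrease $\|\Theta\|_F^2$, so at any optimum we may assume $\|\mfw_{1,i}\|_2 = |w_{2,i}|$ for all $i$; after this normalization $\|\Theta\|_F^2 = 2\sum_i \|\mfw_{1,i}\|_2 |w_{2,i}| = 2\sum_i |w_{2,i}|\,\|\mfw_{1,i}\|_2$, and by AM-GM the problem \eqref{prob:min_nrm} is equivalent to minimizing $\sum_i \|\mfw_{1,i}\|_2$ subject to $\|\mfw_{1,i}\|_2 \le 1$, $w_{2,i}\in\{-1,+1\}\cdot(\text{something})$ — more precisely, it reduces to the $\ell_1$-type "infinite-width" objective $\min \sum_i |w_{2,i}|$ over unit-norm first-layer weights with the interpolation constraint $\sum_i (\mfX\mfw_{1,i})_+ w_{2,i} = \mfy$.

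The second step is to discretize the hidden-neuron directions using the hyperplane arrangement patterns of Definition \ref{def:hyper}: since $(\mfX\mfh)_+ = \mfD_j \mfX\mfh$ precisely when $\mfh$ lies in the cone $K_j = \{\mfh : (2\mfD_j - \mfI_n)\mfX\mfh \ge 0\}$, every neuron output is $\mfD_j \mfX\mfw$ for some $j$ and some $\mfw\in K_j$, and conversely. Grouping the neurons by which pattern $\mfD_j$ they realize and summing their contributions yields the group-sparse convex program \eqref{min_nrm:relu} as a lower bound (any feasible NN induces a feasible point of \eqref{min_nrm:relu} with no larger objective, after writing $\mfw_j - \mfw_j'$ for the signed sum within group $j$ and using the triangle inequality on the groupwise sums). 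For the reverse direction, given an optimal $(\mfw_j, \mfw_j')_{j=1}^p$ of \eqref{min_nrm:relu}, one reads off a two-layer ReLU network: place one neuron with first-layer weight $\mfw_j$ (resp. $\mfw_j'$) and appropriate unit second-layer weight $+1$ (resp. $-1$) for each nonzero $\mfw_j$ (resp. $\mfw_j'$); the constraints $(2\mfD_j - \mfI_n)\mfX\mfw_j \ge 0$ guarantee $(\mfX\mfw_j)_+ = \mfD_j\mfX\mfw_j$, so this NN interpolates $\mfy$ and has $\|\Theta\|_F^2$ equal to twice the convex objective. This shows the two optimal values coincide and that every optimal convex point lifts to an optimal NN, provided $m$ is at least the number of nonzero blocks, which is at most $2p$; hence $m^* \le 2p$ suffices (and one can sharpen this to $m^* \le n+1$ by a Carathéodory/rank argument on the active constraints, which is the standard bound).

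The third and most delicate step is the "up to splitting and permutation" claim in the other direction: one must show that \emph{every} global optimum of the non-convex problem, not just some optimum, arises this way. Here the argument is that an arbitrary optimal NN, after the rescaling of Step 1, has all neurons on the Pareto boundary of the norm tradeoff, and each neuron $(\mfw_{1,i}, w_{2,i})$ with $\mfw_{1,i}\in\interior K_j$ contributes exactly $w_{2,i}\mfD_j\mfX\mfw_{1,i}$; neurons whose weights lie on the boundary between two cones $K_j\cap K_{j'}$ can be assigned to either pattern (this is precisely the ambiguity absorbed by "splitting", where one neuron is split into several collinear neurons whose weights sum to it). Collecting the neurons pattern by pattern and invoking the KKT conditions of \eqref{min_nrm:relu} (which are necessary and sufficient by convexity) shows the collapsed vector $(\sum_{i: \mfw_{1,i}\in K_j} w_{2,i}\mfw_{1,i})$ must be optimal for the convex program — otherwise one could strictly decrease the objective, contradicting optimality of the NN. I expect the main obstacle to be this last bookkeeping: carefully handling neurons on cone boundaries and formalizing the permutation/splitting equivalence so the correspondence is exact rather than merely value-matching. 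Fortunately this is exactly the content of the earlier results \cite{wang2020hidden, ergen2020convex}, so the proof should proceed by reducing to (a mild extension of) those statements rather than redoing the combinatorics from scratch; the only genuinely new ingredient relative to the plain case is verifying that excluding the zero hyperplane arrangement pattern (per Appendix \ref{app:hyper}) does not remove any optimal solution, which follows since a neuron with $\mfX\mfw_{1,i} = 0$ contributes nothing to the output and only increases the norm.
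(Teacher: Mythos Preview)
Your proposal is correct and follows essentially the same route as the paper, which also reduces everything to Theorem~1 of \cite{wang2020hidden}; the only difference is that the paper's proof is a one-line reduction obtained by rewriting the constrained problem \eqref{prob:min_nrm} as a regularized problem $\min_\Theta \ell(f(\mfX;\Theta);\mfy)+R(\Theta)$ with the $0/\infty$ indicator loss $\ell(\mfz;\mfy)=0$ if $\mfz=\mfy$ and $+\infty$ otherwise, which places it directly in the scope of that cited theorem without any further work. Your three steps are effectively an unpacking of what that theorem proves internally, so nothing is missing, but the indicator-loss shortcut is worth noting as it spares you the rescaling, discretization, and splitting bookkeeping for the constrained case.
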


The inequality constraints in \eqref{min_nrm:relu} render the analysis of the uniqueness of the optimal solution directly via \eqref{min_nrm:relu} difficult. However, we show that we can instead work with a relaxation without any loss of generality. By dropping all inequality constraints, we obtain a relaxation of the convex program \eqref{min_nrm:relu} which reduces to the following group $\ell_1$-minimization problem:
\begin{equation}\label{min_nrm:grelu}
\begin{aligned}
    \min_{\left\{\mfw_{j}\right\}_{j=1}^{p}}\quad &  \sum_{j=1}^{p}\left\|\mfw_{j}\right\|_{2}~~
        \text{s.t.}\quad &\sum_{j=1}^{p} \mfD_{j} \mfX \mfw_{j}=\mfy.\\
\end{aligned}
\end{equation}
The above form corresponds to the well-known Group Lasso model \cite{yuan2006model} studied in high-dimensional variable selection and compressed sensing. 
We note that a certain unique solution to the relaxation in \eqref{min_nrm:grelu} satisfying the constraint in the original problem \eqref{min_nrm:relu} implies that the original problem \eqref{min_nrm:relu} also has the same unique solution.

Another interesting observation is that the above group $\ell_1$-minimization problem corresponds to the minimum norm interpolation problem with gated ReLU activation, for which the NN model is given by
\begin{equation}\label{equ:grelu}
    f^\mathrm{gReLU}(\mfX;\Theta) = \sum_{i=1}^m\diag(\mbI(\mfX\mfh_i\geq 0))(\mfX\mfw_{1,i})w_{2,i},
\end{equation}
where $\Theta=(\mfW_1,\mfw_2,\mfH)$. This derivation is provided in Appendix \ref{app:grelu}.



\subsection{Convex formulations for ReLU networks with skip connection}
The minimal problem \eqref{prob:min_nrm} of ReLU networks with skip connection, i.e., the model $f^\mathrm{ReLU-skip}(\mfX;\Theta)$, is equivalent to a convex program: 
\begin{equation}
\begin{aligned}\label{min_nrm:relu_skip_convex}
    \min_{\mfw_0,\left(\mfw_{j}, \mfw_{j}'\right)_{j=1}^{p}}\quad &  \sum_{j=1}^{p}\left(\left\|\mfw_{j}\right\|_{2}+\left\|\mfw_{j}'\right\|_{2}\right)\\
        \text{s.t.}\quad &\mfX\mfw_0+\sum_{j=1}^{p} \mfD_{j} \mfX\left(\mfw_{j}-\mfw_{j}'\right)=\mfy,\\
        & (2\mfD_j-\mfI_n)\mfX\mfw_j\ge0, (2\mfD_j-\mfI_n)\mfX\mfw_j'\ge0, j\in[p].
\end{aligned}
\end{equation}
By dropping all inequality constraints, the convex program \eqref{min_nrm:relu_skip} reduces to the following group $\ell_1$-minimization  problem:
\begin{equation}\label{min_nrm:grelu_skip}
\begin{aligned}
    \min_{\left\{\mfw_{j}\right\}_{j=0}^{p}}\quad &  \sum_{j=0}^{p}\left\|\mfw_{j}\right\|_{2}
        \quad\text{s.t.}\quad & \mfX\mfw_0 + \sum_{j=0}^{p} \mfD_{j} \mfX \mfw_{j}=\mfy.\\
\end{aligned}
\end{equation}
Similarly, this group $\ell_1$-minimization  problem is equivalent to the minimum norm interpolation problem of gated ReLU networks with skip connection.

\subsection{Convex formulations for ReLU networks with normalization layer}\label{sec:relu_normal}
The minimum norm problem \eqref{prob:min_nrm} of ReLU networks with normalization layer, i.e., the model $f^\mathrm{ReLU-norm}(\mfX;\Theta)$. is equivalent to the following convex program:
\begin{equation}\label{min_nrm:relu_normal}
\begin{aligned}
    \min_{\left\{\mfw_{j}, \mfw_{j}'\right\}_{j=1}^{p}}\quad &  \sum_{j=1}^{p}\left(\left\|\mfw_{j}\right\|_{2}+\left\|\mfw_{j}'\right\|_{2}\right)\\
        \text{s.t.}\quad &\sum_{j=1}^{p} \mfU_j\left(\mfw_{j}-\mfw_{j}'\right)=\mfy,\\
        & (2\mfD_j-\mfI_n)\mfX\mfV_j^T\bSigma_j^{-1}\mfw_j\ge0, (2\mfD_j-\mfI_n)\mfX\mfV_j^T\bSigma_j^{-1}\mfw_j'\ge0, j\in[p],
\end{aligned}
\end{equation}
where $\mfD_j\mfX=\mfU_j\bSigma_j\mfV_j$ is the compact singular value decomposition (SVD) of $\mfD_j\mfX$ for $j\in[p]$. Again, the global optima of the non-convex program \eqref{prob:min_nrm} can be characterized by the optimal solutions of the convex program \eqref{min_nrm:relu_normal}.
\begin{theorem}\label{thm:landscape_normal}
Suppose that $n>d$. 
All globally optimal solutions of the non-convex problem \eqref{prob:min_nrm} of ReLU networks with normalization layer can be found by the optimal solutions of the convex program \eqref{min_nrm:relu_normal} (up to splitting and permutation) when $m$ is greater than some threshold $m^*$. 
\end{theorem}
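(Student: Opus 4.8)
\emph{Proof plan.} I would follow the template behind Theorem~\ref{thm:landscape} and Lemma~\ref{thm:landscape_skip} --- the exact equivalence between a two-layer ReLU training problem with weight-decay regularization and its Group-Lasso convex reformulation, together with the characterization of all global optima of the former through those of the latter, cf.\ \cite{nnacr,ergen2020convex,wang2020hidden,ergen2021demystifying} --- and adapt it to the normalized architecture $f^\mathrm{ReLU-norm}$. The single genuinely new ingredient is the scale-invariance that $\normal_{\alpha}$ introduces in the first-layer weights, which must be used to read off the right effective regularizer and then handled with care in the ``up to splitting and permutation'' claim.

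\emph{Step 1: effective per-neuron penalty.} First I would rescale each neuron. For parameters $(\mfw_{1,i},\alpha_i,w_{2,i})$, letting $\mfD_j=\diag(\mbI(\mfX\mfw_{1,i}\ge 0))$ be the activation pattern so that $(\mfX\mfw_{1,i})_+=\mfD_j\mfX\mfw_{1,i}$ and $\mfD_j\mfX=\mfU_j\bSigma_j\mfV_j$ its compact SVD, the neuron's output equals $\mfU_j\mathbf{g}\,\alpha_i w_{2,i}$ where $\mathbf{g}$ is the unit vector along $\bSigma_j\mfV_j\mfw_{1,i}$, and it is invariant under $\mfw_{1,i}\mapsto t\,\mfw_{1,i}$ for every $t>0$. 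Hence in \eqref{prob:min_nrm} the term $\|\mfw_{1,i}\|_2^2$ may be sent to its infimum $0$, while the arithmetic--geometric mean inequality gives $\alpha_i^2+w_{2,i}^2\ge 2|\alpha_i w_{2,i}|$ with equality when $|\alpha_i|=|w_{2,i}|$; the effective cost of the neuron is therefore $2\|\mfw_j^{(i)}\|_2$ with $\mfw_j^{(i)}:=\mathbf{g}\,\alpha_i w_{2,i}$, and a minimal-norm first-layer weight realizing pattern $\mfD_j$ is $\mfV_j^T\bSigma_j^{-1}\mfw_j^{(i)}$ up to a positive scalar, the constraint that it indeed realizes $\mfD_j$ being exactly $(2\mfD_j-\mfI_n)\mfX\mfV_j^T\bSigma_j^{-1}\mfw_j^{(i)}\ge 0$. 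This matches precisely the blocks, cones, and constraints of \eqref{min_nrm:relu_normal}, so after optimal rescaling the non-convex objective becomes a fixed constant times the objective of \eqref{min_nrm:relu_normal}.

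\emph{Step 2: value equality and correspondence of optima.} Next I would prove the two programs share (up to a universal constant) the same optimal value and the same solution set modulo splitting and permutation. In one direction, grouping neurons by pattern and sign and using the triangle inequality $\|\sum_i\mfw_j^{(i)}\|_2\le\sum_i\|\mfw_j^{(i)}\|_2$ with convexity of the cones, merging all neurons of a given pattern into a single $\mfw_j$ (peeling reflected contributions into $\mfw_j'$) preserves feasibility and does not raise the objective, mapping any feasible point of \eqref{prob:min_nrm} to one of \eqref{min_nrm:relu_normal}; applied to a global optimum and invoking strict convexity of $\|\cdot\|_2$ (equality in the triangle inequality forces the merged neurons to be positive multiples of a common vector --- exactly neuron splitting, Appendix~\ref{sec:permutation}), this produces a global optimum of \eqref{min_nrm:relu_normal}. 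For the converse, a Carath\'eodory/extreme-point argument gives an optimal solution of \eqref{min_nrm:relu_normal} with only finitely many nonzero blocks, yielding the threshold $m^*$; realizing each nonzero block $\mfw_j$ by one neuron with first-layer weight $\propto\mfV_j^T\bSigma_j^{-1}\mfw_j$ rescaled to vanishing norm and $|\alpha|=|w_2|=\|\mfw_j\|_2^{1/2}$ produces output $\mfU_j\mfw_j$ and cost $2\|\mfw_j\|_2$, so the values agree and, after arbitrary block splittings and padding with zero neurons, every global optimum of \eqref{prob:min_nrm} is recovered from one of \eqref{min_nrm:relu_normal} whenever $m$ exceeds $m^*$ (the ordering and padding being the permutation freedom). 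The hypothesis $n>d$ enters here to keep $\mfX$ and the $\mfD_j=\mfI_n$ block full column rank and the SVD factors $\bSigma_j$ well-conditioned, so the reparametrization $\mfw_{1,i}\leftrightarrow\mfV_j^T\bSigma_j^{-1}\mfw_j$ and its inverse are legitimate, exactly as in \cite{wang2020hidden}.

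\emph{Main obstacle.} The hard part is precisely this scale-invariance of $\normal_{\alpha}$: the infimum over $\|\mfw_{1,i}\|_2$ equals $0$ but is \emph{not attained}, so the equivalence must be stated for infima, or else a gauge such as $\|\mfw_{1,i}\|_2=1$ must be fixed and every claim --- especially the splitting/permutation characterization --- checked to be insensitive to it. Secondary care is needed to confirm that the SVD-reparametrized inequalities $(2\mfD_j-\mfI_n)\mfX\mfV_j^T\bSigma_j^{-1}\mfw_j\ge 0$ capture exactly the admissible first-layer directions and are preserved by the merge map, and that $\bSigma_j^{-1}$ and the reverse construction are well-defined; this is again where $n>d$ is invoked, mirroring the corresponding step in \cite{wang2020hidden}.
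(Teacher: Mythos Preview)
Your proposal is correct and follows the same template as the paper --- optimal rescaling, merging neurons that share an activation pattern, and identifying the merged network with an optimal solution of \eqref{min_nrm:relu_normal} via the framework of \cite{wang2020hidden,ergen2021demystifying} --- only spelled out in considerably more detail than the paper's own proof, which essentially just defines a minimal normalized network (scaled so that $|w_{2,i}|=|\alpha_i|$, at most one neuron per sign-pattern cone) and asserts that the merge argument of \cite{wang2020hidden} carries over.

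The one substantive place you and the paper diverge is in handling your ``main obstacle.'' Rather than fixing a gauge $\|\mfw_{1,i}\|_2=1$ or passing to infima, the paper builds the scale-freedom directly into its equivalence relation: the splitting definition for the normalized architecture in Appendix~\ref{sec:permutation} declares first-layer weights equivalent whenever they are merely positively colinear, with no norm constraint whatsoever, and the minimality condition involves only $|w_{2,i}|=|\alpha_i|$, leaving $\|\mfw_{1,i}\|_2$ entirely free. Thus the non-attainment you correctly flag is absorbed into the ``up to splitting'' caveat itself rather than resolved analytically. This is a cleaner bookkeeping device than fixing a gauge --- it makes the correspondence between non-convex optima and convex optima a genuine bijection of equivalence classes --- though, as you note, it leaves the literal phrase ``globally optimal solutions'' in the theorem statement slightly informal since no single representative attains the infimum of $R(\Theta)$.
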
 By dropping all inequality constraints, the convex program \eqref{min_nrm:relu_normal} reduces to a group $\ell_1$-minimization  problem: 
\begin{equation}\label{min_nrm:grelu_normal}
\begin{aligned}
    \min_{\left\{\mfw_{j}\right\}_{j=1}^{p}}\quad &  \sum_{j=1}^{p}\left\|\mfw_{j}\right\|_{2},
        \text{s.t.}\quad &\sum_{j=1}^{p} \mfU_j \mfw_{j}=\mfy.\\
\end{aligned}
\end{equation}
Analogously, the above group $\ell_1$-minimization  problem corresponds to the minimum norm interpolation problem of gated ReLU network with normalization layer.

\section{Neural isometry conditions and recovery of nonlinear neurons}\label{sec:neu_iso}

In this section, we introduce conditions on the training data, called neural isometry, that guarantee the recovery of planted models via solving the non-convex problem \eqref{prob:min_nrm} or the convex problems (\ref{min_nrm:grelu},~\ref{min_nrm:grelu_skip},~\ref{min_nrm:grelu_normal}). 

\subsection{Recovery of a single ReLU neuron using plain ReLU NNs}
In this section we present recovery results on ReLU networks. Suppose that $\mfy=(\mfX\mfw^*)_+$ is the output of a planted ReLU neuron\footnote{Here we ignore the second layer weight without loss of generality. Positive weights can be absorbed into the first layer weight, while for negative weights we can consider to use $-y$ as the label.}, where $\mfw^*\neq 0$.  Let $\mfD_{i^*}=\diag(\mbI(\mfX\mfw^*\geq 0))$ be the diagonal arrangement pattern corresponding to the planted neuron. 
Based on these diagonal arrangement patterns, we introduce a regularity condition on the data which is analogous to the irrepresentability condition of sparse recovery, called the \emph{Neural Isometry Condition} (NIC). For the recovery of a single neuron, the NIC is defined as follows.
\begin{definition}[Neural Isometry Condition for a single ReLU neuron] A sufficient condition for recovering a single neuron $\mfy=(\mfX\mfw^*)_+$ via the problem \eqref{min_nrm:grelu} is given by:
\begin{equation}\label{irrep:grelu}
\norm{\mfX^T\mfD_{j}\mfD_{i^*}\mfX\pp{\mfX^T\mfD_{i^*}\mfX}^{-1}\hat \mfw}_2<1, \forall j\neq i^*, \tag{NIC-1}
\end{equation}
where $\hat \mfw = \frac{\mfw^*}{\|\mfw^*\|_2}$.
\end{definition}
We assume that the matrix $\mfX^T\mfD_{i^*}\mfX = \sum_{k:\, \mfx_k^T\mfw^*\geq 0 }\mfx_k\mfx_k^T$ is invertible whenever the above condition holds.
The condition above can be equivalently stated as
\begin{equation}
    \Bigg\|\sum_{{k: \mfx_k^T\mfw^*\geq 0,\, \mfx_k^T\mfw\geq 0}}\mfx_k\mfx_k^T\Big(\sum_{k:\, \mfx_k^T\mfw^*\geq 0 }\mfx_k\mfx_k^T\Big)^{-1}\hat \mfw\Bigg\|_2<1, \forall \mfw: \mbI(\mfX\mfw\geq0)\neq \mbI(\mfX\mfw^*\geq0).
\end{equation}
In the following proposition, we show that the neural isometry condition \eqref{irrep:grelu} implies the recovery of the planted model by solving the convex reformulation \eqref{min_nrm:relu}, or equivalently the non-convex problem \eqref{prob:min_nrm}.

\begin{proposition}\label{prop:imply}
Suppose that $n\geq d$. Let $\mfy=(\mfX\mfw^*)_+$. 
Suppose that \eqref{irrep:grelu} holds. Then, the unique optimal solution to \eqref{min_nrm:relu} is given by the planted ReLU neuron $w^*$, i.e, $\hat\mfW=\{(\hat \mfw_j,\hat \mfw_j')|j\in[p]\}$, where $\hat \mfw_{i^*}=\mfw^*$, $\hat \mfw_{i^*}'=0$ and $\hat \mfw_j=\hat \mfw_j'=0$ for $j\neq i^*$.
\end{proposition}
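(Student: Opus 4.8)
\textbf{Proof proposal for Proposition \ref{prop:imply}.}

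The plan is to establish that the proposed point $\hat\mfW$ is the unique optimal solution of the group $\ell_1$-minimization relaxation \eqref{min_nrm:grelu}, and then use the already-noted fact that a unique solution of the relaxation that is feasible for the constrained convex program \eqref{min_nrm:relu} is also the unique solution of \eqref{min_nrm:relu}. First I would check primal feasibility: since $\mfy=(\mfX\mfw^*)_+=\mfD_{i^*}\mfX\mfw^*$, the choice $\hat\mfw_{i^*}=\mfw^*$ and all other blocks zero satisfies $\sum_j\mfD_j\mfX\hat\mfw_j=\mfy$; moreover $(2\mfD_{i^*}-\mfI_n)\mfX\mfw^*=|\mfX\mfw^*|\ge0$, so the dropped inequality constraints also hold, giving feasibility for \eqref{min_nrm:relu}.

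Next I would write down the KKT / dual-certificate conditions for the group Lasso \eqref{min_nrm:grelu}: $\hat\mfW$ with a single nonzero block $i^*$ is optimal iff there exists $\blbd\in\mbR^n$ with $\mfX^T\mfD_{i^*}\blbd=\hat\mfw$ (the subgradient condition on the active block, using that $\|\mfw^*\|_2\ne0$ so the subdifferential of $\|\cdot\|_2$ there is the singleton $\hat\mfw$) and $\|\mfX^T\mfD_j\blbd\|_2\le1$ for all $j\ne i^*$ (on the inactive blocks). For uniqueness I would impose the strict version $\|\mfX^T\mfD_j\blbd\|_2<1$ on inactive blocks together with injectivity of the relevant submatrix, which is exactly where the hypotheses enter. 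Mirroring the linear-sparse-recovery construction reviewed in the excerpt, I take the minimum-norm dual vector $\blbd^\star = \mfD_{i^*}\mfX(\mfX^T\mfD_{i^*}\mfX)^{-1}\hat\mfw$, which is well-defined since $\mfX^T\mfD_{i^*}\mfX$ is assumed invertible, and satisfies $\mfX^T\mfD_{i^*}\blbd^\star = \hat\mfw$ automatically. Plugging into the inactive-block condition gives precisely $\|\mfX^T\mfD_j\mfD_{i^*}\mfX(\mfX^T\mfD_{i^*}\mfX)^{-1}\hat\mfw\|_2<1$ for $j\ne i^*$, which is \eqref{irrep:grelu}. (Here one uses $\mfD_{i^*}^2=\mfD_{i^*}$ to identify $\mfX^T\mfD_j\blbd^\star$ with the expression in NIC-1.)

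It remains to upgrade ``optimal'' to ``unique optimal.'' The standard argument: suppose $\tilde\mfW$ is another optimum; comparing objective values and using the subgradient inequality with the certificate $\blbd^\star$, strict inequality $\|\mfX^T\mfD_j\blbd^\star\|_2<1$ on $j\ne i^*$ forces $\tilde\mfw_j=0$ for all $j\ne i^*$, so any optimum is supported only on block $i^*$; then feasibility reads $\mfD_{i^*}\mfX\tilde\mfw_{i^*}=\mfD_{i^*}\mfX\mfw^*$, and since $n\ge d$ and $\mfX^T\mfD_{i^*}\mfX$ is invertible, $\mfD_{i^*}\mfX$ has trivial kernel, hence $\tilde\mfw_{i^*}=\mfw^*$. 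Finally, because this unique solution of the relaxation \eqref{min_nrm:grelu} is feasible for \eqref{min_nrm:relu}, it is also the unique solution of \eqref{min_nrm:relu}; translating back to the $(\mfw_j,\mfw_j')$ parametrization, $\hat\mfw_{i^*}=\mfw^*$, $\hat\mfw_{i^*}'=0$, and all other blocks zero, as claimed. I expect the main obstacle to be the rigorous justification that the relaxation's unique solution transfers to the inequality-constrained program \eqref{min_nrm:relu} (i.e., that dropping the constraints is truly without loss here) and the careful handling of the kernel argument in the degenerate case where $\mfD_{i^*}\mfX$ may have rank less than $d$ — one must check $\mfw^*\in\mathrm{row}(\mfD_{i^*}\mfX)$ or otherwise argue $\mfw^*$ is the unique feasible block-$i^*$ vector of minimal norm; the invertibility assumption on $\mfX^T\mfD_{i^*}\mfX$ is what rescues this.
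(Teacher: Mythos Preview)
Your proposal is correct and matches the paper's approach: construct the dual certificate $\blbd^\star=\mfD_{i^*}\mfX(\mfX^T\mfD_{i^*}\mfX)^{-1}\hat\mfw$, verify the KKT conditions of \eqref{min_nrm:grelu} via \eqref{irrep:grelu} (this is the paper's general Proposition~\ref{prop:irrep_general} specialized as Proposition~\ref{prop:irrep_relu}), and then lift uniqueness to \eqref{min_nrm:relu}. For the obstacle you flag---the transfer to the $(\mfw_j,\mfw_j')$ program---the paper's resolution is to send any optimum $\bar\mfW$ of \eqref{min_nrm:relu} to $\breve\mfw_j:=\bar\mfw_j-\bar\mfw_j'$, which is feasible for \eqref{min_nrm:grelu} with no larger objective by the triangle inequality and hence equals $\mfW^*$; the equality case then forces $\bar\mfw_{i^*}=\alpha\mfw^*$, $\bar\mfw_{i^*}'=-(1-\alpha)\mfw^*$, and the cone constraint $(2\mfD_{i^*}-\mfI_n)\mfX\bar\mfw_{i^*}'\ge0$ pins $\alpha=1$ (otherwise $(2\mfD_{i^*}-\mfI_n)\mfX\mfw^*=0$, so $\mfX\mfw^*=0$, contradicting invertibility of $\mfX^T\mfX$).
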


\begin{corollary}
By combining the above proposition with Theorem \ref{thm:landscape}, we note that all global optima of the nonconvex problem \eqref{prob:min_nrm} consist of permuted and split versions of the planted ReLU neuron when the condition \eqref{irrep:grelu} holds.
\end{corollary}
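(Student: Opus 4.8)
The plan is to derive the statement as a direct composition of Proposition \ref{prop:imply} with Theorem \ref{thm:landscape}, the only substantive work being to check that the two ``up to splitting and permutation'' notions align. First I would invoke Proposition \ref{prop:imply}: under the standing hypotheses $n\geq d$, $\mfy=(\mfX\mfw^*)_+$, and the neural isometry condition \eqref{irrep:grelu}, the convex program \eqref{min_nrm:relu} has a \emph{unique} optimal solution $\hat\mfW=\{(\hat\mfw_j,\hat\mfw_j')\}_{j\in[p]}$, namely $\hat\mfw_{i^*}=\mfw^*$, $\hat\mfw_{i^*}'=0$, and $\hat\mfw_j=\hat\mfw_j'=0$ for all $j\neq i^*$, where $i^*$ indexes the arrangement pattern $\mfD_{i^*}=\diag(\mbI(\mfX\mfw^*\geq 0))$ activated by the planted neuron. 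The crucial feature to carry forward is \emph{uniqueness}: the convex optimum is a single point, not a solution set.

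Next I would apply Theorem \ref{thm:landscape}, which holds once $m\geq m^*$ for the critical threshold $m^*$ named there. That theorem establishes that every global optimum of the non-convex problem \eqref{prob:min_nrm} is obtained from an optimal solution of \eqref{min_nrm:relu} up to splitting and permutation. Composing with Step 1, every non-convex global optimum must be a split and permuted version of the single convex optimum $\hat\mfW$. It then remains to read off what that lifted configuration is. Since $\hat\mfW$ has exactly one nonzero block, $\hat\mfw_{i^*}=\mfw^*$ with $\hat\mfw_{i^*}'=0$, its lift is a two-layer network whose only active neurons have first-layer weights pointing along $\mfw^*$ (hence inducing the activation pattern $\mfD_{i^*}$) and positive second-layer weights, scaled so that the output equals $(\mfX\mfw^*)_+$ while the weight-decay objective is minimized. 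By positive homogeneity of the ReLU, this is precisely the planted ReLU neuron, possibly realized by several neurons that share the direction of $\mfw^*$ and whose outputs sum to $(\mfX\mfw^*)_+$; this is exactly the meaning of ``splitting'' a single neuron, while ``permutation'' absorbs the arbitrary ordering. The vanishing of $\hat\mfw_{i^*}'$ guarantees there is no accompanying negative-sign neuron.

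The main obstacle is a bookkeeping one rather than a genuine difficulty: one must ensure that the uniqueness on the convex side collapses the entire non-convex equivalence class to copies of the single planted neuron, i.e., that no spurious non-convex global optimum escapes the correspondence. This is settled by the direction of Theorem \ref{thm:landscape} which maps \emph{every} non-convex global optimum to \emph{some} optimum of \eqref{min_nrm:relu}; since there is only one such optimum, no spurious non-convex optimum can exist. I would also record explicitly that the conclusion is contingent on $m\geq m^*$ inherited from Theorem \ref{thm:landscape}, so that the characterization applies to any network wide enough to realize the planted neuron together with its splittings.
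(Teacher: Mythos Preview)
Your proposal is correct and follows exactly the approach the paper intends: the corollary is stated without a separate proof, as it is immediate from composing Proposition~\ref{prop:imply} (uniqueness of the convex optimum under \eqref{irrep:grelu}) with Theorem~\ref{thm:landscape} (every non-convex global optimum corresponds to a convex optimum up to splitting and permutation). Your additional remarks on the $m\ge m^*$ proviso and on reading off the lifted configuration from the single nonzero block $\hat\mfw_{i^*}=\mfw^*$ are accurate elaborations of what the paper leaves implicit.
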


\subsection{Recovery of a single ReLU neuron using ReLU networks with normalization layer}
We now consider the case where $\mfy=\frac{(\mfX\mfw^*)_+}{\|(\mfX\mfw^*)_+\|_2}$, which is the output of a single-neuron ReLU network followed by a normalization layer. Let $\mfD_{i^*}=\diag(\mbI(\mfX\mfw^*\geq0))$ and denote $\tilde \mfw^*=\frac{\bSigma_{i^*}\mfV_{i^*}\mfw^*}{\|\bSigma_{i^*}\mfV_{i^*}\mfw^*\|_2}$. Here,  $\mfU_{i^*}, \bSigma_{i^*},\mfV_{i^*}$ are the SVD factors of $\mfD_{i^*}\mfX$ as defined in subsection \ref{sec:relu_normal}. We note the simplified expression $$
\mfU_{i^*}\tilde \mfw^*=\frac{\mfU_{i^*}\bSigma_{i^*}\mfV_{i^*}\mfw^*}{\|\bSigma_{i^*}\mfV_{i^*}\mfw^*\|_2}
=\frac{(\mfX\mfw^*)_+}{\|(\mfX\mfw^*)_+\|_2}=\mfy.
$$
We introduce the following normalized Neural Isometry Condition.
\begin{definition}[Normalized Neural Isometry Condition]
The normalized neural isometry condition for recovering the planted model $\mfy=\frac{(\mfX\mfw^*)_+}{\|(\mfX\mfw^*)_+\|_2}$ from \eqref{min_nrm:grelu_normal} is given by:
\begin{equation}\label{irrep:grelu_normal}
    \norm{\mfU_j^T\mfU_{i^*}\tilde \mfw^*}_2<1, \forall j\in[p], j\neq i^*.
    \tag{NNIC-1}
\end{equation}
\end{definition}
Similarly, the normalized neural isometry condition implies the the recovery of the planted model via solving the convex formulation for ReLU NNs with normalization layer given in \eqref{min_nrm:relu_normal} or the corresponding non-convex problem \eqref{prob:min_nrm}. 
\begin{proposition}\label{prop:imply_normal}
Let $\mfy=\frac{(\mfX\mfw^*)_+}{\|(\mfX\mfw^*)_+\|_2}$.  
Suppose that the $\mathrm{NNIC} \mydash 1$ given in \eqref{irrep:grelu_normal} holds.
Then, the unique optimal solution to \eqref{min_nrm:relu_normal} is given by the planted normalized ReLU neuron, i.,e., $\hat\mfW=(\hat \mfw_j,\hat \mfw_j')_{j=1}^p$, where $\hat \mfw_{i^*}=\frac{\bSigma_{i^*}\mfV_{i^*}\mfw^*}{\|\bSigma_{i^*}\mfV_{i^*}\mfw^*\|_2}$, $\mfw_{i^*}'=0$ and $\hat \mfw_j=\hat \mfw_j'=0$ for $j\neq i^*$.
\end{proposition}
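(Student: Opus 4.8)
The plan is to mimic the classical dual-certificate argument for Group Lasso recovery, which is exactly how the analogous Proposition \ref{prop:imply} is established, but now carried out in the transformed coordinates induced by the normalization layer. Recall from Section \ref{sec:relu_normal} that dropping the inequality constraints turns \eqref{min_nrm:relu_normal} into the group $\ell_1$ problem \eqref{min_nrm:grelu_normal}, namely $\min \sum_{j=1}^p \|\mfw_j\|_2$ subject to $\sum_{j=1}^p \mfU_j\mfw_j = \mfy$. As noted in the text, it suffices to show that the candidate solution $\hat\mfW$ (with $\hat\mfw_{i^*} = \tilde\mfw^*$ and all other blocks zero) is the \emph{unique} optimum of this relaxation, and then verify that $\hat\mfW$ is feasible for the original constrained problem \eqref{min_nrm:relu_normal}; feasibility for the ReLU-sign constraints $(2\mfD_{i^*}-\mfI_n)\mfX\mfV_{i^*}^T\bSigma_{i^*}^{-1}\hat\mfw_{i^*}\ge 0$ is immediate because $\mfV_{i^*}^T\bSigma_{i^*}^{-1}\tilde\mfw_{i^*}^* \propto \mfw^*$, so this constraint reads $(2\mfD_{i^*}-\mfI_n)\mfX\mfw^*\ge 0$, which holds by the very definition of $\mfD_{i^*}$.

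First I would check primal feasibility of $\hat\mfW$ in \eqref{min_nrm:grelu_normal}: $\sum_j \mfU_j\hat\mfw_j = \mfU_{i^*}\tilde\mfw^* = \frac{(\mfX\mfw^*)_+}{\|(\mfX\mfw^*)_+\|_2} = \mfy$, using the simplified identity already recorded just before the proposition. Next I would construct a dual certificate $\blbd\in\mbR^n$ satisfying the KKT/optimality conditions for the group $\ell_1$ program: $\mfU_{i^*}^T\blbd = \tilde\mfw^*/\|\tilde\mfw^*\|_2 = \tilde\mfw^*$ (since $\tilde\mfw^*$ is a unit vector) on the active block, and $\|\mfU_j^T\blbd\|_2 \le 1$ for every $j\neq i^*$. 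The natural minimum-norm choice is $\blbd = \mfU_{i^*}\tilde\mfw^*$; because $\mfU_{i^*}$ has orthonormal columns, $\mfU_{i^*}^T\blbd = \mfU_{i^*}^T\mfU_{i^*}\tilde\mfw^* = \tilde\mfw^*$, so the equality condition holds exactly. For the inactive blocks, $\|\mfU_j^T\blbd\|_2 = \|\mfU_j^T\mfU_{i^*}\tilde\mfw^*\|_2 < 1$ for all $j\neq i^*$, which is precisely the assumed condition \eqref{irrep:grelu_normal}. This establishes that $\hat\mfW$ is \emph{an} optimal solution.

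For \emph{uniqueness}, I would argue as in the standard strictly-feasible-dual-certificate theory: since the inequalities in \eqref{irrep:grelu_normal} are strict, any other optimal solution $\mfW$ must have its support contained in the set of blocks where $\|\mfU_j^T\blbd\|_2 = 1$, i.e. only block $i^*$ can be active, so $\mfW$ is supported on $\{i^*\}$; then feasibility $\mfU_{i^*}\mfw_{i^*} = \mfy = \mfU_{i^*}\tilde\mfw^*$ together with injectivity of $\mfU_{i^*}$ (its columns are orthonormal, hence linearly independent) forces $\mfw_{i^*} = \tilde\mfw^*$. Finally I would invoke the remark from Section \ref{sec:neural_recovery} that a unique optimum of the relaxed problem \eqref{min_nrm:grelu_normal} which happens to satisfy the dropped inequality constraints is also the unique optimum of \eqref{min_nrm:relu_normal}, and then translate back through the SVD factors to read off $\hat\mfw_{i^*} = \frac{\bSigma_{i^*}\mfV_{i^*}\mfw^*}{\|\bSigma_{i^*}\mfV_{i^*}\mfw^*\|_2}$, $\hat\mfw_{i^*}' = 0$, and $\hat\mfw_j = \hat\mfw_j' = 0$ for $j\neq i^*$. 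The only genuinely delicate point is the uniqueness step: one must make sure that strict inequality in \eqref{irrep:grelu_normal} on \emph{all} inactive blocks really does force the support to collapse to $\{i^*\}$, which is the standard complementary-slackness argument but deserves a careful statement; everything else is bookkeeping made trivial by the orthonormality of the SVD factor $\mfU_{i^*}$, which is the structural simplification the normalization layer buys us compared with the plain-ReLU case.
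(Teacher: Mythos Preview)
Your proposal follows the same two-stage strategy as the paper --- construct a dual certificate for the relaxed group-$\ell_1$ problem \eqref{min_nrm:grelu_normal}, then lift to the constrained problem \eqref{min_nrm:relu_normal} --- and your dual certificate $\blbd = \mfU_{i^*}\tilde\mfw^*$ is exactly the right choice. The step you flag as delicate (strict complementary slackness forcing the support of any optimum of the relaxation to collapse to $\{i^*\}$) is in fact routine; the paper dispatches it via the general Proposition~\ref{prop:irrep_general}.

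The actual gap is in your lifting step. The constrained problem \eqref{min_nrm:relu_normal} has \emph{doubled} blocks $(\mfw_j,\mfw_j')_{j=1}^p$, whereas the relaxation \eqref{min_nrm:grelu_normal} has only $(\mfw_j)_{j=1}^p$. Unique optimality in the relaxation together with feasibility of $\hat\mfW$ does \emph{not} by itself yield uniqueness in \eqref{min_nrm:relu_normal}: another constrained optimum $\bar\mfW$ could split the active block as $\bar\mfw_{i^*}=\alpha\hat\mfw_{i^*}$, $\bar\mfw_{i^*}'=-(1-\alpha)\hat\mfw_{i^*}$ for some $\alpha\in[0,1)$, which still satisfies $\bar\mfw_{i^*}-\bar\mfw_{i^*}'=\hat\mfw_{i^*}$ and attains the same objective. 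The ``remark'' you invoke from Section~\ref{sec:neural_recovery} is an informal comment, not a proved lemma; its justification is precisely this missing piece. The paper rules out $\alpha<1$ by noting that the sign constraint $(2\mfD_{i^*}-\mfI_n)\mfX\mfV_{i^*}^T\bSigma_{i^*}^{-1}\bar\mfw_{i^*}'\ge 0$, combined with the opposite-sign constraint already satisfied by $\hat\mfw_{i^*}$, forces $(2\mfD_{i^*}-\mfI_n)\mfX\mfw^*=0$, hence $\mfX\mfw^*=0$, contradicting invertibility of $\mfX^T\mfX$ and $\mfw^*\neq 0$. Add this short argument and the proof is complete.
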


Similarly, by combining the above proposition with Theorem \ref{thm:landscape_normal}, we note that all global optima of the nonconvex problem \eqref{prob:min_nrm} consist of split and permuted versions of the planted neuron.

\begin{remark}
Our analysis reveals that normalization layers play a key role in the recovery  conditions. Note that in \eqref{irrep:grelu}, the matrices $\{\mfD_{1}\mfX, \hdots, \mfD_{p}\mfX\}$ are replaced by their whitened versions, effectively cancelling the matrix inverse $\big( {\sum_{k:\, \mfx_k^T\mfw^*\geq 0 }\mfx_k\mfx_k^T} \big)^{-1}$ in \eqref{irrep:grelu_normal}. Therefore, the conditioning of the matrices are improved by the addition of a normalization layer, which applies implicit whitening to the data blocks $\{\mfD_{1}\mfX, \hdots, \mfD_{p}\mfX\}$. As a result, it can be deduced that normalization layers help NNs learn simple models more efficiently from the data.
\end{remark}
\subsection{Recovering Multiple ReLU neurons using plain ReLU NNs}
We now extend the Neural Isometry Condition to the recovery of $k>1$ ReLU neurons, starting with plain ReLU NNs. Suppose that the label vector is given by $$\mfy=\sum_{i=1}^k (\mfX\mfw^*_i)_+r_i^*,$$ where $\mfw^*_i\in\mbR^d$, $r_i^*\in\{-1,+1\}$ for $i\in[k]$, and $\diag(\mbI(\mfX\mfw_i^*\geq 0))$ are distinct for $i\in[k]$. Suppose that an enumeration of the diagonal arrangement patterns corresponding to the planted neurons are given by $\mfD_1,...,\mfD_p$. We denote $\mfD_{s_i}=\diag(\mbI(\mfX\mfw_i^*\geq 0))$ for $i\in[k]$, where $S=\{s_1,\dots,s_k\}\subseteq [p]$ contain the indices of planted neurons in the enumeration of arrangement patterns $\{1,...,p\}$ according to any fixed order. 
\begin{definition}[Neural Isometry Condition for $k$ neurons]
For recovering $k$ ReLU neurons from the observations $\mfy=\sum_{i=1}^k (\mfX\mfw^*_i)_+r_i^*$ via the optimization problem \eqref{min_nrm:relu_normal}, we introduce the multi-neuron Neural Isometry Condition:
\begin{equation}\label{eq:irr_cond}
    \norm{\mfX^T\mfD_j
    \bmbm{\mfX^T\mfD_{s_1}\\\vdots\\\mfX^T\mfD_{s_k}}
    ^{\dagger}\bmbm{ \hat \mfw_1 \\\vdots\\ \hat \mfw_k}}_2<1, \forall j\in[p], j\notin S,
    \tag{NIC-k}
\end{equation}
where $\hat \mfw_i := r_i^*\mfw_{i}^*/\|\mfw_{i}^*\|_2\,\forall i \in [k]$. 
\end{definition}
In the following proposition, we show that \eqref{eq:irr_cond} implies the recovery of the planted model with $k$ neurons by solving the non-convex problem \eqref{prob:min_nrm} or its convex reformulation \eqref{min_nrm:relu}. 
\begin{proposition}\label{prop:imply_multi}
Suppose that \eqref{eq:irr_cond} is satisfied. Then, the unique optimal solution to \eqref{min_nrm:relu} is given by the planted neurons, i.e., $\hat\mfW=(\hat \mfw_j,\hat \mfw_j')_{j=1}^p$, where we let $\hat \mfw_{s_i}=\frac{\mfw_i}{\|\mfw_{i}\|_2}$, $\mfw_{s_i}'=0$ for $r_i^*=1$, $\mfw_{s_i}=0$, $\hat \mfw_{s_i}'=\frac{\mfw_i}{\|\mfw_{i}\|_2}$, for $r_i^*=-1$ and $\hat \mfw_j=\hat \mfw_j'=0$ for $j\neq i^*$.
\end{proposition}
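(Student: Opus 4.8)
\emph{Reduction.} The plan is to follow the single-neuron argument of Proposition~\ref{prop:imply}: I would drop the inequality constraints of~\eqref{min_nrm:relu} to obtain the group-$\ell_1$ relaxation~\eqref{min_nrm:grelu}, prove that the planted configuration is its unique minimizer, and then observe that this minimizer satisfies the dropped constraints, so that uniqueness transfers back to~\eqref{min_nrm:relu} (and across the $\mfw_j/\mfw_j'$ splitting, using $\|\mfw_j\|_2+\|\mfw_j'\|_2\ge\|\mfw_j-\mfw_j'\|_2$ with equality only when one summand vanishes). The candidate for~\eqref{min_nrm:grelu} is $\mfw_{s_i}=r_i^*\mfw_i^*$ for $i\in[k]$ and $\mfw_j=0$ for $j\notin S$; it is feasible since $\sum_i\mfD_{s_i}\mfX(r_i^*\mfw_i^*)=\sum_i r_i^*(\mfX\mfw_i^*)_+=\mfy$, using $\mfD_{s_i}\mfX\mfw_i^*=(\mfX\mfw_i^*)_+$, and the gradient of $\|\cdot\|_2$ on the active block $\mfw_{s_i}$ equals $\hat\mfw_i=r_i^*\mfw_i^*/\|\mfw_i^*\|_2$. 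Since $(2\mfD_{s_i}-\mfI_n)\mfX\mfw_i^*\ge 0$ entrywise by definition of $\mfD_{s_i}=\diag(\mbI(\mfX\mfw_i^*\ge 0))$, this candidate lifts to a feasible point of~\eqref{min_nrm:relu} (placing $\mfw_i^*$ in the $\mfw$- or $\mfw'$-slot of block $s_i$ according to the sign of $r_i^*$).

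\emph{Dual certificate.} Next I would build a dual certificate for~\eqref{min_nrm:grelu}. By the subdifferential optimality conditions, it suffices to find $\blbd^*\in\mbR^n$ with $\mfX^T\mfD_{s_i}\blbd^*=\hat\mfw_i$ for all $i\in[k]$ and $\|\mfX^T\mfD_j\blbd^*\|_2\le 1$ for all $j\in[p]\setminus S$, the latter strictly to also obtain uniqueness. I would take the minimum-norm solution of the active equations,
\[
\blbd^* \;=\; [\,\mfX^T\mfD_{s_1}\,;\,\dots\,;\,\mfX^T\mfD_{s_k}\,]^{\dagger}\,[\,\hat\mfw_1\,;\,\dots\,;\,\hat\mfw_k\,]
\]
(vertical concatenation), which is exactly the vector appearing inside the norm in~\eqref{eq:irr_cond}. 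Then~\eqref{eq:irr_cond} is, verbatim, the strict off-support bound $\|\mfX^T\mfD_j\blbd^*\|_2<1$, while $\mfX^T\mfD_{s_i}\blbd^*=\hat\mfw_i$ holds provided $[\hat\mfw_1;\dots;\hat\mfw_k]$ lies in the column space of $[\mfX^T\mfD_{s_1};\dots;\mfX^T\mfD_{s_k}]$ (see the obstacle below). This certifies that the candidate solves~\eqref{min_nrm:grelu}.

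\emph{Uniqueness.} For uniqueness I would invoke the standard Fenchel/Cauchy--Schwarz chain: for any optimal $\{\tilde\mfw_j\}$ of~\eqref{min_nrm:grelu},
\[
\sum_j\|\tilde\mfw_j\|_2 \;=\; \langle\blbd^*,\mfy\rangle \;=\; \sum_j\langle\mfX^T\mfD_j\blbd^*,\tilde\mfw_j\rangle \;\le\; \sum_j\|\mfX^T\mfD_j\blbd^*\|_2\,\|\tilde\mfw_j\|_2 \;\le\; \sum_j\|\tilde\mfw_j\|_2 ,
\]
where the first equality uses feasibility of $\tilde\mfw$ and the fact that $\langle\blbd^*,\mfy\rangle$ equals the common optimal value $\sum_i\|\mfw_i^*\|_2$ (expand $\mfy=\sum_i r_i^*(\mfX\mfw_i^*)_+$ and use $\mfX^T\mfD_{s_i}\blbd^*=\hat\mfw_i$). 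Tightness of the chain forces $\tilde\mfw_j=0$ for $j\notin S$ (by strictness of~\eqref{eq:irr_cond}) and $\tilde\mfw_{s_i}=c_i\hat\mfw_i$ with $c_i\ge 0$ on the active blocks (equality case of Cauchy--Schwarz). Feeding this back into the equality constraint gives $\sum_i(1-c_i)\,\mfD_{s_i}\mfX\hat\mfw_i=0$, so if $\{\mfD_{s_i}\mfX\hat\mfw_i\}_{i=1}^k$ are linearly independent then $c_i=1$ for all $i$ and $\{\tilde\mfw_j\}$ coincides with the candidate. Undoing the $\mfw_j/\mfw_j'$ splitting with the help of the activation inequalities, exactly as in Proposition~\ref{prop:imply}, then gives the claimed unique optimum $\hat\mfW$ of~\eqref{min_nrm:relu}; combined with Theorem~\ref{thm:landscape}, this also pins down the global optima of the non-convex problem~\eqref{prob:min_nrm}.

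\emph{Main obstacle.} I expect the crux to be the nondegeneracy of the \emph{active} arrangement blocks $\mfD_{s_1},\dots,\mfD_{s_k}$, which is used twice above: (i) consistency of the stacked dual system, i.e.\ $[\hat\mfw_1;\dots;\hat\mfw_k]$ lying in the column space of $[\mfX^T\mfD_{s_1};\dots;\mfX^T\mfD_{s_k}]$; and (ii) linear independence of $\{\mfD_{s_i}\mfX\hat\mfw_i\}_{i=1}^k$. Both hold, for instance, whenever the concatenation $[\mfD_{s_1}\mfX\mid\dots\mid\mfD_{s_k}\mfX]$ has full column rank $kd$ --- in particular automatically in the asymptotic regime $n\to\infty$ studied later --- and, exactly as the single-neuron statement tacitly assumes invertibility of $\mfX^T\mfD_{i^*}\mfX$ alongside its isometry condition, I would carry this minimal regularity hypothesis along with~\eqref{eq:irr_cond}. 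The remaining bookkeeping --- pushing uniqueness from the relaxation~\eqref{min_nrm:grelu} through the inequality-constrained program~\eqref{min_nrm:relu} and across the $(\mfw_j,\mfw_j')$ parametrization --- is routine but must be done carefully, mirroring the proof of Proposition~\ref{prop:imply}.
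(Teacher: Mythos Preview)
Your proposal is correct and follows essentially the same route as the paper: reduce to the unconstrained group-$\ell_1$ relaxation~\eqref{min_nrm:grelu}, certify optimality of the planted configuration via the minimum-norm dual $\blbd^*=(\mfA_S^{\mathrm{aug}})^\dagger[\hat\mfw_1;\dots;\hat\mfw_k]$ (the paper packages this as Proposition~\ref{prop:irrep_general_group}), obtain strict off-support bounds from~\eqref{eq:irr_cond}, and then push uniqueness back through the $(\mfw_j,\mfw_j')$ splitting using the activation constraints and the sign $r_i^*$, exactly as you outline. The paper likewise assumes invertibility of the active Gram matrix $\mfA_S^{\mathrm{aug}}(\mfA_S^{\mathrm{aug}})^T$, which is the nondegeneracy you flag as the ``main obstacle''; one cosmetic slip: after Cauchy--Schwarz you get $\tilde\mfw_{s_i}=c_i\hat\mfw_i$ and feasibility forces $c_i=\|\mfw_i^*\|_2$ (not $c_i=1$), since the planted block in the relaxation is $r_i^*\mfw_i^*$, not its normalization.
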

As a corollary, by combining the above proposition with Theorem \ref{thm:landscape}, we deduce that all globally optimal solutions of the non-convex problem \eqref{prob:min_nrm} with plain ReLU NNs consist of the planted model up to permutation and splitting.
\subsection{Recovering Multiple ReLU neurons using ReLU NNs with Normalization Layer}
Next, we consider ReLU NNs with normalization layers. Suppose that the label vector takes the form $$\mfy=\sum_{i=1}^k\frac{(\mfX\mfw^*_i)_+}{\|(\mfX\mfw^*_i)_+\|_2}r_i^*,$$ where $\mfw^*_i\in\mbR^d$ are first layer weights, $r_i^* \in \mathbb{R}$ are second layer weights for $i\in[k]$ and $\diag(\mbI(\mfX\mfw_i^*\geq 0))$ are distinct for each $i\in[k]$. For simplicity, denote $\mfD_{s_i}=\diag(\mbI(\mfX\mfw_i^*\geq 0))$ for $i\in[k]$, where $S=\{s_1,\dots,s_k\}\subseteq [p]$. Let $\mfD_i\mfX=\mfU_i\bSigma_i\mfV_i$ be the singular value decomposition for $i\in[p]$.
\begin{definition}[Normalized Neural Isometry Condition for $k$ neurons]
For recovering $k$ normalized ReLU neurons from the observations $\mfy=\sum_{i=1}^k\frac{(\mfX\mfw^*_i)_+}{\|(\mfX\mfw^*_i)_+\|_2}r_i^*$ via the optimization problem \eqref{min_nrm:relu_normal}, the normalized neural isometry condition is given by
\begin{equation}\label{eq:irr_cond_normal}
    \norm{\mfU_j^T\bmbm{\mfU_{s_1}^T\\\vdots\\\mfU_{s_k}^T}^{\dagger}
    \bmbm{\tilde \mfw_1\\\vdots\\\tilde \mfw_k}}_2<1, \forall j\in[p], j\notin S,
    \tag{NNIC-k}
\end{equation}
where $\tilde \mfw_i := r_i^*\bSigma_{s_i}\mfV_{s_i}\mfw_i^*/\|\bSigma_{s_i}\mfV_{s_i}\mfw_i^*\|_2\,\forall i\in[k]$.
\end{definition}
Suppose that $\mfD_{s_1},\dots,\mfD_{s_j}$ further satisfy that $\mfD_{s_i}\mfD_{s_j}=0$ for $i\neq j$. Then, we can simplify the neural isometry condition to the following form
\begin{equation} 
    \norm{\mfU_j^T\sum_{i=1}^k\frac{(\mfX\mfw^*_i)_+}{\|(\mfX\mfw^*_i)_+\|_2}r_i^*}_2 = \norm{\mfU_j^T\mfy}_2<1, \forall j\in[p]/S.
\end{equation}
In the following proposition, we show that \eqref{eq:irr_cond_normal} implies the recovery of the $k$ planted normalized ReLU neurons via the optimization problem \eqref{min_nrm:relu_normal}. 

\begin{proposition}\label{prop:imply_multi_normal}
Suppose that \eqref{eq:irr_cond_normal} is satisfied. Then, the unique optimal solution to \eqref{min_nrm:relu_normal} is given by the planted neurons, i.e., $\hat\mfW=(\hat \mfw_j,\hat \mfw_j')_{j=1}^p$, where $\hat \mfw_{s_i}=\frac{\bSigma_{s_i}\mfV_{s_i}\mfw_i}{\|\bSigma_{s_i}\mfV_{s_i}\mfw_{i}\|_2}$, $\hat\mfw_{s_i}'=0$ for $r_i^*=1$, $\hat\mfw_{s_i}=0$, $\hat \mfw_{s_i}'=\frac{\bSigma_{s_i}\mfV_{s_i}\mfw_i}{\|\bSigma_{s_i}\mfV_{s_i}\mfw_{i}\|_2}$ for $r_i^*=-1$ and $\hat \mfw_j=\hat \mfw_j'=0$ for $j\neq i^*$.
\end{proposition}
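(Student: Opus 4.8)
The plan is to reproduce the dual‑certificate (irrepresentability‑style) argument already used for the single neuron in Proposition~\ref{prop:imply_normal} and for the $k$‑neuron plain case in Proposition~\ref{prop:imply_multi}, now carried out over the $k$‑block support $S=\{s_1,\dots,s_k\}$. First I would pass to the group‑$\ell_1$ relaxation \eqref{min_nrm:grelu_normal}: as noted in the text, it suffices to prove that the candidate is the \emph{unique} optimum of $\min_{\{\mfw_j\}}\sum_{j=1}^p\|\mfw_j\|_2$ subject to $\sum_{j=1}^p\mfU_j\mfw_j=\mfy$, because the candidate will be shown to satisfy the inequality constraints dropped from \eqref{min_nrm:relu_normal}, so uniqueness transfers back to \eqref{min_nrm:relu_normal}, and then to the nonconvex problem \eqref{prob:min_nrm} up to permutation and splitting via Theorem~\ref{thm:landscape_normal}. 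For the relaxation the candidate is $\mfw_{s_i}=\tilde\mfw_i$ for $i\in[k]$ and $\mfw_j=0$ otherwise; applying the blockwise identity $\mfU_{s_i}\tilde\mfw_i=r_i^*\,(\mfX\mfw_i^*)_+/\|(\mfX\mfw_i^*)_+\|_2$ established earlier for a single neuron gives feasibility, with objective value $\sum_{i=1}^k\|\tilde\mfw_i\|_2=k$.

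Next I would build the dual certificate. The optimality conditions of the group‑$\ell_1$ program ask for $\blbd\in\mbR^n$ with $\mfU_{s_i}^T\blbd=\tilde\mfw_i/\|\tilde\mfw_i\|_2=\tilde\mfw_i$ for every $i\in[k]$ (the subgradient of $\|\cdot\|_2$ at the unit vector $\tilde\mfw_i$), together with $\|\mfU_j^T\blbd\|_2\le 1$ for all $j\in[p]\setminus S$, with strict inequality for uniqueness. I would pick the minimum‑norm choice
\[
\blbd^\star=\bmbm{\mfU_{s_1}^T\\\vdots\\\mfU_{s_k}^T}^{\dagger}\bmbm{\tilde\mfw_1\\\vdots\\\tilde\mfw_k},
\]
which relies on the regularity hypothesis — implicit already in the definition \eqref{eq:irr_cond_normal} — that the stacked matrix $[\mfU_{s_1}^T;\dots;\mfU_{s_k}^T]$ has full row rank, equivalently that the concatenation $[\mfU_{s_1},\dots,\mfU_{s_k}]$ has full column rank; this is automatic in the disjoint‑pattern case $\mfD_{s_i}\mfD_{s_j}=0$. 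Full row rank makes the linear system consistent, so $\blbd^\star$ satisfies the on‑support conditions exactly, and the remaining dual‑feasibility conditions $\|\mfU_j^T\blbd^\star\|_2<1$ for $j\in[p]\setminus S$ are precisely the normalized neural isometry condition \eqref{eq:irr_cond_normal}.

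With the certificate in hand, uniqueness follows the standard group‑Lasso argument: for any feasible $\mfw$, $\sum_j\|\mfw_j\|_2\ge\sum_j\langle\mfU_j^T\blbd^\star,\mfw_j\rangle=\langle\blbd^\star,\mfy\rangle=\sum_i\|\tilde\mfw_i\|_2^2=k$, and equality forces $\mfw_j=0$ for $j\notin S$ (by the strict inequality off the support); then feasibility restricted to $S$ together with full column rank of $[\mfU_{s_1},\dots,\mfU_{s_k}]$ pins down $\mfw_{s_i}=\tilde\mfw_i$. To close the reduction I would verify the dropped constraints at the candidate: $\mfV_{s_i}^T\bSigma_{s_i}^{-1}\tilde\mfw_i$ is a positive multiple of the projection of $r_i^*\mfw_i^*$ onto the row space of $\mfD_{s_i}\mfX$, and since $\mfD_{s_i}\mfX$ is unchanged under that projection, $(2\mfD_{s_i}-\mfI_n)\mfX\mfV_{s_i}^T\bSigma_{s_i}^{-1}\tilde\mfw_i$ is proportional to $(2\mfD_{s_i}-\mfI_n)\mfX\mfw_i^*$, which is nonnegative by the definition $\mfD_{s_i}=\diag(\mbI(\mfX\mfw_i^*\ge 0))$; the split between the paired variables $\mfw_j,\mfw_j'$ in \eqref{min_nrm:relu_normal} (planted block going to $\mfw_{s_i}$ if $r_i^*=1$, to $\mfw_{s_i}'$ if $r_i^*=-1$) is then resolved exactly as in the single‑neuron proof.

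The main obstacle will be the rank hypothesis in the second step: unlike the single‑neuron case, where $\mfU_{i^*}^T\mfU_{i^*}=\mfI$ is trivial, here one must argue that the concatenated bases $[\mfU_{s_1},\dots,\mfU_{s_k}]$ have full column rank, so that the dual certificate exists and the support‑restricted solution is unique; this is vacuous under disjoint patterns and should hold under mild conditions (e.g., with high probability for the random ensembles of Section~\ref{sec:relu_skip}), but making it deterministic requires care. A secondary point is confirming that the planted direction $\mfV_{s_i}^T\bSigma_{s_i}^{-1}\tilde\mfw_i$ lies in the activation cone of $\mfD_{s_i}$ while its negation does not, so that the optimum of \eqref{min_nrm:relu_normal} cannot be split nontrivially across $\mfw_{s_i}$ and $\mfw_{s_i}'$.
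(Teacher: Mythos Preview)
Your proposal is correct and follows essentially the same route as the paper: reduce to the group-$\ell_1$ relaxation \eqref{min_nrm:grelu_normal}, construct the pseudo-inverse dual certificate, invoke \eqref{eq:irr_cond_normal} for strict off-support dual feasibility, obtain uniqueness in the relaxation, and then transfer back to \eqref{min_nrm:relu_normal} by checking feasibility of the candidate and arguing that the $(\mfw_j,\mfw_j')$ split is forced by the cone constraints. Your flagged rank hypothesis on $[\mfU_{s_1},\dots,\mfU_{s_k}]$ is exactly the invertibility assumption the paper makes implicitly before \eqref{irrep:general_multi} and in Proposition~\ref{prop:irrep_general_group}; the paper does not justify it deterministically either.
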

\begin{corollary}
By combining the above proposition with Theorem \ref{thm:landscape}, we note that all globally optimal solutions of the non-convex problem \eqref{prob:min_nrm} consist of the split and permuted version of the planted model.
\end{corollary}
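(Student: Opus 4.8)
The plan is to obtain the corollary as a purely formal consequence of two facts already established: the uniqueness assertion of Proposition \ref{prop:imply_multi_normal} for the convex program, and the global-landscape equivalence for the normalized architecture. No new convex-geometry estimate is required; the entire content is in chaining the two maps correctly. I note at the outset that, since the non-convex model under consideration is the normalized network $f^\mathrm{ReLU-norm}$ feeding \eqref{prob:min_nrm}, the applicable equivalence is Theorem \ref{thm:landscape_normal} (the normalized version) rather than the plain-network Theorem \ref{thm:landscape} referenced in the statement; I would carry out the argument with Theorem \ref{thm:landscape_normal} in place of Theorem \ref{thm:landscape}.

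First I would invoke Proposition \ref{prop:imply_multi_normal}: under \eqref{eq:irr_cond_normal}, the convex program \eqref{min_nrm:relu_normal} has a \emph{unique} optimal solution $\hat\mfW=(\hat\mfw_j,\hat\mfw_j')_{j=1}^p$, with nonzero blocks supported only on $S=\{s_1,\dots,s_k\}$, equal to the normalized planted directions $\bSigma_{s_i}\mfV_{s_i}\mfw_i^*/\|\bSigma_{s_i}\mfV_{s_i}\mfw_i^*\|_2$ (placed in $\hat\mfw_{s_i}$ or $\hat\mfw_{s_i}'$ according to the sign of $r_i^*$), and all other blocks zero. Using the identity $\mfU_{s_i}\tilde\mfw_i^*=\frac{(\mfX\mfw_i^*)_+}{\|(\mfX\mfw_i^*)_+\|_2}$ established in Section \ref{sec:relu_normal}, this convex optimum translates back to exactly the planted collection of normalized ReLU neurons.

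Next I would apply Theorem \ref{thm:landscape_normal}, which (under $n>d$ and $m\ge m^*$) states that every globally optimal solution of the non-convex problem \eqref{prob:min_nrm} for $f^\mathrm{ReLU-norm}$ arises from some optimal solution of \eqref{min_nrm:relu_normal} up to splitting and permutation. Combining this with the previous step: because the convex optimum is \emph{unique}, the only convex solution entering the correspondence is $\hat\mfW$, so every non-convex global optimum is a split-and-permuted copy of $\hat\mfW$, i.e., of the planted model. This yields the claim.

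The step requiring the most care is the bookkeeping of the split/permutation correspondence (made precise in Appendix \ref{sec:permutation}) together with the architecture matching: a convex block variable $\mfw_j$ encodes a normalized neuron through the SVD factors $\mfU_j,\bSigma_j,\mfV_j$, so reconstructing a genuine weight triple $(\mfw_{1,i},w_{2,i},\alpha_i)$ of $f^\mathrm{ReLU-norm}$ from a block requires unwinding the normalization operator $\normal_{\alpha_i}$ and checking that the two objectives ($\sum_j(\|\mfw_j\|_2+\|\mfw_j'\|_2)$ versus $\|\Theta\|_F^2$ split equally across neurons at optimality) agree on the reconstructed network. I expect no further difficulty once the hypotheses $n>d$ and $m\ge m^*$ of Theorem \ref{thm:landscape_normal} are in force, since these are exactly the conditions under which the equivalence holds.
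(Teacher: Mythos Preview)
Your proposal is correct and matches the paper's intent: the corollary is stated without a separate proof, as a direct combination of Proposition \ref{prop:imply_multi_normal} (uniqueness of the convex optimum) with the landscape equivalence, and your chaining of these two facts is exactly the argument. Your observation that the appropriate landscape result for the normalized architecture is Theorem \ref{thm:landscape_normal} rather than Theorem \ref{thm:landscape} is a valid correction of what appears to be a typo in the corollary's cross-reference.
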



\section{Sharp Phase Transitions}\label{sec:relu_skip}
One of our major results in this paper is that there exists a sharp phase transition in the success probability of recovering planted neurons in the $(n,d)$ plane for certain random ensembles for the training data matrix. We start illustrating this phenomenon with the case of recovering linear neurons through an application of the kinematic formula for convex cones.

\begin{figure}[H]
\centering
\begin{minipage}[t]{0.48\textwidth}
\centering
\includegraphics[width=0.7\linewidth]{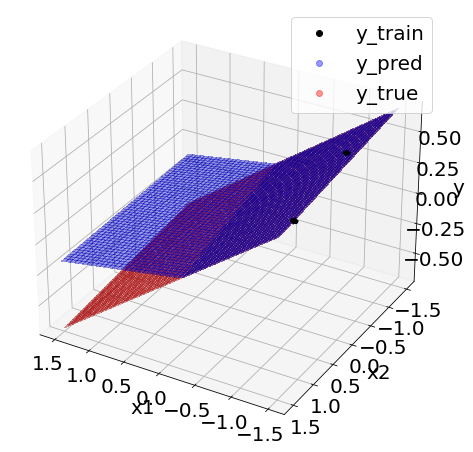}
\end{minipage}
\begin{minipage}[t]{0.48\textwidth}
\centering
\includegraphics[width=0.7\linewidth]{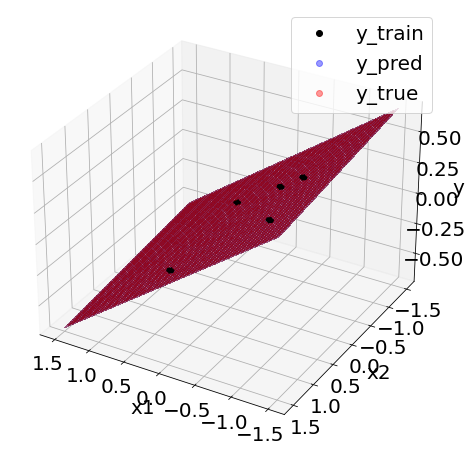}
\end{minipage}
\caption{Optimal ReLU NNs found via the convex program \eqref{min_nrm:relu_skip}. Left: A ReLU neuron is fitted to the observations generated from a linear model when $n=2, d=2$. Right: Only a linear neuron is fitted to the observations generated from a linear model when $n=5,d=2$.}
\end{figure}



\subsection{Kinematic formula}
We introduce an important result from \citep{amelunxen2014living} that will be used in proving the phase transitions in the recovery of planted neurons via ReLU NNs. 
\begin{lemma}\label{lem:kinematic}
For a convex cone $K\subseteq \mbR^n$, define the statistical dimension of the cone $K$ by
\begin{equation}
    \delta (K)=\mbE\bb{\|\Pi_K(\mfg)\|_2^2},
\end{equation} 
where $\mfg$ is a standard Gaussian random vector and $\Pi_K$ is the Euclidean projection onto the cone $K$. Define $\alpha:=\frac{(n-\delta (K)-d)^2}{64 n^2}$. Then, we have
\begin{equation}\label{equ:kine}
    P(\exists\mfw\neq0: \mfX\mfw\in K)\begin{cases}
    \begin{aligned}
            &\leq 4e^{-n\alpha}, &\delta (K)+d<n,\\
        &\geq 1-4e^{-n\alpha},&\delta (K)+d>n.
    \end{aligned}
    \end{cases}
\end{equation}
\end{lemma}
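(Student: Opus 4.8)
\emph{Proof idea.} The plan is to recognize this statement as a direct translation of the approximate kinematic formula of \cite{amelunxen2014living}: under the random ensembles for which this lemma is used (i.i.d.\ Gaussian, or more generally any ensemble whose column space is rotationally invariant), the event $\{\exists\,\mfw\neq0:\mfX\mfw\in K\}$ is essentially the event that a uniformly random $d$-dimensional subspace meets the cone $K$ nontrivially, and the probability of such an intersection is controlled by statistical dimensions through a sharp transition at $d+\delta(K)=n$.

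First I would rewrite the event. The degenerate case $d\ge n$ is immediate (the probability equals one whenever the right-hand alternative is in force, e.g.\ via the nontrivial null space of $\mfX$), so I assume $d<n$; then $\mfX$ has trivial null space almost surely, hence $\{\exists\,\mfw\neq0:\mfX\mfw\in K\}=\{L\cap K\neq\{0\}\}$ where $L:=\mathrm{range}(\mfX)$ is a.s.\ a $d$-dimensional subspace. By rotation invariance of the ensemble, $L$ is Haar-distributed on the Grassmannian $G(d,n)$; writing $L\overset{d}{=}\mfQ L_0$ for a fixed $d$-dimensional subspace $L_0$ and a Haar orthogonal $\mfQ$, and using $\mfQ L_0\cap K\neq\{0\}\iff L_0\cap\mfQ^{-1}K\neq\{0\}$ with $\mfQ^{-1}$ again Haar, the probability equals $P\big(L_0\cap\mfQ^{-1}K\neq\{0\}\big)$.

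Next I would note that a subspace, viewed as a convex cone, has statistical dimension equal to its linear dimension: $\Pi_{L_0}$ is an orthogonal projection of rank $d$, so $\delta(L_0)=\mbE\|\Pi_{L_0}\mfg\|_2^2=d$, and the kinematic threshold $\delta(L_0)+\delta(K)=n$ becomes exactly $d+\delta(K)=n$, matching the two regimes. I would then invoke the approximate kinematic formula \cite[Theorem~I]{amelunxen2014living}: for closed convex cones $C,K\subseteq\mbR^n$, a Haar orthogonal $\mfQ$, and $t:=|n-\delta(C)-\delta(K)|$, one has $P(C\cap\mfQ K\neq\{0\})\le 4e^{-t^2/(8n)}$ whenever $\delta(C)+\delta(K)\le n-t$, and $P(C\cap\mfQ K\neq\{0\})\ge 1-4e^{-t^2/(8n)}$ whenever $\delta(C)+\delta(K)\ge n+t$. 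Applying this with $C=L_0$, so $\delta(C)+\delta(K)=d+\delta(K)$ and $t=|n-\delta(K)-d|$: in the regime $\delta(K)+d<n$ the first inequality holds with equality and gives $P\le 4e^{-(n-\delta(K)-d)^2/(8n)}$; in the regime $\delta(K)+d>n$ the second holds with equality and gives $P\ge 1-4e^{-(\delta(K)+d-n)^2/(8n)}$. Since $\tfrac{(n-\delta(K)-d)^2}{8n}\ge\tfrac{(n-\delta(K)-d)^2}{64n}=n\alpha$ (indeed any absolute constant $\le 64$ in the exponent would suffice), both bounds are at least as strong as the claimed $4e^{-n\alpha}$, which finishes the argument.

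The main obstacle, which I would not attempt to re-derive, is the approximate kinematic formula itself: in \cite{amelunxen2014living} it follows from the exact conic Crofton identity — writing $P(C\cap\mfQ K\neq\{0\})$ as an alternating sum of products of conic intrinsic volumes — combined with a sharp concentration bound showing that the conic intrinsic volume sequence of any cone concentrates tightly around its statistical dimension (via a master Steiner formula and a bounded-differences argument). I would cite this as a black box; everything else here — identifying the event as a random-subspace/cone intersection, the Haar-invariance of $\mathrm{range}(\mfX)$, the computation $\delta(L_0)=d$, and the constant bookkeeping in the tail — is routine.
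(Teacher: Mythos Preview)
The proposal is correct and follows essentially the same route as the paper's proof: both recast the event as a Haar-random $d$-dimensional subspace meeting $K$ nontrivially, use $\delta(\text{subspace})=d$, and invoke the approximate kinematic formula of \cite{amelunxen2014living}. The only cosmetic difference is that the paper applies the $a_\eta=8\sqrt{\log(4/\eta)}$ formulation of that theorem (which yields the constant $64$ in the exponent directly upon solving for $\eta$), whereas you quote a sharper-constant version and then relax to $64$.
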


Consider the positive orthant $K=\mbR^n_+$, which is a convex cone. It can be easily computed that
\begin{equation}
\delta(K)=\mbE[\|\pi_K(\mfg)\|^2]=\mbE\bb{\sum_{i=1}^n \max\{0,g_i\}^2}=\frac{n}{2}.   
\end{equation}
A direct corollary of the kinematic formula is as follows.
\begin{proposition} \label{prop:kinematic}
Let $\mfX\in \mbR^{n \times d}$ be a matrix whose entries are i.i.d. random variables following $\mcN(0,1)$. Then, we have
\begin{equation*}
    P(\exists \mfh\neq 0: \mfX\mfh\geq 0)\begin{cases}
    \begin{aligned}
            &\leq 4e^{-n\alpha}, &d<n/2,\\
        &\geq 1-4e^{-n\alpha},&d>n/2,
    \end{aligned}
    \end{cases}
\end{equation*}
where $\alpha=\frac{(n/2-d)^2}{64 n^2}$.
\end{proposition}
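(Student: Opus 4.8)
The plan is to obtain Proposition~\ref{prop:kinematic} as a direct specialization of the kinematic formula in Lemma~\ref{lem:kinematic}, with the convex cone taken to be the nonnegative orthant $K=\mbR^n_+$.

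The first step is to identify the event. Since a vector $\mfv\in\mbR^n$ satisfies $\mfv\ge 0$ componentwise precisely when $\mfv\in\mbR^n_+$, we have
\[
\{\exists\,\mfh\neq 0:\ \mfX\mfh\ge 0\}\;=\;\{\exists\,\mfh\neq 0:\ \mfX\mfh\in\mbR^n_+\},
\]
which is exactly the event whose probability is bounded in Lemma~\ref{lem:kinematic} once we set $K=\mbR^n_+$. Note that $\mbR^n_+$ is a closed convex cone, so the lemma applies, and for i.i.d.\ Gaussian $\mfX$ the column span $\mathrm{range}(\mfX)$ is a uniformly random $d$-dimensional subspace, which is the setting underlying the kinematic formula.

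The second step is to record the statistical dimension $\delta(\mbR^n_+)$. As already computed in the paragraph preceding the proposition, the Euclidean projection of a standard Gaussian vector $\mfg$ onto $\mbR^n_+$ is its coordinatewise positive part, so
\[
\delta(\mbR^n_+)=\mbE\!\left[\sum_{i=1}^n\max\{0,g_i\}^2\right]=n\,\mbE\!\left[\max\{0,g_1\}^2\right]=\frac{n}{2},
\]
using symmetry of the Gaussian together with $\mbE[g_1^2]=1$.

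The final step is substitution. Plugging $\delta(K)=n/2$ into Lemma~\ref{lem:kinematic}, the dichotomy threshold $\delta(K)+d<n$ (respectively $>n$) becomes $n/2+d<n$, i.e.\ $d<n/2$ (respectively $d>n/2$), and the exponent becomes
\[
\alpha=\frac{(n-\delta(K)-d)^2}{64\,n^2}=\frac{(n/2-d)^2}{64\,n^2},
\]
matching the statement; the bounds $P\le 4e^{-n\alpha}$ for $d<n/2$ and $P\ge 1-4e^{-n\alpha}$ for $d>n/2$ then follow verbatim. There is no substantial obstacle in this argument; the only minor point worth a remark is that the event allows $\mfX\mfh$ to land on the boundary of $\mbR^n_+$, but this does not affect the probability since $\mbR^n_+$ is a solid cone and $\mathrm{range}(\mfX)$ is almost surely in general position, so the closed-cone version of the kinematic formula suffices.
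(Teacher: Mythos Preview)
Your proposal is correct and follows essentially the same approach as the paper: the paper presents Proposition~\ref{prop:kinematic} as a direct corollary of Lemma~\ref{lem:kinematic} after computing $\delta(\mbR^n_+)=n/2$ in the preceding display, and your argument simply makes this specialization explicit.
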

This implies that for $n>2d$, the probability that the set of diagonal arrangement patterns contain the identity matrix, i.e., a vector $\mfh\neq 0$ such that $\mbI(\mfX\mfh\geq 0)=\bone$, is close to $1$, while for $n<2d$, this probability is close to $0$. 
\subsection{Recovering Linear Models}
\subsubsection{Gaussian random data}
For Gaussian random data, our next result shows that the ratio $n/d$ controls whether the Linear Neural Isometry Condition for linear neuron recovery holds or fails with high probability. Therefore, neural networks do not overfit when the number of samples is above a critical threshold regardless of the number of neurons.
\begin{theorem}[Phase transition in Gaussian Data: Success]\label{prop:skip_gauss_success}
Suppose that each entry $x_{i,j}$ of the data matrix $\mfX\in \mbR^{n\times d}$ is an i.i.d. random variable following the Gaussian distribution $\mcN(0,1/n)$. For $n>2d$, $\mathrm{NIC}\mydash\mathrm{L}$ shown in \eqref{irrep:grelu_skip} holds with probability at least $1-\exp(-\alpha n)$ where $\alpha=\frac{(n/2-d)^2}{64n^2}$. Consequently, the unique optimal ReLU NN with skip connection found via the convex reformulation \eqref{min_nrm:relu_skip_convex} uniquely recovers planted linear models up to permutation and splitting.
\end{theorem}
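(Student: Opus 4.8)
\emph{Proof strategy.} The plan is to show that the Linear Neural Isometry Condition~\eqref{irrep:grelu_skip} holds with probability at least $1-e^{-\alpha n}$ and then to invoke Proposition~\ref{prop:imply_linear}. Two preliminary reductions are cosmetic: \eqref{irrep:grelu_skip} is unchanged under $\mfX\mapsto c\mfX$, so the variance $1/n$ plays no role; and since $n>d$ the matrix $\mfX$ has full column rank almost surely, so $(\mfX^T\mfX)^{-1}$ is well defined. We may therefore take $\mfX$ with i.i.d.\ $\mcN(0,1)$ entries and of full column rank. Once~\eqref{irrep:grelu_skip} is verified, Proposition~\ref{prop:imply_linear} yields that the planted linear model is the unique optimum of~\eqref{min_nrm:relu_skip_convex}, and Lemma~\ref{thm:landscape_skip} transfers this to the non-convex training problem up to permutation and splitting. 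Hence it suffices to lower-bound the probability that~\eqref{irrep:grelu_skip} holds.

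The event producing the phase transition is
\[
\mathcal E \;:=\; \big\{\, \mfX\mfh\ge 0 \ \Longrightarrow\ \mfh=0 \,\big\},
\]
equivalently, the identity $\mfI_n$ is \emph{not} one of the diagonal arrangement patterns $\mfD_1,\dots,\mfD_p$ of $\mfX$ (Definition~\ref{def:hyper}); applying the same statement to $-\mfX$, the zero matrix is not a pattern either, so on $\mathcal E$ every pattern obeys $\mathbf{0}\prec\mfD_j\prec\mfI_n$. This event is exactly what makes~\eqref{irrep:grelu_skip} attainable: for $\mfD_j=\mfI_n$ the left-hand side of~\eqref{irrep:grelu_skip} equals $\|\hat\mfw\|_2=1$, defeating the strict inequality. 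By Proposition~\ref{prop:kinematic}, for $n>2d$ one has $P(\mathcal E)\ge 1-4e^{-\alpha n}$ with $\alpha=(n/2-d)^2/(64n^2)$; this is the binding exponent, all subsequent bad events being of strictly smaller order, so the overall probability is still $\ge 1-e^{-\alpha n}$ (the constant $4$ absorbed).

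On $\mathcal E$ I would verify~\eqref{irrep:grelu_skip} via the vector $\mfc:=\mfX(\mfX^T\mfX)^{-1}\hat\mfw$, the minimum-$\ell_2$-norm solution of $\mfX^T\mfc=\hat\mfw$ --- the analogue of the irrepresentability certificate $\mfX_S(\mfX_S^T\mfX_S)^{-1}\mathrm{sign}(\mfw_S^*)$, and the natural dual certificate for the group-$\ell_1$ relaxation~\eqref{min_nrm:grelu_skip}. It satisfies $\mfc\in\mathrm{col}(\mfX)$ and $\|\mfX^T\mfc\|_2=1$, and since $\mfX^T\mfD_j\mfX(\mfX^T\mfX)^{-1}\hat\mfw=\mfX^T\mfD_j\mfc$, condition~\eqref{irrep:grelu_skip} becomes: for every pattern, with $S_j:=\{k:(\mfD_j)_{kk}=0\}$ (nonempty on $\mathcal E$) and active set $T_j:=[n]\setminus S_j$,
\[
\|\mfX^T\mfD_j\mfc\|_2 \;<\; \|\mfX^T\mfc\|_2
\qquad\Longleftrightarrow\qquad
\big\langle \mfX^T\mfc_{S_j},\,\hat\mfw\big\rangle \;>\; \tfrac12\,\|\mfX^T\mfc_{S_j}\|_2^2 ,
\]
i.e., deleting the halfspace-indexed block $\mfc_{S_j}$ from $\mfc$ strictly decreases $\|\mfX^T(\cdot)\|_2$. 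I would prove this by a concentration argument, union-bounding over the at most $2d\,(e(n-1)/d)^{d}$ patterns and splitting them according to the rank of the deleted block $\sum_{k\in S_j}\mfx_k\mfx_k^T$. For patterns with full-rank deleted block and non-trivial active set ($d\le|T_j|\le n-d$), one has $\mathbf{0}\prec\mfX^T\mfD_j\mfX\prec\mfX^T\mfX$; using concentration of the Gaussian matrix $\mfX^T\mfD_j\mfX$ about its mean $|T_j|\mfI_d$ and the lower bound on $\lambda_{\min}(\mfX^T\mfX)$ valid for $n>2d$, one keeps $\|\mfX^T\mfD_j\mfX(\mfX^T\mfX)^{-1}\hat\mfw\|_2$ bounded away from $1$, uniformly over patterns --- here exploiting that $\hat\mfw$ is \emph{fixed}, so only one direction (not an operator norm) must be controlled. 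Patterns with a tiny active set ($|T_j|<d$) are trivial: $\|\mfX^T\mfD_j\mfc\|_2=\big\|\sum_{k\in T_j}c_k\mfx_k\big\|_2=o(1)$ by $\|\mfc\|_2=O(n^{-1/2})$ and $\max_k\|\mfx_k\|_2=O(\sqrt{d/n})$. The delicate case --- the one the kinematic bound is designed for --- is the nearly-all-active patterns with rank-deficient deleted block ($|S_j|<d$): there $\mfX^T\mfD_j\mfX(\mfX^T\mfX)^{-1}$ has $1$ as an exact eigenvalue, so no operator-norm slack is available, and one must show directly that the rank-deficient deletion still moves $\hat\mfw$ strictly inward, i.e.\ $\hat\mfw^T(\mfX^T\mfX)^{-1}\big(\sum_{k\in S_j}\mfx_k\mfx_k^T\big)\hat\mfw>\tfrac12\big\|\sum_{k\in S_j}c_k\mfx_k\big\|_2^2$. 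This is an \emph{orientation} statement about how $(\mfX^T\mfX)^{-1}$ places the deleted rows relative to $\hat\mfw$ --- not a soft spectral or magnitude bound --- and it uses both the Gaussianity of $\mfX$ and the fact that $\hat\mfw$ is non-adversarial; this is why the conclusion is \emph{weak} recovery (a given $\mfw^*$), establishing the condition for all $\mfw^*$ at once (strong recovery) costing the extra $\log n$ factor.

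Putting the cases together, \eqref{irrep:grelu_skip} holds on $\mathcal E$ intersected with the concentration events above --- an event of probability $\ge 1-e^{-\alpha n}$, dominated by $P(\mathcal E^c)$ --- and Proposition~\ref{prop:imply_linear} together with Lemma~\ref{thm:landscape_skip} gives the stated recovery. The main obstacle is this last case: with $1$ an exact eigenvalue of the matrices involved, no uniform operator-norm estimate applies, and one must genuinely track the behaviour of the minimum-norm certificate $\mfc$ under rank-deficient deletions of the rows of $\mfX$.
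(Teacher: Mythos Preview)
Your overall reduction---verify \eqref{irrep:grelu_skip} and then invoke Proposition~\ref{prop:imply_linear} together with Lemma~\ref{thm:landscape_skip}---matches the paper, and you correctly single out the event $\mathcal E=\{\mfI_n\notin\mathcal H\}$ as carrying the exponent $\alpha$ via Proposition~\ref{prop:kinematic}. Where you diverge sharply is in the conditional step on $\mathcal E$. You propose a quantitative concentration argument with a three-way case split on the size of the deleted block $S_j$ and a union bound over the $O((en/d)^d)$ patterns. The paper does none of this: it argues that, once $\mfI_n$ is excluded, the remaining instances of \eqref{irrep:grelu_skip} hold \emph{with probability exactly one} by a short linear-algebraic measure-zero argument. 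Writing $A=\mfX^T\mfD_j\mfX$, $B=\mfX^T\mfX$ and $\tilde A=B^{-1}A^2B^{-1}$, the paper asserts $\tilde A\preceq\mfI_d$ whenever $\mfD_j\neq\mfI_n$, so that $\|AB^{-1}\hat\mfw\|_2^2=\hat\mfw^T\tilde A\hat\mfw\le1$ deterministically; equality would force the fixed $\hat\mfw$ into the top eigenspace of $\tilde A$, a random proper subspace of $\mbR^d$, which is a null event for Gaussian $\mfX$. A union bound over the $2^n$ sign patterns then gives probability one conditionally, and the entire probability loss $e^{-\alpha n}$ comes solely from the kinematic bound on $\mathcal E$. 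This is vastly shorter than your route and is what produces the clean constant in the statement.

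That said, your instincts about the ``nearly-all-active'' regime are detecting a genuine issue in the paper's spectral step. The implication $0\preceq A\preceq B\Rightarrow A^2\preceq B^2$ (equivalently $\tilde A\preceq\mfI_d$) is false in general---$t\mapsto t^2$ is not operator-monotone---and one can write down explicit $\mfX$ with $n>2d$, a valid pattern $\mfD_j\neq\mfI_n$ with $|S_j|=1$, and a unit $\hat\mfw$ for which $\|AB^{-1}\hat\mfw\|_2>1$; by openness this occurs with positive Gaussian probability. The paper's measure-zero mechanism works cleanly in the Haar case of Theorem~\ref{thm:relu_skip_phase} precisely because there $B=\mfI_d$ and $A\preceq\mfI_d$ gives $\|A\hat\mfw\|_2\le1$ directly, but it does not transplant to the Gaussian case as written. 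Your concentration-based plan is therefore closer in spirit to what a rigorous argument would need; however, as you yourself flag, the rank-deficient $|S_j|<d$ case is only sketched, and that is exactly the regime where both your outline and the paper's proof leave the substantive work undone.
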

\begin{remark}
This implies that when the planted model is $\mfy=\mfX\mfw^*$, the linear term $\mfw_0=\mfw^*$ is the unique optimal solution to the convex program 
\eqref{min_nrm:relu_skip}. It is also the unique globally optimal solution to the non-convex problem \eqref{prob:min_nrm}. 
\end{remark}
We next show that when the ratio $n/d$ is below a certain critical threshold, $\mfW=(\mfw^*,0,\dots,0)$ is not the unique optimal solution to the convex problem \eqref{min_nrm:relu_skip_convex} (or equivalently to the corresponding non-convex problem \eqref{prob:min_nrm}). Consequently, neural networks overfit when the number of samples is below this critical threshold.
\begin{theorem}[Phase transition in Gaussian Data: Failure]\label{prop:negative_gaussian}
Suppose that $n<2d$. Assume that each entry of $\mfX$ is an i.i.d. random variable following the Gaussian distribution $\mcN(0,1/n)$. Suppose that the planted parameter $\mfw^*$ satisfies $\mfX\mfw^*\geq 0$. Then, with probability $1$, $\mfW=(\mfw^*,0,\dots,0)$ is not the unique optimal solution to the minimum norm interpolation problem of ReLU NNs with skip connection \eqref{prob:min_nrm} or its convex reformulation \eqref{min_nrm:relu_skip_convex}.
\end{theorem}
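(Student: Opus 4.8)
The plan is to use the hypothesis $\mfX\mfw^*\ge 0$ (with $\mfw^*\neq 0$, as is implicit) to exhibit an explicit one-parameter family of feasible points of the convex program \eqref{min_nrm:relu_skip_convex} that all attain the same objective value as $\mfW=(\mfw^*,0,\dots,0)$; non-uniqueness then follows at once. The key structural observation is that $\mfX\mfw^*\ge 0$ forces the all-ones diagonal pattern to be realized, $\mfI_n=\diag(\mbI(\mfX\mfw^*\ge 0))\in\mathcal{H}$, so there is an index $i^*\in[p]$ with $\mfD_{i^*}=\mfI_n$. For that block $\mfD_{i^*}\mfX=\mfX$ and $2\mfD_{i^*}-\mfI_n=\mfI_n$, so its variables $\mfw_{i^*},\mfw_{i^*}'$ parametrize an essentially unconstrained linear predictor $\mfX(\mfw_{i^*}-\mfw_{i^*}')$, exactly like the skip term $\mfX\mfw_0$. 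Thus there are two interchangeable ways of fitting the linear label $\mfy=\mfX\mfw^*$ at identical cost, and convex-combining them produces a whole segment.

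Concretely, for $t\in[0,1]$ I would set $\mfw_0=t\mfw^*$, $\mfw_{i^*}=(1-t)\mfw^*$, $\mfw_{i^*}'=0$, and $\mfw_j=\mfw_j'=0$ for all $j\neq i^*$, and then check: (i) the equality constraint, $\mfX(t\mfw^*)+\mfD_{i^*}\mfX((1-t)\mfw^*)=\mfX\mfw^*=\mfy$; (ii) all inequality constraints, where the only non-automatic one is $(2\mfD_{i^*}-\mfI_n)\mfX\mfw_{i^*}=(1-t)\mfX\mfw^*\ge 0$, which holds since $\mfX\mfw^*\ge 0$ and $t\le 1$; and (iii) the objective value, which is constant, $\|t\mfw^*\|_2+\|(1-t)\mfw^*\|_2=\|\mfw^*\|_2$. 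Step (iii) uses that the weight-decay objective penalizes the skip weight by $\|\mfw_0\|_2$ with the same coefficient as the ReLU-block weights --- this is the AM--GM identity $\min\{\|\mfw_{1,1}\|_2^2+w_{2,1}^2:\mfw_{1,1}w_{2,1}=\mfv\}=2\|\mfv\|_2$ underlying the convex reformulation, and it is consistent with \eqref{min_nrm:grelu_skip} and with \eqref{irrep:grelu_skip}. Since $\mfW^{(1)}=(\mfw^*,0,\dots,0)$ and, say, $\mfW^{(0)}$ are distinct feasible points (in the former the skip equals $\mfw^*\neq 0$ and block $i^*$ is zero; in the latter the reverse), the conclusion follows by a case split: if $\mfW^{(1)}$ is optimal then so is every $\mfW^{(t)}$, so $\mfW^{(1)}$ is not the unique optimum; if $\mfW^{(1)}$ is not optimal, it is a fortiori not the unique optimum. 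Since the whole construction is deterministic given the hypotheses, this holds with probability one over $\mfX$.

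For the non-convex problem \eqref{prob:min_nrm} I would either transport the non-uniqueness through Lemma~\ref{thm:landscape_skip}, or argue directly at the neuron level: because $\mfX\mfw^*\ge 0$, a ReLU neuron with first-layer weight proportional to $\mfw^*$ and positive output weight computes the linear map $c\,\mfX\mfw^*$, so the pure-skip network realizing $f(\mfX;\cdot)=\mfX\mfw^*$ with the norm-minimizing split $\|\mfw_{1,1}\|_2=|w_{2,1}|=\|\mfw^*\|_2^{1/2}$ can be continuously deformed by shifting a fraction of the linear weight onto such a ReLU neuron, leaving $f(\mfX;\cdot)$ and the regularizer $R(\Theta)=2\|\mfw^*\|_2$ unchanged. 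As the architecture has only one skip slot, these deformed networks are genuinely different from $(\mfw^*,0,\dots,0)$ and not obtainable from it by permutation or neuron splitting (which act within a fixed neuron type), so $(\mfw^*,0,\dots,0)$ is not the unique global optimum up to the permutation--splitting equivalence either.

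The difficulty here is purely organizational rather than analytical. The delicate point is step (iii): the family is flat in $t$ exactly because the skip weight and the ReLU-block weights enter the convex objective with equal coefficient, so one must use the correct form of \eqref{min_nrm:relu_skip_convex} in which $\|\mfw_0\|_2$ appears next to the $\|\mfw_j\|_2,\|\mfw_j'\|_2$ terms. One must also make sure the index $i^*$ with $\mfD_{i^*}=\mfI_n$ is genuinely present among $\mfD_1,\dots,\mfD_p$; it is, since the hypothesis itself exhibits the nonzero vector $\mfh=\mfw^*$ with $\mfX\mfh\ge 0$. Finally, it is worth noting that $n<2d$ and Gaussianity are not used inside the argument: by Proposition~\ref{prop:kinematic}, $n<2d$ is precisely the regime in which a nonzero $\mfw^*$ with $\mfX\mfw^*\ge 0$ exists for generic Gaussian $\mfX$, so it is exactly the range where the hypothesis --- and hence the statement --- is non-vacuous.
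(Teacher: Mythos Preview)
Your proposal is correct and follows essentially the same route as the paper: exhibit the all-ones pattern $\mfD_{i^*}=\mfI_n$ (which the hypothesis $\mfX\mfw^*\ge 0$ with $\mfw^*\neq 0$ directly supplies) and use it to construct an alternative feasible point --- the paper uses only the endpoint $t=0$, you parametrize the whole segment --- with the same objective value as $(\mfw^*,0,\dots,0)$. Your explicit case split, and your observation that $n<2d$ and Gaussianity play no role beyond making the hypothesis non-vacuous, are in fact sharper than the paper's own exposition, which takes an unnecessary kinematic-formula detour before invoking the hypothesis.
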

\begin{proof}
When $n<2d$, according to the kinematic formula, we have
\begin{equation}
    P(\exists \mfh\in\mbR^d:\mfX\mfh\geq 0,\mfh\neq 0)\geq 1-e^{-n\alpha},
\end{equation}
where $\alpha$ is as defined in Proposition \ref{prop:kinematic}.
Suppose that there exists $\mfh\in \mbR^d$ such that $\mfh\neq 0$, $\mfX\mfh\geq 0$.  This implies that the identity matrix is among the diagonal arrangement patterns, i.e., there exists $j\in[p]$ such that $\mfD_j=\mfI_n$. 
Assume that the planted neuron $\mfw^*$ further satisfies that $\mfX\mfw^*\geq 0$.
In this case, let $\tilde \mfw_i=\mfw_i'=0$ if $i\neq j$, $\tilde \mfw_j=\mfw^*$ and $\tilde \mfw_j'=0$. Then, $\mfW'=(0,\tilde \mfw_1,\tilde \mfw_1',\dots,\tilde \mfw_p,\tilde \mfw_p')$ is also a feasible solution to \eqref{min_nrm:grelu_skip}. This implies that for $n<2d$, with probability close to $1$, there exists an optimal solution which consist of at least one non-zero ReLU neuron. $\IEEEQEDhere$
\end{proof}

However, if the condition $\mfX\mfw^*\geq 0$ is not satisfied, we next show that $\mfW=(\mfw^*,0,\dots,0)$ is the unique optimal solution to the minimum norm interpolation problem \eqref{prob:min_nrm} even when $\mfI_n\in H$.
\begin{proposition}\label{prop:positive_nd}
Suppose that $d<n<2d$. Assume that each entry of $\mfX$ is an i.i.d. random variable following the Gaussian distribution $\mcN(0,1/n)$. Suppose that $\mfX\mfw^*\geq 0$ does not hold. Then, with probability $1$, $\mfW=(\mfw^*,0,\dots,0)$ is the unique optimal solution to the minimum norm interpolation problem \eqref{prob:min_nrm} for ReLU NNs with skip connection, or the corresponding convex reformulation \eqref{min_nrm:relu_skip_convex}.
\end{proposition}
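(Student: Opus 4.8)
The plan is to prove the statement for the convex program \eqref{min_nrm:relu_skip_convex} and then transfer it to the non-convex problem \eqref{prob:min_nrm} via Lemma \ref{thm:landscape_skip}. So the target is: when $\mfX\mfw^*\not\ge 0$, the planted point $\hat\mfW$ with $\hat\mfw_0=\mfw^*$ and $\hat\mfw_j=\hat\mfw_j'=0$ for every $j\in[p]$ is the \emph{unique} minimizer of \eqref{min_nrm:relu_skip_convex}. I would do this with a dual certificate that has strict slack in every block. Let $\blbd\in\mbR^n$ be the multiplier of the equality constraint and $\balpha_j,\boldsymbol{\beta}_j\in\mbR^n_+$ the multipliers of the two inequality constraints of block $j$; optimality of $\hat\mfW$ (whose only active block is the skip term) follows once one exhibits such multipliers with $\mfX^T\blbd=\hat\mfw:=\mfw^*/\|\mfw^*\|_2$ and, for every $j\in[p]$,
\[
\big\|\mfX^T\big(\mfD_j\blbd+(2\mfD_j-\mfI_n)\balpha_j\big)\big\|_2\le 1\quad\text{and}\quad\big\|\mfX^T\big(-\mfD_j\blbd+(2\mfD_j-\mfI_n)\boldsymbol{\beta}_j\big)\big\|_2\le 1 .
\]
Taking $\blbd=\mfX(\mfX^T\mfX)^{-1}\hat\mfw$, which is a.s.\ well defined since $n>d$, already gives the matching value $\blbd^T\mfy=\hat\mfw^T\mfw^*=\|\mfw^*\|_2$, so everything reduces to choosing the nonnegative multipliers $\balpha_j,\boldsymbol{\beta}_j$ satisfying the two bounds — and, crucially, making them \emph{strict}, which is what upgrades optimality to uniqueness.

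The conceptual core is the all-ones pattern $\mfD_{j^*}=\mfI_n$, which by Proposition \ref{prop:kinematic} is a.s.\ present among $\mfD_1,\dots,\mfD_p$ in the regime $n<2d$ and which is exactly the pattern that produces the extra optimum in Theorem \ref{prop:negative_gaussian}. Here $2\mfD_{j^*}-\mfI_n=\mfI_n$ and $\mfX^T\mfD_{j^*}\blbd=\hat\mfw$ has unit norm, so $\balpha_{j^*}=0$ only yields equality. This is where the hypothesis is used: since $\mfX\mfw^*\not\ge 0$ there is a coordinate $k_0$ with $(\mfX\mfw^*)_{k_0}<0$, and taking $\balpha_{j^*}=t\,\mathbf{e}_{k_0}$ for small $t>0$ gives $\mfX^T\big(\mfD_{j^*}\blbd+(2\mfD_{j^*}-\mfI_n)\balpha_{j^*}\big)=\hat\mfw+t\,\mfx_{k_0}$, with
\[
\|\hat\mfw+t\,\mfx_{k_0}\|_2^2=1+\tfrac{2t}{\|\mfw^*\|_2}(\mfX\mfw^*)_{k_0}+t^2\|\mfx_{k_0}\|_2^2<1 ;
\]
symmetrically, a coordinate $k_1$ with $(\mfX\mfw^*)_{k_1}>0$ — which exists unless $\mfX\mfw^*\le 0$, a one-sided degenerate case symmetric to that of Theorem \ref{prop:negative_gaussian} — gives $\|{-\hat\mfw}+t\,\mfx_{k_1}\|_2<1$ for small $t>0$. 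So block $j^*$ acquires strict slack precisely because $\mfy$ is neither a nonnegative nor a nonpositive vector.

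For every other pattern $\mfD_j\neq\mfI_n$ one must still complete the certificate with strict slack, and this is the step where I expect the main difficulty. Writing $T_j\subsetneq[n]$ for the support of $\mfD_j$, the set swept by $\mfX^T\big(\mfD_j\blbd+(2\mfD_j-\mfI_n)\balpha_j\big)$ as $\balpha_j$ ranges over $\mbR^n_+$ is $\mfX^T\mfD_j\blbd+\cone\!\big(\{\mfx_k\}_{k\in T_j}\cup\{-\mfx_k\}_{k\notin T_j}\big)$, and one needs this affine-conic set to meet the open unit ball (and likewise for the $\boldsymbol{\beta}_j$ version). The plan is a per-pattern geometric argument exploiting that the defining inequalities of the arrangement pattern force its generating vectors into a common halfspace while $\mfX^T\mfD_j\blbd$ is controlled through $\blbd=\mfX(\mfX^T\mfX)^{-1}\hat\mfw$, with genericity of $\mfX$ (probability one) ruling out tangency to the unit sphere; this is the plain-ReLU analogue of the Linear Neural Isometry Condition \eqref{irrep:grelu_skip} restricted to $j\neq j^*$, and no closed form is expected.

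Finally I would assemble uniqueness in the standard way. For any feasible $\tilde\mfW=(\tilde\mfw_0,\{\tilde\mfw_j,\tilde\mfw_j'\}_j)$, applying $\blbd^T$ to the interpolation constraint, adding the nonnegative numbers $\balpha_j^T(2\mfD_j-\mfI_n)\mfX\tilde\mfw_j$ and $\boldsymbol{\beta}_j^T(2\mfD_j-\mfI_n)\mfX\tilde\mfw_j'$, and invoking Cauchy--Schwarz with the two block bounds gives
\[
\|\tilde\mfw_0\|_2+\sum_{j=1}^p\big(\|\tilde\mfw_j\|_2+\|\tilde\mfw_j'\|_2\big)\ \ge\ \blbd^T\mfy\ =\ \|\mfw^*\|_2 ,
\]
so $\hat\mfW$ is optimal; if $\tilde\mfW$ is also optimal then every inequality here is an equality, and because each block bound is strict this forces $\tilde\mfw_j=\tilde\mfw_j'=0$ for all $j$ and $\tilde\mfw_0$ to be a nonnegative multiple of $\mfw^*$, whereupon $\mfX\tilde\mfw_0=\mfy=\mfX\mfw^*$ together with a.s.\ full column rank of $\mfX$ gives $\tilde\mfw_0=\mfw^*$. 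Lemma \ref{thm:landscape_skip} then transfers the uniqueness (up to splitting and permutation) to the non-convex problem \eqref{prob:min_nrm}.
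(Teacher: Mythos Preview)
Your gap is in the blocks $\mfD_j\neq\mfI_n$, which you flag as the main difficulty and leave as an unfinished cone search. The paper's route here is structurally much simpler: it works in the relaxed group-$\ell_1$ problem \eqref{min_nrm:grelu_skip} (equivalently, sets $\balpha_j=\boldsymbol{\beta}_j=0$ in your dual) and argues that the base point $\mfX^T\mfD_j\blbd=\mfX^T\mfD_j\mfX(\mfX^T\mfX)^{-1}\hat\mfw$ already has norm strictly below $1$ almost surely. The mechanism is that equality would force $\hat\mfw$ into the top eigenspace of $(\mfX^T\mfX)^{-1}(\mfX^T\mfD_j\mfX)^2(\mfX^T\mfX)^{-1}$, which is a proper random subspace whenever $\mfD_j\neq\mfI_n$; a finite union bound over the $2^n$ possible patterns then gives the probability-$1$ statement. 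Conditioned on this event, the paper invokes its weak-irrepresentability result (Proposition~\ref{prop:irrep_general_weak}) to force $\mfw_j=0$ for every $\mfD_j\neq\mfI_n$ in any optimum of the relaxed problem, and hence of the constrained one. For the remaining identity block $\mfD_{i^*}=\mfI_n$ the paper does \emph{not} build a strict dual certificate; it argues in the primal: full column rank of $\mfX$ gives $\mfw_0+\mfw_{i^*}-\mfw_{i^*}'=\mfw^*$, the triangle inequality gives $\|\mfw_0\|_2+\|\mfw_{i^*}\|_2+\|\mfw_{i^*}'\|_2\ge\|\mfw^*\|_2$ with equality only along the line through $\mfw^*$, and the cone constraints $\mfX\mfw_{i^*}\ge 0$, $\mfX\mfw_{i^*}'\ge 0$ together with $\mfX\mfw^*\not\ge 0$ kill the nonzero multiples.

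So the two arguments swap difficulties: you treat the non-identity blocks as hard (a cone search over $\balpha_j$) and the identity block via a dual-multiplier trick, while the paper does the reverse. Your identity-block construction is correct and is a legitimate alternative to the paper's primal step, and you rightly flag the residual case $\mfX\mfw^*\le 0$ that the paper's proof glosses over. For the non-identity blocks, though, you should abandon the cone search and adopt the paper's ``$\balpha_j=0$ already suffices a.s.'' line; that is precisely the strict version of \eqref{irrep:grelu_skip} restricted to $\mfD_j\neq\mfI_n$ that you mention but do not prove. One caution: the paper justifies the preliminary bound $\|\mfX^T\mfD_j\mfX(\mfX^T\mfX)^{-1}\hat\mfw\|_2\le 1$ by asserting that $A\preceq B$ implies $A^2\preceq B^2$ for PSD matrices, which is false in general, so this step deserves a more careful argument if you follow that route.
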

In Figure \ref{fig_main:lin_phase_w0}, we numerically verify the phase transition by solving the convex program \eqref{min_nrm:relu_skip_convex} and its relaxation given by the group $\ell_1$-minimization problem \eqref{min_nrm:grelu_skip}, which drops the linear inequality constraints in \eqref{min_nrm:relu_skip}. 
\begin{figure}[H]
\setcounter{subfigure}{0}
    \subfigure{
      \centering
      \includegraphics[width=.45\textwidth]{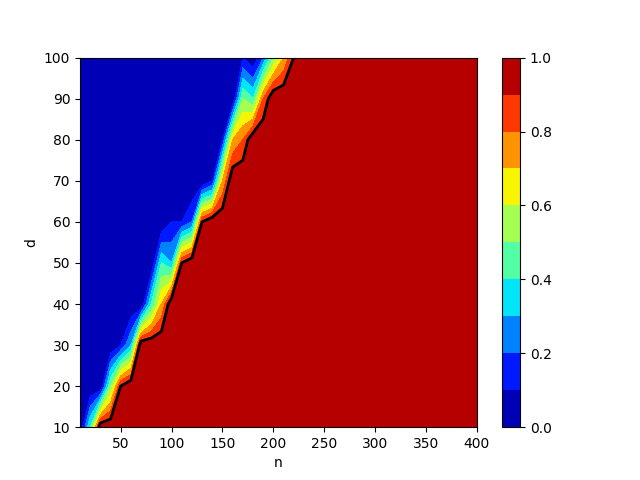}  
    }
    \subfigure{
      \centering
      \includegraphics[width=.45\textwidth]{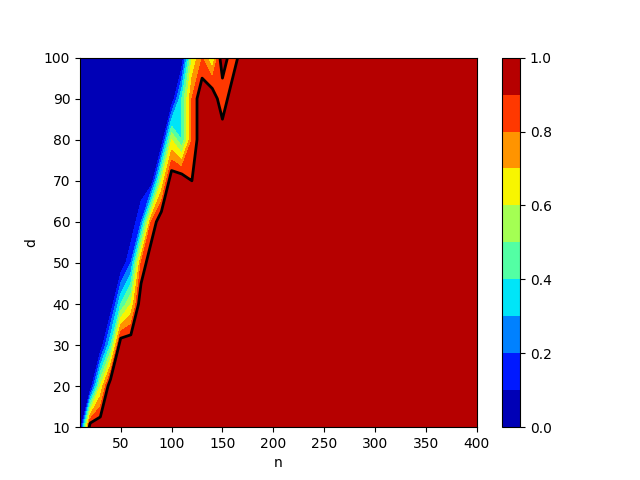}  
    }
    \caption{The empirical probability of successful recovery of the planted linear neuron estimated over $5$ independent trials. Left panel: solving the group $\ell_1$-minimization problem \eqref{min_nrm:grelu_skip} as a relaxation of \eqref{min_nrm:relu_skip}. Right panel: solving the convex NN program \eqref{min_nrm:relu_skip}. Red (blue) region shows the region where exact recovery probability is close to one (zero).   The black lines represent the boundaries of successful recovery with probability $1$.}\label{fig_main:lin_phase_w0}
\end{figure}

\subsubsection{Haar random data}

We now investigate the case where the training data $\mfX\in\mbR^{n\times d}$ is a Haar distributed random matrix. More precisely, $\mfX$ is uniformly sampled from the set of column orthogonal matrices $\big \{\mfX\in\mbR^{n\times d}\,|\, \mfX^T\mfX=\mfI_d\big\}$ for $n\geq d$. 
In this case, since $(\mfX^{T}\mfX)^{-1} = \mfI_d$, \eqref{irrep:grelu_skip} for the recovery of a linear neuron reduces to
\begin{equation}\label{equ:event_hu}
    \hspace{1cm} \max_{\mfh\in\mbR^d,\,\mfh\neq 0}\norm{\mfX^T\diag(\mbI(\mfX\mfh\geq 0))\mfX\hat\mfw}_2<1,
    \tag{orth-NIC-L}
\end{equation}
where $\hat \mfw :=\frac{\mfw^*}{\|\mfw^*\|_2}$.


Based on the simpler form of \eqref{equ:event_hu}, we derive the phase transition results on the recovery of a planted linear neuron. Moreover, this form enables the analysis of a stronger recovery condition that ensures the simultaneous recovery of all planted linear neurons.

\begin{theorem}[Phase transition in Haar data]\label{thm:relu_skip_phase}
Suppose that the planted neuron $\mfw^*\in \mbR^d$, $\mfw^*\neq 0$ is fixed and $\mfX\in\mbR^{n\times d}$ is a Haar distributed random matrix. Denote $\mfW^*=(\mfw^*,0,\dots,0)$. For $n>2d$, with probability at least $1-\exp(-\alpha n)$, $\mfW^*$ is the unique optimal solution to \eqref{min_nrm:relu_skip_convex}, where $\alpha=\frac{(n/2-d)^2}{64n^2}$. For $n<2d$, with probability at least $1-\exp(-\alpha n)$, $\mfW^*$ is not the unique optimal solution to \eqref{min_nrm:relu_skip_convex}. 
\end{theorem}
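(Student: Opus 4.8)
The plan is to reduce Theorem \ref{thm:relu_skip_phase} to the behavior of the scalar random variable
\[
M(\mfX) := \max_{\mfh\in\mbR^d,\,\mfh\neq 0}\norm{\mfX^T\diag(\mbI(\mfX\mfh\geq 0))\mfX\hat\mfw}_2,
\]
which, by \eqref{equ:event_hu}, is precisely the quantity whose comparison to $1$ governs \eqref{irrep:grelu_skip} in the Haar case. By Proposition \ref{prop:imply_linear} (applied with $\mfX^T\mfX=\mfI_d$), the event $\{M(\mfX)<1\}$ forces $\mfW^*=(\mfw^*,0,\dots,0)$ to be the unique optimum of \eqref{min_nrm:relu_skip_convex}. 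So the \emph{success} direction amounts to showing $P(M(\mfX)<1)\ge 1-\exp(-\alpha n)$ when $n>2d$. The natural route is a rotational-invariance argument: since $\mfX$ is Haar, for any fixed $\mfh$ the matrix $\diag(\mbI(\mfX\mfh\ge0))$ picks out roughly half the rows, and $\mfX^T\diag(\mbI(\mfX\mfh\ge0))\mfX$ concentrates around $\tfrac12\mfI_d$; one then needs a \emph{uniform} bound over all arrangement patterns $\mfh$. Here I would use the fact (already invoked via Cover's bound in Definition \ref{def:hyper}) that there are only $p\le\textrm{poly}(n)$ distinct patterns $\mfD_j$, so a union bound over a polynomial family against an exponential concentration tail suffices, provided each individual term $\norm{\mfX^T\mfD_j\mfX\hat\mfw - \text{(its mean)}}_2$ concentrates sharply. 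The phase-transition threshold $n/d=2$ and the exponent $\alpha=(n/2-d)^2/(64n^2)$ should come out of exactly the kinematic-formula bookkeeping in Proposition \ref{prop:kinematic}: when $n<2d$ the all-ones pattern $\mfI_n$ is itself achievable, which is what breaks things.

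For the \emph{failure} direction ($n<2d$), the argument mirrors the proof of Theorem \ref{prop:negative_gaussian}. When $n<2d$, Proposition \ref{prop:kinematic} (stated there for Gaussian $\mfX$, but the orthant-intersection event depends only on the column span of $\mfX$, which is a Haar-random $d$-dimensional subspace in both ensembles, so the same statement transfers) gives with probability $\ge 1-\exp(-\alpha n)$ the existence of $\mfh\neq0$ with $\mfX\mfh\ge0$, i.e.\ $\mfI_n\in\{\mfD_1,\dots,\mfD_p\}$, say $\mfD_j=\mfI_n$. Then the competing feasible point that places the linear neuron entirely inside block $j$ — $\tilde\mfw_j=(\text{the appropriate vector so that }\mfX\tilde\mfw_j=\mfX\mfw^*)$, all other blocks and $\mfw_0$ zero — has the same group-$\ell_1$ objective as $\mfW^*$ (since $\|\mfw^*\|_2$ equals the cost of the single nonzero block, and $\mfD_j\mfX=\mfX$ is full column rank so the block variable is exactly $\mfw^*$), giving a second optimum of \eqref{min_nrm:grelu_skip}. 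One must check this carries over to the constrained program \eqref{min_nrm:relu_skip_convex}: the competing point automatically satisfies $(2\mfD_j-\mfI_n)\mfX\tilde\mfw_j=\mfX\mfw^*\ge0$ precisely when $\mfX\mfw^*\ge0$; if instead $\mfX\mfw^*\not\ge0$, Proposition \ref{prop:positive_nd} says uniqueness is actually restored, so the failure statement as written should be understood for the planted-model family with $\mfX\mfw^*\ge0$ — or, more carefully, I would instead exhibit non-uniqueness of the \emph{unconstrained relaxation} \eqref{min_nrm:grelu_skip} and argue via the relaxation-equivalence remark after \eqref{min_nrm:grelu} that this still obstructs uniqueness in \eqref{min_nrm:relu_skip_convex} for the relevant $\mfw^*$.

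The main obstacle is the uniform control of $M(\mfX)$ over all $p$ arrangement patterns in the success direction. The per-pattern concentration of $\mfX^T\mfD_j\mfX$ around $\tfrac12\mfI_d$ is delicate because $\mfD_j$ is itself a function of $\mfX$ (the pattern is data-dependent), so one cannot directly treat $\mfD_j$ and $\mfX$ as independent. The standard fix is to decouple: enumerate patterns combinatorially as sign patterns of a fixed reference, condition on the column span of $\mfX$ (equivalently, work with the projection $\mfP=\mfX\mfX^T$), and note that within that span the relevant quadratic form $\hat\mfw^T\mfX^T\mfD_j\mfX\hat\mfw=\sum_{k\in\text{active}}(\mfx_k^T\hat\mfw)^2$ is a sum over a subset of the $n$ coordinates of the unit vector $\mfX\hat\mfw$ on the sphere — and Haar structure makes $\mfX\hat\mfw$ uniform on the unit sphere in that subspace, so tail bounds for subsums of squared coordinates of a random point on $S^{n-1}$ apply. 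Combining this with the polynomial pattern count and optimizing the resulting exponent to match $\alpha=(n/2-d)^2/(64n^2)$ is the technical heart; everything else is the same cone-intersection bookkeeping already developed for the Gaussian case.
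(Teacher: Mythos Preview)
Your failure direction ($n<2d$) matches the paper: once the kinematic formula (which, as you correctly note, depends only on the column span of $\mfX$ and hence applies verbatim to the Haar ensemble) produces $\mfI_n$ among the arrangement patterns, placing $\mfw^*$ in that block yields a second optimizer with the same objective. You are in fact more careful than the paper's own proof about the feasibility constraint $(2\mfI_n-\mfI_n)\mfX\mfw^*\ge0$.

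The success direction ($n>2d$) has a genuine gap. You propose concentration of $\mfX^T\mfD_j\mfX$ around $\tfrac12\mfI_d$ for each pattern, decoupling to handle the dependence of $\mfD_j$ on $\mfX$, and a union bound over $p\le\textrm{poly}(n)$ patterns. That is exactly the machinery of Theorem~\ref{thm:gauss_success} (via Lemma~\ref{lem:net}), and there it yields only $n\ge C d\log n$. A concentration-plus-union-bound route cannot produce the sharp threshold $n>2d$ or the specific exponent $\alpha=(n/2-d)^2/(64n^2)$: that exponent is the cone-intersection probability of Proposition~\ref{prop:kinematic}, not a deviation bound, and nothing in your outline explains how the two would combine to give the stated result.

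The paper's argument is far shorter and uses no concentration. The observation you are missing is that column-orthonormality of $\mfX$ gives, for every $\bsigma\in\{0,1\}^n$, the \emph{deterministic} inequality
\[
B(\bsigma):=\mfX^T\diag(\bsigma)\mfX\ \preceq\ \mfX^T\mfX=\mfI_d,
\]
so $\|B(\bsigma)\hat\mfw\|_2\le1$ always. Equality forces $\hat\mfw\in\eigmax(B(\bsigma))$ with top eigenvalue $1$; for $\bsigma\neq\bone$ one has $B(\bsigma)\neq\mfI_d$, hence $\dim\eigmax(B(\bsigma))\le d-1$, and for fixed $\mfw^*$ and Haar $\mfX$ this random proper subspace contains $\hat\mfw$ with probability zero. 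A finite union over all $\bsigma\neq\bone$ still has probability zero. Therefore $M(\mfX)<1$ can fail only when $\bone$ is a realizable pattern, whose probability Proposition~\ref{prop:kinematic} bounds by $\exp(-\alpha n)$ for $n>2d$. All of the decoupling and sphere-coordinate analysis you sketch is unnecessary once you see the $\preceq\mfI_d$ bound plus the measure-zero step.
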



Similar to our previous analysis for the Gaussian distribution, if $\mfX\mfw^*\geq 0$, then $\mfW^*=(\mfw^*,0,\dots,0)$ is not the unique optimal solution to \eqref{min_nrm:relu_skip}. However, if $\mfX\mfw^*\geq 0$ is not satisfied, for $n>d$, $\mfW^*$ can still be the unique optimal solution of \eqref{min_nrm:relu_skip}. 


\subsubsection{Relation between Haar data and normalization layers}
Consider the minimum norm interpolation problem of two-layer ReLU networks with skip connection and normalization layer (before ReLU)
\begin{equation}
\begin{aligned}
    \min_{\mfW_1,\mfw_2,\balpha}&\pp{\|\mfW_1\|_2^2+\|\mfw_2\|_2^2+\|\balpha\|_2^2}, \\
    \text{ s.t. } \quad &\mfy=\normal_{\alpha_1}(\mfX\mfw_{1,1})w_{2,1}+\sum_{i=2}^m\pp{\normal_{\alpha_i}(\mfX\mfw_{1,i})}_+w_{2,i}.
\end{aligned}
\end{equation}
The above problem can be reformulated as a convex program \cite{ergen2021demystifying}:
\begin{equation}
\begin{aligned}
    \min_{\mfw_0,\{\mfw_j,\mfw_j'\}_{j=1}^p}\;&\pp{\|\mfw_0\|_2+\sum_{j=1}^p(\|\mfw_j\|_2+\|\mfw_j'\|_2)}, \\
\text{ s.t. } &\mfU\mfw_0+\sum_{j=1}^p \mfD_j\mfU( \mfw_j-\mfw_j
')=\mfy, \\
&(2\mfD_j-\mfI_n)\mfU\mfw_j\geq 0, (2\mfD_j-\mfI_n)\mfU\mfw_j'\geq 0, j\in[p],
\end{aligned}
\end{equation}
where the matrix $\mfU$ is computed from the compact SVD of $\mfX=\mfU\bSigma\mfV^T$. Therefore, the training problem with Gaussian random data with an additional normalization layer before the ReLU activation is equivalent to the training problem with Haar data with no normalization layers.
\subsubsection{Sub-Gaussian random data}
Our previous results on Gaussian and Haar data can be extended to a broad class of random matrices with independent entries.
In the following result, we show that for i.i.d., sub-Gaussian data distributions, ReLU neural networks with a skip connection recovers a planted linear model exactly with sufficiently many samples.
\begin{theorem}\label{thm:gauss_success}
Suppose that each entry $x_{i,j}$ of the data matrix $\mfX\in \mbR^{n\times d}$ is an i.i.d. symmetric random variable following a mean-zero sub-Gaussian distribution with variance proxy $\sigma^2$ and $\mbE[x_{i,j}^2]=\frac{1}{n}$.
Then, \eqref{irrep:grelu_skip} holds when $n\geq  C_1 d\log(n)$ with probability at least $1-4\exp(-C_2n)-2\exp(d)$ for sufficiently large $n$, where $C_1,C_2$ are absolute constants depending only on $\sigma^2$. 
\end{theorem}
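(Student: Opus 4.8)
The plan is to show that the Linear Neural Isometry Condition \eqref{irrep:grelu_skip}, namely $\norm{\mfX^T \mfD_j \mfX(\mfX^T\mfX)^{-1}\hat\mfw}_2 < 1$ for all $j \in [p]$, holds with high probability by a two-part argument: control of the matrix inverse $(\mfX^T\mfX)^{-1}$, and a uniform bound over all hyperplane arrangement patterns $\mfD_j$ of the quantity $\norm{\mfX^T\mfD_j\mfX\mfv}_2$ for the fixed unit vector $\mfv = (\mfX^T\mfX)^{-1}\hat\mfw / \norm{(\mfX^T\mfX)^{-1}\hat\mfw}_2$. First I would use standard non-asymptotic random matrix theory for sub-Gaussian ensembles (as in \cite{vershynin2018high}): since $\mfX$ has i.i.d.\ mean-zero sub-Gaussian entries with variance $1/n$, the matrix $\mfX^T\mfX$ concentrates around $\mfI_d$, so with probability at least $1 - 2\exp(-C_2 n)$ we have $\norm{\mfX^T\mfX - \mfI_d}_2 \le C\sqrt{d/n}$, hence $\norm{(\mfX^T\mfX)^{-1}}_2 \le 2$ and $\norm{(\mfX^T\mfX)^{-1}\hat\mfw}_2 \le 2$ once $n \gtrsim d$. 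This reduces the task to bounding $\sup_j \norm{\mfX^T\mfD_j\mfX}_2$ restricted to a direction, or more simply $\sup_j \norm{\mfX^T\mfD_j\mfX - \tfrac{1}{2}\mfI_d}_2$ if we center appropriately; but since there is no planted-neuron structure here (unlike NIC-1), the cleanest route is a direct union bound.

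The core step is the union bound over the $p \le 2r(e(n-1)/r)^r \le \mathrm{poly}(n)$ arrangement patterns. For a \emph{fixed} sign pattern $\boldsymbol{s}\in\{0,1\}^n$ and the corresponding diagonal matrix $\mfD = \diag(\boldsymbol{s})$, the matrix $\mfX^T\mfD\mfX = \sum_{k:\, s_k=1}\mfx_k^\mathrm{row}(\mfx_k^\mathrm{row})^T$ is a sum of at most $n$ i.i.d.\ rank-one sub-Gaussian outer products, each with expectation $\tfrac1n \mfI_d$; by a matrix Bernstein / sub-exponential concentration argument, for a fixed unit vector $\mfv$, $\norm{\mfX^T\mfD\mfX\,\mfv}_2 = \norm{\sum_{k:s_k=1} \mfx_k \langle \mfx_k, \mfv\rangle}_2 \le \tfrac{|\{k: s_k = 1\}|}{n} + \text{fluctuation}$, and since $|\{k:s_k=1\}| \le n$ this is at most $1 - \epsilon$ for some fixed $\epsilon > 0$ with probability $1 - 2\exp(-c n)$ provided the fluctuation term is $\lesssim \sqrt{d\log(n)/n}$. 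The key catch is that the patterns $\mfD_j$ are \emph{data-dependent} (they are determined by $\mfX$ itself), so one cannot naively fix $\mfD$ and then take $\mfX$ random. I would handle this in the standard way: the patterns range over a set whose cardinality is bounded by $\mathrm{poly}(n)$ \emph{deterministically} (the Cover bound), so one takes a union bound over \emph{all} $2^n$ possible sign vectors restricted to those realizable — actually over all $\le \mathrm{poly}(n)$ achievable patterns — treating each achievable $\boldsymbol s$ as fixed; more carefully, one enumerates patterns combinatorially and notes each is realized by some $\mfh$ independent of the concentration event being bounded, so conditioning arguments or an $\epsilon$-net over the sphere of directions $\mfh$ plus the $\mathrm{poly}(n)$ combinatorial bound closes the loop. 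Choosing $n \ge C_1 d\log(n)$ makes the fluctuation term smaller than $\epsilon$ and, crucially, makes $p \cdot \exp(-cn) = \exp(r\log(e(n-1)/r) + \log(2r) - cn) \to 0$, since $r \le d$ and $d\log n = o(n)$.

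I expect the main obstacle to be the uniform control over the data-dependent arrangement patterns $\mfD_j$ while simultaneously using $\mfX$ as the source of randomness; making this rigorous requires either (i) decoupling — noting the combinatorial enumeration of achievable patterns depends only on the \emph{directions} of the rows, and fixing a pattern amounts to conditioning on a cone event whose probability is absorbed into the union bound — or (ii) an $\epsilon$-net argument over $\mfh \in S^{d-1}$ combined with Lipschitz continuity of $\mfh \mapsto \mfX^T\diag(\mbI(\mfX\mfh\ge 0))\mfX$ away from the arrangement boundaries, which is delicate because the indicator is discontinuous. The sub-Gaussian (rather than Gaussian) setting adds the technical burden of replacing rotational-invariance tricks with generic sub-exponential tail bounds (Bernstein-type inequalities for sums of independent sub-exponential random variables, as $\langle \mfx_k, \mfv \rangle^2$ is sub-exponential), which is routine but accounts for the extra $\log(n)$ factor and the $2\exp(d)$ term in the probability bound — the latter presumably arising from an $\epsilon$-net of size $\exp(O(d))$ over the $d$-dimensional sphere of directions $\hat\mfw$ or $\mfh$. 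I would present the argument by first stating the two concentration lemmas (inverse control; fixed-pattern bound), then assembling them with the Cover bound and the net.
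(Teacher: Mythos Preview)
Your high-level structure matches the paper's: bound $\norm{(\mfX^T\mfX)^{-1}}_2$ via standard sub-Gaussian singular-value concentration, then uniformly bound $\max_{j}\norm{\mfX^T\mfD_j\mfX}_2$, and combine. You also correctly identify the central obstacle --- that the patterns $\mfD_j$ are data-dependent --- but your proposed remedies do not actually close that gap. Enumerating the ``$\le\mathrm{poly}(n)$ achievable patterns'' does not help because the achievable \emph{set} is itself a random function of $\mfX$; treating an achievable pattern as fixed and then invoking concentration for $\mfX$ is circular, and ``absorbing the cone-conditioning into the union bound'' changes the distribution of the rows. Your alternative (ii), an $\epsilon$-net on $\mfh$, fails for the reason you yourself flag: $\mfh\mapsto\mbI(\mfX\mfh\ge0)$ is discontinuous, so net-to-sphere extension is not available.

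The paper resolves this with a specific two-step device you do not invoke: first \emph{symmetrize} with Rademacher variables and an independent copy $\mfX'$, then apply a \emph{decoupling} inequality (de la Pe\~na--Gin\'e) to replace $\mbI(\mfh^T\mfx_i\ge0)$ by $\mbI(\mfh^T\mfx_i'\ge0)$. After decoupling, the arrangement patterns $\mfD_j'$ depend only on $\mfX'$, which is \emph{independent} of $\mfX$; conditioning on $\mfX'$, one now has a deterministic collection of at most $2d(2en/d)^d$ diagonal matrices, and a union bound over these plus an $\epsilon$-net in the quadratic-form direction $\mfz$ (not $\mfh$) goes through with sub-exponential tails on $(\mfz^T\mfx_i)^2$. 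Two further corrections: the paper bounds the full operator norm $\max_j\norm{\mfX^T\mfD_j\mfX}_2$ rather than its restriction to the direction $(\mfX^T\mfX)^{-1}\hat\mfw$, precisely because that direction is also $\mfX$-dependent; and the $2\exp(-d)$ term (the theorem statement has a sign typo) comes from the Vershynin singular-value bound on $\mfX$, not from an $\epsilon$-net over $\hat\mfw$.
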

We note that there exists an additional logarithmic term in the scaling $n\ge C_1 d \log (n)$ provided in Theorem \ref{thm:gauss_success}, compared to our results on Gaussian and Haar data matrices.
\subsection{Uniform (Strong) Recovery of All Linear Models}
Thus far, we considered the recovery of a fixed planted neuron and the associated exact recovery probability. To ensure the recovery of all possible planted neurons $\mfw^*$ simultaneously, we introduce the following stronger form of $\mathrm{NIC}\mydash\mathrm{L}$ specialized to column orthogonal matrices by requiring  \eqref{equ:event_hu} to hold for every $\mfw^*$:
\begin{equation}\label{irrep:white_strong}
    \max_{\mfh\in\mbR^d,\,\mfh\neq 0}\norm{\mfX^T\diag(\mbI(\mfX\mfh\geq 0))\mfX}_2<1.
    \tag{ortho-SNIC-L}
\end{equation}
For a diagonal arrangement pattern $\mfD_j$, $\|\mfX^T\mfD_j\mfX\|_2<1$ is equivalent to $\mfX^T\mfD_j\mfX\prec \mfI_d$ or, equivalently, $\mfX^T(\mfI_n-\mfD_j)\mfX\succ 0$. As $\mfX$ is column orthonormal, this is also equivalent to $\tr(\mfI_n-\mfD_j)\geq d$, or equivalently, $\tr(\mfD_j)\leq n-d$. Therefore, \eqref{irrep:white_strong} can be simplified for column orthogonal matrices as
\begin{equation}\label{max_Xh}
    (\mathrm{orth\mydash SNIC\mydash L}) \hspace{1cm} \max_{\mfh\in\mbR^d,\,\mfh\neq 0} \tr(\diag(\mbI(\mfX\mfh\geq 0)))\leq n-d.
\end{equation}

\begin{theorem}\label{thm:white_strong}
Suppose that $\mfX\in\mbR^{n\times d}$ is a Haar distributed random matrix. Let $\theta^*>0$ be the unique solution of the scalar equation $\theta+\frac{1}{2}\int_{F_{\xi^2}^{-1}(1-2\theta)}^\infty rdF_{\xi^2}(r)=\frac{1}{2}$, where $F_{\xi^2}(r)$ is the cumulative distribution function (CDF) of a $\xi^2$-random variable with $1$ degree of freedom. Then, for sufficiently large $n,d$ satisfying $\frac{d}{n}< \theta^*$, the $\mathrm{orth\mydash SNIC\mydash L}$ given in \eqref{max_Xh} holds with high probability.
\end{theorem}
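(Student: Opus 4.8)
\emph{Plan.}
Because $\mfX$ is column orthonormal, the simplified condition \eqref{max_Xh} says exactly that every nonzero vector of the column space $V:=\mathrm{col}(\mfX)\subseteq\mbR^n$ has at most $n-d$ nonnegative coordinates; equivalently $V\cap\mathcal C=\{0\}$, where
\[
\mathcal C:=\Big\{\mfv\in\mbR^n:\#\{i:v_i\ge0\}\ge n-d+1\Big\}=\bigcup_{|I|=n-d+1}\{\mfv:v_i\ge0\ \forall i\in I\}
\]
is a closed cone, a union of $\binom{n}{d-1}$ coordinate "partial orthants." For Haar $\mfX$ the subspace $V$ is Haar-distributed on the Grassmannian $G(d,n)$, so Theorem~\ref{thm:white_strong} is equivalent to: a uniformly random $d$-dimensional subspace of $\mbR^n$ avoids $\mathcal C\cap S^{n-1}$ with high probability when $\theta:=d/n$ is bounded away (from below) from $\theta^*$. (Equivalently, the half-space depth of the origin among the rows of $\mfX$ exceeds $d$, i.e.\ $\sup_{\mfu\in S^{d-1}}\tfrac1n\sum_{i=1}^n\sgn(\langle\mfx_i^{\mathrm{row}},\mfu\rangle)<1-2\theta$; the subspace form is the one I would analyze.) The tool is Gordon's comparison inequality in its escape-through-the-mesh form: for any closed $S\subseteq S^{n-1}$, a Haar random $d$-dimensional subspace $V$ satisfies $V\cap S=\emptyset$ with probability at least $1-c_1\exp(-c_2\varepsilon^2(n-d))$ whenever $w(S)\le(1-\varepsilon)\sqrt{n-d}$, where $w(S)=\mbE\sup_{\mfv\in S}\langle\mfg,\mfv\rangle$ is the Gaussian width and $\mfg$ is a standard Gaussian vector; crucially this does \emph{not} require $S$ to be convex. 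So it suffices to show $w(\mathcal C\cap S^{n-1})\le(1-\varepsilon)\sqrt{n-d}$ for some fixed $\varepsilon>0$ when $\theta<\theta^*$; the recovery conclusion then follows via \eqref{irrep:white_strong} and the equivalence to \eqref{min_nrm:relu_skip_convex}.

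\emph{The width computation (the main step).}
Maximizing $\langle\mfg,\mfv\rangle$ over unit $\mfv$ forced to be nonnegative on a fixed index set $I$ of size $n-d+1$ gives $\big(\sum_{i\in I}((g_i)_+)^2+\sum_{i\notin I}g_i^2\big)^{1/2}$; optimizing over $I$ — i.e.\ letting the $d-1$ "free" coordinates be the $d-1$ most negative entries of $\mfg$ — yields the exact identity
\[
\Big(\sup_{\mfv\in\mathcal C\cap S^{n-1}}\langle\mfg,\mfv\rangle\Big)^2=\sum_{i=1}^n((g_i)_+)^2+\big(\text{sum of the }d-1\text{ largest values among }g_1^2\mbI(g_1<0),\dots,g_n^2\mbI(g_n<0)\big).
\]
Taking expectations: $\mbE\sum_i((g_i)_+)^2=n/2$, and since $g_i^2\sim\xi^2$ is independent of $\sgn(g_i)$, the variables $g_i^2\mbI(g_i<0)$ are i.i.d., equal to $0$ with probability $\tfrac12$ and to a $\xi^2$-variable with probability $\tfrac12$; hence their $d-1=\theta n$ largest values concentrate (by concentration of empirical quantiles and of truncated order-statistic sums) around $\tfrac n2\,\mbE[\xi^2\,\mbI(\xi^2>t)]$ with $t=F_{\xi^2}^{-1}(1-2\theta)$, giving
\[
w(\mathcal C\cap S^{n-1})^2=\tfrac n2\big(1+\mbE[\xi^2\,\mbI(\xi^2>t)]\big)+o(n).
\]
Comparing with $n-d=n(1-\theta)+o(n)$, the requirement $w(\mathcal C\cap S^{n-1})^2\le(1-\varepsilon)(n-d)$ reduces, after dividing by $n$, to $\mbE[\xi^2\,\mbI(\xi^2>t)]<1-2\theta$. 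As $\theta$ increases, $t=F_{\xi^2}^{-1}(1-2\theta)$ decreases, so $\mbE[\xi^2\,\mbI(\xi^2>t)]$ increases while $1-2\theta$ decreases; thus the strict inequality holds precisely for $\theta<\theta^*$, with $\theta^*$ the unique solution of $\mbE[\xi^2\,\mbI(\xi^2>t)]=1-2\theta$, which is exactly $\theta+\tfrac12\int_{F_{\xi^2}^{-1}(1-2\theta)}^\infty r\,dF_{\xi^2}(r)=\tfrac12$. This yields the $\varepsilon$-gap and closes the argument.

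\emph{Main obstacle.}
The delicate point is the concentration step in the width bound: the summands $g_i^2\mbI(g_i<0)$ are only sub-exponential, so controlling simultaneously the empirical $(1-2\theta)$-quantile and the truncated second-moment order statistics down to $o(n)$ error — uniformly enough to preserve the $\varepsilon$-gap — requires care; a workable route is to discard the $O(\log n)$ largest coordinates (whose total contribution is $o(n)$) and apply Gaussian Lipschitz concentration to the remaining bounded functional together with a DKW/empirical-process bound for the quantile. A secondary subtlety is that escape through the mesh is applied to the \emph{non-convex} cone $\mathcal C$: Gordon's inequality still applies since it uses only $w(\mathcal C\cap S^{n-1})$, but one cannot substitute the cruder route of union-bounding the kinematic formula (Lemma~\ref{lem:kinematic}) over the $\binom{n}{d-1}$ partial orthants of $\mathcal C$ — that loses a logarithmic factor and only yields a threshold strictly below $\theta^*$. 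Obtaining the sharp constant $\theta^*$ is precisely what the single-shot Gaussian-width bound buys.
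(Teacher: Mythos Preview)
Your proposal is correct and follows essentially the same route as the paper: reformulate \eqref{max_Xh} as the Haar $d$-subspace avoiding the union $\mathcal C$ of coordinate partial orthants, apply Gordon's escape-through-the-mesh (since $\mathcal C$ is non-convex), and compute $w(\mathcal C\cap S^{n-1})^2\le \tfrac n2+\mbE[\text{top }d{-}1\text{ of }g_i^2\mbI(g_i<0)]$, which in the proportional regime reduces to the stated scalar equation for $\theta^*$. One small point you gloss over: your ``exact identity'' is for $(\sup\langle\mfg,\mfv\rangle)^2$, so ``taking expectations'' yields $\mbE[(\sup)^2]$, not $w(\mathcal C\cap S^{n-1})^2=(\mbE\sup)^2$; the paper makes the Jensen step $w^2\le\mbE[(\sup)^2]$ explicit, and you should too.
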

\begin{remark}
It can be computed that $\theta^*\approx 0.1314$ and $\pp{\theta^{*}}^{-1}\approx 7.613$ by numerically solving the scalar equation $\theta+\frac{1}{2}\int_{F_{\xi^2}^{-1}(1-2\theta)}^\infty rdF_{\xi^2}(r)=\frac{1}{2}$. This implies that when $n>7.613d$, the $\mathrm{orth\mydash SNIC\mydash L}$ shown in \eqref{max_Xh} holds w.h.p. 
\end{remark}
\subsection{Inexact/Noisy Linear Recovery}\label{sec:noisy}
In this section, we consider inexact or noisy observations $\mfy=\mfX\mfw^*+\mfz$, where $\mfz\in\mbR^n$ is an arbitrary disturbance component. We will show that the optimal NN only learns a linear model when the regularization parameter $\beta$ is chosen from an appropriate interval. This can be understood intuitively as follows: when $\beta$ is too small, the NN trained with insufficient norm penalty overfits to the noise. On the other hand, when $\beta$ is too large, the NN underfits the observations as the norms of the parameters are over-penalized.

For two-layer ReLU networks with skip connection and normalization layer before the ReLU, the regularized training problem \eqref{prob:reg} can be equivalently cast as the convex program (see e.g., \cite{nnacr,wang2020hidden}):
\begin{equation}\label{reg:normal_before_relu_skip}
\begin{aligned}
    \min_{\mfw_0,\left\{\mfw_{j}, \mfw_{j}'\right\}_{j=1}^{p}}\quad &  \Big\|\mfU\mfw_0+\sum_{j=1}^{p} \mfD_{j} \mfU\left(\mfw_{j}-\mfw_{j}'\right)-\mfy\Big\|_2^2
    + \beta\Big(\|\mfw_0\|_2+\sum_{j=1}^{p}\left(\left\|\mfw_{j}\right\|_{2}+\left\|\mfw_{j}'\right\|_{2}\right)\Big)\\
        \text{s.t.}\quad & (2\mfD_j-\mfI_n)\mfU\mfw_j\ge0, j\in[p], (2\mfD_j-\mfI_n)\mfU\mfw_j'\ge0, j\in[p],
\end{aligned}
\end{equation}
where $\mfU$ is computed from the compact SVD of $\mfX=\mfU\bSigma\mfV^T$.

As a direct corollary of Theorem 1 in \citep{wang2020hidden}, the global optima of the nonconvex regularized training problem \eqref{prob:reg} are given by the optimal solutions of \eqref{reg:normal_before_relu_skip} up to permutation and splitting (see Appendix \ref{sec:permutation} for details). By dropping all inequality constraints, the convex program \eqref{reg:normal_before_relu_skip} reduces to the following group-Lasso problem:
\begin{equation}\label{reg:normal_before_grelu_skip}
\begin{aligned}
        \min_{\left\{\mfw_{j}\right\}_{j=0}^p}\quad&\Big\|\sum_{j=0}^{p} \mfD_{j} \mfU\mfw_{j}-\mfy\Big\|_2^2 + \beta\sum_{j=0}^{p}\left\|\mfw_{j}\right\|_{2}.
\end{aligned}
\end{equation}
As we show next, the convex NN objective \eqref{reg:normal_before_relu_skip} inherits recovery properties from its group-Lasso relaxation above. 
The following two theorems show that for sufficiently large $n$, with a suitable choice of the regularization parameter $\beta$, the optimal neural network learns only a linear component. In addition, the $\ell_2$ distance between the optimal solution and the embedded neuron can be bounded by a linear function of $\beta$.
\begin{theorem}\label{thm:noisy}

Suppose that the entries of the data matrix $\mfX$ is i.i.d. sub-Gaussian with variance proxy $\sigma^2$ as in Theorem \ref{thm:gauss_success} and the noisy observation takes the form $\mfy=\mfX\mfw^*+\mfz$.
Assume that the weight decay regularization parameter satisfies $\beta\in\bb{\norm{\mfz}_2\frac{7(\eta-\norm{\mfz}_2)}{\eta-7\norm{\mfz}_2},\eta-\norm{\mfz}_2}$, the norm of the noise component satisfies $\norm{\mfz}_2\leq \frac{1}{14} \eta$, where
$\eta\triangleq\norm{\bSigma\mfV^T\mfw^*}_2$ and $n\geq  \max\{4000\sigma^2 d\log (54n),1024 d\}$. Then, with probability at least $1-4\exp(-n/8000\sigma^2)-2\exp(-d)$, the optimal solutions to convex programs \eqref{reg:normal_before_relu_skip} and \eqref{reg:normal_before_grelu_skip} consist of only a linear neuron and no ReLU neurons, i.e.,  there exists $\mfw$ such that $\mfW=(\mfw,0,\dots,0)$ is strictly optimal. As a consequence, the non-convex weight decay regularized objective \eqref{prob:reg} has the same strictly optimal solutions up to permutation and splitting. 
Furthermore, we have the $\ell_2$ distance upper bound
\[
\norm{\mfw-\bSigma\mfV^T\mfw^*}_2\le \frac{\beta\eta}{\eta-\|\mfz\|_2} + \|\mfz\|_2.
\]
Here, $\eta$ satisfies $(1-1/16)\|\mfw^*\|_2\le \eta \le (1+1/16)\|\mfw^*\|_2$ with probability $1-2\exp(-d)$. Moreover, the optimal weights of the neural network $f(\mfX;\Theta)=\normal_{\alpha_1}(\mfX\mfw_{1,1})\mfw_{2,1}+\sum_{i=2}^m\pp{\normal_{\alpha_i}(\mfX\mfw_{1,i})}_+\mfw_{2,i}$
are given by $\mfw_{1,1}=\mu\mfw,\alpha_1=\mfw_{2,1}=\sqrt{\norm{\mfX\mfw}_2}$ and $\mfw_{1,j}=\mathbf{0},\alpha_j=\mfw_{2,j}=0$ for $2\le j\le n$, where $\mu>0$ is an arbitrary constant.
\end{theorem}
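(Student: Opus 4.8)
The plan is to run a primal--dual witness (dual certificate) argument on the group-Lasso relaxation \eqref{reg:normal_before_grelu_skip} and then transfer the conclusion to the constrained convex program \eqref{reg:normal_before_relu_skip} and, via its equivalence with \eqref{prob:reg}, to the network itself. Write the compact SVD $\mfX=\mfU\bSigma\mfV^T$ so that $\mfU^T\mfU=\mfI_d$, set $\mfw^*_U\triangleq\bSigma\mfV^T\mfw^*$ (so $\eta=\norm{\mfw^*_U}_2=\norm{\mfX\mfw^*}_2$ and $\mfy=\mfU\mfw^*_U+\mfz$), and let $v\triangleq\mfw^*_U+\mfU^T\mfz$. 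Since $\norm{\mfU\mfw_0-\mfy}_2^2=\norm{\mfw_0-v}_2^2+\norm{(\mfI_n-\mfU\mfU^T)\mfz}_2^2$, the ``oracle'' problem obtained from \eqref{reg:normal_before_grelu_skip} by forcing every block except the linear one to vanish reduces to $\min_{\mfw_0}\norm{\mfw_0-v}_2^2+\beta\norm{\mfw_0}_2$, whose unique minimizer is the block soft-threshold $\mfw=\pp{1-\tfrac{\beta/2}{\norm{v}_2}}_+v$. Because $\norm{v}_2\ge\eta-\norm{\mfz}_2$, the upper endpoint $\beta\le\eta-\norm{\mfz}_2$ makes the linear block active, that is $\mfw\neq0$, and then $\mfw=v-\tfrac{\beta}{2}\hat\mfw$ with $\hat\mfw\triangleq v/\norm{v}_2$.

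I would then check the optimality conditions of \eqref{reg:normal_before_grelu_skip} at $\mfW=(\mfw,0,\dots,0)$. The residual is $r=\mfU\mfw-\mfy=-\tfrac{\beta}{2}\mfU\hat\mfw-(\mfI_n-\mfU\mfU^T)\mfz$, so stationarity on the linear block, $\mfU^Tr=-\tfrac{\beta}{2}\hat\mfw$, holds by construction, and it remains to certify strict dual feasibility $\norm{\mfU^T\mfD_j r}_2<\beta/2$ for every ReLU pattern $j\in[p]$. Bounding termwise, using $\norm{\mfU^T\mfD_j\mfU}_2\le1$, $\norm{\mfU^T\mfD_j(\mfI_n-\mfU\mfU^T)}_2\le1$, $\norm{\mfU^T\mfz}_2\le\norm{\mfz}_2$, $\norm{v}_2\ge\eta-\norm{\mfz}_2$, and the split $\hat\mfw=(\mfw^*_U+\mfU^T\mfz)/\norm{v}_2$,
\[
\norm{\mfU^T\mfD_j r}_2\le\frac{\beta}{2}\cdot\frac{\eta\,c_1+\norm{\mfz}_2}{\eta-\norm{\mfz}_2}+\norm{\mfz}_2,\qquad c_1\triangleq\frac1\eta\max_{\ell\in[p]}\norm{\mfU^T\mfD_\ell\mfU\mfw^*_U}_2 .
\]
An elementary computation shows the right-hand side is $<\beta/2$ exactly when $\beta>\tfrac{2\norm{\mfz}_2(\eta-\norm{\mfz}_2)}{\eta(1-c_1)-2\norm{\mfz}_2}$, and substituting $c_1\le5/7$ this becomes precisely the stated lower endpoint $\beta\ge\norm{\mfz}_2\tfrac{7(\eta-\norm{\mfz}_2)}{\eta-7\norm{\mfz}_2}$; moreover $\norm{\mfz}_2\le\eta/14$ is exactly the condition making the interval with upper end $\eta-\norm{\mfz}_2$ nonempty. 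Since every minimizer of \eqref{reg:normal_before_grelu_skip} has the same residual $r$ (the loss is strictly convex in the prediction), the strict inequalities force every minimizer to be supported on the linear block only, where $\mfU$ has full column rank and the objective is strictly convex; hence $\mfW$ is the \emph{unique} minimizer of \eqref{reg:normal_before_grelu_skip}. As $\mfW$, read with all ReLU pairs $(\mfw_j,\mfw_j')=(0,0)$, is feasible for \eqref{reg:normal_before_relu_skip} with the same objective value while the optimal value of \eqref{reg:normal_before_grelu_skip} does not exceed that of \eqref{reg:normal_before_relu_skip} (via $(\mfw_j,\mfw_j')\mapsto\mfw_j-\mfw_j'$ and the triangle inequality on the penalty), $\mfW$ is also the unique optimum of \eqref{reg:normal_before_relu_skip}; by Theorem~1 of \cite{wang2020hidden} the global optima of the non-convex problem \eqref{prob:reg} are exactly its permutations and splittings.

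The crux, and the main obstacle, is the probabilistic bound $c_1\le5/7$, holding with probability at least $1-4\exp(-n/8000\sigma^2)$. This is a quantitative strengthening of Theorem~\ref{thm:gauss_success}, which gives only $c_1<1$, and I would obtain it by re-running that proof while tracking constants. Writing $\mfU=\mfX(\mfX^T\mfX)^{-1/2}$ turns $\norm{\mfU^T\mfD_\ell\mfU\mfw^*_U}_2/\eta$ into a whitened quadratic form in $\mfX$: one first controls $\norm{\mfX^T\mfX-\mfI_d}_2$ by sub-Gaussian covariance concentration, and then for each of the at most $p\le2r\pp{e(n-1)/r}^r$ hyperplane-arrangement patterns one combines an $\varepsilon$-net over $S^{d-1}$ with a Bernstein bound for $\sum_{k:(\mfD_\ell)_{kk}=1}\langle\mfx_k,\cdot\rangle^2$ and a union bound; here the symmetry of the entries is used to handle the arrangements. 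Since $\log p=\mathcal{O}(d\log n)$, the per-pattern deviation probability $e^{-cn}$ survives the union bound precisely when $n$ exceeds a constant multiple of $d\log n$, which is the source of the constant $4000\sigma^2$ and of the factor $\log(54n)$ in $n\ge4000\sigma^2 d\log(54n)$. (If $\mfD_\ell=\mfI_n$ were a pattern the bound would read $c_1\ge1$; but $n\ge1024d>2d$ places us, on the same event, in the regime where $\mfI_n$ is not a pattern, consistently with Proposition~\ref{prop:kinematic}.) The remaining condition $n\ge1024d$ also drives the stated bound on $\eta$: $\eta^2=\norm{\mfX\mfw^*}_2^2$ has mean $\norm{\mfw^*}_2^2$ by independence and $\mbE[x_{i,j}^2]=1/n$, so a sub-exponential (Bernstein) tail bound yields $(1-\tfrac1{16})\norm{\mfw^*}_2\le\eta\le(1+\tfrac1{16})\norm{\mfw^*}_2$ with probability $1-2\exp(-d)$; it is moreover needed to keep $\mfX^T\mfX$ well-conditioned.

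Finally, the $\ell_2$ bound follows from the closed form: $\mfw-\mfw^*_U=(v-\mfw^*_U)-\tfrac{\beta}{2}\hat\mfw=\mfU^T\mfz-\tfrac{\beta}{2}\hat\mfw$, so $\norm{\mfw-\bSigma\mfV^T\mfw^*}_2\le\norm{\mfz}_2+\tfrac{\beta}{2}\le\norm{\mfz}_2+\tfrac{\beta\eta}{\eta-\norm{\mfz}_2}$. The explicit network weights come from the convex-to-nonconvex parametrization map of \cite{ergen2021demystifying,wang2020hidden} specialized to the single active (linear/skip) block: the first-layer direction is free up to positive scaling because of the normalization, so $\mfw_{1,1}=\mu\mfw$ with $\mu>0$ arbitrary, while balancing the two contributions $\alpha_1^2$ and $w_{2,1}^2$ to the weight-decay penalty by AM--GM forces $\alpha_1=w_{2,1}=\sqrt{\norm{\mfX\mfw}_2}$, and all ReLU neurons are zero.
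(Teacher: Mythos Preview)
Your approach is essentially the paper's: a primal--dual witness on the group-Lasso relaxation (the paper packages this as a stand-alone Proposition~\ref{prop:noisy}), followed by the same transfer to \eqref{reg:normal_before_relu_skip} and \eqref{prob:reg}, with the concentration step borrowed from the proof of Theorem~\ref{thm:gauss_success} (Lemma~\ref{lem:net} plus a singular-value bound on $\mfX$, exactly as you sketch). The oracle solution, the $\ell_2$ bound, and the $\eta$ estimate are all handled the same way.

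There is one small quantitative slip. In bounding $\norm{\mfU^T\mfD_j r}_2$ you split $\hat\mfw=(\mfw_U^*+\mfU^T\mfz)/\norm{v}_2$ and pay an extra $\tfrac{\beta}{2}\cdot\tfrac{\norm{\mfz}_2}{\eta-\norm{\mfz}_2}$; this forces you to need $c_1\le 5/7$ to land on the stated lower endpoint for $\beta$. The paper instead keeps the whole noise vector together and writes the residual as $-\tfrac{\beta}{r+\beta}\mfU\mfw_U^*-\big(\mfI_n-\tfrac{r}{r+\beta}\mfU\mfU^T\big)\mfz$; since $\mfU\mfU^T$ is a projector and $r/(r+\beta)\in[0,1]$, the second factor has operator norm $\le 1$, so the noise contributes only $\norm{\mfz}_2$ and one only needs $c_1\le 6/7$. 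That matters because the specific constants $4000\sigma^2$, $1024$, $8000$ in the theorem come from taking $t=1/4$ in Lemma~\ref{lem:net} together with $\sigma_{\min}^2\ge 7/8$, which yields $\norm{\mfU^T\mfD_j\mfU}_2\le \tfrac{3/4}{7/8}=6/7$ and not $5/7$; with your grouping you would have to tighten $t$ and the stated constants would change. Regroup the noise term as above and the rest of your argument reproduces the theorem verbatim.
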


\begin{remark}[Estimating the regularization coefficient]
In practice, it may be appropriate to assume the noise component $\mfz$ follows an i.i.d. sub-Gaussian distribution with zero mean. Under this assumption, we can control the term $\|\mfz\|_2$ with high probability using classical concentration bounds. From the triangle inequality, we note that $\big \|\mfy\|_2\in[\eta-\|\mfz\|_2,\eta+\|\mfz\|_2 \big]$. Thus, we can estimate $\eta$ using the interval $\eta \in \Big [ \big(\|\mfy\|_2-\|\mfz\|_2\big)_+,\|\mfy\|_2+\|\mfz\|_2 \Big]$, and use $\eta$ to compute the required inverval for $\beta$ in Theorem \ref{thm:noisy}.
\end{remark}
\subsection{Recovering a ReLU Neuron}
Now we focus on recovering a single planted ReLU neuron. We consider the case where the training data matrix $\mfX$ is composed of i.i.d. standard Gaussian variables $\mcN(0,1/n)$. The following theorem illustrates that when $n>2d$, the $\mathrm{NNIC}\mydash 1$ will hold with high probability. In other words, ReLU NNs with normalization layer optimizing the objective \eqref{min_nrm:grelu} uniquely recovers the ReLU neuron with high probability. 
\begin{theorem}\label{thm:relu_normal}
Let $\mfw^*\in \mbR^d$ is a fixed unit-norm vector.  Suppose that each entry of $\mfX$ are i.i.d. random variables following the Gaussian distribution $\mcN(0,1/n)$. Let $\mfD_{i^*}=\diag(\mbI(\mfX\mfw^*\geq 0))$. 
Then, when $n>2d$, the $\mathrm{NNIC}\mydash 1$ given in \eqref{eq:irr_cond_normal} 
holds with probability at least $1- \exp\pp{-\frac{1}{6}\pp{\frac{n-2d}{n}}^2n}$.
\end{theorem}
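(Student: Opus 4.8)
The plan is to reduce $\mathrm{NNIC}\mydash1$ (the $k=1$ case of \eqref{eq:irr_cond_normal}, equivalently \eqref{irrep:grelu_normal}) to a purely geometric statement about the column spaces of the matrices $\mfD_j\mfX$, and then to a binomial tail estimate. Set $\mfy:=\frac{(\mfX\mfw^*)_+}{\|(\mfX\mfw^*)_+\|_2}=\mfU_{i^*}\tilde\mfw^*$ (the identity $\mfU_{i^*}\tilde\mfw^*=\mfy$ is already recorded in the text), and let $V_j$ denote the column space (range) of $\mfD_j\mfX$. Since $\mfU_j$ has orthonormal columns spanning $V_j$, we have $\|\mfU_j^T\mfU_{i^*}\tilde\mfw^*\|_2=\|\mfU_j^T\mfy\|_2=\|\Pi_{V_j}\mfy\|_2$, where $\Pi_{V_j}$ is the orthogonal projector onto $V_j$. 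Because $\|\mfy\|_2=1$, the inequality $\|\Pi_{V_j}\mfy\|_2<1$ holds if and only if $\mfy\notin V_j$. Hence $\mathrm{NNIC}\mydash1$ is equivalent to: $\mfy\notin\mathrm{col}(\mfD_j\mfX)$ for every $j\in[p]$ with $j\neq i^*$. Since there are only finitely many ($p\le\textrm{poly}(n)$) patterns, once this is established for each $j\neq i^*$ on a given realization of $\mfX$, the maximum $\max_{j\neq i^*}\|\Pi_{V_j}\mfy\|_2$ is automatically strictly below $1$; no uniform margin argument is needed.

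Next I would identify the governing event. Let $T:=\{k\in[n]:\mfx_k^T\mfw^*\ge 0\}$, which is precisely the support of the diagonal of $\mfD_{i^*}$. Since $\mfw^*$ is a fixed unit vector and the rows are i.i.d.\ with $\mfx_k^T\mfw^*\sim\mcN(0,1/n)$, the indicators $\mbI(\mfx_k^T\mfw^*\ge 0)$ are i.i.d.\ $\mathrm{Bernoulli}(1/2)$, so $|T|\sim\mathrm{Binomial}(n,1/2)$. Moreover, almost surely $\mfx_k^T\mfw^*\neq 0$ for every $k$ (so $\mfy$ is strictly positive on $T$ and vanishes off $T$, and $\tilde\mfw^*$, $\mfy$ are well defined whenever $T\neq\emptyset$), and every $d\times d$ submatrix of $\mfX$ formed by selecting $d$ rows is nonsingular; call the intersection of these probability-one events $E_0$. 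Define the key event $E:=\{|T|\ge d\}$. A standard binomial tail bound (Hoeffding, or the multiplicative Chernoff bound with deviation $\delta=(n-2d)/n$) gives, for $n>2d$,
\[
P(E^c)=P(|T|<d)\le\exp\!\left(-\frac{(n-2d)^2}{2n}\right)\le\exp\!\left(-\frac{1}{6}\Big(\frac{n-2d}{n}\Big)^{2}n\right),
\]
which already matches (in fact improves on) the claimed probability. It therefore suffices to show that on $E\cap E_0$ we have $\mfy\notin\mathrm{col}(\mfD_j\mfX)$ for all $j\neq i^*$.

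Finally I would carry out a short case analysis on $E\cap E_0$. Fix $j\neq i^*$ and set $T':=\{k:(\mfD_j)_{kk}=1\}$; since distinct $0$-$1$ diagonal patterns have distinct supports, $T'\neq T$. \textbf{Case 1: $T\not\subseteq T'$.} Pick $k_0\in T\setminus T'$. Every vector in $\mathrm{col}(\mfD_j\mfX)$ lies in the range of $\mfD_j$ and hence vanishes at coordinate $k_0$, whereas $y_{k_0}>0$ on $E_0$; thus $\mfy\notin\mathrm{col}(\mfD_j\mfX)$. \textbf{Case 2: $T\subsetneq T'$.} Suppose, for contradiction, $\mfy=\mfD_j\mfX\mfv$ for some $\mfv\in\mbR^d$. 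Restricting this identity to the rows indexed by $T\subseteq T'$ gives $\mfX_T\mfv=c\,\mfX_T\mfw^*$ with $c:=\|(\mfX\mfw^*)_+\|_2^{-1}>0$, where $\mfX_T\in\mbR^{|T|\times d}$ is the row-submatrix; on $E\cap E_0$ we have $|T|\ge d$ and $\mfX_T$ has full column rank, forcing $\mfv=c\mfw^*$. Restricting instead to the nonempty set $T'\setminus T$ and using that $\mfy$ vanishes there gives $c\,\mfX_{T'\setminus T}\mfw^*=0$, i.e.\ $\mfx_k^T\mfw^*=0$ for every $k\in T'\setminus T$, contradicting $\mfx_k^T\mfw^*<0$ for $k\notin T$. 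Hence $\mfy\notin\mathrm{col}(\mfD_j\mfX)$ in this case as well. Combining the two cases establishes $\mathrm{NNIC}\mydash1$ on $E\cap E_0$, and together with the tail bound above the theorem follows.

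The main obstacle is Case 2: it is the place where the threshold $n=2d$ actually enters, through the requirement that the $|T|\times d$ matrix $\mfX_T$ have full column rank, which needs $|T|\ge d$ and hence (since $|T|\approx n/2$) precisely the condition $n\gtrsim 2d$. Everything else is bookkeeping: keeping the almost-sure genericity events ($\mfx_k^T\mfw^*\neq0$, nonsingularity of $d$-row submatrices, well-definedness of $\mfy$ and $\tilde\mfw^*$ on $E$) clean, and noting that a crude binomial tail bound already delivers the stated exponent so one need not optimize constants.
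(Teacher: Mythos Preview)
Your argument is correct and is in fact more direct than the paper's. Both proofs hinge on the same event $E=\{|T|\ge d\}$ (the paper's $E_1$ and $E_2$), established via a Chernoff/Hoeffding tail bound that already yields the stated exponent. The divergence is in how the deterministic part is handled. You reduce $\mathrm{NNIC}\mydash1$ to the purely geometric statement $\mfy\notin\mathrm{col}(\mfD_j\mfX)$ and prove it by a clean support-based case split on $T$ versus $T'$: Case~1 uses that $\mfy$ is strictly positive on $T$ while vectors in $\mathrm{col}(\mfD_j\mfX)$ vanish off $T'$; Case~2 uses the full column rank of $\mfX_T$ (this is precisely where $|T|\ge d$, hence $n>2d$, enters) to force $\mfv=c\mfw^*$ and then obtain a sign contradiction on $T'\setminus T$. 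The paper instead proves the weaker matrix-level fact $\mfU_{i^*}^T\mfU_j\mfU_j^T\mfU_{i^*}\neq \mfI_d$ (equivalently $V_{i^*}\not\subseteq V_j$) via an auxiliary lemma on column-orthonormal matrices, and then closes the gap with a measure-zero argument: since that matrix is $\neq\mfI_d$ its top eigenspace has dimension $\le d-1$, and $\tilde\mfw^*$ almost surely avoids it, which the paper makes rigorous by a union bound over the $2^n$ sign patterns $\bsigma$. Your route avoids this absolute-continuity step entirely, establishing $\mfy\notin V_j$ \emph{deterministically} on $E\cap E_0$; it is shorter and makes the role of the threshold $n=2d$ (via $\mathrm{rank}(\mfX_T)=d$) completely transparent. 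The paper's route, in exchange, is stated at the level of the whole column space $V_{i^*}$ rather than the single vector $\mfy$, which is why it then needs the probabilistic finishing argument.
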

The above result shows that a single normalized ReLU neuron is uniquely recovered by ReLU NN with normalization layer via \eqref{min_nrm:grelu} (up to permutation and splitting) and its convex reformulation \eqref{min_nrm:relu_normal}, regardless of the number of neurons in the NN. In other words the set of global optimum of $\eqref{min_nrm:grelu}$ only consists of networks with only a single non-zero ReLU neuron along with permutations and split versions of this neuron.

\begin{remark}
We note that the recovery condition for a single ReLU neuron and a linear neuron is the same using ReLU NNs with normalization layer and is given by $\frac{n}{d}\ge 2$.
\end{remark}
%

\section{Asymptotic analysis}
\label{sec:asymp}
In this section, we present an asymptotic analysis of the Neural Isometry Conditions when $n$ goes to infinity while $d$ is fixed. While our analysis in Section \ref{sec:relu_skip} provides sharp estimates on the recovery threshold for a single linear or ReLU neuron, the asymptotic analysis in this section proves the recovery of multiple ReLU neurons.

We begin with the case of two planted ReLU neurons in the asymptotic setting. For $\mfw,\mfv\in\mbR^d$, recall our notation for the cosine angle between $\mfw$ and $\mfv$ given by $\cos\angle(\mfw,\mfv)=\frac{\mfw^T\mfv}{\|\mfw\|_2\|\mfv\|_2}$.
In order to show that the $\mathrm{NNIC} \mydash 2$ in \eqref{eq:irr_cond_normal} holds, and consequently two-neuron recovery succeeds via ReLU NNs with normalization layer, we calculate the asymptotic limit of the left-hand-side in \eqref{eq:irr_cond_normal} when $k=2$, which is given by
\begin{align}\label{eq:asymptoticT}
T:=\norm{\mfU_j^T\bmbm{\mfU_{s_1}&\mfU_{s_2}}\bmbm{\mfU_{s_1}^T\mfU_{s_1}&\mfU_{s_1}^T\mfU_{s_2}\\
    \mfU_{s_2}^T\mfU_{s_1}&\mfU_{s_2}^T\mfU_{s_2}}^{-1}\bmbm{\tilde \mfw_1\\\tilde \mfw_2}}_2.
\end{align}
We consider the case of $\mfw_1,\mfw_2$ satisfying $\cos\angle(\mfw_1,\mfw_2)=-1$ and $\cos\angle(\mfw_1,\mfw_2)=0$ for simplicity.
    
\begin{proposition}\label{prop:asymp_two_neu}
Suppose that each entry of $\mfX\in\mbR^{n\times d}$ is an i.i.d. random variable following the normal distribution $\mcN(0,1/n)$. Suppose that $\mfy=\frac{(\mfX\mfw_1)_+}{\|(\mfX\mfw_1)_+\|_2}+\frac{(\mfX\mfw_2)_+}{\|(\mfX\mfw_2)_+\|_2}$ is the planted model, where $\mfw_1,\mfw_2\in\mbR^d$. Let $\mfD_1=\diag(\mbI(\mfX\mfw_1\geq 0))$ and $\mfD_2=\diag(\mbI(\mfX\mfw_2\geq 0))$. Consider any diagonal arrangement pattern $\mfD_j=\diag(\mbI(\mfX\mfh_j\geq 0))$. Consider the random variable $T$ defined in \eqref{eq:asymptoticT}.
\begin{itemize}
    \item Suppose that $\cos\angle(\mfw_1,\mfw_2)=-1$. Let $\gamma=\cos\angle(\mfw_1,\mfh_j)$. As $n\to\infty$, $T$ converges in probability to a univariate function $g_1(\gamma)$. 
    Here $g_1(\gamma)\leq 1$ and the equality holds if and only $\gamma=1$ or $\gamma=-1$.
    \item Suppose that $\cos\angle(\mfw_1,\mfw_2)=0$. Let $\gamma_1=\cos\angle(\mfw_1,\mfh_j)$ and $\gamma_2=\cos\angle(\mfw_2,\mfh_j)$. As $n\to\infty$, $T$ converges in probability to a bivariate function $g_2(\gamma_1,\gamma_2)$. 
    Here $g_2(\gamma_1,\gamma_2)\leq 1$ and the equality holds if and only $(\gamma_1,\gamma_2)=(1,0)$ or  $(\gamma_1,\gamma_2)=(0,1)$. 
\end{itemize}
Therefore, in both of the above cases, we have $T\le 1$ as $n\to\infty$ and consequently the $\mathrm{NNIC} \mydash 2$ holds. Moreover, the planted two-neuron NN is the unique optimal solution to \eqref{prob:min_nrm} (up to permutation and splitting) and its convex reformulation \eqref{min_nrm:relu_normal}.
\end{proposition}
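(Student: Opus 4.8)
The plan is to identify the almost-sure limit of every matrix block appearing in \eqref{eq:asymptoticT}, reduce each limit to a low-dimensional Gaussian integral using rotational invariance, and then verify that the resulting deterministic function is at most one. First I would set $\mfg_k:=\sqrt n\,\mfx_k^\mathrm{row}$, so that $\mfg_1,\dots,\mfg_n$ are i.i.d.\ $\mcN(0,\mfI_d)$, and invoke the strong law of large numbers together with finiteness of Gaussian fourth moments: for any fixed unit vectors $\mathbf a,\mathbf b$,
\[
\mfX^T\diag(\mbI(\mfX\mathbf a\ge0))\diag(\mbI(\mfX\mathbf b\ge0))\mfX=\tfrac1n\textstyle\sum_k\mbI(\mfg_k^T\mathbf a\ge0)\,\mbI(\mfg_k^T\mathbf b\ge0)\,\mfg_k\mfg_k^T\ \longrightarrow\ \mathbf M(\mathbf a,\mathbf b):=\mbE\!\left[\mbI(\mfg^T\mathbf a\ge0)\mbI(\mfg^T\mathbf b\ge0)\mfg\mfg^T\right],
\]
so in particular $\mfX^T\mfD_{s_i}\mfX\to\mathbf M(\hat\mfw_i,\hat\mfw_i)=\tfrac12\mfI_d$ with $\hat\mfw_i:=\mfw_i/\|\mfw_i\|_2$. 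Choosing SVD factors so that $\mfU_i=\mfD_i\mfX(\mfX^T\mfD_i\mfX)^{-1/2}$ gives $\mfU_a^T\mfU_b=(\mfX^T\mfD_a\mfX)^{-1/2}(\mfX^T\mfD_a\mfD_b\mfX)(\mfX^T\mfD_b\mfX)^{-1/2}\to 2\,\mathbf M(\hat{\mathbf a},\hat{\mathbf b})$, while the combinations $\mfU_{s_i}\tilde\mfw_i=(\mfX\mfw_i)_+/\|(\mfX\mfw_i)_+\|_2$ are independent of the SVD convention (here $r_i^*=1$). Since $\mathbf M(\mathbf a,\mathbf b)$ is $\Pr[\mfg^T\mathbf a\ge0,\mfg^T\mathbf b\ge0]$ times the identity on $\mathrm{span}(\mathbf a,\mathbf b)^\perp$ plus a $2\times2$ Gaussian-wedge integral on $\mathrm{span}(\mathbf a,\mathbf b)$, rotational invariance shows all of these limits---hence $T$---depend only on the pairwise angles among $\hat\mfh_j:=\mfh_j/\|\mfh_j\|_2,\hat\mfw_1,\hat\mfw_2$, and reduce to integrals over a space of dimension at most three.

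In the antipodal case $\cos\angle(\mfw_1,\mfw_2)=-1$ things collapse: almost surely $\mfD_{s_1}+\mfD_{s_2}=\mfI_n$, so $\mfD_{s_1}\mfD_{s_2}=0$ and $\mfU_{s_1}^T\mfU_{s_2}=0$ \emph{exactly}, the Gram matrix in \eqref{eq:asymptoticT} is $\mfI_{2d}$, and $T=\|\mfU_j^T(\mfU_{s_1}\tilde\mfw_1+\mfU_{s_2}\tilde\mfw_2)\|_2=\|\mfU_j^T\mfy\|_2$. Passing to the limit and using $\mathbf M(\hat\mfh_j,\hat\mfw_1)+\mathbf M(\hat\mfh_j,-\hat\mfw_1)=\tfrac12\mfI_d$ gives $\mfU_j^T\mfy\to(4\,\mathbf M(\hat\mfh_j,\hat\mfw_1)-\mfI_d)\hat\mfw_1$. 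With $\phi:=\angle(\mfh_j,\mfw_1)\in[0,\pi]$, the two-dimensional wedge integral yields the $\mathrm{span}(\hat\mfh_j,\hat\mfw_1)$-component of $\mathbf M(\hat\mfh_j,\hat\mfw_1)\hat\mfw_1$ equal to $\big(\tfrac12-\tfrac{\phi}{2\pi}+\tfrac{\sin2\phi}{4\pi},\ \tfrac{\sin^2\phi}{2\pi}\big)$, hence
\[
g_1(\gamma)^2=\left(1-\tfrac{2\phi}{\pi}+\tfrac{\sin2\phi}{\pi}\right)^2+\left(\tfrac{2\sin^2\phi}{\pi}\right)^2,\qquad\gamma=\cos\phi.
\]
I would then check that this is symmetric under $\phi\mapsto\pi-\phi$, equals $1$ at $\phi\in\{0,\pi\}$ (where $\mathbf M$ is a half-space integral), and is strictly less than $1$ on $(0,\pi)$ by a one-variable argument---near $\phi=0$ one finds $g_1(\gamma)^2=1-\tfrac{8}{3\pi}\phi^3+O(\phi^4)$; since $\gamma=\pm1$ forces $\mfD_j\in\{\mfD_{s_1},\mfD_{s_2}\}$ and thus $j\in S$, this is exactly the asserted property of $g_1$.

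In the orthogonal case $\cos\angle(\mfw_1,\mfw_2)=0$ the Gram matrix no longer collapses: $\mfU_{s_1}^T\mfU_{s_2}\to 2\,\mathbf M(\hat\mfw_1,\hat\mfw_2)=\tfrac12\mfI_d+\tfrac1\pi(\hat\mfw_1\hat\mfw_2^T+\hat\mfw_2\hat\mfw_1^T)$, a scalar plus a rank-two term supported on $\mathrm{span}(\hat\mfw_1,\hat\mfw_2)$, whose inverse has the same form, so the $2d\times2d$ block inverse in \eqref{eq:asymptoticT} is still governed by the $\le3$-dimensional span of $\hat\mfh_j,\hat\mfw_1,\hat\mfw_2$. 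Combining this with $\mfU_j^T[\mfU_{s_1}\ \mfU_{s_2}]\to 2[\mathbf M(\hat\mfh_j,\hat\mfw_1)\ \mathbf M(\hat\mfh_j,\hat\mfw_2)]$ and the $\tilde\mfw_i$ read off from $\mfU_{s_i}\tilde\mfw_i=(\mfX\mfw_i)_+/\|(\mfX\mfw_i)_+\|_2$ produces an explicit bivariate limit $g_2(\gamma_1,\gamma_2)$ in $\gamma_i=\cos\angle(\mfw_i,\mfh_j)$; the values $1$ at $(1,0)$ and $(0,1)$ correspond to $\mfh_j\parallel\mfw_1$ and $\mfh_j\parallel\mfw_2$, and I would establish $g_2<1$ elsewhere by reducing to a few univariate error-function-type expressions and checking the boundary together with the interior critical points.

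Once $T\le1$ holds for every $j\in[p]\setminus S$, the $\mathrm{NNIC}\mydash2$ of \eqref{eq:irr_cond_normal} is satisfied, and Proposition \ref{prop:imply_multi_normal} makes the planted two-neuron network the unique optimum of \eqref{min_nrm:relu_normal}; combined with Theorem \ref{thm:landscape_normal} this transfers to the non-convex problem \eqref{prob:min_nrm} up to permutation and splitting. I expect the main obstacle to be upgrading the fixed-$j$ convergence to a bound valid simultaneously for all $p=\mathrm{poly}(n)$ arrangement patterns: an exponential concentration inequality for $\tfrac1n\sum_k\mbI(\mfg_k^T\mathbf a\ge0)\mbI(\mfg_k^T\mathbf b\ge0)\mfg_k\mfg_k^T$ plus a union bound handles patterns whose generating direction stays away from $\hat\mfw_1,\hat\mfw_2$, but for patterns $\mfD_j\ne\mfD_{s_i}$ with $\hat\mfh_j$ arbitrarily close to $\hat\mfw_i$ the limit $g_\bullet$ is arbitrarily close to $1$, so a quantitative perturbation estimate is needed---showing $\mfU_j\approx\mfU_{s_i}$ when $\mfD_j$ and $\mfD_{s_i}$ differ in few coordinates, and that $T$ then tends to $1$ strictly from below at a controlled rate. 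A secondary difficulty is the two-variable verification $g_2\le1$.
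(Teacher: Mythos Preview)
Your approach is essentially the same as the paper's: reduce all blocks in \eqref{eq:asymptoticT} to limits of the form $\mathbf M(\mathbf a,\mathbf b)=\mbE[\mbI(\mfg^T\mathbf a\ge0)\mbI(\mfg^T\mathbf b\ge0)\mfg\mfg^T]$ via the law of large numbers, exploit rotational invariance (the paper quotes the structural form $\mathbf M=c_1\mfI+c_2(\mathbf a\mathbf b^T+\mathbf b\mathbf a^T)+c_3(\mathbf a\mathbf a^T+\mathbf b\mathbf b^T)$ from Ghorbani et al.), and then compute explicitly in a two- or three-dimensional model for the antipodal and orthogonal cases respectively. Your closed-form $g_1(\gamma)^2=(1-\tfrac{2\phi}{\pi}+\tfrac{\sin2\phi}{\pi})^2+(\tfrac{2\sin^2\phi}{\pi})^2$ is equivalent to, and cleaner than, the paper's integral representation (the paper appears to drop the cross term when simplifying, though the conclusion is unaffected).

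Two differences are worth noting. First, the paper does \emph{not} prove $g_1\le1$ or $g_2\le1$ analytically: it simply plots $g_1(\gamma)^2$ and a contour of $g_2(\gamma_1,\gamma_2)^2$ and reads off the maximum, so your Taylor-expansion/critical-point plan is strictly more than what the paper does. Second, the uniformity-over-$j\in[p]\setminus S$ issue you flag---upgrading fixed-$\mfh_j$ convergence to a statement over all $p=\mathrm{poly}(n)$ patterns, including those with $\hat\mfh_j$ arbitrarily close to $\hat\mfw_i$---is real, and the paper's proof does not address it; it establishes only the pointwise limit for fixed $\mfh_j$ and then asserts that $\mathrm{NNIC}\mydash2$ holds.
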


We validate this asymptotic behavior in Figure \ref{fig:multi}. It can be observed that the recovery threshold for two ReLU neurons is approximately $n\ge 4d$ when $\mfw_1=e_1$ and $\mfw_2=e_2$, i.e., $\cos\angle(\mfw_1,\mfw_2)=0$ in Figure \ref{fig:multi}(a). In addition, it can be observed that the recovery threshold for three ReLU neurons is approximately $n\ge 6d$.  

\begin{figure}[H]
\setcounter{subfigure}{0}
    \subfigure[$k=2,\ \mfw_1^* = \mfe_1,\ \mfw_2^* = \mfe_2$,]{
      \centering
      \includegraphics[width=.45\textwidth]{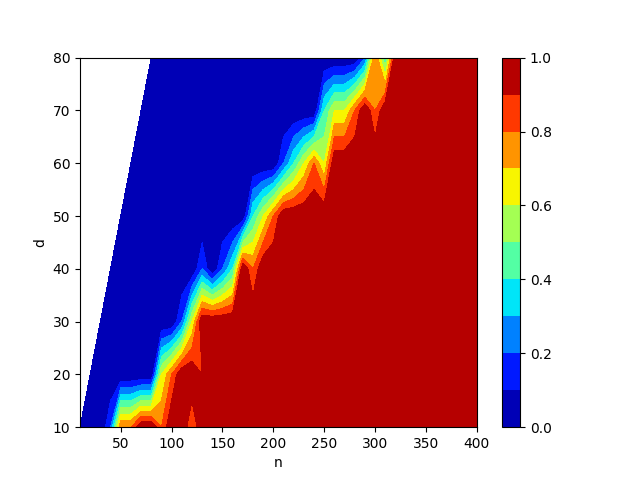} 
    }
    \subfigure[$k=3,\ \mfw_i^* = \mfe_i (i=1,2,3)$]{
      \centering
      \includegraphics[width=.45\textwidth]{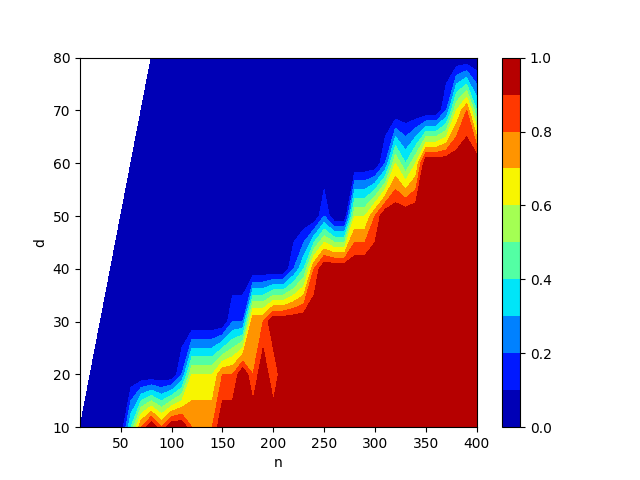} 
    }
    \caption{The empirical probability of successful recovery of the planted normalized ReLU neurons by solving the group $\ell_1$-minimization problem \eqref{min_nrm:grelu_normal} over $5$ independent trials. The label vector $\mfy$ is generated by a planted ReLU NN with $k$ neurons.}\label{fig:multi}
\end{figure}

Similar to the ReLU networks with the normalization layer, we present asymptotic analysis of plain ReLU networks. 
\begin{proposition}\label{prop:asymp_relu}
Suppose that each entry of $\mfX\in\mbR^{n\times d}$ is an i.i.d. random variable following the normal distribution $\mcN(0,1/n)$. Let $\mfD_i=\diag(\mbI(\mfX\mfw^*\geq0))$. Consider any hyperplane arrangement $\mfD_j=\diag(\mbI(\mfX\mfh_j\geq 0))$ such that $\cos\angle(\mfw^*,\mfh_j)=:\gamma<1$. Define
\begin{equation}
    R:=\norm{\mfX^T\mfD_j\mfD_i\mfX\pp{\mfX^T\mfD_i\mfX}^{-1}\hat \mfw}_2,
\end{equation}
where $\hat \mfw =\frac{\mfw^*}{\norm{\mfw^*}_2}$.
Then, as $n\to\infty$, $R$ converges in probability to $g(\gamma)$. 
Here the function $g:[-1,1]\to\mbR$ monotonically increases on $[-1,1]$ and $g(1)=1$. 
\end{proposition}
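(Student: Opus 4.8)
The plan is to pass to the $n\to\infty$ limit of the two random matrices appearing in $R$ via a law of large numbers, invert the result, and then evaluate the limit through an elementary planar Gaussian integral. Rescaling the rows by $\mfx_k^{\mathrm{row}} = n^{-1/2}\mfz_k$ with $\mfz_1,\dots,\mfz_n$ i.i.d.\ $\mcN(0,\mfI_d)$, and writing $\hat\mfw = \mfw^*/\|\mfw^*\|_2$ and fixing any direction $\mfh_j$ with $\cos\angle(\mfw^*,\mfh_j)=\gamma$, we have
\[
\mfX^T\mfD_i\mfX = \frac1n\sum_{k=1}^n \mbI(\mfz_k^T\hat\mfw\ge 0)\,\mfz_k\mfz_k^T,\qquad \mfX^T\mfD_j\mfD_i\mfX = \frac1n\sum_{k=1}^n \mbI(\mfz_k^T\mfh_j\ge 0)\mbI(\mfz_k^T\hat\mfw\ge 0)\,\mfz_k\mfz_k^T .
\]
Since $d$ is fixed, every entry of these matrices is an empirical average of i.i.d.\ random variables with finite variance (bounded indicators times Gaussian fourth moments), so I would invoke the weak law of large numbers to get entrywise, hence operator-norm, convergence in probability to $M_1 := \mbE\bb{\mbI(\mfg^T\hat\mfw\ge0)\mfg\mfg^T}$ and $M_2 := \mbE\bb{\mbI(\mfg^T\mfh_j\ge0)\mbI(\mfg^T\hat\mfw\ge0)\mfg\mfg^T}$ respectively, with $\mfg\sim\mcN(0,\mfI_d)$.

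Next I would observe, using $\mfg\overset{d}{=}-\mfg$ and independence of distinct coordinates, that $M_1 = \tfrac12\mfI_d$; in particular $M_1$ is invertible, $\mfX^T\mfD_i\mfX$ is invertible with probability tending to one, and continuity of matrix inversion near $\tfrac12\mfI_d$ together with Slutsky's theorem gives $\pp{\mfX^T\mfD_i\mfX}^{-1}\to 2\mfI_d$ in probability. Therefore $R \to \norm{2M_2\hat\mfw}_2 =: g(\gamma)$ in probability, which is the asserted convergence.

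To make $g$ explicit I would decompose $\mbR^d = V\oplus V^\perp$ with $V := \mathrm{span}\{\hat\mfw,\mfh_j\}$; writing $\mfg = \mfg_V+\mfg_\perp$ and using that $\mfg_\perp$ is centered and independent of $\mfg_V$ (on which both indicators depend) yields $M_2 = \mbE\bb{\mbI(\cdots)\mfg_V\mfg_V^T} + P\,\Pi_{V^\perp}$ for a scalar $P\ge 0$, so $M_2\hat\mfw\in V$ and the computation collapses to a two-dimensional Gaussian integral. In the orthonormal basis $\{\hat\mfw,\mfw^\perp\}$ of $V$ with $\mfh_j = \gamma\hat\mfw + \sqrt{1-\gamma^2}\,\mfw^\perp$ and with $\phi := \arccos\gamma\in[0,\pi]$, the active region in polar coordinates is the wedge $\theta\in[\phi-\tfrac\pi2,\tfrac\pi2]$, and a direct evaluation gives $\langle M_2\hat\mfw,\hat\mfw\rangle = \tfrac12 - \tfrac\phi{2\pi} + \tfrac{\sin2\phi}{4\pi}$ and $\langle M_2\hat\mfw,\mfw^\perp\rangle = \tfrac{\sin^2\phi}{2\pi}$, so that $g(\gamma) = 2\sqrt{h(\arccos\gamma)}$ with $h(\phi) = \big(\tfrac12 - \tfrac\phi{2\pi} + \tfrac{\sin2\phi}{4\pi}\big)^2 + \big(\tfrac{\sin^2\phi}{2\pi}\big)^2$. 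This already gives $g(1)=2\sqrt{h(0)}=1$, and a short differentiation shows $h'(\phi) = \pi^{-2}\sin^2\phi\,(\phi-\pi)\le 0$ on $[0,\pi]$; since $\phi=\arccos\gamma$ is decreasing in $\gamma$, this proves $g$ is monotonically increasing on $[-1,1]$, finishing the argument.

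The step I expect to be the main obstacle is the law-of-large-numbers reduction: one must check it remains valid even though the same matrix $\mfX$ generates $\mfD_i$, $\mfD_j$ and both Gram matrices simultaneously — which is fine because all of them are fixed functions of the i.i.d.\ rows $\mfz_1,\dots,\mfz_n$ — and one must confirm the limit depends on $\mfh_j$ only through the angle $\gamma$, which follows from rotational invariance of $\mcN(0,\mfI_d)$ and hence is unaffected if $\mfh_j$ varies with $n$. The remaining subtlety, invertibility of $\mfX^T\mfD_i\mfX$ at finite $n$, holds with probability $\to 1$ since $n\to\infty$ with $d$ fixed; the explicit integral and the monotonicity verification are routine once this setup is in place.
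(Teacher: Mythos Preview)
Your argument is correct and shares the paper's skeleton --- a law-of-large-numbers/Slutsky reduction to $2\|\mfM(\hat\mfw,\mfh_j)\hat\mfw\|_2$ with $\mfM(\mfu,\mfv)=\mbE[\mbI(\mfg^T\mfu\ge0)\mbI(\mfg^T\mfv\ge0)\mfg\mfg^T]$, followed by collapsing to the plane $\mathrm{span}\{\hat\mfw,\mfh_j\}$ --- but the execution differs in two respects worth recording. First, the paper invokes an external structural lemma for the form of $\mfM$ and then leaves the first component as the unresolved integral $\int_0^\infty\big(1-F(-\gamma x/\sqrt{1-\gamma^2})\big)p(x)x^2\,dx$, whereas you compute directly in polar coordinates over the wedge $\theta\in[\phi-\tfrac\pi2,\tfrac\pi2]$ and obtain the closed form $\tfrac12-\tfrac{\phi}{2\pi}+\tfrac{\sin 2\phi}{4\pi}$; the second component $\tfrac{\sin^2\phi}{2\pi}=\tfrac{1-\gamma^2}{2\pi}$ agrees with the paper's value. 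Second, and more importantly, the paper does not \emph{prove} the monotonicity of $g$: it simply plots $g(\gamma)^2$ and reads the conclusion off the figure. Your one-line derivative $h'(\phi)=\pi^{-2}\sin^2\phi\,(\phi-\pi)\le 0$ on $[0,\pi]$ (which checks out: the $\sin 2\phi$ cross-terms from $2aa'$ and $2bb'$ cancel exactly) supplies the missing analytic step and, combined with $\phi=\arccos\gamma$ decreasing, gives a complete proof. So your route is self-contained and strictly more rigorous on the monotonicity claim; the paper's route has the minor advantage of connecting to a named lemma used elsewhere in the two-neuron analysis.
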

This implies that asymptotically, as $n\to\infty$, the $\mathrm{NIC}\mydash 1$ given in \eqref{irrep:grelu} holds, and plain ReLU NNs recover a single planted ReLU neuron.
\section{Numerical experiments}\label{num_res:main}
In this section, we present numerical experiments on ReLU networks with skip connection and normalization layer to validate our theoretical results on phase transitions in different NN architectures. We provide the illustration of our main results in this section and provide additional numerical results for various settings in Appendix \ref{num_res:add}. The code is available at \url{https://github.com/pilancilab/Neural-recovery}

Numerical results are divided into three parts: the first part consists of phase transition graphs for the recovery rate when the observation is noiseless by solving convex NN problems in \eqref{min_nrm:grelu_skip} and \eqref{min_nrm:grelu_normal}. 
The second and third part consist of phase transition graphs for certain types of distance measures when the observation is noisy by solving convex NN problems 
and the regularized training problem (convex and non-convex), respectively.

\subsection{ReLU networks with skip connection}\label{num_res_main:skip}
We start with phase transition graphs for successful recovery of the planted neuron by solving the convex optimization problem \eqref{min_nrm:grelu_skip}.
We compute the recovery rate for $d$ ranging from $10$ to $100$ and $n$ ranging from $10$ to $400$. For each pair of $(n,d)$, we generate 5 realizations of random training data matrices and solve the convex problem \eqref{min_nrm:grelu_skip} on each dataset. We test for four types of randomly generated data matrices: 
\begin{itemize}[itemsep=1pt]
    \item Gaussian: each entry $x_{i,j}$ of $\mfX\in\mbR^{n\times d}$ is an i.i.d. random variable following the normal distribution $\mcN(0,1/n)$.
    \item cubic Gaussian: each element $x_{i,j}$ of $\mfX\in\mbR^{n\times d}$ satisfies $x_{i,j}=z_{i,j}^3$, where $z_{i,j}$ are i.i.d. random variable following $\mcN(0,1/n)$.
    \item Haar: $\mfX\in\mbR^{n\times d}$ is drawn uniformly random from the set of column orthonormal matrices. We note that a Haar matrix can be generated by sampling an i.i.d. Gaussian matrix as above and extracting its $d$ left singular vectors of dimension $n$.
    \item whitened cubic Gaussian: $\mfX\in\mbR^{n\times d}$ is drawn non-uniformly from the set of column orthonormal matrices the matrix of left singular vectors of $\mfX'$ if $n>d$ and the matrix of right singular vectors of $\mfX'$ if $n<d$. Here $\mfX'\in\mbR^{n\times d}$ is a cubic Gaussian data matrix.
\end{itemize}
In each recovery problem, the planted neuron $\mfw^*$ is either a random vector following $\mcN(0,\mfI_d)$ or chosen as the smallest right singular vector of $\mfX$ as specified. In numerical experiments, we use a random subset $\mathcal{H}^\prime$ of the set $\mathcal{H}$ of all possible hyperplane arrangements to approximate the solution of the convex program. Here $\mathcal{H}^\prime$ is generated by
$$
\mathcal{H}'=\{\diag(\mbI(\mfX\mfh_i))|\mfh_i\in\mbR^d,i\in[\tilde n]\},
$$
where $\mfh_i$ is an i.i.d. random vector following the standard normal distribution $\mcN(0,\mfI_d)$ and $\tilde n=\max(n,50)$. On the other hand, from our theoretical analysis, the recovery will fail if there exists an all-ones hyperplane arrangement, i.e., $\mfI_n\in \mathcal{H}$. 
However, as $\mathcal{H}^\prime$ is a random subset of $\mathcal{H}$, it might not be easy to validate $\mfI_n\in \mathcal{H}$ by examining whether $\mfI_n\in \mathcal{H}'$ is satisfied.  
Therefore, we solve the following feasibility problem before solving the convex problem \eqref{min_nrm:grelu_skip}.
\begin{equation}\label{hyper_check1}
        \begin{aligned}
            &\max_{\mfw \in \mathbb{R}^d, t\in\mathbb{R}}~t\\
            &\text{s.t. } \|\mfw\|_2\le 1,\quad \mfX w\ge t1_n\,. 
        \end{aligned} 
\end{equation}
If the optimal value of the above problem is strictly greater than zero, there must exist an all-ones hyperplane arrangement, i.e., $\exists w\,:\,\mfX w>0$. In this case, we add $\mfI_n$ to the subset $\mathcal{H}^\prime$.  Otherwise, such an arrangement pattern does not exist. 

In Figure \ref{fig:lin_phase_w0}, we present the phase transition graph for the probability of successful recovery when the planted neuron $\mfw^*$ is randomly generated from $\mcN(0,\mfI_d)$. The boundaries indicate a phase transition between $n=2d$ and $n=3d$. In Appendix \ref{num_res:skip}, we will show similar phase transition graphs when the planted neuron $\mfw^*$ is the smallest right singular vector of $\mfX$ in Figure \ref{fig:lin_phase_w1}.

\newcommand{\figscale}{0.40}
\begin{figure}[t]
\centering
\setcounter{subfigure}{0}
    \subfigure[Gaussian]{
      \centering
      \includegraphics[width=\figscale\textwidth]{figs/lin/lin_phase_w0_X0.png}  
    }
    \centering
    \subfigure[Cubic Gaussian]{
      \centering
      \includegraphics[width=\figscale\textwidth]{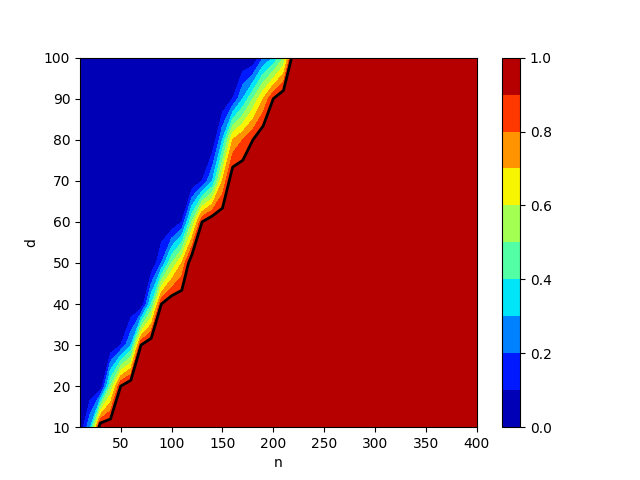}  
    }
    \subfigure[Haar]{
      \centering
      \includegraphics[width=\figscale\textwidth]{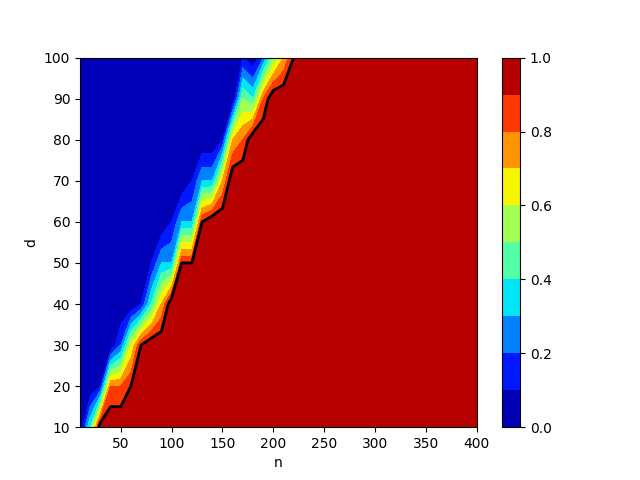}  
    }
    \subfigure[Whitened cubic Gaussian]{
      \centering
      \includegraphics[width=\figscale\textwidth]{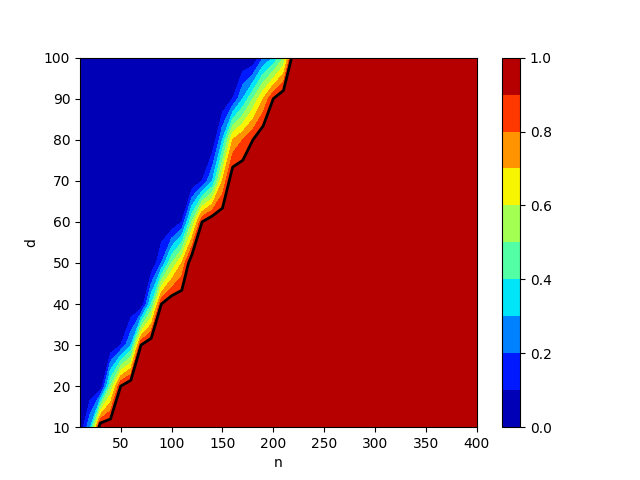}  
    }
    \caption{The probability of successful recovery of the planted linear neuron by solving the group $\ell_1$-minimization problem \eqref{min_nrm:grelu_skip} over $5$ independent trials. The black lines represent the boundaries of successful recovery with probability $1$. Here the planted neuron $\mfw^*$ is randomly generated from $\mcN(0,\mfI_d)$. }\label{fig:lin_phase_w0}
\end{figure}

The second part is phase transition under noisy observation, i.e., $\mfy=\mfX\mfw^*+\mfz$, where $\mfz\sim \mcN(0,\sigma^2/n)$. We still focus on the convex problem \eqref{min_nrm:grelu_skip}, i.e., the convex optimization formulation of gated ReLU networks with skip connection. Here we focus on Gaussian data and choose $\mfw^*$ as the smallest right singular vector of $\mfX$. We define the following two types of distance for the solution of the convex program to evaluate the performance. 
\begin{itemize}
    \item Absolute distance: the $\ell_2$ distance between the linear term $\mfw_0$ and $\mfw^*$.
    \item Test distance: generate a test set $\tilde{\mfX}$ with the same distribution as $\mfX$, then the prediction of the learned model is 
    \[
    \tilde{\mfy}=\tilde{\mfX}\mfw_0+\sum_{j=1}^{P} (\tilde{\mfX}\mfw_j)_+.
    \]
    Then the test distance is defined as the $\ell_2$ distance between the prediction $\tilde{\mfy}$ and the ground truth $\mfy^*=\tilde{\mfX}\mfw^*$.
\end{itemize}

The boundaries of red regions in Figure \ref{fig:lin_phase_noise_abs}, which represents highly unsuccessful recovery, remain around $n=2d$ for various noise levels $\sigma$. When $\sigma$ increases, the area of dark blue regions of small absolute distance/test error gradually vanishes. This implies that the linear part of the neural network no longer approximates the planted linear neuron and the gated ReLU neurons fit the noise. In Appendix \ref{num_res:skip}, we will observe the same pattern for absolute distance in Figure \ref{fig:lin_phase_noise_test}
\begin{figure}[t]
\setcounter{subfigure}{0}
\centering
    \subfigure[$\sigma=0$(noiseless)]{
      \centering
      \includegraphics[width=\figscale\textwidth]{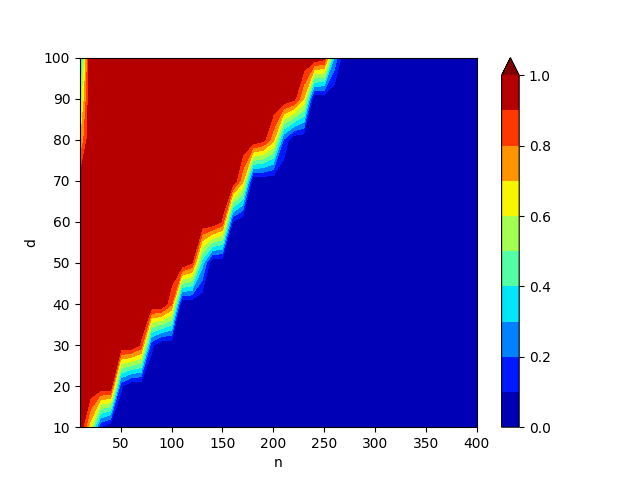}  
    }
    \subfigure[$\sigma=0.05$]{
      \centering
      \includegraphics[width=\figscale\textwidth]{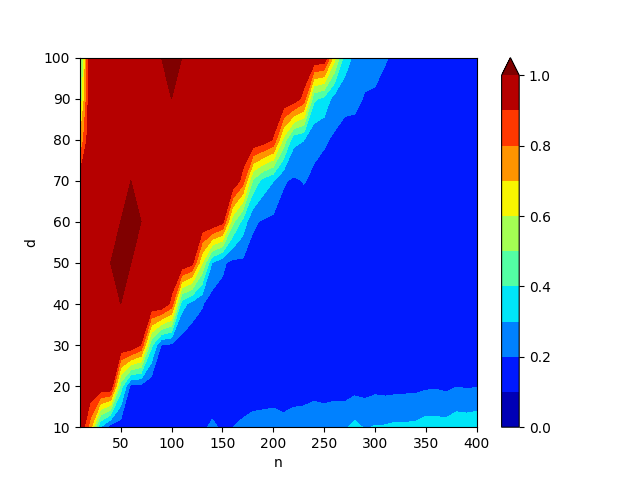}  
    }
    \subfigure[$\sigma=0.1$]{
      \centering
      \includegraphics[width=\figscale\textwidth]{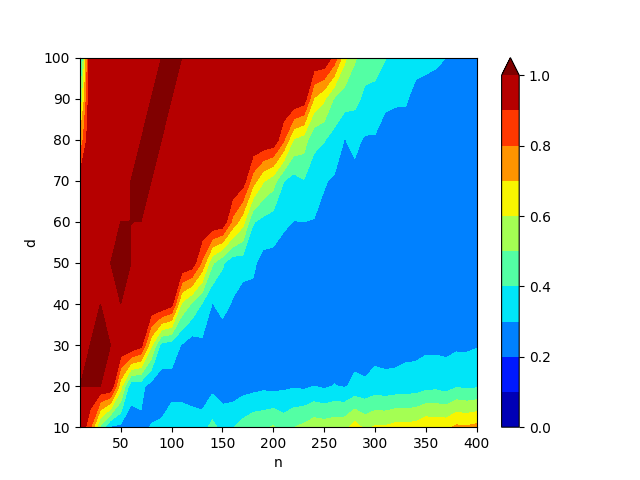}  
    }
    \subfigure[$\sigma=0.2$]{
      \centering
      \includegraphics[width=\figscale\textwidth]{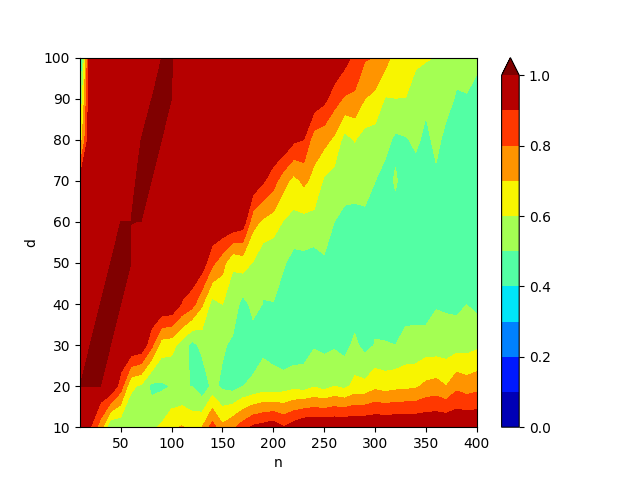}  
    }
    \caption{Averaged absolute distance to the planted linear neuron by solving the group $\ell_1$-minimization problem \eqref{min_nrm:grelu_skip} derived from training ReLU networks with skip connection over $5$ independent trials.}
    \label{fig:lin_phase_noise_abs}
\end{figure}

In the third part, we study the generalization property of ReLU networks with skip connections using convex/non-convex training methods. Results for the convex training methods are provided in
Appendix \ref{num_res:skip}.

For the nonconvex training method, we solve the regularized non-convex training problem \eqref{prob:reg} with $\beta=10^{-6}$ as an approximation of the minimum norm problem \eqref{prob:min_nrm}. We set the number of neurons to be $m=n+1$ and train the ReLU neural network with skip connection for 400 epochs. We use the AdamW optimizer and set the weight decay to be $\beta=10^{-6}$. We note that the nonconvex training may still reach local minimizers. Thus, the absolute distance to the planted linear neuron does not show a clear phase transition as the convex training. However, the transitions of test error generally follow the patterns of the group $\ell_1$-minimization problem. In Figure \ref{fig:ncvx_train_skip}, we show that the test error increases as $n/d$ increases,
and the rate of increase becomes sharper around $n=2d$ (the boundary of orange and yellow region).

\begin{figure}[t]
\setcounter{subfigure}{0}
\centering
    \subfigure[$\sigma=0$(noiseless)]{
      \centering
      \includegraphics[width=\figscale\textwidth]{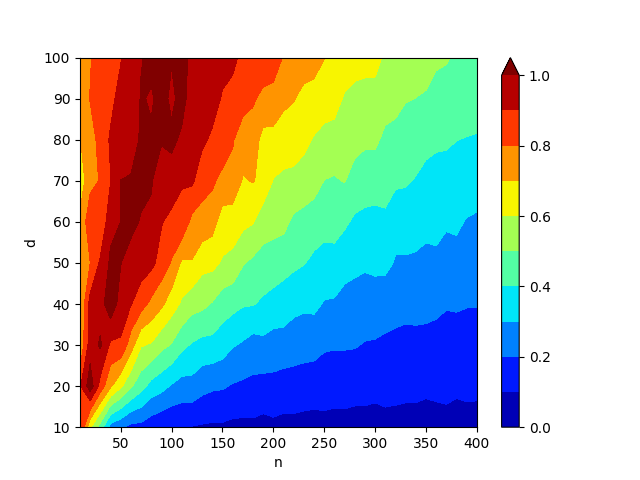}
    }
    \centering
    \subfigure[$\sigma=0.05$]{
      \centering
      \includegraphics[width=\figscale\textwidth]{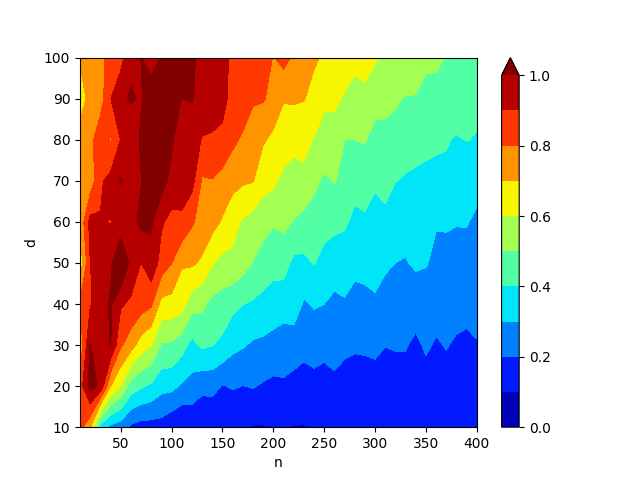}  
    }
\centering
    \subfigure[$\sigma=0.1$]{
      \centering
      \includegraphics[width=\figscale\textwidth]{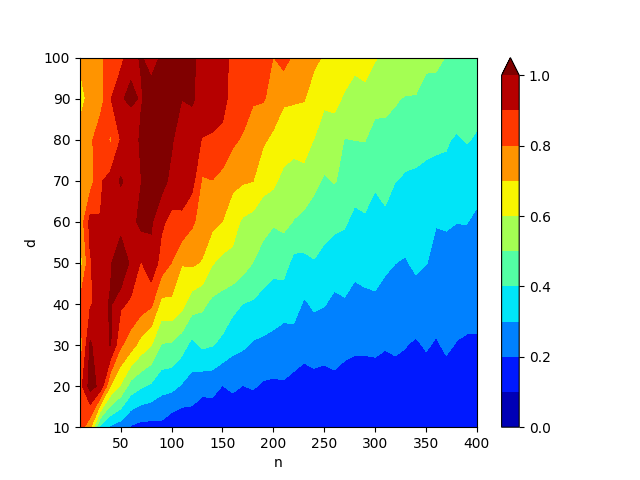}  
    }
    \centering
    \subfigure[$\sigma=0.2$]{
      \centering
      \includegraphics[width=\figscale\textwidth]{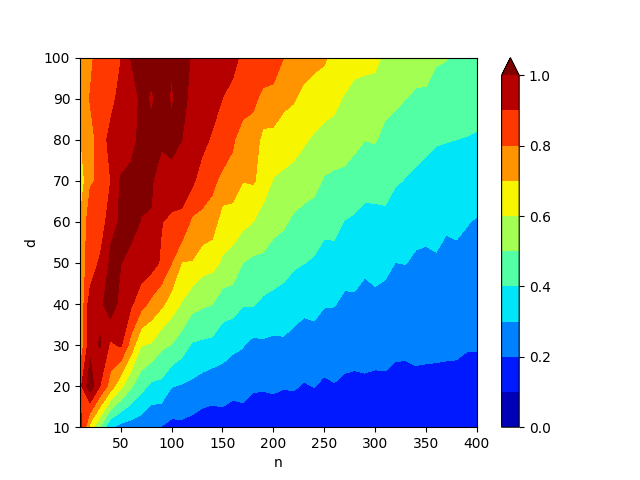}
    }
    \caption{Averaged test error by training ReLU networks with skip connection on the regularized non-convex problem \eqref{prob:reg} over $10$ independent trials.}
    \label{fig:ncvx_train_skip}
\end{figure}

\subsection{Multi-neuron recovery and irrepresentability condition}\label{num_res_main:mul}
In this subsection, we analyze the recovery for ReLU networks with normalization layer. Results for single-neuron recovery can be found in Appendix \ref{num_res:normal}. Here we focus on the case where the label vector is the combination of several normalized ReLU neurons. 
We will test for three types of planted neurons.
\begin{itemize}
    \item $k=2,\mfw_1^* = \mfw^*,\ \mfw_2^* = -\mfw^*$, where $\mfw^*\sim \mcU(\mbS^{n-1})$. In this case, the hyperplane arrangements of two neurons do not intersect.
    \item $k=2,\mfw_1^*,\mfw_2^*\sim \mcU(\mbS^{n-1})$. It is a general case where the hyperplane arrangements of two neurons can intersect.
    \item $k\ge 2,\mfw_i^*=\mfe_i$, where $\mfe_i$ is the $i$-th standard basis in $\mbR^n$.
\end{itemize}

We consider the noisy observation model, i.e., the observation is the combination of several normalized ReLU neurons and Gaussian noise, i.e.,
\[\mfy=\sum_{i=1}^{k}\frac{(\mfX\mfw_i^*)_+}{\|(\mfX\mfw_i^*)_+\|_2}+\mfz,
\]
where $\mfz\sim \mcN(0,\sigma^2/n)$. 

As a performance metric, the absolute distance is defined as 
$\left(\sum\limits_{i=1}^{k}\|\mfw_{s_i}-\tilde{\mfw}_i^*\|_2^2\right)^{1/2}$.

In Figure \ref{fig:lin_phase_noise_normal_mn21}, we show the phase transition graph when the planted neurons satisfy $\mfw_1^* = \mfw^*,\ \mfw_2^* = -\mfw^*,\ \mfw^*\sim \mcU(\mbS^{n-1})$. Results for the other two cases can be found in Appendix \ref{num_res:mul}.

\begin{figure}[H]
\setcounter{subfigure}{0}
\centering
    \subfigure[$\sigma=0$(noiseless)]{
      \centering
      \includegraphics[width=\figscale\textwidth]{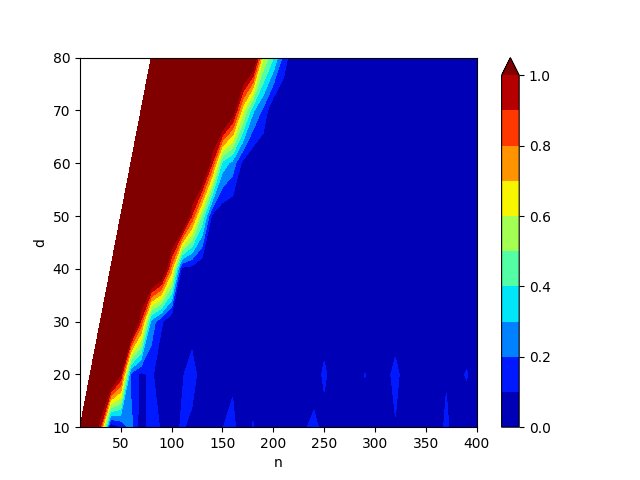}  
    }
    \centering
    \subfigure[$\sigma=0.05$]{
      \centering
      \includegraphics[width=\figscale\textwidth]{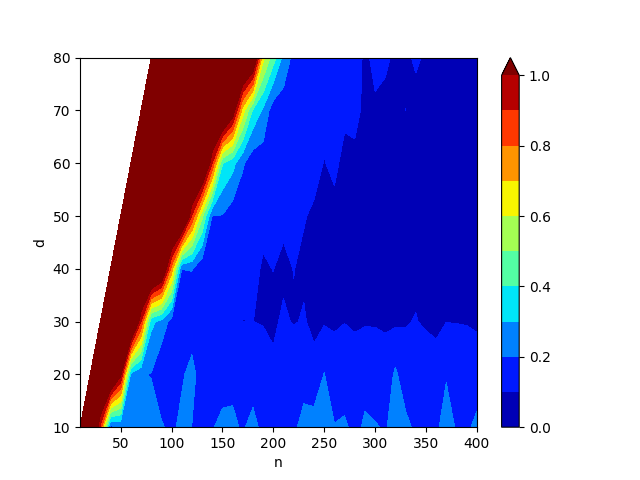}  
    }
    \centering
    \subfigure[$\sigma=0.1$]{
      \centering
      \includegraphics[width=\figscale\textwidth]{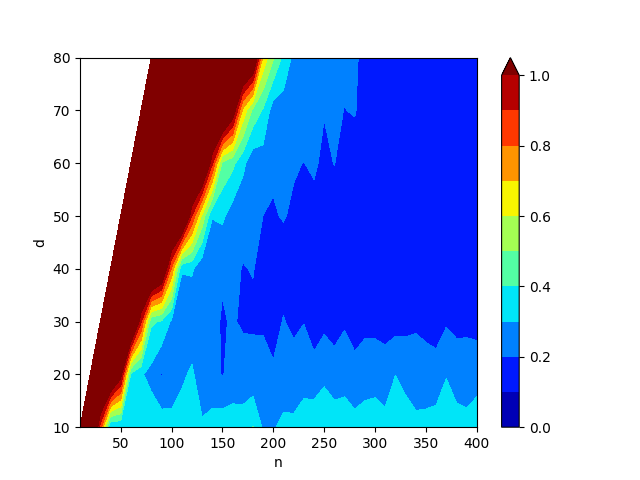}  
    }
    \centering
    \subfigure[$\sigma=0.2$]{
      \centering
      \includegraphics[width=\figscale\textwidth]{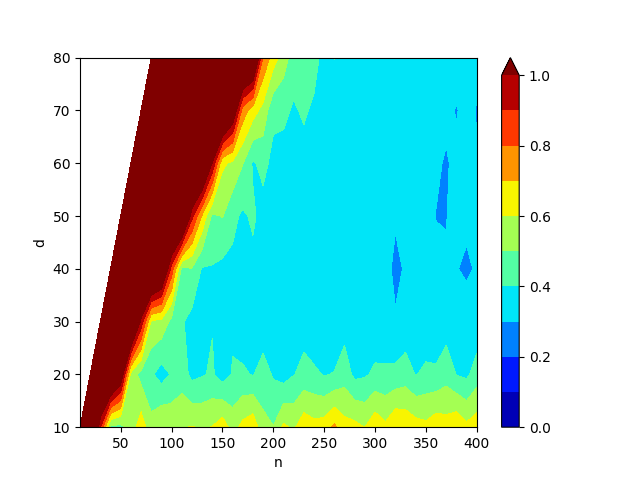}  
    }
    \caption{Averaged absolute distance to the planted normalized ReLU neurons by solving the convex problem \eqref{min_nrm:grelu_normal}
from training ReLU networks with normalization layer over 5 independent trials. Here we set $k=2$ planted neurons which satisfy $\mfw_1^* = \mfw^*,\ \mfw_2^* = -\mfw^*,\ \mfw^*\sim \mcU(\mbS^{n-1})$.}
\label{fig:lin_phase_noise_normal_mn21}
\end{figure}

It is also worth noting that we check the $\textrm{NIC}\mydash k$ given in \eqref{eq:irr_cond}, which guarantees recovery for the convex problem \eqref{min_nrm:relu_normal} and \eqref{min_nrm:grelu_normal}. We compute the probability that the $\textrm{NIC}$ holds for $d$ ranging from $10$ to 80 and $n$ ranging from 10 to 400 with 50 independent trials. For each pair of $(n,d)$. For each data matrix $\mfX\in\mbR^{n\times d}$, we use a random subset $H'$ of the set $H\backslash H_S$ to validate the inequality \eqref{eq:irr_cond}, where $H$ is the set of all possible hyperplane arrangements and $H_S$ is the set of
hyperplane arrangements generated by the planted neurons, i.e.
$$
H_S=\{\diag(\mbI(\mfX\mfw_i^*))|i\in[k]\}.
$$
In numerical experiments, we generate random hyperplane arrangements. If \eqref{eq:irr_cond} holds for all $\mfD_j\in H'$, we say $\mfX$ satisfies the $\textrm{NIC}\mydash k$ numerically.


\begin{figure}[H]
\setcounter{subfigure}{0}
\centering
    \subfigure[$k=2,\ \mfw_1^* = \mfw^*,\ \mfw_2^* = -\mfw^*,\ \mfw^*\sim \mcU(\mbS^{n-1})$]{
      \centering
      \includegraphics[width=\figscale\textwidth]{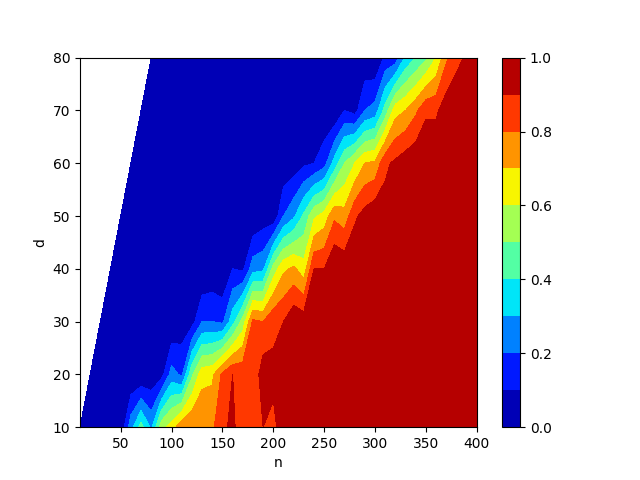}  
    }
    \centering
    \subfigure[$k=2,\ \mfw_1^*,\mfw_2^*\sim \mcU(\mbS^{n-1})$]{
      \centering
      \includegraphics[width=\figscale\textwidth]{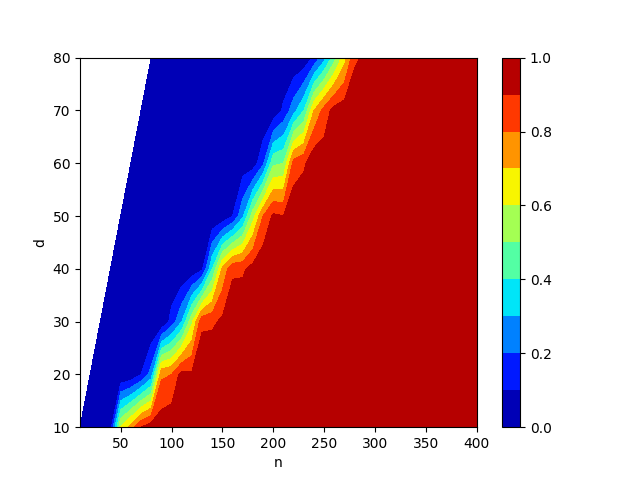} 
    }
    \end{figure}
\begin{figure}[H]
 \addtocounter{figure}{1}  
    \ContinuedFloat
\centering
    \subfigure[$k=2,\ \mfw_1^* = \mfe_1,\ \mfw_2^* = \mfe_2$,]{
      \centering
      \includegraphics[width=\figscale\textwidth]{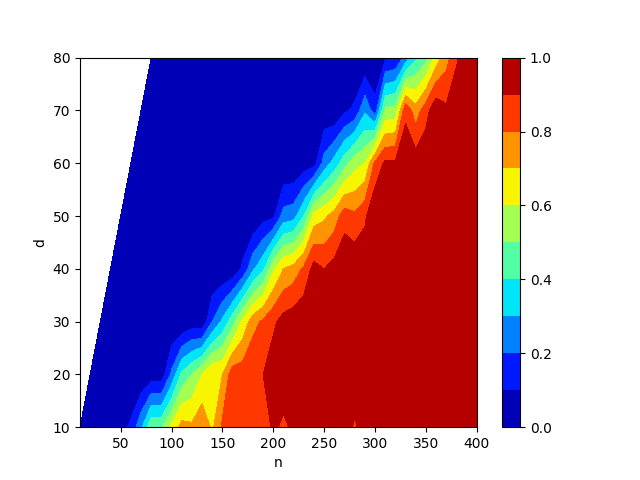} 
    }
    \centering
    \subfigure[$k=3,\ \mfw_i^* = \mfe_i(i=1,2,3)$]{
      \centering
      \includegraphics[width=\figscale\textwidth]{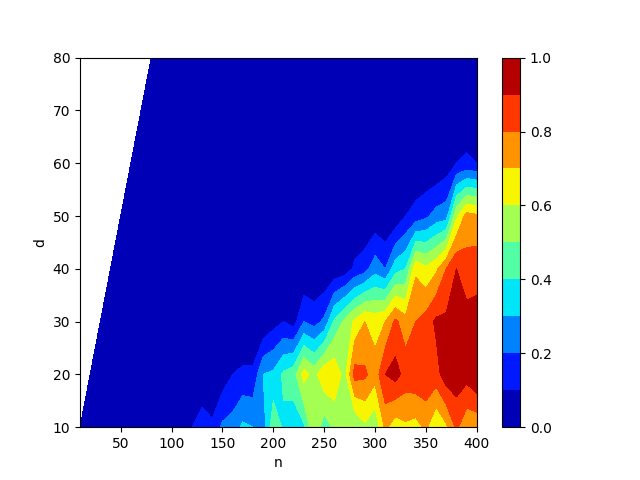} 
    }
    \caption{The probability that the irrepresentability condition \eqref{eq:irr_cond} holds for the group $\ell_1$-minimization problem \eqref{min_nrm:grelu_normal}
    over 50 independent trials. The label vector $\mfy$ is the combination of several normalized ReLU neurons.}
\end{figure}
{}
\section{Conclusion}
\label{sec:conclusion}
We presented a framework to analyze recovery properties of ReLU neural networks through convex reparameterizations. We introduced Neural Isometry Conditions, which are deterministic conditions on the training data that ensure the recovery of planted neurons via training two-layer ReLU networks and the variants with skip connection and normalization layers. Viewing the non-convex neural network training problem from a convex optimization perspective, we establish theorems analogous to sparse recovery and compressed sensing by using probabilistic methods. For randomly generated training data matrices, we showed the existence of a sharp phase transition in the recovery of simple planted models. Interestingly, ReLU neural networks with an arbitrary number of neurons exactly recover simple planted models, such as a combination of few ReLU neurons, when the number of samples exceed a critical threshold. Therefore, these models can perfectly generalize even with an extremely large number of parameters when the labels are generated by simple models. This phenomenon not only aligns with the results developed in sparse recovery theory, but is also validated by our numerical experiments. Namely, when the training data is i.i.d. Gaussian and the number of data points is smaller than a critical threshold, the convex program cannot recover the planted model. On the other hand, when the number of data points is above a critical threshold, the solution of the convex optimization problem uniquely recovers the planted solution. Our main contribution is that we explicitly characterize the data isometry conditions that imply exact recovery of planted ReLU and linear neurons, and the specific relation between the number of data points and problem dimensions for random data matrices for successful recovery with high probability. We also extend our results to the case where the observation is noisy, and show that the neural network can still learn a simple model, even with an arbitrary number of neurons, when the noise component is not too large and the regularization parameter lies in an appropriate interval.

An immediate open problem is extending our results to neural networks of depth greater than two, and investigating modern DNN structures such as convolutional and transformer layers. Moreover, the analysis of the exact recovery threshold for an arbitrary number of ReLU neurons is an important open problem. Our numerical experiments suggest the trend $n\ge 2k$ for recovering $k$ ReLU neurons when the training matrix is composed of i.i.d. Gaussian random variables.
Finally, we note that non-convex training methods might get stuck at local minimizers or stationary points that are avoided by the convex formulation. The relationship between convex and non-convex training process should be further revealed to explain this phenomenon. We leave the analysis of non-convex training process when the observation is derived from a simple model as an open research problem for future work. 
\newpage
\appendices

\section{Review of linear sparse recovery via $\ell_1$ minimization}\label{app:review_linear}
We briefly review conditions required to ensure recovery via $\ell_1$-norm minimization in the linear observation case. Suppose that $\mfy=\mfX\mfw^*$ denotes linear observations, where the matrix $\mfX\in\mbR^{n\times d}$ represents measurements and $\mfw^*\in\mbR^d$ is a $k$-sparse vector of dimension $d$. Consider the following $\ell_1$-minimization problem to recover $\mfw^*$ from measurements $\mfy$:
\begin{equation}\label{l1_min}
    \min_{\mfw\in\mbR^d}\; \|\mfw\|_1 \text{ s.t. }\mfX\mfw=\mfy.
\end{equation}
It can be shown that $\mfw^*$ can be exactly recovered from $\mfy$ when the data matrix $\mfX$ satisfies certain isometry conditions, which are analogous to the ones developed in this work. The KKT optimality conditions of the above convex optimization problem are
\begin{equation}
\begin{aligned}
   \exists \blbd\in\mbR^n \qquad \mbox{s.t.}\qquad & |\blbd^T\mfx_j^\mathrm{col}|\leq 1, &\text{ for } w_j= 0, \label{eq:KKTforL1}\\
    & \blbd^T\mfx_j^\mathrm{col}=\mathrm{sign}(w_j), &\text{ for } w_j\neq 0,
\end{aligned}
\end{equation}
where $\blbd\in\mbR^n$ is a dual variable and $\mfx_j^\mathrm{col}\in\mbR^d$ denotes the $j$-th column of $\mfX$.

Let $S=\{i\in[d]\,|\,w^*_i\neq 0\}$ denote the support set of $w^*$ and $S^c=[n]/S$ its complement. For the support set $S\subseteq [d]$ of size $|S|$, denote the subvector of $\mfw^*$ that corresponds to entries restricted to $S$ as $\mfw_S=(w_i)_{i\in S} \in \mathbb{R}^{|S|}$, and the submatrix $\mfX_S \in \mathbb{R}^{n\times |S|}$ of $\mfX$ formed with the columns $S$ that correspond to the support of $\mfw$. The irrepresentability condition is a simpler sufficient condition that implies that the KKT conditions in \eqref{eq:KKTforL1} hold for $\mfw=\mfw^*$. This is ensured by the choice $\blbd^* = \arg\min_{\blbd\,:\,\mfX^T_S \blbd = \mathrm{sign}(\mfw_S^*)} \|\blbd\|_2 = \mfX_S(\mfX^T_S\mfX_S)^{-1}\mathrm{sign}(\mfw_S^*)$ assuming that $\mfX^T_S\mfX_S$ is invertible, and leads to the condition  
\begin{equation} \label{eqn:irrepresentability}
    \|\mfX_{S^c}^T\mfX_S(\mfX^T_S\mfX_S)^{-1}\mathrm{sign}(\mfw_S^*)\|_\infty<1.
\end{equation}
Intuitively, the above condition is expected to hold under three conditions: (i) the matrix $(\mfX^T_S\mfX_S)^{-1}$ is well-conditioned, (ii) the columns of $\mfX$ have small inner-products with each other, i.e., $\|\mfX_{S^c}^T\mfX_S\|_{\infty,\infty}$ is small, and (iii) the size of the subset $S$ is not too large. Moreover,the irrepresentability condition in \eqref{eqn:irrepresentability} also ensures that $\mfw^*$ is the unique optimal solution to \eqref{l1_min} \citep{zhao2006model}.

The Restricted Isometry Property (RIP) is a stronger condition that imposes well-conditioning of submatrices uniformly over all size-$k$ subsets. RIP is stated as follows
\begin{equation*}
(1-\delta_k)\|\mfw_S\|_2^2  \le   \|\mfX_S \mfw_S \|_2^2 \le (1+\delta_k) \|\mfw_S\|_2^2\,,\qquad \forall \mfw_S \in \mbR^{|S|},\quad \forall S\subseteq [d]\,:\,|S|\le k
\end{equation*}
where $\delta_k \in (0,1)$ is the Restricted Isometry Constant for some positive integer $k$. RIP implies that all $n\times |S|$ submatrices of $\mfX\in\mathbb{R}^{n\times d}$ are well-conditioned for all subsets $S$ of size at most $k$. Examples of matrices that satisfy the RIP property include i.i.d. sub-Gaussian random matrices, random Haar matrices, as well subsampled orthonormal systems, e.g., Fourier and Hadamard matrices under conditions on the dimensions $n$ and $d$, see details in \cite{candes2008restricted,vershynin2018high}.
In \cite{van2009conditions}, it has been shown that RIP with a sufficiently small constant $\delta_k$ implies the irrepresentability condition, and hence the recovery of a linear k-sparse vector from the observations $\mfy=\mfX\mfw^*$.
\section{Permutation and splitting of neural networks}\label{sec:permutation}
In this section, we present the definition of permutation and splitting of two-layer ReLU neural networks and their variants with skip connections or normalization layers. 

For a ReLU network $\Theta=(\mfW_1,\mfw_2)$ where $\mfW_1\in\mbR^{d\times m},\mfw_2\in\mbR^{m}$, a permutation of $\Theta$ is any neural network $\Theta'=(\mfW_1',\mfw_2')$ with $\mfW_1'\in\mbR^{d\times m},\mfw_2'\in\mbR^{m}$ such that $\mfw_{1,j}'=\mfw_{1,\pi(j)}$ and $w_{2,j}'=w_{2,\pi(j)}$ for $j\in[m]$. Here $\pi:[m]\to[m]$ is a permutation of $[m]$. 

Given a neuron-pair $(\mfw_{1},w_{2})$, we say that a collection of neuron-pairs $\{(\mfw_{1,j}, w_{2,j})\}_{j=1}^k$ is a splitting of $(\mfw_{1},w_{2})$ if $(\mfw_{1,j},w_{2,j}) = (\sqrt{\gamma_j} \mfw_{1}, \sqrt{\gamma_j} w_2)$ for some $\gamma_j \geq 0$ and $\sum_{j=1}^k \gamma_j = 1$. Given a ReLU neural network $\Theta = (\mfW_1,\mfw_2)$ with $\mfW_1\in\mbR^{d\times m},\mfw_2\in\mbR^{m}$, a \emph{splitting} of $\Theta$ is any neural network $\Theta'=(\mfW_1',\mfw_2')$ with $\mfW_1'\in\mbR^{d\times m'},\mfw_2'\in\mbR^{m'}$ such that the non-zero neurons of $\Theta'$ can be partitioned into splittings of the neurons of $\Theta$.

For a ReLU network with skip connection $\Theta=(\mfW_1,\mfw_2)$ where $\mfW_1\in\mbR^{d\times m},\mfw_2\in\mbR^{m}$, the permutation and splitting of this network refers to the permutation and splitting of the ReLU neurons $(\mfw_{1,i},w_{2,i})_{i=2}^m$ in this network.

For a ReLU network with normalization layer $\Theta=(\mfW_1,\mfw_2,\balpha)$ where $\mfW_1\in\mbR^{d\times m},\mfw_2,\balpha\in\mbR^m$, a permutation of $\Theta$ is any neural network $\Theta'=(\mfW_1',\mfw_2',\balpha')$ such that $\mfw_{1,j}'=\mfw_{1,\pi(j)}$, $w_{2,j}'=w_{2,\pi(j)}$ and $\alpha_j'=\alpha_{\pi(j)}$ for $j\in[m]$. Here $\pi:[m]\to[m]$ is a permutation of $[m]$. 

Given a neuron-pair $(\mfw_{1},w_{2},\alpha)$, we say that a collection of neuron-pairs $\{(\mfw_{1,j}, w_{2,j})\}_{j=1}^k$ is a splitting of $(\mfw_{1},w_{2})$ if $(w_{2,j},\alpha_j) = (\sqrt{\gamma_j} w_2, \sqrt{\gamma_j} \alpha)$ for some $\gamma_j \geq 0$ and $\sum_{j=1}^k \gamma_j = 1$ and $\mfw_{1,j}$ is positively colinear with $\mfw_{1}$ for $j\in[m]$. Namely, there exists $\zeta_j>0$ such that $\mfw_{1,j}=\zeta_j\mfw_{1,j}$. Given a ReLU neural network with normalization layer $\Theta = (\mfW_1,\mfw_2,\balpha)$ with $\mfW_1\in\mbR^{d\times m},\mfw_2,\balpha\in\mbR^{m}$, a \emph{splitting} of $\Theta$ is any neural network $\Theta'=(\mfW_1',\mfw_2',\balpha')$ with $\mfW_1'\in\mbR^{d\times m'},\mfw_2',\balpha'\in\mbR^{m'}$ such that the non-zero neurons of $\Theta'$ can be partitioned into splittings of the neurons of $\Theta$.

\section{Justification for excluding the hyperplane arrangement induced by the zero vector}
\label{app:hyper}
Consider $\tilde H=\{\diag(\mbI(\mfX\mfh\geq 0))|\mfh\in\mbR^d\}$. Firstly, we note that $\mfI_n=\diag(\mbI(\mfX\mfh\geq 0))$. If $\mfI_n\in H$, then we have $\tilde H=H$. If $\mfI_n\notin H$, then this implies that $\{\mfw\in\mbR^d:(2\mfI_n-\mfI_n)\mfX\mfw\geq 0\}=\{0\}$. Therefore, excluding the hyperplane arrangement induced by $\mfh=0$ does not change the convex program \eqref{min_nrm:relu}. 

\section{The convex program for gated ReLU networks}
\label{app:grelu}
Consider the minimum norm interpolation problem
\begin{equation}\label{min_nrm:grelu_noncvx}
    \min_{\Theta} \frac{1}{2}\pp{\|\mfW_1\|_F^2+\|\mfw_2\|_2^2}, \text{ s.t. } f^\mathrm{gReLU}(\mfX;\Theta)=\mfy,
\end{equation}
where $f^\mathrm{gReLU}(\mfX;\Theta)$ defined in \eqref{equ:grelu} is the output of a gated ReLU network. We first reformulate \eqref{min_nrm:grelu_noncvx} in the following way.
\begin{proposition}\label{prop:reform}
The problem \eqref{min_nrm:grelu_noncvx} can be reformulated as
\begin{equation}\label{min_nrm:grelu_reform}
    \min_{\Theta} \|\mfw_2\|_1, \text{ s.t. } f^\mathrm{gReLU}(\mfX;\Theta)=\mfy, \|\mfw_{1,i}\|_2\leq 1, i\in[m].
\end{equation}
\end{proposition}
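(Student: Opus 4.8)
The plan is to use the standard weight-rescaling (``weight balancing'') argument, exploiting that each gated-ReLU neuron $\diag(\mbI(\mfX\mfh_i\geq 0))(\mfX\mfw_{1,i})w_{2,i}$ is bilinear in the pair $(\mfw_{1,i},w_{2,i})$ while the gating matrix $\diag(\mbI(\mfX\mfh_i\geq 0))$ depends only on $\mfh_i$, which is not penalized in the objective of \eqref{min_nrm:grelu_noncvx}. Fixing the gates throughout, I reason neuron by neuron.

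First I would pass from the $\tfrac12\ell_2^2$ objective to a product objective. On one hand, for any feasible $\Theta=(\mfW_1,\mfw_2,\mfH)$ the AM--GM inequality gives $\tfrac12(\|\mfw_{1,i}\|_2^2+w_{2,i}^2)\ge |w_{2,i}|\,\|\mfw_{1,i}\|_2$ termwise, so the $\tfrac12\ell_2^2$ objective dominates $\sum_i|w_{2,i}|\,\|\mfw_{1,i}\|_2$. On the other hand, for each $i$ and $t>0$ the substitution $(\mfw_{1,i},w_{2,i})\mapsto(t\,\mfw_{1,i},\,w_{2,i}/t)$ leaves $f^{\mathrm{gReLU}}(\mfX;\Theta)$, hence feasibility, unchanged, and choosing $t^2=|w_{2,i}|/\|\mfw_{1,i}\|_2$ (or letting $t\to 0$ or $t\to\infty$ in the degenerate cases) drives $\tfrac12\sum_i(t^2\|\mfw_{1,i}\|_2^2+w_{2,i}^2/t^2)$ down to $\sum_i|w_{2,i}|\,\|\mfw_{1,i}\|_2$. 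Combining the two bounds shows that \eqref{min_nrm:grelu_noncvx} has the same optimal value as
\[
\min_{\Theta}\ \sum_{i=1}^m |w_{2,i}|\,\|\mfw_{1,i}\|_2 \quad\text{s.t.}\quad f^{\mathrm{gReLU}}(\mfX;\Theta)=\mfy ,
\]
with optimal solutions corresponding under the above rescaling.

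Next I would normalize the first-layer weights in this product formulation. A neuron with $\mfw_{1,i}=0$ contributes nothing to either the constraint or the objective, so one may replace it by $w_{2,i}=0$ with $\mfw_{1,i}$ any unit vector; for $\mfw_{1,i}\neq 0$ the rescaling $(\mfw_{1,i},w_{2,i})\mapsto(\mfw_{1,i}/\|\mfw_{1,i}\|_2,\ w_{2,i}\|\mfw_{1,i}\|_2)$ again preserves the function and feasibility and makes $\|\mfw_{1,i}\|_2=1$ while keeping $|w_{2,i}|\,\|\mfw_{1,i}\|_2$ fixed, so the product problem is equivalent to $\min_\Theta\|\mfw_2\|_1$ subject to interpolation and $\|\mfw_{1,i}\|_2=1$ for all $i$. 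Finally, relaxing these equalities to $\|\mfw_{1,i}\|_2\le 1$ does not change the optimal value: in the relaxed problem, scaling any $\mfw_{1,i}$ with $0<\|\mfw_{1,i}\|_2<1$ up to unit norm and the corresponding $w_{2,i}$ down by the same factor (and resetting to unit norm the $\mfw_{1,i}$ for which $w_{2,i}=0$) yields a feasible point of the equality-constrained problem with objective no larger than $\|\mfw_2\|_1$. Chaining the three equalities of optimal values proves \eqref{min_nrm:grelu_reform}.

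I do not expect a substantive obstacle; the argument is elementary. The only points needing care are the degenerate neurons ($\mfw_{1,i}=0$ or $w_{2,i}=0$), where the infimum over the rescaling parameter is not attained, and the passage between the constraints $\|\mfw_{1,i}\|_2=1$ and $\|\mfw_{1,i}\|_2\le 1$; both are dealt with above. One should also note explicitly that every rescaling leaves each $\mfh_i$, and therefore each gate, untouched, so the equivalences hold neuron-wise for any fixed choice of gates and persist after minimizing over $\mfH$.
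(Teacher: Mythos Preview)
Your proposal is correct and follows essentially the same approach as the paper: the rescaling $(\mfw_{1,i},w_{2,i})\mapsto(\alpha_i\mfw_{1,i},\alpha_i^{-1}w_{2,i})$ leaves the gated-ReLU output invariant, and AM--GM collapses $\tfrac12(\|\mfw_{1,i}\|_2^2+w_{2,i}^2)$ to $|w_{2,i}|\,\|\mfw_{1,i}\|_2$, after which one normalizes $\|\mfw_{1,i}\|_2=1$. You are in fact more careful than the paper's own proof, which omits the degenerate cases $\mfw_{1,i}=0$ or $w_{2,i}=0$ and does not explicitly justify the passage from $\|\mfw_{1,i}\|_2=1$ to $\|\mfw_{1,i}\|_2\le1$.
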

For the reformulated problem \eqref{min_nrm:grelu_reform}, we can derive the dual problem.
\begin{proposition}\label{prop:grelu_dual}
The dual problem of \eqref{min_nrm:grelu_reform} is given by
\begin{equation}\label{equ:grelu_dual}
    \max_{\blbd}  \blbd^T\mfy, \text{ s.t. } \max_{\|\mfw\|_2\leq 1, \mfh\neq 0}\left|\blbd^T\diag(\mbI(\mfX\mfh\geq 0))\mfX\mfw\right|\leq 1.
\end{equation}
\end{proposition}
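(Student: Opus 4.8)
The plan is to obtain \eqref{equ:grelu_dual} as the Lagrangian dual of \eqref{min_nrm:grelu_reform}, dualizing only the interpolation constraint $f^\mathrm{gReLU}(\mfX;\Theta)=\mfy$ and keeping the norm constraints $\|\mfw_{1,i}\|_2\le1$ as part of the feasible set over which we take the infimum. Writing $\mfD_{\mfh}:=\diag(\mbI(\mfX\mfh\ge0))$ for brevity and recalling the form \eqref{equ:grelu} of $f^\mathrm{gReLU}$, with a multiplier $\blbd\in\mbR^n$ the Lagrangian is
\begin{equation*}
L(\Theta,\blbd)=\blbd^T\mfy+\sum_{i=1}^m\Big(|w_{2,i}|-w_{2,i}\,\blbd^T\mfD_{\mfh_i}\mfX\mfw_{1,i}\Big),
\end{equation*}
which splits across the $m$ neurons up to the additive term $\blbd^T\mfy$.

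First I would carry out the minimization over each scalar $w_{2,i}\in\mbR$: with $c_i:=\blbd^T\mfD_{\mfh_i}\mfX\mfw_{1,i}$, the $i$-th term is $\inf_{w\in\mbR}(|w|-wc_i)$, which equals $0$ (attained at $w=0$) when $|c_i|\le1$ and is $-\infty$ otherwise, taking $w=t\,\sgn(c_i)$ and $t\to\infty$. Then I would minimize over the remaining variables $\mfw_{1,i}$ in the unit ball and $\mfh_i\neq0$; since one neuron already suffices to exploit any violating direction, the dual function $g(\blbd):=\inf_\Theta L(\Theta,\blbd)$ equals $\blbd^T\mfy$ exactly when $\max_{\|\mfw\|_2\le1,\ \mfh\neq0}\big|\blbd^T\mfD_{\mfh}\mfX\mfw\big|\le1$ and equals $-\infty$ otherwise. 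Taking $\sup_\blbd g(\blbd)$ gives \eqref{equ:grelu_dual}. Here I would use that $\mfh$ enters $L$ only through the pattern $\mfD_{\mfh}$, so the inner supremum over $\mfh\neq0$ is a maximum over the finite family $\{\mfD_1,\dots,\mfD_p\}$ of Definition \ref{def:hyper} of the quantities $\|\mfX^T\mfD_j\blbd\|_2$, which are finite; hence the dual constraint is well posed.

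The only delicate point I anticipate is the ordering of the nested infima: the minimization over $w_{2,i}$ must be performed jointly with, rather than before, the minimization over $(\mfw_{1,i},\mfh_i)$, because the governing quantity $\sup_{\|\mfw\|_2\le1,\mfh\neq0}|\blbd^T\mfD_{\mfh}\mfX\mfw|$ need not be attained, and one must correctly identify the $-\infty$ branch of $g$. Strong duality is not part of the statement and need not be addressed at this point --- although it can be recovered from the convex reformulations of Section \ref{sec:neural_recovery} --- so once these bookkeeping issues are settled the computation above establishes the claimed dual program.
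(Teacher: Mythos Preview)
Your proposal is correct and follows essentially the same Lagrangian-dual computation as the paper: dualize the interpolation constraint, minimize over the second-layer weights $w_{2,i}$ to produce the indicator condition $|c_i|\le 1$, and then handle the remaining first-layer and gate variables. The paper arrives at the same point via a min--max--min rearrangement (swapping $\max_\blbd$ and $\min_{\mfw_2}$ first, then exchanging the outer min and max), whereas you compute $g(\blbd)=\inf_\Theta L$ directly; the two are equivalent. Your anticipated ``delicate point'' is not actually delicate: infima over product sets commute unconditionally, and as you yourself observe, the inner supremum over $(\mfw,\mfh)$ reduces to $\max_j\|\mfX^T\mfD_j\blbd\|_2$ and is attained, so no extra care is needed.
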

Based on the hyperplane arrangement described in \eqref{equ:hyperplane}, the dual problem is also equivalent to
\begin{equation}
\begin{aligned}
    \max_{\blbd}  \blbd^T\mfy, \text{ s.t. } \max_{j\in[p],\|\mfw\|_2\leq 1}\left|\blbd^T\mfD_j\mfX\mfw\right|\leq 1
    =\max_{\blbd}  \blbd^T\mfy, \text{ s.t. }\norm{\mfX^T\mfD_j\blbd}_2\leq 1,j\in[p].
\end{aligned}
\end{equation}
Indeed, the bi-dual problem (dual of the dual problem \eqref{equ:grelu_dual}) is the group Lasso problem \eqref{min_nrm:grelu}.
\begin{proposition}\label{prop:grelu_equiv}
The dual of the problem \eqref{equ:grelu_dual} is the group Lasso problem \eqref{min_nrm:grelu}. For a sufficiently large $m$, the minimum norm interpolation problem \eqref{min_nrm:grelu_noncvx} is equivalent to \eqref{min_nrm:grelu}. 
\end{proposition}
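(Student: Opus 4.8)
The plan is to treat the two assertions separately: the first is a conic‑duality calculation, and the second a direct comparison of optimal values that passes through the reformulation \eqref{min_nrm:grelu_reform}. For the first assertion, recall that, as noted right after \eqref{equ:grelu_dual}, its constraint is equivalent to the finite family $\|\mfX^T\mfD_j\blbd\|_2\le 1$, $j\in[p]$, so \eqref{equ:grelu_dual} is the second‑order cone program $\max_\blbd\ \blbd^T\mfy$ subject to $\|\mfX^T\mfD_j\blbd\|_2\le 1$ for all $j\in[p]$. Forming its Lagrangian introduces a vector multiplier $\mfw_j\in\mbR^d$ for each cone constraint; equivalently, one invokes the standard fact that the dual of $\max\{\,c^T\blbd:\|A_j\blbd\|_2\le 1\ \forall j\,\}$ is $\min\{\,\sum_j\|\mfw_j\|_2:\sum_j A_j^T\mfw_j=c\,\}$, here with $A_j=\mfX^T\mfD_j$ and $c=\mfy$. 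This returns precisely $\min_{\{\mfw_j\}}\sum_{j=1}^p\|\mfw_j\|_2$ subject to $\sum_{j=1}^p\mfD_j\mfX\mfw_j=\mfy$, i.e., \eqref{min_nrm:grelu}. Since $\blbd=0$ is strictly feasible for \eqref{equ:grelu_dual}, Slater's condition holds and strong duality yields equality of the two optimal values, with finiteness and attainment following from feasibility of \eqref{min_nrm:grelu} (assumed here as elsewhere in the paper, i.e., $\mfy$ lies in the range of the gated‑ReLU map).

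Next I would show that, for $m$ above a threshold, \eqref{min_nrm:grelu_noncvx} and \eqref{min_nrm:grelu} have the same optimal value and a matching solution set, using the reformulation \eqref{min_nrm:grelu_reform} of Proposition \ref{prop:reform}. For one inequality, given any feasible $\Theta=(\mfW_1,\mfw_2,\mfH)$ of \eqref{min_nrm:grelu_reform}, each neuron obeys $\diag(\mbI(\mfX\mfh_i\ge 0))=\mfD_{j(i)}$ for a unique $j(i)\in[p]$; setting $\mfw_j:=\sum_{i:\,j(i)=j}w_{2,i}\mfw_{1,i}$ yields a feasible point of \eqref{min_nrm:grelu} with $\sum_j\|\mfw_j\|_2\le\sum_i|w_{2,i}|\,\|\mfw_{1,i}\|_2\le\|\mfw_2\|_1$, so the value of \eqref{min_nrm:grelu} is at most that of \eqref{min_nrm:grelu_reform}. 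For the reverse, take an optimal $\{\mfw_j^\star\}_{j=1}^p$ of \eqref{min_nrm:grelu}; because it solves an equality‑constrained group‑$\ell_1$ problem it can be chosen with at most a finite threshold number $m^\star$ of nonzero blocks (trivially $m^\star\le p$; a Carath\'eodory‑type extreme‑point argument sharpens this to $m^\star=O(n)$, cf.\ Theorem 1 of \cite{nnacr}), and for $m\ge m^\star$ I would assign to each nonzero block $j$ one neuron with $\mfw_{1,i}=\mfw_j^\star/\|\mfw_j^\star\|_2$, $w_{2,i}=\|\mfw_j^\star\|_2$, and $\mfh_i$ any vector realizing $\diag(\mbI(\mfX\mfh_i\ge 0))=\mfD_j$ (one exists by Definition \ref{def:hyper}), zeroing out the rest; this is feasible for \eqref{min_nrm:grelu_reform} with objective $\sum_j\|\mfw_j^\star\|_2$. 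Hence \eqref{min_nrm:grelu_noncvx}, \eqref{min_nrm:grelu_reform}, \eqref{min_nrm:grelu} share a common value, and — since equality in the triangle inequality forces the neurons over a fixed arrangement to be nonnegatively colinear — the two constructions set up a correspondence between the optimal networks (up to splitting and permutation, in the sense of Appendix \ref{sec:permutation}) and the optimal solutions of \eqref{min_nrm:grelu}, which is the intended meaning of ``equivalent''.

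The routine parts here are the Lagrangian duality and the triangle‑inequality bookkeeping. The hard part will be controlling the number of neurons — namely, showing that \eqref{min_nrm:grelu} attains its optimum at a point with at most $m^\star$ nonzero blocks and pinning down $m^\star$ — which is exactly where a conic Carath\'eodory / extreme‑point argument (or the threshold from \cite{nnacr}) is required. A secondary point to verify carefully is attainment and strong duality in the conic pair, so that the bidual of \eqref{min_nrm:grelu_reform} is genuinely \eqref{min_nrm:grelu} rather than merely a lower bound; both follow from Slater at $\blbd=0$ and from the feasibility of \eqref{min_nrm:grelu} assumed throughout.
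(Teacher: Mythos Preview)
Your proposal is correct and follows essentially the same route as the paper. Two minor remarks: first, the paper computes the dual of \eqref{min_nrm:grelu} directly (Lagrangian of the group-$\ell_1$ problem) and identifies it with \eqref{equ:grelu_dual}, rather than dualizing \eqref{equ:grelu_dual} as you do, but the two computations are trivially equivalent by convexity; second, the paper does not attempt to pin down $m^\star$ sharply---``sufficiently large $m$'' is taken to mean $m\ge p$ (so that any feasible point of \eqref{min_nrm:grelu} maps to a network), so the Carath\'eodory/extreme-point sharpening you flag as the ``hard part'' is not needed for this proposition.
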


\subsection{Proof of Proposition \ref{prop:reform}}
\begin{proof}
For $i\in[m]$, consider $\hat \mfw_{1,i}=\alpha_i \mfw_{1,i}$ and $\hat w_{2,i}=\alpha_i^{-1} w_{2,i}$, where $\alpha_i>0$. Let $\hat \Theta=(\hat \mfW_1,\hat \mfW_2,\mfH)$. Then, we note that $f^\mathrm{gReLU}(\mfX;\Theta)=f^\mathrm{gReLU}(\mfX;\hat \Theta)$. This implies that $\hat \Theta$ is feasible for \eqref{min_nrm:grelu_noncvx}. From the inequality of arithmetic and geometric mean, we note that
\begin{equation}
\begin{aligned}
    \frac{1}{2}\sum_{i=1}^m\pp{ \alpha_i^2 \|\mfw_{1,i}\|_2^2+\alpha_i^{-2}w_{2,i}^2}
    \geq \sum_{i=1}^m \|\mfw_{1,i}\|_2|w_{2,i}|.
\end{aligned}
\end{equation}
The equality is achieved when $\alpha_i=\sqrt{\frac{|w_{2,i}|}{\|\mfw_{1,i}\|_2}}$. As the scaling operation does not change $\sum_{i=1}^m \|\mfw_{1,i}\|_2|w_{2,i}|$, we can set $\|\mfw_{1,i}\|_2=1$ and then the lower bound of the objective value becomes $\sum_{i=1}^m |w_{2,i}|=\|\mfw_{2}\|_1$. This completes the proof.
\end{proof}

\subsection{Proof of Proposition \ref{prop:grelu_dual}}
Consider the Lagrangian function
\begin{equation}
\begin{aligned}
L(\mfW_1,\mfw_2,\mfH,\blbd)=\|\mfw_2\|_1+\blbd^T\pp{\mfy-\sum_{i=1}^m\diag(\mbI(\mfX\mfh_i\geq 0))\mfX\mfw_{1,i}w_{2,i}}.
\end{aligned}
\end{equation}
The problem \eqref{min_nrm:grelu_noncvx} is equivalent to
\begin{equation}
\begin{aligned}
&\min_{\mfW_1,\mfw_2,\mfH}\max_{\blbd}L(\mfW_1,\mfw_2,\mfH,\blbd),\text{ s.t. }\|\mfw_{1,i}\|_2\leq 1, \mfh_i\neq 0,i\in[m]\\
=&\min_{\mfW_1,\mfH}\max_{\blbd}\min_{\mfw_2}L(\mfW_1,\mfw_2,\mfH,\blbd),\text{ s.t. }\|\mfw_{1,i}\|_2\leq 1, \mfh_i\neq 0, i\in[m]\\
=&\min_{\mfW_1,\mfH}\max_{\blbd} \blbd^T\mfy-\sum_{i=1}^m \tilde \mbI(\left|\blbd^T\diag(\mbI(\mfX\mfh_i\geq 0))\mfX\mfw_{1,i}\right|\leq 1),\\
&\text{ s.t. }\|\mfw_{1,i}\|_2\leq 1, \mfh_i\neq 0,i\in[m].
\end{aligned}
\end{equation}
Here $\tilde \mbI(S)=0$ if the statement $S$ is correct and $\tilde \mbI(S)=+\infty$ otherwise. By exchanging the order of $\min$ and $\max$, we obtain the dual problem
\begin{equation}
\begin{aligned}
&\max_{\blbd} \min_{\mfW_1,\mfH} \blbd^T\mfy-\sum_{i=1}^m \tilde \mbI(\left|\blbd^T\diag(\mbI(\mfX\mfh_i\geq 0))\mfX\mfw_{1,i}\right|\leq 1),\text{ s.t. }\|\mfw_{1,i}\|_2\leq 1, \mfh_i\neq 0,i\in[m]\\
=&\max_{\blbd}  \blbd^T\mfy, \text{ s.t. } \max_{\substack{\|\mfw_{1,i}\|_2\leq 1, \mfh_i\neq 0}}\left|\blbd^T\diag(\mbI(\mfX\mfh_i\geq 0))\mfX\mfw_{1,i}\right|\leq 1, i\in[m]\\
=&\max_{\blbd}  \blbd^T\mfy, \text{ s.t. } \max_{\|\mfw\|_2\leq 1, \mfh\neq 0}\left|\blbd^T\diag(\mbI(\mfX\mfh\geq 0))\mfX\mfw\right|\leq 1.
\end{aligned}
\end{equation}

\subsection{Proof of Proposition \ref{prop:grelu_equiv}}
\begin{proof}
As the group lasso problem \eqref{min_nrm:grelu} is a convex problem, it is sufficient to show that the dual problem of \eqref{min_nrm:grelu} is exactly \eqref{equ:grelu_dual}. Consider the Lagrangian function
\begin{equation}
    L(\mfw_1,\dots,\mfw_p,\blbd)=\sum_{j=1}^p \|\mfw_j\|_2+\blbd^T(\mfy-\sum_{j=1}^p\mfD_j\mfX\mfw_j).
\end{equation}
The dual problem follows
\begin{equation}
\begin{aligned}
    \max_{\blbd}\min_{\mfw_1,\dots,\mfw_p} L(\mfw_1,\dots,\mfw_p,\blbd)
    =\max_{\blbd}\blbd^T\mfy, \text{ s.t. } \norm{\mfX^T\mfD_j\blbd}_2\leq 1,j\in[p],
\end{aligned}
\end{equation}
which is equivalent to \eqref{equ:grelu_dual}. This implies that the optimal value of \eqref{min_nrm:grelu} serves as a lower bound for \eqref{min_nrm:grelu_noncvx}. For a sufficiently large $m$, any feasible point $(\mfw_1,\dots,\mfw_p)$ to \eqref{min_nrm:grelu} also corresponds to a feasible neural network for \eqref{min_nrm:grelu_noncvx}. In this case, the minimum norm interpolation problem \eqref{min_nrm:grelu_noncvx} is equivalent to \eqref{min_nrm:grelu}.
\end{proof}

\section{Proofs in Section \ref{sec:preview}}
\subsection{Proof of Lemma \ref{thm:landscape_skip}} 
\begin{proof}
Let $m^*$ be the minimal number of neurons of the optimal solution to the convex program \eqref{min_nrm:relu_skip}. Similar to the proof of Theorem 1 in \citep{wang2020hidden}, assuming that $m\geq m^*$, for any globally optimal solution to the non-convex problem \eqref{prob:min_nrm}, we can merge its ReLU neurons into a minimal neural network defined in \citep{wang2020hidden}. This minimal neural network combining with the linear part corresponds to an optimal solution to the convex program \eqref{min_nrm:relu_skip}. Therefore, we can view this globally optimal neural network as a possibly split and permuted version of an optimal solution to the convex program \eqref{min_nrm:relu_skip}. For the case where $\mfy=\mfX\mfw^*$, the ReLU network with skip connection can represent $\mfy$ using the linear part. In this case, $m^*=1$.
\end{proof}

\subsection{Proof of Proposition \ref{prop:imply_linear}}
\begin{proof}
We first show that the linear neural isometry condition implies the recovery of the planted linear model by solving \eqref{min_nrm:grelu_skip}.
\begin{proposition}
\label{prop:irrep_grelu_skip}
Suppose that the linear neural isometry condition \eqref{irrep:grelu_skip} holds. Consider $\mfW^*=(\mfw_0^*,\dots,\mfw_p^*)$ such that $\mfw_{0}^*=\mfw^*$ while $\mfw_j^*=0$ for $j\neq 0$. Then, $\mfW^*$ is the unique optimal solution to \eqref{min_nrm:grelu_skip}.
\end{proposition}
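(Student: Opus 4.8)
The plan is to certify optimality and uniqueness of $\mfW^*$ for the group $\ell_1$ program \eqref{min_nrm:grelu_skip} by exhibiting a strict dual certificate, exactly as one does for block-sparse recovery. I would regard \eqref{min_nrm:grelu_skip} as a group-Lasso-type problem with a ``linear'' block carrying the dictionary $\mfX$ and blocks $j\in[p]$ carrying the dictionaries $\mfD_j\mfX$. The standard sufficient optimality (KKT) condition for a candidate whose only active block is the linear one, with value $\mfw^*\neq 0$, is the existence of $\blbd\in\mbR^n$ with $\mfX^T\blbd=\hat\mfw$ (a subgradient of $\norm{\cdot}_2$ at $\mfw^*$) and $\norm{\mfX^T\mfD_j\blbd}_2\le 1$ for all $j\in[p]$; here I use that the adjoint of the feasibility map $\mfW\mapsto\mfX\mfw_0+\sum_j\mfD_j\mfX\mfw_j$ sends $\blbd$ to $(\mfX^T\blbd,\mfX^T\mfD_1\blbd,\dots,\mfX^T\mfD_p\blbd)$, and that each $\mfD_j$ is symmetric.

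The certificate I would take is $\blbd^*:=\mfX\pp{\mfX^T\mfX}^{-1}\hat\mfw$, which is well defined because $n>d$ makes $\mfX^T\mfX$ invertible. On the linear block $\mfX^T\blbd^*=\hat\mfw$ holds exactly, so it is a valid subgradient there; on each block $j\in[p]$ one has $\mfX^T\mfD_j\blbd^*=\mfX^T\mfD_j\mfX\pp{\mfX^T\mfX}^{-1}\hat\mfw$, whose $\ell_2$ norm is strictly below $1$ by the linear neural isometry condition \eqref{irrep:grelu_skip} --- this is precisely what \eqref{irrep:grelu_skip} asserts. Since moreover $\pp{\blbd^*}^T\mfy=\hat\mfw^T\pp{\mfX^T\mfX}^{-1}\mfX^T\mfX\mfw^*=\norm{\mfw^*}_2$ coincides with the objective value $\sum_j\norm{\mfw_j^*}_2=\norm{\mfw^*}_2$ of $\mfW^*$, weak duality already yields optimality of $\mfW^*$.

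For uniqueness I would take an arbitrary optimal $\tilde\mfW=(\tilde\mfw_0,\dots,\tilde\mfw_p)$ and chain feasibility with Cauchy--Schwarz:
\begin{equation*}
\begin{aligned}
\norm{\mfw^*}_2 &= \pp{\blbd^*}^T\mfy = \hat\mfw^T\tilde\mfw_0+\sum_{j=1}^p\pp{\mfX^T\mfD_j\blbd^*}^T\tilde\mfw_j \\
&\le \norm{\tilde\mfw_0}_2+\sum_{j=1}^p\norm{\mfX^T\mfD_j\blbd^*}_2\norm{\tilde\mfw_j}_2 \le \sum_{j=0}^p\norm{\tilde\mfw_j}_2,
\end{aligned}
\end{equation*}
and the two ends are equal because $\tilde\mfW$ is optimal. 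Hence every inequality is tight, and since $\norm{\mfX^T\mfD_j\blbd^*}_2<1$ strictly for each $j\in[p]$, tightness forces $\tilde\mfw_j=0$ for all $j\ge 1$; the constraint then collapses to $\mfX\tilde\mfw_0=\mfy=\mfX\mfw^*$, and injectivity of $\mfX$ gives $\tilde\mfw_0=\mfw^*$, so $\tilde\mfW=\mfW^*$.

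This follows the classical ``strict dual certificate implies unique recovery'' template, so I do not anticipate a deep obstacle; the one point that actually carries the argument is that \eqref{irrep:grelu_skip} is tailored exactly so that the canonical minimum-$\ell_2$-norm certificate $\mfX\pp{\mfX^T\mfX}^{-1}\hat\mfw$ is dual feasible with \emph{strict} slack on every non-linear block, and it is this strictness --- not mere feasibility --- that promotes optimality to uniqueness. A minor bookkeeping remark: this proposition concerns the relaxation \eqref{min_nrm:grelu_skip} with the inequality constraints dropped, but since the exhibited $\mfW^*$ trivially satisfies those inequality constraints, the uniqueness statement for \eqref{min_nrm:relu_skip} transfers, which is what is then invoked in the proof of Proposition \ref{prop:imply_linear}.
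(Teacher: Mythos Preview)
Your proof is correct and follows the same strict-dual-certificate strategy as the paper: construct $\blbd^*=\mfX(\mfX^T\mfX)^{-1}\hat\mfw$, verify it satisfies the KKT conditions with strict inactivity on every $j\in[p]$ block via \eqref{irrep:grelu_skip}, and then exploit that strictness to force any optimal $\tilde\mfW$ to vanish off the linear block. The paper packages this argument in the general group-Lasso form of Proposition~\ref{prop:irrep_general} and invokes it as a corollary, whereas you unroll the argument inline; your uniqueness step (feasibility plus injectivity of $\mfX$) is in fact slightly more direct than the paper's subgradient-inequality version, but the two are equivalent.
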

\begin{proof}
The KKT conditions for \eqref{min_nrm:grelu_skip} consist of
\begin{equation}\label{kkt:grelu_skip}
\begin{aligned}
    &\norm{\mfX^T\mfD_j\blbd^T}\leq 1, &\text{ if } \mfw_j=0,\\
    &\mfX^T\mfD_j\blbd^T = \frac{\mfw_j}{\|\mfw_j\|}, &\text{ if } \mfw_j\neq 0,\\
    &\sum_{j=0}^{p} \mfD_{j} \mfX \mfw_{j}=\mfy.
\end{aligned}
\end{equation}
As a direct corollary of Proposition \ref{prop:irrep_general}, $\mfW^*$ is the unique optimal solution to \eqref{min_nrm:grelu_skip}.
\end{proof}
We then present the proof of Proposition \ref{prop:imply_linear}. As \eqref{min_nrm:grelu_skip} is derived by dropping all inequality constraints in \eqref{min_nrm:relu_skip}, the optimal value of \eqref{min_nrm:relu} is lower bounded by \eqref{min_nrm:grelu_skip}. Note that $\mfW^*$ is the unique optimal solution to \eqref{min_nrm:grelu_skip}. Hence, the optimal value of \eqref{min_nrm:grelu_skip} is $\|\mfw^*\|_2$. On the other hand, $\hat \mfW$ is feasible for \eqref{min_nrm:relu_skip} and it leads to an objective value of $\|\mfw^*\|_2$. This implies that $\hat \mfW$ is an optimal solution to \eqref{min_nrm:relu_skip} and the optimal values of \eqref{min_nrm:relu_skip} and \eqref{min_nrm:grelu_skip} are the same. Suppose that we have another solution $\bar \mfW$ which is optimal to \eqref{min_nrm:relu_skip}. Consider $\breve\mfW=\{\breve \mfw_j|j=0,1,\dots,p\}$ where $\breve\mfw_0=\tilde\mfw_0$ and $\breve \mfw_j=\bar \mfw_j-\bar \mfw_j'$ for $j=1,\dots,p$. Then, $\bar \mfW$ is also optimal to \eqref{min_nrm:grelu_skip}. As $\mfW^*$ is the unique optimal solution to \eqref{min_nrm:grelu}, this implies that $\bar \mfW=\mfW^*$. Hence, we have $\bar \mfw_0=\bar \mfw_0=\mfw^*$. For $j=1,2,\dots,p$, we have
\begin{equation}
    \|\bar \mfw_j\|_2+\|\bar \mfw_j'\|_2\geq 0=\|\mfw_j^*\|_2.
\end{equation}
The equality holds when $\bar\mfw_j=\bar\mfw_j'=0$. As $\bar \mfW$ is optimal to \eqref{min_nrm:relu_skip}, we have $\bar \mfw_j=\bar \mfw_j'=0$ for $j=1,\dots,p$. This implies that $\bar \mfW=\hat \mfW$. Thus, $\hat \mfW$ is the unique optimal solution. 
\end{proof}

\section{Proofs in Section \ref{sec:neural_recovery}}

\subsection{Proof of Theorem \ref{thm:landscape}}
\begin{proof}
Let $m^*$ be the minimal number of neurons of the optimal solution to the convex program \eqref{min_nrm:relu}. In the case there are multiple optimal solutions, we may take the one with minimal cardinality.
We can view the minimal norm problem \eqref{prob:min_nrm} as
\begin{equation}
    \min_{\Theta} \ell(f(\mfX;\Theta);\mfy)+R(\Theta),
\end{equation}
where the loss function is defined by $$\ell(\mfz;\mfy)=\begin{cases}
\begin{aligned}
&0, &\mfz=\mfy,\\
&+\infty, &\mfz\neq \mfy.\\
\end{aligned}
\end{cases}$$
Hence, by applying Theorem 1 in \citep{wang2020hidden}, for $m\geq m^*$, any globally optimal solutions of the non-convex problem \eqref{prob:min_nrm} of ReLU networks can be computed via the optimal solutions of the convex program \eqref{min_nrm:relu} up to splitting and permutation. For the case $\mfy=(\mfX\mfw^*)_+$, we have $m^*=1$. 
\end{proof}

\subsection{Proof of Theorem \ref{thm:landscape_normal}}
\begin{proof}
Because $\mfy=\frac{\mfX\mfw^*}{\|\mfX\mfw^*\|_2}$, the ReLU network with skip connection can represent $\mfy$ using $m^*=1$ neurons. Let $\mfT_1,\dots,\mfT_q$ be the enumeration of all possible diagonal arrangement patterns 
$$
\{\diag(\mathrm{sign}(\mfX\mfw))|\mfw\in\mbR^d,\mfw\neq 0\}.
$$
We denote $Q_i$ be closed convex cone of solution vectors for $\mathrm{sign}(\mfX\mfw)=\mfT_i$ for $i\in[q]$. Let $B_i=Q_i\times \mbR_{>0}$ and $B_{i+q}=Q_i\times \mbR_{<0}$ for $i\in[q]$. Similar to the definition of minimal neural networks in \citep{wang2020hidden}, we can define the minimal neural networks with normalization layer as follows:
\begin{itemize}
    \item We say that a ReLU neural network with normalization layer $\Theta=(\mfW_1,\mfw_2,\alpha)$ is minimal if (i) it is scaled, i.e., $|w_{2,i}|=|\alpha_i|$ (ii) for each cone $B_i$ where $i\in[2q]$, the minimal neural network has at most a single non-zero neuron $(\mfw_{1,j},w_{2,j},\alpha_j)$ such that $(\mfw_{1,j},w_{2,j}\alpha_j)\in B_i$.
\end{itemize}

Similar to the proof of Theorem 1 in \citep{wang2020hidden}, for any globally optimal solution to the non-convex problem \eqref{prob:min_nrm}, we can merge it into a minimal neural network with normalization layer. This minimal neural network corresponds to an optimal solution to the convex program \eqref{min_nrm:relu_normal}. Therefore, we can view this globally optimal neural network as the split and permuted version of an optimal solution to the convex program \eqref{min_nrm:relu_normal}. 
\end{proof}

\section{Proofs in Section \ref{sec:neu_iso}}
To begin with, consider a general group $\ell_1$-minimization problem
\begin{equation}\label{min_nrm:general}
\begin{aligned}
 \min_{\left\{\mfw_{j}\right\}_{j=1}^{k}}\quad  \sum_{j=1}^{k}\left\|\mfw_{j}\right\|_{2},
        \text{s.t.}\quad &\sum_{j=1}^{k} \mfA_{j}\mfw_{j}=\mfy,
\end{aligned}
\end{equation}
where $\mfA_j\in\mbR^{n\times r_j}$ and $\mfw_j\in\mbR^{r_j}$ for $j\in[k]$. The KKT condition follows
\begin{equation}\label{kkt:general}
\begin{aligned}
    &\norm{\mfA_j^T\blbd}_2\leq 1, &\text{ if } \mfw_j=0,\\
    &\mfA_j^T\blbd = \frac{\mfw_j}{\|\mfw_j\|_2}, &\text{ if } \mfw_j\neq 0,\\
    &\sum_{j=1}^{k} \mfA_j \mfw_{j}=\mfy,
\end{aligned}
\end{equation}
where $\blbd\in\mbR^n$ is the dual variable. Suppose that $\mfy=\mfA_{i^*}\mfw^*$ is the label vector, where $\mfw^*\in\mbR^{s_{i^*}}$. Assume that $\mfA_{i^*}^T\mfA_{i^*}$ is invertible. Then, the irrepresentability condition follows
\begin{equation}\label{irrep:general}
    \norm{\mfA_j^T\mfA_{i^*}(\mfA_{i^*}^T\mfA_{i^*})^{-1}\frac{\mfw^*}{\|\mfw^*\|_2}}_2<1, \forall j\neq i^*.
\end{equation}

\begin{proposition}\label{prop:irrep_general}
Let $\mfy=\mfA_{i^*}\mfw^*$. Suppose that the irrepresentability condition \eqref{irrep:general} holds. Consider $\mfW^*=(\mfw_1^*,\dots,\mfw_k^*)$ such that $\mfw_{i^*}^*=\mfw^*$ while $\mfw_j^*=0$ for $j\neq i^*$. Then, $\mfW^*$ is the unique optimal solution to \eqref{min_nrm:general}.
\end{proposition}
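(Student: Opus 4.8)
The plan is to follow the classical dual-certificate route for $\ell_1$-type recovery, now in the group-norm setting, and then sharpen optimality to uniqueness by exploiting the \emph{strict} inequality in \eqref{irrep:general}. Throughout one assumes $\mfw^*\neq 0$, so that $\hat\mfw:=\mfw^*/\norm{\mfw^*}_2$ is well-defined; if $\mfw^*=0$ then $\mfy=0$ and the claim is immediate.

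First I would exhibit the certificate $\blbd := \mfA_{i^*}\pp{\mfA_{i^*}^T\mfA_{i^*}}^{-1}\hat\mfw$, which exists by the standing invertibility assumption on $\mfA_{i^*}^T\mfA_{i^*}$. By construction $\mfA_{i^*}^T\blbd=\hat\mfw$; the irrepresentability condition \eqref{irrep:general} is precisely the statement $\norm{\mfA_j^T\blbd}_2<1$ for every $j\neq i^*$; and $\sum_{j}\mfA_j\mfw_j^* = \mfA_{i^*}\mfw^* = \mfy$. Hence $(\mfW^*,\blbd)$ satisfies all three KKT relations in \eqref{kkt:general}, so $\mfW^*$ is a global minimizer of \eqref{min_nrm:general}.

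Next I would re-derive optimality through an explicit lower bound so that uniqueness is transparent. For any feasible $\bar\mfW=(\bar\mfw_1,\dots,\bar\mfw_k)$, Cauchy--Schwarz together with $\norm{\mfA_j^T\blbd}_2\le 1$ gives $\norm{\bar\mfw_j}_2\ge \langle\mfA_j^T\blbd,\bar\mfw_j\rangle$ for each $j$; summing and using feasibility and $\mfA_{i^*}^T\blbd=\hat\mfw$ yields
\[
\sum_{j=1}^{k}\norm{\bar\mfw_j}_2 \;\ge\; \Big\langle \blbd,\sum_{j=1}^{k}\mfA_j\bar\mfw_j \Big\rangle \;=\; \langle\blbd,\mfy\rangle \;=\; \langle\hat\mfw,\mfw^*\rangle \;=\; \norm{\mfw^*}_2 .
\]
Since $\mfW^*$ attains $\norm{\mfw^*}_2$, it is optimal, and any optimal $\bar\mfW$ must turn every per-block inequality above into an equality. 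For $j\neq i^*$, the equality $\langle\mfA_j^T\blbd,\bar\mfw_j\rangle=\norm{\bar\mfw_j}_2$ together with $\norm{\mfA_j^T\blbd}_2<1$ is impossible unless $\bar\mfw_j=0$; hence $\bar\mfw_j=0$ for all $j\neq i^*$. Feasibility then reduces to $\mfA_{i^*}\bar\mfw_{i^*}=\mfA_{i^*}\mfw^*$, and full column rank of $\mfA_{i^*}$ (equivalent to invertibility of $\mfA_{i^*}^T\mfA_{i^*}$) gives $\bar\mfw_{i^*}=\mfw^*$, i.e. $\bar\mfW=\mfW^*$.

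I do not anticipate a real obstacle here: this is the group-Lasso analogue of the textbook irrepresentability proof. Its only delicate points are bookkeeping --- confirming that the minimum-$\ell_2$-norm interpolator $\blbd$ is exactly the one for which the quantity in \eqref{irrep:general} appears, and invoking strictness of \eqref{irrep:general} (rather than merely $\le 1$) at the uniqueness step. All downstream recovery statements --- Propositions \ref{prop:irrep_grelu_skip}, \ref{prop:imply}, \ref{prop:imply_normal}, \ref{prop:imply_multi}, \ref{prop:imply_multi_normal} --- will then follow by specializing $\{\mfA_j\}$ to the blocks $\mfD_j\mfX$, $\mfU_j$, or their stacked and padded versions, and observing that the corresponding NIC inequalities coincide with \eqref{irrep:general}.
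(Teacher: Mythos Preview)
Your proposal is correct and follows essentially the same dual-certificate route as the paper: both construct $\blbd=\mfA_{i^*}(\mfA_{i^*}^T\mfA_{i^*})^{-1}\hat\mfw$, verify the KKT conditions \eqref{kkt:general}, and then use the strict inequality $\norm{\mfA_j^T\blbd}_2<1$ together with Cauchy--Schwarz to force $\bar\mfw_j=0$ for $j\neq i^*$. The only cosmetic difference is that the paper organizes the uniqueness step through increments $\Delta_j=\bar\mfw_j-\mfw_j^*$ and a subgradient inequality for $\norm{\cdot}_2$ at $\mfw^*$, whereas you run the weak-duality lower bound $\sum_j\norm{\bar\mfw_j}_2\ge\langle\blbd,\mfy\rangle$ directly and read off the equality cases; your handling of the $i^*$-block via full column rank of $\mfA_{i^*}$ is slightly more direct than the paper's, which arrives at $\Delta_{i^*}=\gamma\hat\mfw$ and then implicitly invokes the constraint once more.
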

\begin{proof}
We first show that $\mfW^*$ is the optimal solution to \eqref{min_nrm:general}. Let $$
\blbd=\mfA_{i^*}\pp{\mfA^T_{i^*}\mfA_{i^*}}^{-1}\frac{\mfw^*}{\|\mfw^*\|_2}.
$$
We can examine that $\mfA_{i^*}^T\blbd=\frac{\mfw^*}{\|\mfw^*\|_2}$. For $j\neq i^*$, from the irrepresentability condition \eqref{irrep:general}, we have $$
\norm{\mfA_{j}^T\blbd}_2=\norm{\mfA_j^T\mfA_{i^*}(\mfA_{i^*}^T\mfA_{i^*})^{-1}\frac{\mfw^*}{\norm{\mfw^*}_2}}_2<1.
$$
Therefore, $(\mfW^*,\blbd)$ satisfies the KKT condition \eqref{kkt:general}. This implies that $\mfW^*$ is optimal to \eqref{min_nrm:general}.

Then, we prove the uniqueness. Suppose that $\mfW=(\mfw_1,\dots,\mfw_k)$ is another optimal solution to \eqref{min_nrm:grelu}. Denote $\Delta_j=\mfw_j$ for $j\neq i^*$ and $\Delta_{i^*}=\mfw_{i^*}-\mfw_{i^*}^*$. Then, we have
\begin{equation}
    \sum_{j\neq i^*}\mfA_j\Delta_j+\mfA_{i^*}\Delta_{i^*}=0.
\end{equation}
We note that
\begin{equation}
\begin{aligned}
        &\|\mfw_{i^*}\|_2+\sum_{j\neq i^*} \|\mfw_j\|_2\\
    \geq&\|\mfw^*_{i^*}\|_2+\pp{\frac{\mfw^*_{i^*}}{\|\mfw^*_{i^*}\|_2}}^T\Delta_{i^*}+\sum_{j\neq i^*} \|\mfw_j\|_2\\
    = &\|\mfw^*_{i^*}\|_2+\blbd^T\mfA_{i^*}\Delta_{i^*}+\sum_{j\neq i^*} \|\Delta_j\|_2\\
    =&\|\mfw^*_{i^*}\|_2-\blbd^T\sum_{j\neq i^*}\mfA_j\Delta_j+\sum_{j\neq i^*} \|\Delta_j\|_2\\
    \geq &\|\mfw^*_{i^*}\|_2+\sum_{j\neq i^*} \|\Delta_j\|_2\pp{1-\norm{\mfA_j^T\blbd}_2} \geq \|\mfw^*_{i^*}\|_2
\end{aligned}
\end{equation}
The equality holds when $\Delta_j=0$ for all $j\neq i^*$ and $\Delta_{i^*}=\gamma \frac{\mfw^*_{i^*}}{\|\mfw^*_{i^*}\|_2}$ for certain $\gamma\geq 0$. This implies that $\mfW^*$ is the unique optimal solution to \eqref{min_nrm:grelu}. 
\end{proof}

Consider the following weak irrepresentability condition
\begin{equation}\label{irrep:general_weak}
\begin{aligned}
    \norm{\mfA_j^T\mfA_{i^*}(\mfA_{i^*}^T\mfA_{i^*})^{-1}\mfw^*}_2<1, j\in S_1, \norm{\mfA_j^T\mfA_{i^*}(\mfA_{i^*}^T\mfA_{i^*})^{-1}\mfw^*}_2=1, j\in S_2,
\end{aligned}
\end{equation}
where $S_1\cup S_2=[k]/\{i^*\}$. We have the following results when the weak irrepresentability condition holds.
\begin{proposition}\label{prop:irrep_general_weak}
Suppose that the weak irrepresentability condition holds. Consider $\mfW^*=(\mfw_1^*,\dots,\mfw_k^*)$ such that $\mfw_{i^*}^*=\mfw^*$ while $\mfw_j^*=0$ for $j\neq i^*$. Then, $\mfW^*$ is an optimal solution to \eqref{min_nrm:general}. All optimal solutions $\mfW=(\mfw_1,\dots,\mfw_k)$ shall satisfy $\mfw_j=0$ for $j\in S_1$.
\end{proposition}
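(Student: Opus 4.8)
The plan is to reuse, essentially verbatim, the dual-certificate argument from the proof of Proposition~\ref{prop:irrep_general}, and then observe that the \emph{weak} version of the condition only produces strict slack on the blocks indexed by $S_1$, which is what limits the conclusion to those blocks. First I would verify that $\mfW^*$ is optimal for \eqref{min_nrm:general}. Take the same dual vector
$\blbd := \mfA_{i^*}\pp{\mfA_{i^*}^T\mfA_{i^*}}^{-1}\frac{\mfw^*}{\|\mfw^*\|_2}$,
so that $\mfA_{i^*}^T\blbd = \mfw^*/\|\mfw^*\|_2$, and note that for every $j\neq i^*$ the weak irrepresentability condition \eqref{irrep:general_weak} gives $\norm{\mfA_j^T\blbd}_2\le 1$ (strictly for $j\in S_1$, with equality for $j\in S_2$). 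Together with feasibility $\sum_j \mfA_j\mfw_j^* = \mfA_{i^*}\mfw^* = \mfy$, the pair $(\mfW^*,\blbd)$ satisfies the KKT system \eqref{kkt:general}, so $\mfW^*$ is optimal and the optimal value of \eqref{min_nrm:general} equals $\|\mfw^*\|_2$.

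Next I would prove the structural statement about $S_1$. Let $\mfW=(\mfw_1,\dots,\mfw_k)$ be an arbitrary optimal solution, set $\Delta_j := \mfw_j$ for $j\neq i^*$ and $\Delta_{i^*} := \mfw_{i^*}-\mfw^*$, so that $\sum_{j\neq i^*}\mfA_j\Delta_j = -\mfA_{i^*}\Delta_{i^*}$. Running the same chain of inequalities as in the uniqueness part of Proposition~\ref{prop:irrep_general} — namely, convexity of $\|\cdot\|_2$ at $\mfw^*\neq 0$ with subgradient $\mfw^*/\|\mfw^*\|_2 = \mfA_{i^*}^T\blbd$, then substitution of the feasibility relation, then Cauchy--Schwarz on each block $\mfA_j\Delta_j$ — yields
\begin{equation*}
\sum_{j=1}^{k}\|\mfw_j\|_2 \;\ge\; \|\mfw^*\|_2 + \sum_{j\neq i^*}\|\Delta_j\|_2\pp{1-\norm{\mfA_j^T\blbd}_2} \;\ge\; \|\mfw^*\|_2.
\end{equation*}
Since the left-hand side equals $\|\mfw^*\|_2$ by optimality of $\mfW$, every summand $\|\Delta_j\|_2\pp{1-\norm{\mfA_j^T\blbd}_2}$ must vanish. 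For $j\in S_1$ we have $1-\norm{\mfA_j^T\blbd}_2>0$, which forces $\Delta_j=\mfw_j=0$; for $j\in S_2$ the factor $1-\norm{\mfA_j^T\blbd}_2$ is zero, so nothing is forced — and this is precisely why uniqueness fails in the weak regime, the leftover freedom living in the $S_2$ blocks and along the direction $\mfw^*/\|\mfw^*\|_2$ in $\Delta_{i^*}$.

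I do not expect a genuine obstacle here, since the whole argument is a direct adaptation of the already-proved Proposition~\ref{prop:irrep_general}. The only point requiring care is the bookkeeping of the two regimes, $S_1$ (strict inequality, hence $\mfw_j=0$) versus $S_2$ (equality, hence no constraint), in the telescoping bound; and stating clearly that $\mfW^*$ is declared only \emph{an} optimal solution rather than \emph{the} optimal solution, because the complementary-slackness analysis pins down the $S_1$ blocks only.
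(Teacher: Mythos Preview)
Your proposal is correct and mirrors the paper's proof essentially verbatim: the same dual certificate $\blbd=\mfA_{i^*}(\mfA_{i^*}^T\mfA_{i^*})^{-1}\mfw^*/\|\mfw^*\|_2$ is used to verify optimality via the KKT conditions \eqref{kkt:general}, and then the same chain of inequalities from Proposition~\ref{prop:irrep_general} is rerun to force $\Delta_j=0$ precisely for $j\in S_1$ where the slack $1-\|\mfA_j^T\blbd\|_2$ is strictly positive. The paper likewise notes that equality in the first step additionally requires $\Delta_{i^*}=\gamma\,\mfw^*/\|\mfw^*\|_2$ for some $\gamma\ge 0$, which you also observe in your final remark.
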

\begin{proof}
Similar to the proof of Proposition \ref{prop:irrep_general}, we can show that $\mfW^*$ is the optimal solution to \eqref{min_nrm:general}. Suppose that $\mfW=(\mfw_1,\dots,\mfw_k)$ is another optimal solution to \eqref{min_nrm:grelu}. Denote $\Delta_j=\mfw_j$ for $j\neq i^*$ and $\Delta_{i^*}=\mfw_{i^*}-\mfw_{i^*}^*$. Then, we have
\begin{equation}
    \sum_{j\neq i^*}\mfA_j\Delta_j+\mfA_{i^*}\Delta_{i^*}=0.
\end{equation}
We note that
\begin{equation}
\begin{aligned}
        &\|\mfw_{i^*}\|_2+\sum_{j\neq i^*} \|\mfw_j\|_2\\
    \geq&\|\mfw^*_{i^*}\|_2+\pp{\frac{\mfw^*_{i^*}}{\|\mfw^*_{i^*}\|_2}}^T\Delta_{i^*}+\sum_{j\neq i^*} \|\mfw_j\|_2\\
    = &\|\mfw^*_{i^*}\|_2+\blbd^T\mfA_{i^*}\Delta_{i^*}+\sum_{j\neq i^*} \|\Delta_j\|_2\\
    =&\|\mfw^*_{i^*}\|_2-\blbd^T\sum_{j\neq i^*}\mfA_j\Delta_j+\sum_{j\neq i^*} \|\Delta_j\|_2\\
    \geq &\|\mfw^*_{i^*}\|_2+\sum_{j\neq i^*} \|\Delta_j\|_2\pp{1-\norm{\mfA_j^T\blbd}_2}\\
    \geq &\|\mfw^*_{i^*}\|_2.
\end{aligned}
\end{equation}
The equality holds when $\Delta_j=0$ for all $j\in S_1$ and $\Delta_{i^*}=\gamma \frac{\mfw^*_{i^*}}{\|\mfw^*_{i^*}\|_2}$ for certain $\gamma\geq 0$. This completes the proof.
\end{proof}

We then consider the case where $\mfy=\sum_{i=1}^l\mfA_{s_i}\mfw_{s_i}^*$, where $S=\{s_1,\dots,s_l\}\subseteq [k]$. Denote $\mfA_S^\mathrm{aug}=\bmbm{\mfA_{s_1}^T\\\dots\\\mfA_{s_k}^T}$. Suppose that $\mfA_S^\mathrm{aug}(\mfA_S^\mathrm{aug})^T=:\bmbm{\mfA_{s_1}^T\mfA_{s_1}&\dots&\mfA_{s_1}^T\mfA_{s_k}\\\vdots&\ddots&\vdots\\\mfA_{s_k}^T\mfA_{s_1}&\dots&\mfA_{s_k}^T\mfA_{s_k}}$ is invertible. Then, the irrepresentability condition follows
\begin{equation}\label{irrep:general_multi}
    \norm{\mfA_j^T
    (\mfA_S^\mathrm{aug})^T(\mfA_S^\mathrm{aug}(\mfA_S^\mathrm{aug})^T)^{-1}
    \bmbm{\mfw_{s_1}^*/\norm{\mfw_{s_1}^*}_2\\\vdots\\\mfw_{s_1}^*/\norm{\mfw_{s_k}^*}_2}}_2< 1, \forall j\notin S.
\end{equation}
or equivalently,
\begin{equation}
    \norm{\mfA_j^T
    (\mfA_S^\mathrm{aug})^\dagger
    \bmbm{\mfw_{s_1}^*/\norm{\mfw_{s_1}^*}_2\\\vdots\\\mfw_{s_1}^*/\norm{\mfw_{s_k}^*}_2}}_2< 1, \forall j\notin S.
\end{equation}

\begin{proposition}
\label{prop:irrep_general_group}
Suppose that the irrepresentability condition \eqref{irrep:general_multi} holds. Consider $\hat \mfW^*=(\hat \mfw_1^*,\dots,\hat \mfw_k^*)$ such that $\hat \mfw_{i}^*=\mfw^*_i$ for $i\in S$ while $\mfw_j^*=0$ for $j\notin S$. Then, $\hat \mfW^*$ is the unique optimal solution to \eqref{min_nrm:general}.
\end{proposition}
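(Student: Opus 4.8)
The plan is to replicate the single-block argument of Proposition~\ref{prop:irrep_general}, with the stacked operator $\mfA_S^\mathrm{aug}$ and its pseudoinverse $(\mfA_S^\mathrm{aug})^\dagger$ playing the roles of $\mfA_{i^*}$ and $(\mfA_{i^*}^T\mfA_{i^*})^{-1}\mfA_{i^*}^T$. As a preliminary, I would record the elementary fact that $\mfA_S^\mathrm{aug}(\mfA_S^\mathrm{aug})^T$ being invertible (which is assumed) is equivalent to the horizontal concatenation $[\mfA_{s_1}\ \cdots\ \mfA_{s_k}]$ having full column rank; this is used twice below.

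First I would construct a dual certificate: take $\blbd=(\mfA_S^\mathrm{aug})^\dagger\mfv$, where $\mfv$ is the stacked vector of normalized blocks $\mfw_{s_i}^*/\norm{\mfw_{s_i}^*}_2$ appearing in \eqref{irrep:general_multi}. Since $\mfA_S^\mathrm{aug}(\mfA_S^\mathrm{aug})^T$ is invertible, $(\mfA_S^\mathrm{aug})^\dagger=(\mfA_S^\mathrm{aug})^T(\mfA_S^\mathrm{aug}(\mfA_S^\mathrm{aug})^T)^{-1}$ and $\mfA_S^\mathrm{aug}\blbd=\mfv$, i.e. $\mfA_{s_i}^T\blbd=\mfw_{s_i}^*/\norm{\mfw_{s_i}^*}_2$ for each $i\in[k]$; this is exactly the stationarity part of the KKT system \eqref{kkt:general} on the support blocks $S$. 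Primal feasibility $\sum_i\mfA_{s_i}\mfw_{s_i}^*=\mfy$ holds by hypothesis, and for $j\notin S$ the condition \eqref{irrep:general_multi} reads precisely $\norm{\mfA_j^T\blbd}_2<1$, which is the remaining dual feasibility condition. Hence $(\hat\mfW^*,\blbd)$ satisfies \eqref{kkt:general}, so $\hat\mfW^*$ is an optimal solution.

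For uniqueness I would run the same chain of inequalities as in the proof of Proposition~\ref{prop:irrep_general}: given any optimal $\mfW=(\mfw_1,\dots,\mfw_k)$, write $\Delta_j=\mfw_j$ for $j\notin S$ and $\Delta_{s_i}=\mfw_{s_i}-\mfw_{s_i}^*$, so feasibility gives $\sum_i\mfA_{s_i}\Delta_{s_i}+\sum_{j\notin S}\mfA_j\Delta_j=0$. Then the subgradient inequality for $\norm{\cdot}_2$ applied blockwise, together with $\mfw_{s_i}^*/\norm{\mfw_{s_i}^*}_2=\mfA_{s_i}^T\blbd$ and the feasibility identity, yields
\[
\sum_i\norm{\mfw_{s_i}}_2+\sum_{j\notin S}\norm{\mfw_j}_2 \;\ge\; \sum_i\norm{\mfw_{s_i}^*}_2+\sum_{j\notin S}\bigl(1-\norm{\mfA_j^T\blbd}_2\bigr)\norm{\Delta_j}_2 \;\ge\; \sum_i\norm{\mfw_{s_i}^*}_2.
\]
Optimality of $\mfW$ forces equality throughout; since $\norm{\mfA_j^T\blbd}_2<1$ \emph{strictly} for every $j\notin S$, this pins $\Delta_j=0$ for all $j\notin S$, hence $\sum_i\mfA_{s_i}\Delta_{s_i}=0$, and then the full-column-rank fact recorded at the outset forces $\Delta_{s_i}=0$ for all $i$, i.e. $\mfW=\hat\mfW^*$.

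The argument is essentially mechanical, so there is no deep obstacle; the only place that genuinely requires care is the last step. Tightness of the inequality chain a priori only controls the off-support blocks, and one must combine this with the \emph{joint} invertibility hypothesis on $\mfA_S^\mathrm{aug}(\mfA_S^\mathrm{aug})^T$ — equivalently, linear independence of all columns of $\mfA_{s_1},\dots,\mfA_{s_k}$ taken together — to also annihilate the on-support perturbations $\Delta_{s_i}$; mere invertibility of each individual $\mfA_{s_i}^T\mfA_{s_i}$ would not suffice.
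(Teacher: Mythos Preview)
Your proposal is correct and follows essentially the same approach as the paper: construct the dual certificate $\blbd=(\mfA_S^\mathrm{aug})^\dagger\mfv$ to verify the KKT conditions, then run the subgradient chain of inequalities to force $\Delta_j=0$ off the support and invoke the invertibility of $\mfA_S^\mathrm{aug}(\mfA_S^\mathrm{aug})^T$ (equivalently, full column rank of the horizontal concatenation) to kill the on-support perturbations. Your write-up is in fact slightly tidier than the paper's, which records the equality condition $\Delta_i=\gamma_i\,\mfw^*_i/\norm{\mfw^*_i}_2$ from the subgradient step but does not actually use it, relying instead (as you do) on feasibility plus Gram-matrix invertibility to conclude $\Delta_i=0$ directly.
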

\begin{proof}
We first show that $\hat \mfW^*$ is the optimal solution to \eqref{min_nrm:general}. Let 
$$
\blbd=(\mfA_S^\mathrm{aug})^T(\mfA_S^\mathrm{aug}(\mfA_S^\mathrm{aug})^T)^{-1}
    \bmbm{\mfw_{s_1}^*/\norm{\mfw_{s_1}^*}_2\\\vdots\\\mfw_{s_1}^*/\norm{\mfw_{s_k}^*}_2}.
$$
We can examine that $\mfA_{i}^T\blbd=\frac{\mfw^*_i}{\|\mfw^*_i\|_2}$ for $i\in S$. For $j\notin S$, from the irrepresentability condition \eqref{irrep:general_multi}, we have $\norm{\mfA_{j}^T\blbd}_2<1$. Therefore, $(\mfW^*,\blbd)$ satisfies the KKT condition \eqref{kkt:general}. This implies that $\mfW^*$ is optimal to \eqref{min_nrm:general}.

Then, we prove the uniqueness. Suppose that $\mfW=(\mfw_1,\dots,\mfw_k)$ is another optimal solution to \eqref{min_nrm:grelu}. Denote $\Delta_j=\mfw_j-\hat\mfw^*_j$ for $j\in[k]$. Then, we have
\begin{equation}
    \sum_{j\notin S}\mfA_j\Delta_j+\sum_{i\in S}\mfA_{i}\Delta_{i}=0.
\end{equation}
We note that
\begin{equation}
\begin{aligned}
        &\sum_{i\in S}\|\mfw_{i}\|_2+\sum_{j\notin S} \|\mfw_j\|_2\\
    \geq&\sum_{i\in S}\|\hat \mfw^*_{i}\|_2+\sum_{i\in S}\pp{\frac{\hat \mfw^*_{i}}{\|\hat \mfw^*_{i}\|_2}}^T\Delta_{i}+\sum_{j\notin S} \|\mfw_j\|_2\\
    = &\sum_{i\in S}\|\hat \mfw^*_{i}\|_2+\blbd^T\sum_{i\in S}\mfA_{i}\Delta_{i}+\sum_{j\notin S} \|\Delta_j\|_2\\
    =&\sum_{i\in S}\|\hat \mfw^*_{i}\|_2-\blbd^T\sum_{j\notin  S}\mfA_j\Delta_j+\sum_{j\notin S} \|\Delta_j\|_2\\
    \geq &\sum_{i\in S}\|\hat \mfw^*_{i}\|_2+\sum_{j\notin S} \|\Delta_j\|_2\pp{1-\norm{\mfA_j^T\blbd}_2}\\
    \geq &\sum_{i\in S}\|\hat \mfw^*_{i}\|_2.
\end{aligned}
\end{equation}
The equality holds when $\Delta_j=0$ for all $j\notin S$ and $\Delta_{i}=\gamma_i \frac{\mfw^*_{i^*}}{\|\mfw^*_{i^*}\|_2}$ for $i\in S$. Here $\gamma_i\geq 0$ for $i\in S$. As $\Delta_j=0$ for all $j\notin S$, we have
\begin{equation}
    \sum_{i\in S}\mfA_{i}\Delta_{i}=0.
\end{equation}
Note that $\bmbm{\mfA_{s_1}^T\mfA_{s_1}&\dots&\mfA_{s_1}^T\mfA_{s_k}\\\vdots&\ddots&\vdots\\\mfA_{s_k}^T\mfA_{s_1}&\dots&\mfA_{s_k}^T\mfA_{s_k}}$ is invertible. This implies that $\Delta_{i}=0$ for $i\in S$. Hence, $\hat \mfW^*$ is the unique optimal solution to \eqref{min_nrm:grelu}. 

\end{proof}


\subsection{Proof of Proposition \ref{prop:imply}}
\begin{proof}
We first show that the neural isometry condition \eqref{irrep:grelu} implies the recovery of the planted model via \eqref{min_nrm:grelu}.

\begin{proposition}
\label{prop:irrep_relu}
Suppose that the neural isometry condition \eqref{irrep:grelu} holds. Consider $\mfW^*=(\mfw_1^*,\dots,\mfw_p^*)$ such that $\mfw_{i^*}^*=\mfw^*$ while $\mfw_j^*=0$ for $j\neq i^*$. Then, $\mfW^*$ is the unique optimal solution of \eqref{min_nrm:grelu}.
\end{proposition}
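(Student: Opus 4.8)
The plan is to observe that \eqref{min_nrm:grelu} is nothing but an instance of the generic group $\ell_1$-minimization problem \eqref{min_nrm:general}, and then to invoke Proposition \ref{prop:irrep_general} essentially verbatim. Concretely, I would set $\mfA_j := \mfD_j\mfX\in\mbR^{n\times d}$ for each $j\in[p]$, so that the equality constraint $\sum_{j=1}^p\mfD_j\mfX\mfw_j=\mfy$ of \eqref{min_nrm:grelu} matches the constraint $\sum_{j=1}^p\mfA_j\mfw_j=\mfy$ of \eqref{min_nrm:general} with $k=p$ and all block sizes equal to $d$.

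Next I would check that the label vector has the form required by Proposition \ref{prop:irrep_general}, namely $\mfy=\mfA_{i^*}\mfw^*$. Since $\mfD_{i^*}=\diag(\mbI(\mfX\mfw^*\geq0))$, multiplication by $\mfD_{i^*}$ zeroes out exactly the negative coordinates of $\mfX\mfw^*$, hence $\mfD_{i^*}\mfX\mfw^*=(\mfX\mfw^*)_+=\mfy$, i.e. $\mfA_{i^*}\mfw^*=\mfy$. Because each $\mfD_j$ is a symmetric $0$-$1$ diagonal matrix we have $\mfD_j^2=\mfD_j$, so that $\mfA_{i^*}^T\mfA_{i^*}=\mfX^T\mfD_{i^*}\mfX$ and $\mfA_j^T\mfA_{i^*}=\mfX^T\mfD_j\mfD_{i^*}\mfX$. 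Under the standing assumption accompanying \eqref{irrep:grelu} the matrix $\mfX^T\mfD_{i^*}\mfX$ is invertible, so the irrepresentability condition \eqref{irrep:general} is well defined and reads
\[
\norm{\mfX^T\mfD_j\mfD_{i^*}\mfX\pp{\mfX^T\mfD_{i^*}\mfX}^{-1}\hat\mfw}_2<1,\qquad\forall\, j\neq i^*,
\]
which is exactly \eqref{irrep:grelu} with $\hat\mfw=\mfw^*/\|\mfw^*\|_2$.

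All hypotheses of Proposition \ref{prop:irrep_general} are thus in force, and it yields directly that $\mfW^*=(\mfw_1^*,\dots,\mfw_p^*)$ with $\mfw_{i^*}^*=\mfw^*$ and $\mfw_j^*=0$ for $j\neq i^*$ is the unique optimal solution of \eqref{min_nrm:grelu}. I do not expect a real obstacle here beyond bookkeeping: the substantive content — constructing the dual certificate $\blbd=\mfA_{i^*}\pp{\mfA_{i^*}^T\mfA_{i^*}}^{-1}\hat\mfw$, verifying the KKT system \eqref{kkt:general}, and running the strict-inequality chain that forces $\Delta_j=0$ for every $j\neq i^*$ — is already carried out inside the proof of Proposition \ref{prop:irrep_general}. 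The only things worth double-checking are that $\mfA_{i^*}^T\blbd=\hat\mfw$ holds exactly (immediate from the definition of $\blbd$ and invertibility of $\mfX^T\mfD_{i^*}\mfX$) and that taking all block sizes equal to $d$ introduces no degeneracy in that argument (it does not).
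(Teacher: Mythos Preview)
Your proposal is correct and mirrors the paper's own proof: the paper simply writes down the KKT conditions \eqref{kkt:grelu} for \eqref{min_nrm:grelu} and then states that the result is a direct corollary of Proposition \ref{prop:irrep_general}. Your explicit identification $\mfA_j=\mfD_j\mfX$ and the verification that \eqref{irrep:grelu} coincides with \eqref{irrep:general} is exactly the intended reduction, with slightly more bookkeeping spelled out than in the paper.
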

\begin{proof}
The KKT conditions for \eqref{min_nrm:grelu} include
\begin{equation}\label{kkt:grelu}
\begin{aligned}
    &\norm{\mfX^T\mfD_j\blbd^T}\leq 1, &\text{ if } \mfw_j=0,\\
    &\mfX^T\mfD_j\blbd^T = \frac{\mfw_j}{\|\mfw_j\|}, &\text{ if } \mfw_j\neq 0,\\
    &\sum_{j=1}^{p} \mfD_j \mfX \mfw_j=\mfy,
\end{aligned}
\end{equation}
where $\blbd\in\mbR^n$ is the dual variable. As a direct corollary of Proposition \ref{prop:irrep_general}, $\mfW^*$ is the unique optimal solution to \eqref{min_nrm:grelu}.
\end{proof}
Then, we present the proof of Proposition \ref{prop:imply}. As \eqref{min_nrm:grelu} is derived by dropping all inequality constraints in \eqref{min_nrm:relu}, the optimal value of \eqref{min_nrm:relu} is lower bounded by \eqref{min_nrm:grelu}. As $\mfW^*$ is the unique optimal solution to \eqref{min_nrm:grelu}, the optimal value of \eqref{min_nrm:grelu} is $\|\mfw^*\|_2$. On the other hand, $\hat \mfW$ is feasible for \eqref{min_nrm:relu} and it leads to an objective value of $\|\mfw^*\|_2$. This implies that $\hat \mfW$ is an optimal solution to \eqref{min_nrm:relu} and the optimal values of \eqref{min_nrm:relu} and \eqref{min_nrm:grelu} are the same. Suppose that we have another solution $\bar \mfW$ which is optimal to \eqref{min_nrm:relu}. Consider $\breve\mfW=\{\breve \mfw_j|j\in[p]\}$ where $\breve \mfw_j=\bar \mfw_j-\bar \mfw_j'$. Then, $\breve \mfW$ is also optimal to \eqref{min_nrm:grelu}. As $\mfW^*$ is the unique optimal solution to \eqref{min_nrm:grelu}, this implies that $\breve \mfW=\mfW^*$. Then, we note that for $j\neq i^*$,
\begin{equation}
    \|\bar \mfw_j\|_2+\|\bar \mfw_j'\|_2\geq 0=\|\mfw_j^*\|_2.
\end{equation}
The equality holds when $\bar\mfw_j=\bar\mfw_j'=0$.
We also note that $\bar \mfw_{i^*}-\bar \mfw_{i^*}'=\mfw^*$. This implies that
\begin{equation}
    \|\bar \mfw_{i^*}\|_2+\|\bar \mfw_{i^*}'\|_2\geq \|\bar \mfw_{i^*}-\bar \mfw_{i^*}'\|_2=\|\mfw^*\|_2.
\end{equation}
The equality holds if and only if there exists $\alpha\in[0,1]$ such that $\bar \mfw_{i^*}=\alpha \mfw^*$ and $\bar \mfw_{i^*}'=-(1-\alpha) \mfw^*$. If $\alpha=1$, then $\bar\mfW=\mfW^*$. If $\alpha<1$, as $(2\mfD_{i^*}-\mfI_n)\mfX\bar \mfw_{i^*}'\geq 0$ and  $(2\mfD_{i^*}-\mfI_n)\mfX \mfw^*\geq 0$, this implies that $(2\mfD_{i^*}-\mfI_n)\mfX \mfw^*=0$. Therefore, we have $\mfX \mfw^*=0$ and $\mfX^T\mfX\mfw^*=0$. This leads to a contradiction because $\mfw^*\neq 0$ and $\mfX^T\mfX$ is invertible with probability $1$ for $n>d$.
\end{proof}

\subsection{Proof of Proposition \ref{prop:imply_normal}}
\begin{proof}
We first show that the normalized neural isometry condition implies the the recovery of the planted model via \eqref{min_nrm:relu_normal}. 
\begin{proposition}
\label{prop:irrep_grelu_normal}
Suppose that the neural isometry condition \eqref{irrep:grelu_skip} holds. Consider $\mfW^*=(\mfw_0^*,\dots,\mfw_p^*)$ such that $\mfw_{i^*}^*=\frac{\bSigma_{i^*}\mfV_{i^*}\mfw^*}{\|\bSigma_{i^*}\mfV_{i^*}\mfw^*\|_2}$ while $\mfw_{j}^*=0$ for $j\neq i^*$. Then, $\mfW^*$ is the unique optimal solution to \eqref{min_nrm:grelu_normal}.
\end{proposition}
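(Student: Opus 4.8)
The plan is to obtain this as an immediate corollary of the general group $\ell_1$ recovery result Proposition~\ref{prop:irrep_general}, by identifying \eqref{min_nrm:grelu_normal} with the generic program \eqref{min_nrm:general} under the dictionary $\mfA_j \leftarrow \mfU_j$. The first step is to check that the label vector is an exact single-block response: from the identity recorded just before \eqref{irrep:grelu_normal}, $\mfU_{i^*}\tilde\mfw^* = \frac{(\mfX\mfw^*)_+}{\|(\mfX\mfw^*)_+\|_2} = \mfy$, so with $\mfw_{i^*}^* := \tilde\mfw^*$ we have $\mfy = \mfU_{i^*}\mfw_{i^*}^*$, exactly the form required in Proposition~\ref{prop:irrep_general}.

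Next I would verify the two hypotheses of that proposition. Since $\mfU_{i^*}$ is a compact-SVD factor of $\mfD_{i^*}\mfX$, it has orthonormal columns, so $\mfA_{i^*}^T\mfA_{i^*} = \mfU_{i^*}^T\mfU_{i^*} = \mfI$ is trivially invertible, and $\|\tilde\mfw^*\|_2 = 1$ by construction. Substituting these into the generic irrepresentability condition \eqref{irrep:general}, namely $\norm{\mfA_j^T\mfA_{i^*}(\mfA_{i^*}^T\mfA_{i^*})^{-1}\tilde\mfw^*/\|\tilde\mfw^*\|_2}_2 < 1$ for all $j\neq i^*$, the matrix inverse drops out and the condition collapses to $\norm{\mfU_j^T\mfU_{i^*}\tilde\mfw^*}_2 < 1$ for all $j\neq i^*$, which is precisely the normalized neural isometry condition \eqref{irrep:grelu_normal}. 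Proposition~\ref{prop:irrep_general} then yields that $\mfW^*$, with $\mfw_{i^*}^* = \tilde\mfw^*$ and $\mfw_j^* = 0$ for $j\neq i^*$, is the unique optimal solution of \eqref{min_nrm:grelu_normal}.

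For the reader's convenience I would also exhibit the dual certificate explicitly: take $\blbd = \mfy$. Then $\mfU_{i^*}^T\blbd = \mfU_{i^*}^T\mfU_{i^*}\tilde\mfw^* = \tilde\mfw^* = \tilde\mfw^*/\|\tilde\mfw^*\|_2$, so the active-block KKT equation for \eqref{min_nrm:grelu_normal} holds, while for $j\neq i^*$ we have $\norm{\mfU_j^T\blbd}_2 = \norm{\mfU_j^T\mfU_{i^*}\tilde\mfw^*}_2 < 1$ by \eqref{irrep:grelu_normal}, so the inactive-block KKT inequality holds strictly. Uniqueness then follows from the strict inequality together with the full column rank of $\mfU_{i^*}$ (which forces $\mfU_{i^*}\Delta_{i^*}=0 \Rightarrow \Delta_{i^*}=0$), exactly as in the proof of Proposition~\ref{prop:irrep_general}.

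I do not anticipate any real difficulty: the only content beyond invoking Proposition~\ref{prop:irrep_general} is the bookkeeping identity $\mfy = \mfU_{i^*}\tilde\mfw^*$ (already established) and the observation that $\mfU_{i^*}^T\mfU_{i^*} = \mfI$. Indeed, the point of the normalization layer in this architecture is precisely that it whitens each block $\mfD_j\mfX$ into an isometric factor $\mfU_j$, which is what makes the subspace-inverse present in the plain-ReLU condition \eqref{irrep:grelu} vanish here; so the substantive point is not a calculation but the structural remark that recovery for this architecture is governed by the Gram structure of the orthonormal factors $\{\mfU_j\}$ rather than by the conditioning of $\{\mfD_j\mfX\}$.
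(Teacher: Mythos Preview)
Your proposal is correct and takes essentially the same approach as the paper: both verify $\mfy=\mfU_{i^*}\mfw_{i^*}^*$ via the SVD identity, then invoke Proposition~\ref{prop:irrep_general} after specializing \eqref{irrep:general} with $\mfA_j=\mfU_j$ and $\mfU_{i^*}^T\mfU_{i^*}=\mfI$ so that the condition collapses to \eqref{irrep:grelu_normal}. Your write-up is in fact more explicit than the paper's (which simply states the KKT system and cites Proposition~\ref{prop:irrep_general}); the added dual-certificate paragraph with $\blbd=\mfy$ is a nice touch but not required.
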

\begin{proof}
We note that 
$$
\mfy=\frac{(\mfX\mfw^*)_+}{\|(\mfX\mfw^*)_+\|_2}=\frac{\mfD_{i^*}\mfX\mfw^*}{\norm{\mfD_{i^*}\mfX\mfw^*}_2}=\frac{\mfU_{i^*}\bSigma_{i^*}\mfV_{i^*}\mfw^*}{\norm{\mfU_{i^*}\bSigma_{i^*}\mfV_{i^*}\mfw^*}_2}=\mfU_{i^*}\mfw_{i^*}^*.
$$
The KKT conditions for \eqref{min_nrm:grelu_normal} include
\begin{equation}\label{kkt:grelu_normal}
\begin{aligned}
    &\norm{\mfU_j\blbd^T}\leq 1, &\text{ if } \mfw_j=0,\\
    &\mfU_j^T\blbd^T = \frac{\mfw_j}{\|\mfw_j\|}, &\text{ if } \mfw_j\neq 0,\\
    &\sum_{j=1}^{p} \mfU_j \mfw_{j}=\mfy,
\end{aligned}
\end{equation}
As a direct corollary of Proposition \ref{prop:irrep_general}, $\mfW^*$ is the unique optimal solution to \eqref{min_nrm:grelu_normal}.
\end{proof}
We then present the proof of Proposition \ref{prop:imply_normal}. As \eqref{min_nrm:grelu_normal} is derived by dropping all inequality constraints in \eqref{min_nrm:relu_normal}, the optimal value of \eqref{min_nrm:relu_normal} is lower bounded by \eqref{min_nrm:grelu_normal}. As $\mfW^*$ is the unique optimal solution to \eqref{min_nrm:grelu_normal}, the optimal value of \eqref{min_nrm:grelu_normal} is $1$. On the other hand, $\hat \mfW$ is feasible for \eqref{min_nrm:relu} and it leads to an objective value of $1$. This implies that $\hat \mfW$ is an optimal solution to \eqref{min_nrm:relu_normal} and the optimal values of \eqref{min_nrm:relu_normal} and \eqref{min_nrm:grelu_normal} are the same. Suppose that we have another solution $\bar \mfW$ which is optimal to \eqref{min_nrm:relu_normal}. Consider $\breve\mfW=\{\bar \mfw_j|j\in[p]\}$ where $\breve \mfw_j=\bar \mfw_j-\bar \mfw_j'$. Then, $\breve \mfW$ is also optimal to \eqref{min_nrm:grelu_normal}. As $\mfW^*$ is the unique optimal solution to \eqref{min_nrm:grelu_normal}, this implies that $\breve \mfW=\mfW^*$. Then, we note that for $j\neq i^*$,
\begin{equation}
    \|\bar \mfw_j\|_2+\|\bar \mfw_j'\|_2\geq 0=\|\mfw_j^*\|_2.
\end{equation}
The equality holds when $\bar\mfw_j=\bar\mfw_j'=0$.
We also note that
\begin{equation}
    \|\bar \mfw_{i^*}\|_2+\|\bar \mfw_{i^*}'\|_2\geq \|\bar \mfw_{i^*}-\bar \mfw_{i^*}'\|_2=\|\mfw^*_{i^*}\|_2.
\end{equation}
The equality holds if and only if there exists $\alpha\in[0,1]$ such that $\bar \mfw_{i^*}=\alpha \mfw^*$ and $\bar \mfw_{i^*}'=-(1-\alpha) \mfw^*_{i^*}$. If $\alpha=1$, then $\bar\mfW=\bar \mfW$. If $\alpha<1$, as $(2\mfD_{i^*}-\mfI_n)\mfX\mfV_i^T\bSigma_i^{-1}\bar \mfw_{i^*}'\geq 0$ and  $(2\mfD_{i^*}-\mfI_n) \mfX\mfV_i^T\bSigma_i^{-1}\mfw^*_{i^*}\geq 0$, this implies that $(2\mfD_{i^*}-\mfI_n)\mfX\mfV_i^T\bSigma_i^{-1} \mfw^*_{i^*}=0$. Therefore, we have $\mfX\mfV_i^T\bSigma_i^{-1} \mfw^*=0$ and $\mfX^T\mfX\mfV_i^T\bSigma_i^{-1}\mfw^*_{i^*}=0$. This leads to a contradiction because $\mfV_i^T\bSigma_i^{-1}\mfw^*\neq 0$ and $\mfX^T\mfX$ is invertible with probability $1$ for $n>d$.
\end{proof}

\subsection{Proof of Proposition \ref{prop:imply_multi}}
\begin{proof}
We first show that the neural isometry condition \eqref{eq:irr_cond} implies the recovery of the planted model by solving \eqref{min_nrm:grelu}. 
\begin{proposition}\label{prop:irrep_multi}
Consider $\tilde \mfW^*=(\tilde \mfw_1^*,\dots,\tilde \mfw_p^*)$ satisfies that $\tilde \mfw_{s_i}^*=r_i^*\mfw_i^*$ for $i\in \{1,\dots,k\}$ and $\tilde \mfw_j^*=0$ for $j\notin S$. Suppose that the neural isometry condition \eqref{eq:irr_cond} is satisfied. Then, $\tilde \mfW^*$ is the unique optimal solution to \eqref{min_nrm:grelu_normal}.
\end{proposition}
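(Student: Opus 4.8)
The plan is to derive Proposition~\ref{prop:irrep_multi} directly from the generic group $\ell_1$-recovery result Proposition~\ref{prop:irrep_general_group} by a change of notation, rather than re-deriving a dual certificate from scratch. First I would set $\mfA_j := \mfD_j\mfX \in \mbR^{n\times d}$ for $j\in[p]$, so that the feasibility constraint $\sum_{j=1}^p \mfD_j\mfX\mfw_j = \mfy$ of \eqref{min_nrm:grelu} becomes $\sum_{j=1}^p \mfA_j\mfw_j = \mfy$, which is exactly the generic problem \eqref{min_nrm:general} with the block count $k$ there replaced by $p$ and every block dimension equal to $d$.

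Next I would cast the label vector into the form required by \eqref{min_nrm:general}. Since $\mfD_{s_i}=\diag(\mbI(\mfX\mfw_i^*\geq0))$ we have $(\mfX\mfw_i^*)_+ = \mfD_{s_i}\mfX\mfw_i^* = \mfA_{s_i}\mfw_i^*$, hence
\begin{equation*}
\mfy \;=\; \sum_{i=1}^k (\mfX\mfw_i^*)_+\,r_i^* \;=\; \sum_{i=1}^k \mfA_{s_i}\big(r_i^*\mfw_i^*\big),
\end{equation*}
so the active index set is $S=\{s_1,\dots,s_k\}$ and the ground-truth blocks are $\mfw_{s_i}^\star := r_i^*\mfw_i^*$. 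Because $r_i^*\in\{-1,+1\}$, we get $\|\mfw_{s_i}^\star\|_2=\|\mfw_i^*\|_2$ and $\mfw_{s_i}^\star/\|\mfw_{s_i}^\star\|_2 = r_i^*\mfw_i^*/\|\mfw_i^*\|_2 = \hat\mfw_i$. Using $\mfD_{s_i}^T=\mfD_{s_i}$ we also have $\mfA_{s_i}^T = \mfX^T\mfD_{s_i}$, so $\mfA_S^\mathrm{aug} = \bmbm{\mfA_{s_1}^T\\\vdots\\\mfA_{s_k}^T} = \bmbm{\mfX^T\mfD_{s_1}\\\vdots\\\mfX^T\mfD_{s_k}}$ and $\mfA_j^T = \mfX^T\mfD_j$. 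Consequently the pseudoinverse form of the generic irrepresentability condition \eqref{irrep:general_multi} reads literally
\begin{equation*}
\norm{\mfX^T\mfD_j\,(\mfA_S^\mathrm{aug})^\dagger\bmbm{\hat\mfw_1\\\vdots\\\hat\mfw_k}}_2 < 1,\qquad \forall\,j\in[p]/S,
\end{equation*}
which is exactly \eqref{eq:irr_cond}. Applying Proposition~\ref{prop:irrep_general_group} then yields that $\tilde\mfW^*$, defined by $\tilde\mfw_{s_i}^* = r_i^*\mfw_i^*$ and $\tilde\mfw_j^* = 0$ for $j\notin S$, is the unique optimal solution to \eqref{min_nrm:grelu}, which is the claim.

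The only point that warrants care is the invertibility hypothesis underlying Proposition~\ref{prop:irrep_general_group}, namely that $\mfA_S^\mathrm{aug}(\mfA_S^\mathrm{aug})^T$ be invertible --- equivalently that the stacked matrix $\bmbm{\mfX^T\mfD_{s_1}\\\vdots\\\mfX^T\mfD_{s_k}}$ appearing in \eqref{eq:irr_cond} have full row rank --- since this is precisely what legitimizes writing $(\mfA_S^\mathrm{aug})^\dagger = (\mfA_S^\mathrm{aug})^T(\mfA_S^\mathrm{aug}(\mfA_S^\mathrm{aug})^T)^{-1}$ and thus passing between the two displayed forms of the condition. I would fold this into the standing hypothesis, exactly as the single-neuron statements do with the invertibility of $\mfX^T\mfD_{i^*}\mfX$. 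Beyond this bookkeeping I do not anticipate a genuine obstacle: all the analytic content --- the construction of the dual certificate $\blbd$ from the KKT system \eqref{kkt:general} and the strict-inequality argument establishing uniqueness --- has already been carried out inside Proposition~\ref{prop:irrep_general_group}, so the remaining work is purely a matching of the notation of \eqref{min_nrm:grelu} and \eqref{eq:irr_cond} to that of \eqref{min_nrm:general} and \eqref{irrep:general_multi}.
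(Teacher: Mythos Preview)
Your proposal is correct and matches the paper's own proof almost verbatim: the paper also writes down the KKT conditions for \eqref{min_nrm:grelu} and then states that the result follows ``as a direct corollary of Proposition~\ref{prop:irrep_general_group},'' which is exactly the reduction $\mfA_j:=\mfD_j\mfX$ you spell out. Your observation that the statement's reference to \eqref{min_nrm:grelu_normal} is a typo for \eqref{min_nrm:grelu} is also right, and your remark about folding the full-row-rank assumption on $\mfA_S^{\mathrm{aug}}$ into the standing hypotheses mirrors how the paper handles this implicitly.
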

\begin{proof}
The KKT conditions for \eqref{min_nrm:grelu} consist of
\begin{equation}
\begin{aligned}
    &\norm{\mfD_j\mfX\blbd^T}\leq 1, &\text{ if } \tilde\mfw_j=0,\\
    &\mfD_j\mfX^T\blbd^T = \frac{\tilde \mfw_j}{\|\tilde \mfw_j\|}, &\text{ if } \tilde\mfw_j\neq 0,\\
    &\sum_{j=1}^{p} \mfD_j\mfX \tilde \mfw_{j}=\mfy,
\end{aligned}
\end{equation}
As a direct corollary of Proposition \ref{prop:irrep_general_group}, $\tilde \mfW^*$ is the unique optimal solution to \eqref{min_nrm:grelu}.
\end{proof}
We then present the proof of Proposition \ref{prop:imply_multi}. From Proposition \ref{prop:irrep_multi}, we note that $\tilde \mfW^*$ is the unique optimal solution to \eqref{min_nrm:grelu_normal}. As \eqref{min_nrm:grelu} is derived by dropping all inequality constraints in \eqref{min_nrm:relu}, the optimal value of \eqref{min_nrm:relu} is lower bounded by \eqref{min_nrm:grelu}. As $\tilde \mfW^*$ is the unique optimal solution to \eqref{min_nrm:grelu}, the optimal value of \eqref{min_nrm:grelu} is $\sum_{i=1}^k\|\mfw^*_i\|_2$. On the other hand, $\hat \mfW$ is feasible for \eqref{min_nrm:relu} and it leads to an objective value of $\sum_{i=1}^k\|\mfw^*_i\|_2$. This implies that $\hat \mfW$ is an optimal solution to \eqref{min_nrm:relu} and the optimal values of \eqref{min_nrm:relu} and \eqref{min_nrm:grelu} are the same. 

Suppose that we have another solution $\bar \mfW$ which is optimal to \eqref{min_nrm:relu}. Consider $\breve\mfW=\{\breve \mfw_j|j\in[p]\}$ where $\breve \mfw_j=\bar \mfw_j-\bar \mfw_j'$. Then, $\bar \mfW$ is also optimal to \eqref{min_nrm:grelu}. As $\mfW^*$ is the unique optimal solution to \eqref{min_nrm:grelu}, this implies that $\breve \mfW=\mfW^*$, or equivalently, $\bar \mfw_j-\bar \mfw_j'=\tilde \mfw_j$ for $j\in[p]$. Then, we note that for $j\notin S$
\begin{equation}
    \|\bar \mfw_j\|_2+\|\bar \mfw_j'\|_2\geq 0=\|\mfw_j^*\|_2.
\end{equation}
The equality holds when $\bar\mfw_j=\bar\mfw_j'=0$.
For $i\in [k]$, we have $\bar \mfw_{s_i}-\bar \mfw_{s_i}'=\tilde \mfw_{s_i}=r_i^*\mfw^*_i$. This implies that 
\begin{equation}
    \|\bar \mfw_{s_i}\|_2+\|\bar \mfw_{i}'\|_2\geq \|\bar \mfw_{i}-\bar \mfw_{i}'\|_2=\|\mfw^*_i\|_2.
\end{equation}
The equality holds if and only if there exists $\alpha\in[0,1]$ such that $\bar \mfw_{s_i}=\alpha r_i^*\mfw^*_i$ and $\bar \mfw_{s_i}'=-(1-\alpha) r_i^*\mfw^*_i$. If $0<\alpha<1$, as $(2\mfD_{s_i}-\mfI_n)\mfX\bar \mfw_{s_i}'\geq 0$ and  $(2\mfD_{s_i}-\mfI_n)\mfX\bar \mfw_{s_i}\geq 0$, this implies that $(2\mfD_{s_i}-\mfI_n)\mfX \mfw^*_i=0$. Therefore, we have $\mfX \mfw^*_i=0$ and $\mfX^T\mfX\mfw^*_i=0$. This leads to a contradiction because $\mfw^*_i\neq 0$ and $\mfX^T\mfX$ is invertible with probability $1$ for $n>d$. If $r_i^*=1$, then $\alpha=1$. Otherwise, we have $\alpha=0$ and this leads to $\bar \mfw_{s_i}'=-\mfw^*_i$, which is contradictory to $(2\mfD_{s_i}-\mfI_n)\mfX\bar \mfw_{s_i}'\geq 0$. If $r_i^*=-1$, similarly, we have $\alpha=0$. Otherwise, it follows that $\alpha=1$ and this leads to $\bar \mfw_{s_i}=-\mfw^*_i$, which is contradictory to $(2\mfD_{s_i}-\mfI_n)\mfX\bar \mfw_{s_i}\geq 0$. 
\end{proof}

\subsection{Proof of Proposition \ref{prop:imply_multi_normal}}
\begin{proof}
We first show that the neural isometry condition \eqref{eq:irr_cond_normal} implies the recovery of the planted model by solving \eqref{min_nrm:grelu_normal}. 
\begin{proposition}\label{prop:irrep_multi_normal}
Consider $\tilde \mfW^*=(\tilde \mfw_1^*,\dots,\tilde \mfw_p^*)$ satisfies that $\tilde \mfw_i^*=r_i^*\frac{\bSigma_{s_i}\mfV_{s_i}\mfw_i^*}{\|\bSigma_{s_i}\mfV_{s_i}\mfw_i^*\|_2}$ for $i\in S$ and $\tilde \mfw_j^*=0$ for $j\notin S$.  Suppose that the neural isometry condition \eqref{eq:irr_cond_normal} is satisfied. Then, $\tilde \mfW^*$ is the unique optimal solution to \eqref{min_nrm:grelu_normal}.
\end{proposition}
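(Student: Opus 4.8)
\emph{Proof sketch.} The plan is to recognize \eqref{min_nrm:grelu_normal} as a special case of the generic group $\ell_1$-minimization problem \eqref{min_nrm:general} and then invoke Proposition~\ref{prop:irrep_general_group}, mirroring the proof of Proposition~\ref{prop:irrep_multi}. First I would take $\mfA_j:=\mfU_j$ for $j\in[p]$; each block then has orthonormal columns, $\mfU_j^T\mfU_j=\mfI_{r_j}$ with $r_j=\mathrm{rank}(\mfD_j\mfX)$. The first thing to check is that the label vector matches the generic observation model: from the compact SVD $\mfD_{s_i}\mfX=\mfU_{s_i}\bSigma_{s_i}\mfV_{s_i}$ we get $(\mfX\mfw_i^*)_+=\mfD_{s_i}\mfX\mfw_i^*=\mfU_{s_i}\bSigma_{s_i}\mfV_{s_i}\mfw_i^*$, hence
\[
r_i^*\,\frac{(\mfX\mfw_i^*)_+}{\|(\mfX\mfw_i^*)_+\|_2}=\mfU_{s_i}\left(r_i^*\,\frac{\bSigma_{s_i}\mfV_{s_i}\mfw_i^*}{\|\bSigma_{s_i}\mfV_{s_i}\mfw_i^*\|_2}\right)=\mfU_{s_i}\tilde\mfw_i,
\]
so $\mfy=\sum_{i=1}^{k}\mfU_{s_i}\tilde\mfw_i$; in particular $\tilde\mfW^*$ is feasible for \eqref{min_nrm:grelu_normal}, and $\|\tilde\mfw_i\|_2=1$ by construction.

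Next I would match the certificates. The KKT system of \eqref{min_nrm:grelu_normal} is exactly \eqref{kkt:general} with $\mfA_j=\mfU_j$, and, writing $\mfU_S^{\mathrm{aug}}$ for the vertical stack of $\mfU_{s_1}^T,\dots,\mfU_{s_k}^T$ and $\tilde\mfw$ for the stack of $\tilde\mfw_1,\dots,\tilde\mfw_k$, the generic irrepresentability condition \eqref{irrep:general_multi} at $\mfA_j=\mfU_j$ reads $\|\mfU_j^T(\mfU_S^{\mathrm{aug}})^\dagger\tilde\mfw\|_2<1$ for all $j\notin S$, because the normalization $\tilde\mfw_i/\|\tilde\mfw_i\|_2$ appearing there is redundant here. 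This is precisely the $\mathrm{NNIC}\mydash k$ of \eqref{eq:irr_cond_normal}. Hence, with the dual certificate $\blbd=(\mfU_S^{\mathrm{aug}})^\dagger\tilde\mfw$ (so that $\mfU_{s_i}^T\blbd=\tilde\mfw_i/\|\tilde\mfw_i\|_2$ for $i\in[k]$ and $\|\mfU_j^T\blbd\|_2<1$ for $j\notin S$), Proposition~\ref{prop:irrep_general_group} applies verbatim and yields that $\tilde\mfW^*$ is the unique optimal solution of \eqref{min_nrm:grelu_normal}.

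This reduction is essentially bookkeeping, and no computation beyond those already carried out for Proposition~\ref{prop:irrep_multi} is needed; the only genuinely new checks are the SVD identity above and the observation that the built-in unit normalization of $\tilde\mfw_i$ makes it a legitimate instance of \eqref{irrep:general_multi}. The one point that demands care --- and the nearest thing to an obstacle --- is the well-posedness of $(\mfU_S^{\mathrm{aug}})^\dagger$: Proposition~\ref{prop:irrep_general_group} is stated under the hypothesis that the block Gram matrix with blocks $\mfU_{s_i}^T\mfU_{s_j}$ is invertible, which is exactly what identifies $(\mfU_S^{\mathrm{aug}})^\dagger$ with $(\mfU_S^{\mathrm{aug}})^T\big(\mfU_S^{\mathrm{aug}}(\mfU_S^{\mathrm{aug}})^T\big)^{-1}$ and is implicit in \eqref{eq:irr_cond_normal}. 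I would carry this hypothesis along explicitly, noting that it trivializes to the identity (hence always holds) when the $\mfD_{s_i}$ satisfy $\mfD_{s_i}\mfD_{s_j}=0$ for $i\neq j$; in the overlapping case one keeps the Moore--Penrose pseudoinverse throughout and uses that $\tilde\mfw$ lies in the row space of $\mfU_S^{\mathrm{aug}}$ by the construction of $\mfy$, so the certificate identity $\mfU_{s_i}^T\blbd=\tilde\mfw_i/\|\tilde\mfw_i\|_2$ still holds.
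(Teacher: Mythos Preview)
Your proposal is correct and follows essentially the same route as the paper: identify \eqref{min_nrm:grelu_normal} with the generic group $\ell_1$ problem by setting $\mfA_j=\mfU_j$, verify via the compact SVD that $\mfy=\sum_i \mfU_{s_i}\tilde\mfw_i$, observe that \eqref{eq:irr_cond_normal} is precisely \eqref{irrep:general_multi} in this instance (the unit normalization being automatic), and then invoke Proposition~\ref{prop:irrep_general_group}. Your additional care about the invertibility of the block Gram matrix underlying $(\mfU_S^{\mathrm{aug}})^\dagger$ is a welcome clarification that the paper leaves implicit.
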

\begin{proof}
We note that for $i\in[k]$, we have
$$
\frac{(\mfX\mfw_i^*)_+}{\|(\mfX\mfw_i^*)_+\|_2}=\frac{\mfD_{s_i}\mfX\mfw_i^*}{\norm{\mfD_{s_i}\mfX\mfw_i^*}_2}=\frac{\mfU_{s_i}\bSigma_{s_i}\mfV_{s_i}\mfw^*_i}{\norm{\mfU_{s_i}\bSigma_{s_i}\mfV_{s_i}\mfw^*_i}_2}=\mfU_{s_i}r_i^*\tilde \mfw_{i}^*.
$$
This implies that 
$$
\mfy=\sum_{i=1}^kr_i^*\frac{(\mfX\mfw_i^*)_+}{\|(\mfX\mfw_i^*)_+\|_2}=\sum_{i=1}^k\mfU_{s_i}\mfw_{i}^*\tilde \mfw_{i}^*.
$$
The KKT conditions for \eqref{min_nrm:grelu_normal} consist of
\begin{equation}
\begin{aligned}
    &\norm{\mfU_j\blbd^T}\leq 1, &\text{ if } \tilde \mfw_j=0,\\
    &\mfU_j^T\blbd^T = \frac{\tilde \mfw_j}{\|\tilde \mfw_j\|}, &\text{ if } \tilde \mfw_j\neq 0,\\
    &\sum_{j=1}^{p} \mfU_j \tilde \mfw_{j}=\mfy,
\end{aligned}
\end{equation}
As a direct corollary of Proposition \ref{prop:irrep_general_group}, $\tilde \mfW^*$ is the unique optimal solution to \eqref{min_nrm:grelu_normal}.
\end{proof}
We then present the proof of Proposition \ref{prop:imply_multi}. From Proposition \ref{prop:irrep_general_group}, $\tilde \mfW^*$ is the unique optimal solution to \eqref{min_nrm:grelu_normal}. Because \eqref{min_nrm:grelu_normal} is derived by dropping all inequality constraints in \eqref{min_nrm:relu_normal}, the optimal value of \eqref{min_nrm:relu_normal} is lower bounded by \eqref{min_nrm:grelu_normal}. As $\tilde \mfW^*$ is the unique optimal solution to \eqref{min_nrm:grelu_normal}, the optimal value of \eqref{min_nrm:grelu_normal} is $k$. On the other hand, $\hat \mfW$ is feasible for \eqref{min_nrm:relu_normal} and it leads to an objective value of $k$. This implies that $\hat \mfW$ is an optimal solution to \eqref{min_nrm:relu_normal} and the optimal values of \eqref{min_nrm:relu_normal} and \eqref{min_nrm:grelu_normal} are the same. 

Suppose that we have another solution $\bar \mfW$ which is optimal to \eqref{min_nrm:relu}. Consider $\breve\mfW=\{\breve \mfw_j|j\in[p]\}$ where $\breve \mfw_j=\bar \mfw_j-\bar \mfw_j'$. Then, $\bar \mfW$ is also optimal to \eqref{min_nrm:grelu}. As $\mfW^*$ is the unique optimal solution to \eqref{min_nrm:grelu}, this implies that $\breve \mfW=\mfW^*$, or equivalently, $\bar \mfw_j-\bar \mfw_j'=\tilde \mfw_j$ for $j\in[p]$. Then, we note that for $j\notin S$
\begin{equation}
    \|\bar \mfw_j\|_2+\|\bar \mfw_j'\|_2\geq 0=\|\mfw_j^*\|_2.
\end{equation}
The equality holds when $\bar\mfw_j=\bar\mfw_j'=0$.
For $i\in [k]$, we have $\bar \mfw_{s_i}-\bar \mfw_{s_i}'=\tilde \mfw_{s_i}=r_i^*\frac{\bSigma_i\mfV_i\mfw_i^*}{\|\bSigma_i\mfV_i\mfw_i^*\|_2}$. This implies that 
\begin{equation}
    \|\bar \mfw_{s_i}\|_2+\|\bar \mfw_{i}'\|_2\geq \|\bar \mfw_{i}-\bar \mfw_{i}'\|_2=1.
\end{equation}
The equality holds if and only if there exists $\alpha\in[0,1]$ such that $\bar \mfw_{s_i}=\alpha r_i^*\frac{\bSigma_{s_i}\mfV_{s_i}\mfw_i^*}{\|\bSigma_{s_i}\mfV_{s_i}\mfw_i^*\|_2}$ and $\bar \mfw_{s_i}'=-(1-\alpha) r_i^*\frac{\bSigma_{s_i}\mfV_{s_i}\mfw_i^*}{\|\bSigma_{s_i}\mfV_{s_i}\mfw_i^*\|_2}$. If $0<\alpha<1$, as $(2\mfD_{s_i}-\mfI_n)\mfX\mfV_{s_i}^T\bSigma_{s_i}^{-1}\bar \mfw_{s_i}'\geq 0$ and  $(2\mfD_{s_i}-\mfI_n)\mfX\bar \mfX\mfV_{s_i}^T\bSigma_{s_i}^{-1}\mfw_{s_i}\geq 0$, this implies that $(2\mfD_{s_i}-\mfI_n)\mfX\mfV_{s_i}^T\bSigma_{s_i}^{-1} \mfw^*_i=0$. Therefore, we have $\mfX \mfV_{s_i}^T\bSigma_{s_i}^{-1} \mfw^*_i=0$ and $\mfX^T\mfX\mfV_{s_i}^T\bSigma_{s_i}^{-1} \mfw^*_i=0$. This leads to a contradiction because $\mfV_{s_i}^T\bSigma_{s_i}^{-1} \mfw^*_i\neq 0$ and $\mfX^T\mfX$ is invertible with probability $1$ for $n>d$. If $r_i^*=1$, then we have $\alpha=1$. Otherwise, we have $\alpha=0$ and this leads to $\bar \mfw_{s_i}'=-\frac{\bSigma_{s_i}\mfV_{s_i}\mfw_i^*}{\|\bSigma_{s_i}\mfV_{s_i}\mfw_i^*\|_2}$, which is contradictory to $(2\mfD_{s_i}-\mfI_n)\mfX\mfV_{s_i}^T\bSigma_{s_i}^{-1}\bar \mfw_{s_i}'\geq 0$. If $r_i^*=-1$, similarly, we have $\alpha=0$. Otherwise, we have $\alpha=1$ and this leads to $\bar \mfw_{s_i}=-\frac{\bSigma_{s_i}\mfV_{s_i}\mfw_i^*}{\|\bSigma_{s_i}\mfV_{s_i}\mfw_i^*\|_2}$, which is contradictory to $(2\mfD_{s_i}-\mfI_n)\mfX\mfV_{s_i}^T\bSigma_{s_i}^{-1}\bar \mfw_{s_i}\geq 0$. 
\end{proof}

\section{Proofs in Section \ref{sec:relu_skip}}
\subsection{Proof of Lemma \ref{lem:kinematic}}
\begin{proof}
We first illustrate Theorem 1 in \citep{amelunxen2014living} as follows:
\begin{theorem}
Fix a tolerance $\eta\in(0,1)$. Let $C$ and $K$ be convex cones in $\mbR^n$. Draw a random orthogonal basis $\mfW\in\mbR^{n\times n}$. Then, 
\begin{equation}
    P(K\cap\mfW C \neq \{0\})\begin{cases}
    \begin{aligned}
            &\leq \eta&\delta (C)+\delta (K)\leq n-a_{\eta}\sqrt{n},\\
        &\geq 1-\eta&\delta (C)+\delta (K)\geq n+a_{\eta}\sqrt{n}.\\
    \end{aligned}
    \end{cases}
\end{equation}
where $a_\eta=8\sqrt{\log(4/\eta)}$. 
\end{theorem}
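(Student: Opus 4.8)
The plan is to derive the quoted approximate kinematic formula from the two pillars of \emph{conic integral geometry}: an exact kinematic identity and a sharp concentration property of the conic intrinsic volumes. To each convex cone $C\subseteq\mbR^n$ one attaches its conic intrinsic volumes $v_0(C),\dots,v_n(C)$, a nonnegative sequence summing to one that is read off from the metric projection: $v_k(C)$ is the probability that $\Pi_C(\mfg)$ falls in the relative interior of a $k$-dimensional face of $C$. The statistical dimension in the lemma is precisely the mean of this distribution, $\delta(C)=\sum_{k=0}^n k\,v_k(C)=\mbE[\|\Pi_C(\mfg)\|_2^2]$. First I would invoke the exact conic kinematic formula, which expresses the intersection probability as a bilinear form $\sum_{i,j}\kappa_{i,j}\,v_i(C)v_j(K)$ whose coefficients $\kappa_{i,j}$ vanish unless $i+j>n$ and, up to a fixed parity pattern, equal $2$. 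Consequently, up to this parity bookkeeping, the probability $P(C\cap\mfW K\neq\{0\})$ behaves like $P(V_C+V_K>n)$, where $V_C$ and $V_K$ are independent integer variables distributed as $(v_k(C))_k$ and $(v_k(K))_k$.

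The heart of the argument is a concentration inequality for the intrinsic-volume distribution: $V_C$ concentrates around its mean $\delta(C)$ with sub-gamma tails,
\begin{equation*}
P\big(|V_C-\delta(C)|\geq\lambda\big)\leq 2\exp\!\left(-\frac{\lambda^2/4}{\min\{\delta(C),\,n-\delta(C)\}+\lambda}\right),\qquad \lambda\geq 0.
\end{equation*}
I would establish this by controlling the moment generating function $\mbE[e^{\theta V_C}]$. The generating polynomial $\sum_k v_k(C)x^k$ admits a clean analytic description via the master Steiner formula for cones, which realizes it as a Gaussian expectation built from the projections onto $C$ and its polar $C^\circ$. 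Differentiating this representation recovers the mean $\delta(C)$ and bounds the cumulant generating function, producing a variance proxy of order $\min\{\delta(C),\,n-\delta(C)\}$ and hence the displayed Bernstein-type tail.

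With these tails in hand, the two regimes follow from a standard convolution argument. Since $V_C,V_K$ are independent and each concentrates, so does $V_C+V_K$ around $\delta(C)+\delta(K)$. If $\delta(C)+\delta(K)\leq n-a_\eta\sqrt n$, take the deviation $\lambda=\tfrac12 a_\eta\sqrt n$ in each cone; then $V_C+V_K>n$ forces at least one of $V_C-\delta(C),\,V_K-\delta(K)$ to exceed $\lambda$, so a union bound over the two cones gives $P(V_C+V_K>n)\leq\eta$, and the exact formula yields $P(C\cap\mfW K\neq\{0\})\leq\eta$. The symmetric choice bounds $P(V_C+V_K\leq n)$ and handles $\delta(C)+\delta(K)\geq n+a_\eta\sqrt n$. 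The explicit value $a_\eta=8\sqrt{\log(4/\eta)}$ is exactly what inverting the sub-gamma tail produces: with $\lambda\asymp\sqrt{n\log(1/\eta)}$ and denominator of order $n$, the exponent is $\asymp -\log(4/\eta)$, and tracking the numerical factors through the two-sided, two-cone union bound gives the stated constant.

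The main obstacle is the concentration of intrinsic volumes in the second step; the exact kinematic identity is classical and the final convolution/union-bound is routine once the tails are available. The difficulty is that the intrinsic-volume sequence is defined only implicitly through the face structure of $C$, leaving no direct combinatorial handle on its tails. The idea that unlocks it is the master Steiner formula, which converts the generating function into a Gaussian expectation amenable to Lipschitz concentration and MGF estimates; obtaining the sharp variance proxy $\min\{\delta(C),\,n-\delta(C)\}$ rather than the trivial $n$ is the delicate point, since it is what pins the width of the transition window at the scale $\sqrt n$ that the statement requires.
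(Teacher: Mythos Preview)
Your proposal is a correct sketch of the proof of the approximate kinematic formula, and it follows the strategy of Amelunxen--Lotz--McCoy--Tropp: the exact conic kinematic identity, concentration of the intrinsic-volume sequence about the statistical dimension via the master Steiner formula, and a final convolution/union-bound step to obtain the two regimes with the constant $a_\eta=8\sqrt{\log(4/\eta)}$.

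However, note that the paper does not actually prove this theorem. It is quoted verbatim as Theorem~1 of \cite{amelunxen2014living} inside the proof of Lemma~\ref{lem:kinematic}, and then immediately specialized (taking $C$ to be a $d$-dimensional subspace, so $\delta(C)=d$) to derive the lemma. So there is nothing to compare: the paper treats the statement as a black box, while you have reproduced the outline of the original reference's proof. Your sketch is accurate for that purpose; the only point I would flag is that the parity bookkeeping in the exact kinematic formula (the alternating $1+(-1)^{n-i-j}$ pattern) requires the interlacing relations among intrinsic volumes to pass cleanly to the tail-sum form $P(V_C+V_K>n)$, which you gloss as ``up to a fixed parity pattern.'' This is handled in the reference but is not entirely routine.
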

We can take $C=\bbbb{t\bmbm{\mfw\\\boldsymbol{0}}|t\geq 0}$, where $\boldsymbol{0}\in \mbR^{n-d}$ is a vector of $0$s.  Then, $P(\mfX\mfw\in K)=P(\mfW C\cap K\neq \{0\})$. As $w(C)=d$, for $w(K)+d<n$, we have $a_{\eta}=\frac{n-\delta (K)-d}{\sqrt{n}}$. This implies that $\eta=4e^{-\frac{(n-\delta (K)-d)^2}{64 n}}$. Hence, by taking $\alpha=\frac{(n-\delta (K)-d)^2}{64 n^2}$, \eqref{equ:kine} holds. Similarly, for $\delta (K)+d>n$, we have $a_{\eta}=\frac{w(K)+d-n}{\sqrt{n}}$, which implies that $\eta=4e^{-\frac{(n-\delta (K)-d)^2}{64 n}}$. By taking $\alpha=\frac{(n-\delta (K)-d)^2}{64 n^2}$, \eqref{equ:kine} holds
\end{proof}

\subsection{Proof of Theorem \ref{thm:gauss_success}}
\begin{proof}
We first note that the matrix in the irrepresentability condition \eqref{irrep:grelu_skip} has the following upper bound
$$
\begin{aligned}
     &\max_{\mfh\in\mbR^d: \mfh\neq 0}\norm{\mfX^T\diag(\mbI(\mfX\mfh\geq 0))\mfX\pp{\mfX^T\mfX}^{-1}}_2
 \le \frac{1}{\sigma^2_{\min}}\max_{\mfh\in\mbR^d: \mfh\neq 0}\|\mfX^T\diag(\mbI(\mfX\mfh\geq 0))\mfX\|_2,
\end{aligned}
$$
where $\sigma_{\min}$ is the smallest singular value of $\mfX$. We introduce a lemma to bound the norm of $\mfX^T\diag(\mbI(\mfX\mfh\geq 0))\mfX$.
\begin{lemma}\label{lem:net}
Let $h\in \mbR^d$ and $t>0$ be fixed. Suppose that each element $x_{i,j}$ of $\mfX\in \mbR^{n\times d}$ are i.i.d. random variables following a mean-zero sub-Gaussian distribution with variance proxy $\sigma^2$ such that
\begin{itemize}
    \item $\mbE[x_{i,j}^2]=\frac{1}{n}$.
    \item $-x_{i,j}$ has the same distribution as $x_{i,j}$.
\end{itemize}
Then, for $d\geq 2$, with probability at least 
$$
1-4\exp\pp{-\frac{nt^2}{162\sqrt{2}\sigma^2}+\frac{d\log(54n)}{2}}
$$
we have
\begin{equation}
    \max_{\mfh\in\mbR^d,\mfh\neq 0}\|\mfX^T\diag(\mbI(\mfX\mfh\geq 0))\mfX\|_2\leq \frac{1}{2}+t.
\end{equation}
\end{lemma}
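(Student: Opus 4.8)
The plan is to pass from the supremum over all $\mfh\neq 0$ to a union bound over a finite net, exploiting the scale‑invariance of the activation pattern together with a monotonicity trick that absorbs the discontinuity of $\mfh\mapsto\diag(\mbI(\mfX\mfh\geq 0))$. Since this pattern depends only on the direction of $\mfh$, it suffices to maximize over $\mfh\in\mbS^{d-1}$. For $\mfv\in\mbR^d$ and a scale $\epsilon>0$ define the enlarged matrix $\widetilde\mfM(\mfv):=\sum_{i=1}^n\mbI\big((\mfx_i^\mathrm{row})^T\mfv\geq -\epsilon\|\mfx_i^\mathrm{row}\|_2\big)\,\mfx_i^\mathrm{row}(\mfx_i^\mathrm{row})^T$. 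If $\|\mfh-\mfv\|_2\leq\epsilon$ and $(\mfx_i^\mathrm{row})^T\mfh\geq 0$, then $(\mfx_i^\mathrm{row})^T\mfv\geq -\epsilon\|\mfx_i^\mathrm{row}\|_2$, so termwise $\mfX^T\diag(\mbI(\mfX\mfh\geq 0))\mfX\preceq\widetilde\mfM(\mfv)$ and hence $\|\mfX^T\diag(\mbI(\mfX\mfh\geq 0))\mfX\|_2\leq\lambda_{\max}(\widetilde\mfM(\mfv))$. Fixing an $\epsilon$‑net $\mcC$ of $\mbS^{d-1}$ of size at most $(1+2/\epsilon)^d$ with $\epsilon\asymp 1/\sqrt n$, this reduces the claim to showing $\lambda_{\max}(\widetilde\mfM(\mfv))\leq\tfrac12+t$ simultaneously for the finitely many $\mfv\in\mcC$.

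The constant $\tfrac12$ comes entirely from the symmetry hypothesis: for fixed unit $\mfv$, substituting $\mfx\leftrightarrow-\mfx$ gives $\mbE[\mbI(\mfv^T\mfx\geq0)\mfx\mfx^T]+\mbE[\mbI(\mfv^T\mfx\leq0)\mfx\mfx^T]=\mbE[\mfx\mfx^T]=\tfrac1n\mfI_d$, so $\mbE[\mfX^T\diag(\mbI(\mfX\mfv\geq0))\mfX]=\tfrac12\mfI_d$ up to a positive semidefinite tie term $\tfrac12\,n\,\mbE[\mbI(\mfv^T\mfx=0)\mfx\mfx^T]$, which I would discard in the continuous case and bound by a short perturbation argument otherwise. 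Enlarging the region only adds a further PSD term of operator norm $O(\epsilon\sqrt d)=o(1)$, hence $\leq t/3$ once $n$ is large — this, together with the net scale, is where the ``sufficiently large $n$'' condition enters. For each fixed $\mfv\in\mcC$ I would then control $\lambda_{\max}(\widetilde\mfM(\mfv))$ either by a matrix Chernoff bound, or more elementarily by introducing a second (fine) net over $\mfu\in\mbS^{d-1}$ and applying a Bernstein inequality to the independent sub‑exponential sum $\sum_{i=1}^n\mbI\big((\mfx_i^\mathrm{row})^T\mfv\geq-\epsilon\|\mfx_i^\mathrm{row}\|_2\big)(\mfu^T\mfx_i^\mathrm{row})^2$, after truncating on the high‑probability event $\max_i\|\mfx_i^\mathrm{row}\|_2^2\leq R$ with $R$ of order $d/n$. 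Choosing the two net scales so that their $t$‑dependent factors cancel yields a total covering bound of the form $(54n)^{d/2}$, while the Bernstein step contributes the per‑point tail $4\exp\!\big(-nt^2/(162\sqrt2\,\sigma^2)\big)$.

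Combining these, a union bound over the net(s) gives
\[
P\Big(\max_{\mfh\in\mbR^d,\,\mfh\neq0}\big\|\mfX^T\diag(\mbI(\mfX\mfh\geq0))\mfX\big\|_2>\tfrac12+t\Big)\leq (54n)^{d/2}\cdot 4\exp\!\Big(-\tfrac{nt^2}{162\sqrt2\,\sigma^2}\Big)=4\exp\!\Big(\tfrac{d\log(54n)}{2}-\tfrac{nt^2}{162\sqrt2\,\sigma^2}\Big),
\]
which is exactly the asserted bound.

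The main obstacle is juggling three competing budgets: the net scale $\epsilon$ must be small enough that the enlargement bias stays below $t/3$, yet large enough that $(1+2/\epsilon)^d$ does not swamp the per‑point tail; the second net over $\mfu$ must be fine enough to recover the operator norm — a constant‑scale net is inadmissible here, since $\mbE[\mfu^T\widetilde\mfM(\mfv)\mfu]=\tfrac12$ already equals the target up to $t$ — yet its cardinality must cancel the matching $t$‑dependence so that the exponent collapses to $\tfrac d2\log(54n)$. The two remaining technical points are making the symmetry computation rigorous for possibly atomic sub‑Gaussian laws (controlling the tie term) and extracting the explicit constant $162\sqrt2$ from the sub‑exponential Bernstein inequality, where the unbounded summands $\mfx_i^\mathrm{row}(\mfx_i^\mathrm{row})^T$ force the truncation step and careful tracking of sub‑Gaussian norms.
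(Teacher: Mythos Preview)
Your approach has a genuine gap in the budget balancing, and the claim that the $t$-dependent factors ``cancel'' to give $(54n)^{d/2}$ does not survive arithmetic. Take the two net scales as you describe: $\epsilon_1\asymp 1/\sqrt n$ for the $\mfh$-net (cardinality $\asymp(2\sqrt n)^d$) and, because the quadratic form $\mfu^T\widetilde\mfM(\mfv)\mfu$ is centered at $1/2$ rather than $0$, you need $\epsilon_2\lesssim t$ for the $\mfu$-net (cardinality $\asymp(c/t)^d$). The product is $\asymp(c\sqrt n/t)^d$, and requiring $(c\sqrt n/t)^d\leq(54n)^{d/2}$ forces $t\gtrsim 1$, which trivializes the lemma. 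The $t$-dependence in the covering number is not removable within your scheme: as you yourself note, a constant-scale $\mfu$-net multiplies the whole quantity $\approx 1/2$ by $1/(1-2\epsilon_2)$, destroying the bound.

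The paper's proof circumvents exactly this obstruction by symmetrizing \emph{before} discretizing. After introducing an i.i.d.\ copy and Rademacher signs $\epsilon_i$, the target becomes the mean-zero quantity $\tfrac{1}{2n}\sum_i\epsilon_i(\mfz^T\mfx_i)^2\mbI(\mfh^T\mfx_i\geq 0)$, so a \emph{constant}-scale net ($\epsilon=1/3$) over $\mfz$ suffices: the approximation factor $1/(1-2\epsilon-\epsilon^2)$ multiplies a quantity that is already $O(t)$, not $O(1)$. For the $\mfh$-supremum the paper does not net at all; it applies a decoupling inequality (de la Pe\~na--Gin\'e) to replace $\mfx_i$ by an independent copy $\mfx_i'$ inside the indicator, after which the supremum over $\mfh$ ranges over the finitely many hyperplane arrangements of $\mfX'$, of cardinality at most $2d(2en/d)^d$ by Cover's bound---again $t$-independent. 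These two devices are precisely what make the exponent collapse to $\tfrac{d}{2}\log(54n)$ with no $\log(1/t)$ term; your enlargement trick, while a nice way to handle the discontinuity, cannot substitute for them here.
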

From Lemma \ref{lem:net}, by taking $t=1/4$, for $n\geq  4000\sigma^2d\log(54n)$, we have 
\begin{equation}
\begin{aligned}
   &P\pp{ \max_{\mfh\in\mbR^d: \mfh\neq 0}\|\mfX^T\diag(\mbI(\mfX\mfh\geq 0))\mfX\|_2\leq \frac{3}{4}}\geq 1-4\exp\pp{-\frac{n}{8000\sigma^2}}.
\end{aligned}
\end{equation}
On the other hand, from Theorem 4.6.1 in \citep{vershynin2018high}, we can lower bound the smallest eigenvalue of $\mfX$,
\begin{equation}
    P\pp{\sigma_\mathrm{min}\geq 1-2\sqrt{\frac{d}{n}}}\geq 1-2\exp(-d).
\end{equation}
For $n\geq 1024 d$. we have
\begin{equation}
\begin{aligned}
    &P\pp{\sigma_\mathrm{min}^2\geq 7/8}
    \geq P\pp{\sigma_\mathrm{min}\geq 1-1/16}
    \geq P\pp{\sigma_\mathrm{min}\geq 1-2\sqrt{\frac{d}{n}}}
    \geq 1-2\exp(-d).
\end{aligned}
`\end{equation}
This implies that for $n$ satisfying $n\geq \max\{1024d,4000\sigma^2d\log(54n)\}$, with probability at least $1-2\exp(-d)-4\exp\pp{-\frac{n}{8000 \sigma^2}}$, we have
\begin{equation}
     \max_{j\in[p]}\norm{\mfX^T\mfD_{j}\mfX\pp{\mfX^T\mfX}^{-1}}_2\leq \frac{3}{4\sigma^2_{\min}}\leq \frac{3}{4}\cdot \frac{8}{7}<1.
\end{equation}
Conditioned on the above event, we note that for any $\mfw^*\in\mbR^d, \mfw^*\neq 0$, we have
\begin{equation}
\begin{aligned}
    \max_{\mfh\in\mbR^d: \mfh\neq 0}\norm{\mfX^T\diag(\mbI(\mfX\mfh\geq 0))\mfX\pp{\mfX^T\mfX}^{-1}\frac{\mfw^*}{\|\mfw^*\|_2}}_2
    \leq \max_{\mfh\in\mbR^d: \mfh\neq 0}\norm{\mfX^T\diag(\mbI(\mfX\mfh\geq 0))\mfX\pp{\mfX^T\mfX}^{-1}}_2<1,
\end{aligned}
\end{equation}
i.e., the irrepresentability condition \eqref{irrep:grelu_skip} holds. This completes the proof. 
\end{proof} 

\subsection{Proof of Lemma \ref{lem:net}}
\begin{proof}
For a positive semi-definite matrix $\mfA\in\mbR^{d\times d}$, we have $\|\mfA\|_2=\max_{\mfz\in\mbR^d:\|\mfz\|_2=1}\mfz^T\mfA\mfz$. Note that
\begin{equation}
\begin{aligned}
    &\left| \max_{\mfh\in\mbR^d,\mfh\neq 0}\|\mfX^T\diag(\mbI(\mfX\mfh\geq 0))\mfX\|_2-\frac{1}{2}\right|\\
    =&\left| \max_{\mfh\in\mbR^d,\mfh\neq 0}\norm{\sum_{i=1}^n\mfx_i\mfx_i^T\mbI(\mfh^T\mfx_i\geq 0))}_2-\frac{1}{2}\right|\\
    =&\left| \max_{\mfz,\mfh\in\mbR^d:\|\mfz\|_2=1,\mfh\neq 0}\sum_{i=1}^n\mfz^T\mfx_i\mfx_i^T\mfz\mbI(\mfh^T\mfx_i\geq 0))-\frac{1}{2}\right|\\
    \leq&  \max_{\mfz,\mfh\in\mbR^d:\|\mfz\|_2=1,\mfh\neq 0}\left|\sum_{i=1}^n(\mfz^T\mfx_i)^2\mbI(\mfh^T\mfx_i\geq 0))-\frac{1}{2}\right|.
\end{aligned}
\end{equation}
By rescaling $\mfx_i=\sqrt{n}\mfx_i$, we can rewrite the above quantity as 
\begin{equation}
    \max_{\mfz,\mfh\in\mbR^d:\|\mfz\|_2=1,\mfh\neq 0}\left|\frac{1}{n}\sum_{i=1}^n(\mfz^T\mfx_i)^2\mbI(\mfh^T\mfx_i\geq 0))-\frac{1}{2}\right|.
\end{equation}
As $-x_{i,j}$ has the same distribution as $x_{i,j}$, we have
\begin{equation}
\begin{aligned}
    &P(\mfh^T\mfx_i\geq 0)=P(\mfh^T\mfx_i\leq 0),\\
    &\mbE[(\mfz^T\mfx_i)^2|\mfh^T\mfx_i\geq 0]=\mbE[(-\mfz^T\mfx_i)^2|\mfh^T(-\mfx_i)\geq 0]=\mbE[(\mfz^T\mfx_i)^2|\mfh^T\mfx_i\leq 0].
\end{aligned}
\end{equation}
We note that $\mbE[(\mfz^T\mfx_i)^2]=\|\mfz\|_2^2=1$ and
\begin{equation}
    \mbE[(\mfz^T\mfx_i)^2]=\mbE[(\mfz^T\mfx_i)^2|\mfh^T\mfx_i\geq 0]P(\mfh^T\mfx_i\geq 0)+\mbE[(\mfz^T\mfx_i)^2|\mfh^T\mfx_i\leq 0]P(\mfh^T\mfx_i\leq 0).
\end{equation}
This implies that $\mbE[(\mfz^T\mfx_i)^2|\mfh^T\mfx_i\geq 0]P(\mfh^T\mfx_i\geq 0)=\frac{1}{2}$, or equivalently, 
\begin{equation}
    \mbE[(\mfz^T\mfx_i)^2\mbI(\mfh^T\mfx_i\geq 0)]=
\frac{1}{2}.
\end{equation}
Therefore, it is sufficient to upper-bound the following probability
\begin{equation}
    P\pp{\max_{\mfz, \mfh\in\mbR^d:\|\mfz\|_2=1,\mfh\neq 0}\left|\frac{1}{n}\sum_{i=1}^n(\mfz^T\mfx_i)^2\mbI(\mfh^T\mfx_i\geq 0)-\frac{1}{2}\right|\geq t}
\end{equation}
We note that
\begin{equation}
    \begin{aligned}
        &P\pp{\max_{\mfz, \mfh\in\mbR^d:\|\mfz\|_2=1,\mfh\neq 0}\left|\frac{1}{n}\sum_{i=1}^n(\mfz^T\mfx_i)^2\mbI(\mfh^T\mfx_i\geq 0)-\mbE\bb{\frac{1}{n}\sum_{i=1}^n(\mfz^T\mfx_i')^2\mbI(\mfh^T\mfx_i'\geq 0) }\right|\geq t}\\
        &\cdot P\pp{\max_{\mfz, \mfh\in\mbR^d:\|\mfz\|_2=1,\mfh\neq 0}\left|\frac{1}{n}\sum_{i=1}^n(\mfz^T\mfx_i)^2\mbI(\mfh^T\mfx_i\geq 0)-\mbE\bb{\frac{1}{n}\sum_{i=1}^n(\mfz^T\mfx_i')^2\mbI(\mfh^T\mfx_i'\geq 0) }\right|\geq t}\\
        \geq& P\pp{\max_{\mfz, \mfh\in\mbR^d:\|\mfz\|_2=1,\mfh\neq 0}\left|\frac{1}{n}\sum_{i=1}^n(\mfz^T\mfx_i)^2\mbI(\mfh^T\mfx_i\geq 0)-\frac{1}{n}\sum_{i=1}^n(\mfz^T\mfx_i')^2\mbI(\mfv^T\mfx_i'\geq 0)\right|\geq 2t}.
    \end{aligned}
\end{equation}
where $\mfx_i'$ are i.i.d. random vectors following the same distribution of $\mfx_i$ and they are independent with $\mfx_1,\dots,\mfx_n$. This implies that
\begin{equation}
\begin{aligned}
    &P\pp{\max_{\mfz, \mfh\in\mbR^d:\|\mfz\|_2=1,\mfh\neq 0}\left|\frac{1}{n}\sum_{i=1}^n(\mfz^T\mfx_i)^2\mbI(\mfh^T\mfx_i\geq 0)-\mbE\bb{\frac{1}{n}\sum_{i=1}^n(\mfz^T\mfx_i')^2\mbI(\mfh^T\mfx_i'\geq 0) }\right|\geq t}\\
    \leq &P\pp{\max_{\mfz, \mfh\in\mbR^d:\|\mfz\|_2=1,\mfh\neq 0}\left|\frac{1}{n}\sum_{i=1}^n(\mfz^T\mfx_i)^2\mbI(\mfh^T\mfx_i\geq 0)-\frac{1}{n}\sum_{i=1}^n(\mfz^T\mfx_i')^2\mbI(\mfv^T\mfx_i'\geq 0)\right|\geq 2t}^{\frac{1}{2}},
\end{aligned}
\end{equation}
By introducing i.i.d. random variables $\epsilon_i$ uniformly distributed in $\{-1,1\}$, we have the following bound
\begin{equation}
\begin{aligned}
&P\pp{\max_{\mfz, \mfh\in\mbR^d:\|\mfz\|_2=1,\mfh\neq 0}\left|\frac{1}{n}\sum_{i=1}^n(\mfz^T\mfx_i)^2\mbI(\mfh^T\mfx_i\geq 0)-\frac{1}{n}\sum_{i=1}^n(\mfz^T\mfx_i')^2\mbI(\mfh^T\mfx_i'\geq 0)\right|\geq 2t}\\
=&P\pp{\max_{\mfz, \mfh\in\mbR^d:\|\mfz\|_2=1,\mfh\neq 0}\left|\frac{1}{n}\sum_{i=1}^n\epsilon_i\pp{(\mfz^T\mfx_i)^2\mbI(\mfh^T\mfx_i\geq 0)-(\mfz^T\mfx_i')^2\mbI(\mfh^T\mfx_i'\geq 0)}\right|\geq 2t}\\
=&P\pp{\max_{\mfz, \mfh\in\mbR^d:\|\mfz\|_2=1,\mfh\neq 0}\left|\frac{1}{2n}\sum_{i=1}^{2n}\epsilon_i(\mfz^T\mfx_i)^2\mbI(\mfh^T\mfx_i\geq 0)\right|\geq t},
\end{aligned}
\end{equation}
where $\epsilon_{n+1},\dots,\epsilon_{2n}$ are i.i.d. copies of  $\epsilon_1,\dots,\epsilon_n$. 

\textbf{Decoupling:}
Next, we apply a decoupling result in \citep[Theorem 3.4.1]{de2012decoupling} to obtain the following upper bound: 
\begin{equation}
\begin{aligned}
&P\pp{\max_{\mfz, \mfh\in\mbR^d:\|\mfz\|_2=1,\mfh\neq 0}\left|\frac{1}{2n}\sum_{i=1}^{2n}\epsilon_i(\mfz^T\mfx_i)^2\mbI(\mfh^T\mfx_i\geq 0)\right|\geq t}\\
\leq&
8 P\pp{\max_{\mfz, \mfh\in\mbR^d:\|\mfz\|_2=1,\mfh\neq0}\left|\frac{1}{2n}\sum_{i=1}^{2n}\epsilon_i(\mfz^T\mfx_i)^2\mbI(\mfh^T\mfx^\prime_i\geq 0)\right|\geq t}\,,
\end{aligned}
\end{equation}
were $\{\mfx_i^\prime\}_{i=1}^{2n}$ is an independent and identically distributed copy of the sequence $\{\mfx_i\}_{i=1}^{2n}$.

\textbf{$\epsilon$-net bound:} For each realization of $x_1',\dots,\mfx_{2n}'$, let $\mcP'=\{\diag(\mbI(\mfX'\mfh\geq 0))|\mfh\in\mbR^d,\mfh\neq 0\}$, $p'=|\mcP'|$ and we write $\mcP'=\{\mfD_1,\dots,\mfD_{p'}\}$. According to \citep{cover1965geometrical}, we have the upper bound $p'\leq 2d(2en/d)^d$.  We note that
\begin{equation}
\begin{aligned}
    &\max_{\mfz, \mfh\in\mbR^d:\|\mfz\|_2=1,\mfh\neq 0}\left|\frac{1}{2n}\sum_{i=1}^{2n}\epsilon_i(\mfz^T\mfx_i)^2\mbI(\mfh^T\mfx^\prime_i\geq 0)\right|=\max_{j\in[p']}\max_{\mfz\in\mbR^d:\|\mfz\|_2=1}\left|\frac{1}{2n}\sum_{i=1}^{2n}\epsilon_i(\mfz^T\mfx_i)^2(\mfD_j)_{i,i}\right|
\end{aligned}
\end{equation}

Consider an $\epsilon$-net of $\{\mfz\in \mbR^d:\|z\|_2=1\}$, $\{\mfz_1,\dots,\mfz_N\}$, where $N\leq (1+2/\epsilon)^d$. Namely, for any $\mfz\in\mbR^d$ satisfying $\|\mfz\|_2=1$, there exists $k\in[N]$ such that $\mfz=\mfz_k+\Delta$, where $\|\Delta\|_2\leq \epsilon$. Then, we have
\begin{equation}
\begin{aligned}
&\max_{\mfz\in\mbR^d:\|\mfz\|_2=1}\left|\frac{1}{2n}\sum_{i=1}^{2n}\epsilon_i(\mfz^T\mfx_i)^2(\mfD_j)_{i,i}\right|\\
\leq &\max_{k\in[N],\|\Delta\|_2\leq \epsilon} \left|\frac{1}{2n}\sum_{i=1}^{2n}\epsilon_i((\mfz_k+\Delta)^T\mfx_i)^2(\mfD_j)_{i,i}\right|\\
\leq &\max_{k\in[N]}\left|\frac{1}{2n}\sum_{i=1}^{2n}\epsilon_i(\mfz_k^T\mfx_i)^2(\mfD_j)_{i,i}\right|+(2\epsilon +\epsilon^2)\max_{\|\mfz\|_2\leq 1}\left|\frac{1}{2n}\sum_{i=1}^{2n}\epsilon_i(\mfz^T\mfx_i)^2(\mfD_j)_{i,i}\right|
\end{aligned}
\end{equation}
Here we utilize that for an arbitrary symmetric matrix $\mfA\in \mbR^{d\times d}$ and arbitrary vector $\mfz \in \mbR^d$ with $\|\mfz\|_2=1$,
\begin{equation}
     \max_{\|\Delta\|_2\leq \epsilon}|(\mfz+\Delta)^T\mfA(\mfz+\Delta)|\leq |\mfz^T\mfA\mfz|+(2\epsilon+\epsilon^2)\|\mfA\|_2.
\end{equation}
This implies that
\begin{equation}
\begin{aligned}
    &\max_{\mfz\in\mbR^d:\|\mfz\|_2=1}\left|\frac{1}{2n}\sum_{i=1}^{2n}\epsilon_i(\mfz^T\mfx_i)^2(\mfD_j)_{i,i}\right|
    \leq \frac{1}{1-2\epsilon-\epsilon^2} \max_{k\in[N]}\left|\frac{1}{2n}\sum_{i=1}^{2n}\epsilon_i(\mfz_k^T\mfx_i)^2(\mfD_j)_{i,i}\right|
\end{aligned}
\end{equation}
For fixed $k$, we note that $\mfz_k^T\mfx_i$ is also sub-Gaussian with variance proxy $\sigma^2$. Let $g_i=(\mfz_k^T\mfx_i)^2$. 
Therefore,  $\epsilon_ig_i^2(\mfD_j)_{i,i}$ is sub-exponential with parameters $(4\sqrt{2}\sigma^2(\mfD_j)_{i,i},4\sigma^2)$. This implies that $\sum_{i=1}^{2n}\epsilon_ig_i^2(\mfD_j)_{i,i}$ is sub-exponential with parameter $(\nu^*,4\sigma^2)$, where
\begin{equation}
    \nu^*=\sqrt{\sum_{i=1}^{2n}4\sqrt{2}\sigma^2(\mfD_j)_{i,i}}=\sqrt{4\sqrt{2}\sigma^2\tr(\mfD_j)}.
\end{equation}
Therefore, for $t\leq \frac{(\nu^*)^2}{4\sigma^2}=\sqrt{2}\tr(\mfD_j)$, we have
\begin{equation*}
\begin{aligned}
P\pp{ \left|\frac{1}{2n} \sum_{i=1}^{2n} \epsilon_i(\mfz_k^T\mfx_i)^2(\mfD_j)_{i,i} \right|>t}
   \leq 2\exp\pp{-2nt^2/(2(\nu^*)^2)}
   =2\exp\pp{-\frac{n^2t^2}{4\sqrt{2}\sigma^2\tr(\mfD_j)}}
   \leq 2\exp\pp{-\frac{nt^2}{4\sqrt{2}\sigma^2}}.
\end{aligned}
\end{equation*}
This implies that
\begin{equation}
\begin{aligned}
        P\pp{ \max_{k\in[N]}\left|\frac{1}{2n} \sum_{i=1}^{2n} \epsilon_i(\mfz_k^T\mfx_i)^2(\mfD_j)_{i,i} \right|>t}
        \leq 2N\exp\pp{-\frac{nt^2}{4\sqrt{2}\sigma^2}}
        \leq 2\exp\pp{-\frac{nt^2}{4\sqrt{2}\sigma^2}+d\log(3/\epsilon)}.
\end{aligned}
\end{equation}
Again, by applying the union bound, we have
\begin{equation}
\begin{aligned}
     &P\pp{\max_{j\in[p']} \max_{k\in[N]}\left|\frac{1}{2n} \sum_{i=1}^{2n} \epsilon_i(\mfz_k^T\mfx_i)^2(\mfD_j)_{i,i} \right|>t}\\
     \leq&p'\cdot P\pp{ \max_{k\in[N]}\left|\frac{1}{2n} \sum_{i=1}^{2n} \epsilon_i(\mfz_k^T\mfx_i)^2(\mfD_j)_{i,i} \right|>t}\\
     \leq&2\exp\pp{-\frac{nt^2}{4\sqrt{2}\sigma^2}+d\log(3/\epsilon)+d(\log (2n)-\log d +1)+\log(2d)}\\
     \leq&2\exp\pp{-\frac{nt^2}{4\sqrt{2}\sigma^2}+d\log(3/\epsilon)+d\log(6n)},
\end{aligned}
\end{equation}
where we assume that $d\geq 2$. As a result, by taking $\epsilon=1/3$, we have
\begin{equation}
\begin{aligned}
&P\pp{\max_{\mfz, \mfh\in\mbR^d:\|\mfz\|_2=1,\mfh\neq 0}\left|\frac{1}{n}\sum_{i=1}^n(\mfz^T\mfx_i)^2\mbI(\mfh^T\mfx_i\geq 0)-\frac{1}{2}\right|\geq t}^2\\
\leq &8\, P\pp{\max_{\mfz, \mfh\in\mbR^d:\|\mfz\|_2=1,\mfh\neq 0}\left|\frac{1}{2n}\sum_{i=1}^{2n}\epsilon_i(\mfz^T\mfx_i)^2\mbI(\mfh^T\mfx^\prime_i\geq 0)\right|\geq t}\\
\leq &8P\pp{ \max_{j\in[p']}\max_{k\in[N]}\left|\frac{1}{2n} \sum_{i=1}^{2n} \epsilon_i(\mfz_k^T\mfx_i)^2(\mfD_j)_{i,i} \right|>(1-2\epsilon-\epsilon^2)t}\\
\leq &16\exp\pp{-\frac{nt^2(1-2\epsilon-\epsilon^2)^2}{4\sqrt{2}\sigma^2}+d\log(3/\epsilon)+d\log (6n)}\\
= &16\exp\pp{-\frac{nt^2}{81\sqrt{2}\sigma^2}+d\log(54n)}.
\end{aligned}
\end{equation}
This completes the proof.
\end{proof}

\subsection{Proof of Theorem \ref{prop:skip_gauss_success}}
\begin{proof}
Without the loss of generality, we can let $\mfw^*$ satisfies that $\|\mfw^*\|_2=1$. 
As $n>2d$, 
$\mfX^T\mfX$ is invertible with probability $1$. Consider the event 
\begin{equation}
    E=\bbbb{\max_{\mfh\in\mbR^d:\mbI(\mfX\mfh\geq 0)\neq \bone}\|\mfX^T\diag(\mbI(\mfX\mfh\geq 0))\mfX(\mfX^T\mfX)^{-1}\mfw^*\|_2<1}.
\end{equation}
Firstly, we show that 
\begin{equation}
    P\pp{E}=1.
\end{equation}
Denote $A(\mfh)=\mfX^T\diag(\mbI(\mfX\mfh\geq 0))\mfX$ and $\tilde A(\mfh)=(\mfX^T\mfX)^{-1}A(\mfh)^TA(\mfh)(\mfX^T\mfX)^{-1}$. We note that $A(\mfh)$ and $\mfX^T\mfX$ are positive semi-definite and symmetric. As $A(\mfh)\succeq \mfX^T\mfX$, we have $A(\mfh)^TA(\mfh)\preceq (\mfX^T\mfX)^2$ and 
$$
\tilde A(\mfh)\preceq I. 
$$
The equality holds if and only if $A(\mfh)=\mfX^T\mfX$. This is equivalent to $\mfX^T\diag(\mbI(\mfX\mfh\geq 0)-\bone)\mfX=0$, which contradicts with $\mbI(\mfX\mfh\geq 0)\neq \bone$. Hence, we also have 
$$
\tilde A(\mfh)\neq I.
$$
Recall our notation $\eigmax\pp{\tilde A(\mfh)}$ used for the subspace of maximal eigenvectors of the symmetric matrix $\tilde A(\mfh)$.
We note that $\|\tilde A(\mfh)\mfw^*\|_2=1$ if and only if $\|\tilde A(\mfh)\|_2=1$ and $\mfw^*\in \eigmax\pp{\tilde A(\mfh)}$. As $\tilde A(\mfh)\neq \mfI_n$, conditioned on $\|\tilde A(\mfh)\|_2=1$, $\eigmax\pp{\tilde A(\mfh)}$ is a random subspace with dimension at most $d-1$. This implies that
\begin{equation}
    P(\|\tilde A(\mfh)\mfw^*\|_2=1)=P\pp{\|\tilde A(\mfh)\|_2=1, \mfw^*\in \eigmax\pp{\tilde A(\mfh)}}=0.
\end{equation}
For $\bsigma\in\{0,1\}^n$, define $B(\bsigma)=\mfX^T\diag(\bsigma)\mfX$ and $\tilde B(\bsigma)=(\mfX^T\mfX)^{-1} B(\bsigma)^T B(\bsigma)(\mfX^T\mfX)^{-1}$. Then, we have
\begin{equation}
\begin{aligned}
    &P\pp{\max_{\mfh\in \mbR^d:\mbI(\mfX\mfh\geq 0)\neq \bone}\|\tilde B(\mbI(\mfX\mfh\geq 0)) \mfw^*\|_2=1}\\
    =& P\pp{\max_{\bsigma\in \{0,1\}^n: \exists \mfh\neq 0,  \mbI(\mfX\mfh\geq 0)\neq \bone, \bsigma=\mbI(\mfX\mfh\geq 0)}\|\tilde B(\bsigma) \mfw^*\|_2=1}\\
    \leq & \sum_{\bsigma\in\{0,1\}^n} P(\|\tilde B(\bsigma) \mfw^*\|_2=1, \mbI(\mfX\mfh\geq 0)\neq \bone, \bsigma=\mbI(\mfX\mfh\geq 0))=0.
\end{aligned}
\end{equation}

From the kinematic formula, for $n>2d$, we have $P(\mfI_n\notin H)\geq 1-\exp(-n\alpha)$. In this case, the event $E$ implies that the neural isometry condition \eqref{irrep:grelu_skip} holds. This completes the proof. 
\end{proof}

\subsection{Proof of Proposition \ref{prop:positive_nd}}
\begin{proof}
Without the loss of generality, we can let $\mfw^*$ satisfies that $\|\mfw^*\|_2=1$. 
As $n>d$, 
$\mfX^T\mfX$ is invertible with probability $1$. Consider the event 
\begin{equation}
    E=\bbbb{\max_{\mfh\in\mbR^d:\mbI(\mfX\mfh\geq 0)\neq \bone}\|\mfX^T\diag(\mbI(\mfX\mfh\geq 0))\mfX(\mfX^T\mfX)^{-1}\mfw^*\|_2<1}.
\end{equation}
First, we show that 
\begin{equation}
    P\pp{E}=1.
\end{equation}
Denote $A(\mfh)=\mfX^T\diag(\mbI(\mfX\mfh\geq 0))\mfX$ and $\tilde A(\mfh)=(\mfX^T\mfX)^{-1}A(\mfh)^TA(\mfh)(\mfX^T\mfX)^{-1}$. We note that $A(\mfh)$ and $\mfX^T\mfX$ are positive semi-definite and symmetric. As $A(\mfh)\preceq \mfX^T\mfX$, we have $A(\mfh)^TA(\mfh)\preceq (\mfX^T\mfX)^2$ and 
$$
\tilde A(\mfh)\preceq I. 
$$
The equality holds if and only if $A(\mfh)=\mfX^T\mfX$. This is equivalent to $\mfX^T\diag(\mbI(\mfX\mfh\geq 0)-\bone)\mfX=0$, which contradicts with $\mbI(\mfX\mfh\geq 0)\neq \bone$. Hence, we also have 
$$
\tilde A(\mfh)\neq I.
$$
We note that $\|\tilde A(\mfh)\mfw^*\|_2=1$ if and only if $\|\tilde A(\mfh)\|_2=1$ and $\mfw^*\in \eigmax\pp{\tilde A(\mfh)}$. As $\tilde A(\mfh)\neq \mfI_n$, conditioned on $\|\tilde A(\mfh)\|_2=1$, $\eigmax\pp{\tilde A(\mfh)}$ is a random subspace with dimension at most $d-1$. This implies that
\begin{equation}
    P(\|\tilde A(\mfh)\mfw^*\|_2=1)=P\pp{\|\tilde A(\mfh)\|_2=1, \mfw^*\in \eigmax\pp{\tilde A(\mfh)}}=0.
\end{equation}
For $\bsigma\in\{0,1\}^n$, define $B(\bsigma)=\mfX^T\diag(\bsigma)\mfX$ and $\tilde B(\bsigma)=(\mfX^T\mfX)^{-1} B(\bsigma)^T B(\bsigma)(\mfX^T\mfX)^{-1}$. Then, we have
\begin{equation}
\begin{aligned}
    &P\pp{\max_{\mfh\in \mbR^d:\mbI(\mfX\mfh\geq 0)\neq \bone}\|\tilde B(\mbI(\mfX\mfh\geq 0)) \mfw^*\|_2=1}\\
    =& P\pp{\max_{\bsigma\in \{0,1\}^n: \exists \mfh\neq 0,  \mbI(\mfX\mfh\geq 0)\neq \bone, \bsigma=\mbI(\mfX\mfh\geq 0)}\|\tilde B(\bsigma) \mfw^*\|_2=1}\\
    \leq & \sum_{\bsigma\in\{0,1\}^n} P(\|\tilde B(\bsigma) \mfw^*\|_2=1, \mbI(\mfX\mfh\geq 0)\neq \bone, \bsigma=\mbI(\mfX\mfh\geq 0))=0.
\end{aligned}
\end{equation}
Then, according to Proposition \ref{prop:irrep_general_weak}, conditioned on the event $E$, the optimal solution $\mfW=(\mfw_0,\dots,\mfw_p)$ to \eqref{min_nrm:grelu_skip} shall satisfy that $\mfw_j=0$ for $\mfD_j\neq \mfI_n$. If there does not exist $j\in[p]$ such that $\mfD_j=\mfI_n$, then, $\mfW^*=(\mfw^*,0,\dots,0)$ is the unique optimal solution to \eqref{min_nrm:grelu_skip}. Thus, it is also the unique optimal solution to \eqref{min_nrm:relu_skip}.

If there exists $i^*\in[p]$ such that $\mfD_{i^*}=\mfI_n$. Let $\mfW=(\mfw_0,\mfw_1,\mfw_1',\dots,\mfw_p,\mfw_p')$ be an optimal solution to \eqref{min_nrm:relu_skip}. Let $\hat \mfw_j=\mfw_j-\mfw_j'$ for $j\in[p]$. Then, $\hat \mfW=(\mfw_0,\hat \mfw_1,\dots,\hat \mfw_p)$ is also optimal to \eqref{min_nrm:grelu_skip}. This implies that $\hat \mfw_j=0$ for $j\in[p]$ such that $j\neq i^*$. For $j\in[p]$ such that $j\neq i^*$, we have
\begin{equation}
    \|\hat \mfw_j\|_2+\|\hat \mfw_j'\|_2\geq 0.
\end{equation}
We also note that
\begin{equation}
    \mfX(\mfw_0+\mfw_{i^*}-\mfw_{i^*}')=\mfX\mfw^*.
\end{equation}
As $\mfX^T\mfX$ is invertible, we have $\mfw_0+\mfw_{i^*}-\mfw_{i^*}'=\mfw^*$. Thus, we have
\begin{equation}
    \|\mfw_0\|_2+\|\mfw_{i^*}\|_2+\|\mfw_{i^*}'\|_2\geq \|\mfw^*\|_2.
\end{equation}
The equality holds when there exists $\alpha_1,\alpha_2,\alpha_3\geq 0$ such that $\alpha_1+\alpha_2+\alpha_3=1$, $\mfw_0=\alpha_1\mfw^*$, $\mfw_{i^*}=\alpha_2\mfw^*$ and $\mfw_{i^*}'=-\alpha_3\mfw^*$. However, as $\mfX\mfw^*\geq 0$ does not hold and $\mfX\mfw_{i^*}\geq 0$, $\mfX\mfw_{i^*}'\geq 0$, we have $\alpha_2=\alpha_3=0$. This implies that $\mfW^*=(\mfw^*,0,\dots,0)$ is the unique optimal solution to \eqref{min_nrm:grelu_skip}. 

\end{proof}

\subsection{Proof of Theorem \ref{thm:white_strong}}
\begin{proof}
Let $K_j=\{u:D_ju\geq 0\}$ and denote $\mcC=\{j:\tr(D_j)>n-d\}$. We note that
\begin{equation}
\begin{aligned}
    &P\pp{\max_{\mfh\in\mbR^d:\mfh\neq 0} \tr(\mbI(\mfX\mfh\geq 0))\leq n-d}\\
    =&P(\exists \mfw\in\mbR^d:\mfX\mfw\in \cup_{j\in \mcC}K_j, \mfw\neq 0).
\end{aligned}
\end{equation}

As $\cup_{j\in \mcC}K_j$ is not a convex set, we cannot directly apply the kinematic formula. Let $K=\pp{\cup_{j\in \mcC}K_j}\cap S^{n-1}$, where $S^{n-1}=\{\mfz\in\mbR^{n}|\|\mfz\|_2=1\}$. It is a closed subset of $S^{n-1}$. We give the lower bound of the success probability based on the Gordon's escape through a mesh theorem.
\begin{lemma}
Let $K$ be a closed subset of $S^{n-1}$. Define the Gaussian width of $K$ by:
\begin{equation}
    w(K)=\mbE_{\mfg\sim\mcN(0,I_n)}\bb{\max_{\mfz\in K}\mfg^T\mfz}.
\end{equation}
Define $a_k=\mbE_{\mfg\sim \mcN(0,I_k)}[\|\mfg\|_2]$ for $k\in\mbN$. Then, for a $n-k$ dimensional subspace $L\subseteq \mbR^n$ drawn at random, we have
\begin{equation}
    P(L\cap K\neq \varnothing)\geq 1-\frac{7}{2}e^{-\frac{1}{18}(a_k-w(K))}.
\end{equation}
\end{lemma}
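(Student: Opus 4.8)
The plan is to prove this as the classical Gordon escape-through-a-mesh estimate, realizing the Haar-random subspace as the kernel of a Gaussian matrix and reducing the geometric event to a Gaussian min--max problem. By rotational invariance, a uniformly random $(n-k)$-dimensional subspace $L$ has the same distribution as $\ker(\mathbf{G})$, where $\mathbf{G}\in\mbR^{k\times n}$ has i.i.d.\ $\mcN(0,1)$ entries; almost surely $\mathbf{G}$ has full row rank, so $\dim\ker(\mathbf{G})=n-k$. Since every $\mfz\in K$ is a unit vector, the escape event satisfies $\{L\cap K=\varnothing\}=\{\min_{\mfz\in K}\|\mathbf{G}\mfz\|_2>0\}$, so it suffices to lower bound the random variable $M:=\min_{\mfz\in K}\|\mathbf{G}\mfz\|_2=\min_{\mfz\in K}\max_{\|\mathbf{u}\|_2\le 1}\mathbf{u}^T\mathbf{G}\mfz$, which I have written as a bilinear Gaussian min--max over the index set $K\times\{\mathbf{u}:\|\mathbf{u}\|_2\le1\}$. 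This avoidance event is exactly the ``success'' event invoked immediately afterwards, namely that no nonzero $\mfw\in\mathrm{range}(\mfX)$ lands in $\cup_{j\in\mcC}K_j$.

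First I would lower bound $\mbE[M]$ via Gordon's Gaussian comparison inequality (see \cite{vershynin2018high}). Comparing the canonical process $X_{\mfz,\mathbf{u}}=\mathbf{u}^T\mathbf{G}\mfz$ with the auxiliary process $Y_{\mfz,\mathbf{u}}=\|\mfz\|_2\,\mathbf{g}^T\mathbf{u}+\|\mathbf{u}\|_2\,\mfh^T\mfz$, where $\mathbf{g}\sim\mcN(0,\mfI_k)$ and $\mfh\sim\mcN(0,\mfI_n)$ are independent, Gordon's theorem yields $\mbE[M]\ge \mbE\big[\min_{\mfz\in K}\max_{\|\mathbf{u}\|_2\le1}Y_{\mfz,\mathbf{u}}\big]$. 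Using $\|\mfz\|_2=1$ on $K$, the inner maximum equals $(\|\mathbf{g}\|_2+\mfh^T\mfz)_+$, and since $(\cdot)_+$ is nondecreasing it commutes with the minimum over $\mfz$, giving
\[
\min_{\mfz\in K}\max_{\|\mathbf{u}\|_2\le1}Y_{\mfz,\mathbf{u}}\ =\ \Big(\|\mathbf{g}\|_2-\max_{\mfz\in K}\big(-\mfh^T\mfz\big)\Big)_+\ \ge\ \|\mathbf{g}\|_2-\max_{\mfz\in K}\big(-\mfh^T\mfz\big).
\]
Taking expectations and invoking $\mbE\|\mathbf{g}\|_2=a_k$ together with $\mbE\big[\max_{\mfz\in K}(-\mfh^T\mfz)\big]=\mbE\big[\max_{\mfz\in K}\mfh^T\mfz\big]=w(K)$ (by symmetry of $\mfh$) gives $\mbE[M]\ge a_k-w(K)$.

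Second, I would upgrade this mean bound to a tail bound by Gaussian Lipschitz concentration. The map $\mathbf{G}\mapsto M(\mathbf{G})$ is $1$-Lipschitz with respect to the Frobenius norm, because for unit $\mfz$ we have $\big|\,\|\mathbf{G}\mfz\|_2-\|\mathbf{G}'\mfz\|_2\big|\le \|(\mathbf{G}-\mathbf{G}')\mfz\|_2\le\|\mathbf{G}-\mathbf{G}'\|_F$, and the pointwise minimum of $1$-Lipschitz functions is $1$-Lipschitz. Hence $P\big(M\le \mbE[M]-t\big)\le e^{-t^2/2}$; choosing $t=a_k-w(K)$ (assumed positive) forces $\mbE[M]-t\ge0$, so $P(M\le 0)\le e^{-(a_k-w(K))^2/2}$. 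Since $\{L\cap K=\varnothing\}=\{M>0\}$, this shows the escape probability is at least $1-e^{-(a_k-w(K))^2/2}$; the universal constants $\tfrac72$ and $\tfrac1{18}$ in the stated bound follow from Gordon's sharper packaging of the comparison step, which handles the positive-part truncation and the supremum over a net more carefully.

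The main obstacle is the comparison step: applying Gordon's inequality correctly requires the precise min--max structure (minimization over $\mfz\in K$, maximization over the dual ball $\|\mathbf{u}\|_2\le1$) and the exact auxiliary process, and one must respect that $K$ need not be convex---only closed---so the argument must run at the level of Gaussian processes rather than through any cone duality or statistical-dimension identity. A secondary technical point is justifying the identification of the Haar subspace with $\ker(\mathbf{G})$ (rotational invariance plus the a.s.\ full-rank and dimension count), and tracking the $(\cdot)_+$ truncation and the net cardinality so as to land on the precise constants $\tfrac72$ and $\tfrac1{18}$ rather than the softer $\tfrac12$ obtained from the direct concentration argument above.
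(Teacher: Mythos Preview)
The paper does not prove this lemma: it is quoted by name as Gordon's escape-through-a-mesh theorem and used as a black box. Your sketch is the standard proof of that classical result---realize the uniformly random $(n-k)$-dimensional subspace as $\ker(\mathbf{G})$ for a $k\times n$ standard Gaussian matrix, apply Gordon's Gaussian min--max comparison to obtain $\mbE\big[\min_{\mfz\in K}\|\mathbf{G}\mfz\|_2\big]\ge a_k-w(K)$, and upgrade to a tail bound via $1$-Lipschitz Gaussian concentration---and it is correct. Your route in fact yields the cleaner bound $P(L\cap K=\varnothing)\ge 1-\exp\!\big(-\tfrac12(a_k-w(K))^2\big)$; the constants $\tfrac72$ and $\tfrac{1}{18}$ in the paper's formulation trace back to Gordon's original probability-level inequality rather than the expectation-plus-concentration packaging you use.

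One caveat: you establish the escape bound $P(L\cap K=\varnothing)\ge 1-\cdots$, which is the standard Gordon statement. The lemma as printed has the complementary event $L\cap K\neq\varnothing$ and is missing the square on $a_k-w(K)$ in the exponent; both appear to be typos, and you have silently corrected the first. Your side remark that avoidance is the ``success'' event downstream does not match how the paper phrases its application, but this has no bearing on the validity of your proof of the lemma itself.
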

Note that $\mfX \mfw$ is a random $d$-dimensional subspace of $\mbR^n$. According to the Gordon's escape through a mesh theorem, we have
\begin{equation}
    P(\exists \mfw\in\mbR^d:\mfX\mfw\in \cup_{j\in \mcC}K_j, \mfw\neq 0)\geq 1-\frac{7}{2}e^{-\frac{1}{18}(a_{n-d}-w(K))}.
\end{equation}
To ensure that there exists $\mfw\neq 0$ such that $\mfX\mfw\in \cup_{j\in \mcC}K_j$ with high probability, we require that $a^2_{n-d}>w(K)^2$. As $\frac{k}{k+1}k\leq a^2_{k}\leq k$, it is sufficient to have $w(K)^2<n-d$. Therefore, it suffices to calculate the squared Gaussian width of $K$. We can compute that
\begin{equation}
\begin{aligned}
w(K)^2=&\pp{\mbE\bb{ \max_{j\in\mcC}\max_{\mfz\in\mbR^n: D_j\mfz\geq 0, \|\mfz\|_2=1} \mfg^T\mfz}}^2\\
= &\pp{\mbE\bb{  \max_{j\in \mcC}\|(\mfD_j\mfg)_++((\mfD_j-\mfI_n)\mfg)_+\|_2}}^2\\
\leq &\mbE\bb{\pp{ \max_{j\in \mcC}\|(\mfD_j\mfg)_++((\mfD_j-\mfI_n)\mfg)_+\|_2 }^2}\\
=&\mbE\bb{\max_{j\in \mcC}\|(\mfD_j\mfg)_+\|_2^2+\|((\mfD_j-\mfI_n)\mfg)_+\|_2^2}.
\end{aligned}
\end{equation}  
By noting that
\begin{equation*}
\begin{aligned}
&\|(\mfD_j\mfg)_+\|_2^2+\|((\mfD_j-\mfI_n)\mfg)_+\|_2^2\\
=&\|(\mfg)_+\|_2^2-\|((\mfI_n-\mfD_j)\mfg)_+\|_2^2+\|((\mfD_j-\mfI_n)\mfg)_+\|_2^2,
\end{aligned}
\end{equation*}
we have
\begin{equation}
\begin{aligned}
&\mbE\bb{\max_{j\in \mcC}\|(\mfD_j\mfg)_+\|_2^2+\|((\mfD_j-\mfI_n)\mfg)_+\|_2^2}\\
=&\mbE\bb{\|(\mfg)_+\|_2^2+ \max_{j\in \mcC}\pp{\|((\mfD_j-\mfI_n)\mfg)_+\|_2^2-\|((\mfI_n-\mfD_j)\mfg)_+\|_2^2}}\\
=&\frac{n}{2}+\mbE\bb{\max_{j\in \mcC}\sum_{i:(\mfD_j)_{i,i}=0}-\sgn(\mfg_i)\mfg_i^2}
=\frac{n}{2}+\mbE\bb{\max_{S\subseteq [n]:|S|\leq d-1}\sum_{i\in S}\sgn(\mfg_i)\mfg_i^2}\\
\end{aligned}
\end{equation}
Let $R=\sgn(G)G^2$, where $G\sim \mcN(0,1)$. Denote $F_R$ be the CDF of the random variable $R$. Suppose that $d,n\to\infty$ with a fixed ratio $d=\theta n$, $\frac{1}{n}\mbE\bb{\max_{S\subseteq [n]:|S|\leq d-1}\sum_{i\in S}\sgn(\mfg_i)\mfg_i^2}$ will converge to
\begin{equation}\label{equ:F_r}
    \int_{F_r^{-1}(1-\theta)}^\infty rdF_R(r).
\end{equation}
We note that
\begin{equation*}
    1-F_R(r)=P(R\geq r)=\frac{1}{2}P(G^2\geq r) = \frac{1}{2}(1- F_{\xi^2}(r)).
\end{equation*}
Therefore, we can rewrite \eqref{equ:F_r} as $\frac{1}{2}\int_{F_{\xi^2}^{-1}(1-2\theta)}^\infty rdF_{\xi^2}(r)$. Denote
\begin{equation}
    g(\theta)=\frac{1}{2}+\frac{1}{2}\int_{F_{\xi^2}^{-1}(1-2\theta)}^\infty rdF_{\xi^2}(r)+\theta.
\end{equation}
We note that $g(\theta)$ monotonically increases for $\theta\in[0,1/2)$. By noting that $g(0)=1/2$ and $g(0.5)=1.5$, there uniquely exists $\theta^*\in(0,1/2)$ such that $g(\theta)=1$. We also note that
\begin{equation}
    \lim_{n\to\infty,d=\theta n} \frac{1}{n}\pp{w^2(K)+d}=g(\theta).
\end{equation}
Therefore, for sufficiently large $n,d$ with $d<n\theta$, we have $w^2(K)<n-d$, which implies that \eqref{max_Xh} holds w.h.p..

We present a numerical way to compute $g(\theta)$. Denote $q=F_{\xi^2}^{-1}(1-2\theta)$.By integration by parts, we can compute that
\begin{equation}
\begin{aligned}
    &\frac{1}{2}\int_q^\infty rdF_{\xi^2}(r)=-\frac{1}{2}\int_q^\infty rd(1-F_{\xi^2}(r)) \\
    =&-\frac{1}{2}r(1-F_{\xi^2}(r))|_q^\infty+\int_q^\infty (1-F_{\xi^2}(r))dr\\
    =&q\theta +\frac{1}{2}\int_q^\infty (1-F_{\xi^2}(r))dr.
\end{aligned}
\end{equation}
By the numerical quadrature of the survival function of the $\xi^2$ random variable with 1 degree of freedom, we plot $g(\theta)$ in Figure \ref{fig:gtheta}. Note that when $\theta\approx 0.1314$, we have $g(\theta)\approx1$.

\begin{figure}[H]
\centering
\begin{minipage}[t]{0.45\textwidth}
\centering
\includegraphics[width=\linewidth]{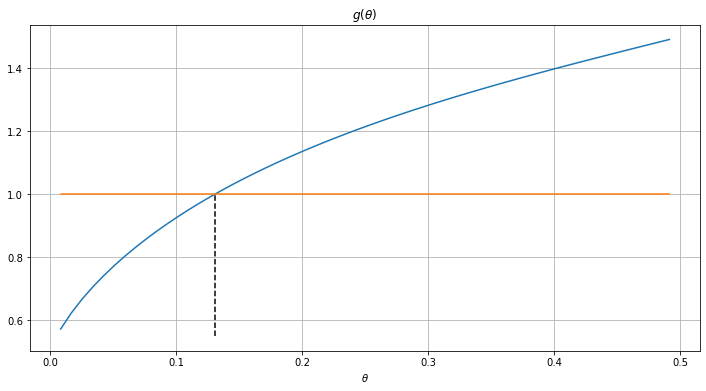}
\end{minipage}
\caption{$g(\theta)$ as a function of $\theta$ in $[0,0.5]$. }\label{fig:gtheta}
\end{figure}
\end{proof}

\subsection{Proof of Theorem \ref{thm:relu_skip_phase}}
\begin{proof}
For $\mfh\in\mbR^d$ with $\mfh\neq 0$, denote $A(\mfh)=\mfX^T\diag(\mbI(\mfX\mfh\geq 0))\mfX$. We first prove the case where $n<2d$. 
According to the kinematic formula, $P(\exists \mfh\in\mbR^d:\mfh\neq 0, A(\mfh)=\mfI_n)\geq 1-\exp^{-\alpha n}$, where $\alpha=\frac{(n/2-d)^2}{64n^2}$. 
In other words, there exists an all-ones hyperplane arrangement with probability at least $1-\exp(-n\alpha)$. In this case, let $\mfD_{j}=\mfI_n$. Construct $\tilde \mfw_i=\mfw_i'=0$ if $i\neq j$, $\tilde \mfw_j=\mfw^*$ and $\tilde \mfw_j'=0$. Then, $\mfW'=(0,\tilde \mfw_1,\tilde \mfw_1',\dots,\tilde \mfw_p,\tilde \mfw_p')$ is also a feasible solution to \eqref{min_nrm:grelu_skip}. 

Then, we consider the case where $n>2d$. We show that with probability at least $1-\exp(-\alpha n)$, the event \eqref{equ:event_hu} holds. For $\bsigma\in \{0,1\}^n$, we let $B(\bsigma)=\mfX^T\diag(\bsigma)\mfX$. 
As $\mfX^T\mfX=\mfI_d$, we note that $\|B(\bsigma)\mfw^*\|_2=1$ if and only if $\|B(\bsigma)\|_2=1$ and $\mfw^*\in\eigmax(B(\bsigma))$. For $\bsigma\in \{0,1\}^n$ with $ \bsigma\neq \bone$ and  $\|B(\bsigma)\|_2=1$, as $B(\bsigma)\neq I$, we have $\dim(\eigmax(B(\bsigma)))\leq d-1$. This implies that $P(\mfw^*\in \eigmax(B(\bsigma)|\|B(\bsigma)\|_2=1)=0$. Therefore, we have the bound:
\begin{equation}
\begin{aligned}
    &P\pp{\max_{\mfh\in \mbR^d:\mfh\neq 0}\|A(\mfh)\mfw^*\|_2=1}\\
    =& P\pp{\max_{\bsigma: \exists  \mfh\neq 0, \bsigma=\mbI(\mfX\mfh\geq 0)}\|B(\bsigma)\mfw^*\|_2=1}\\
    \leq & \sum_{\bsigma\in\{0,1\}^n} P(\|B(\bsigma)\mfw^*\|_2=1,\exists \mfh\neq 0, \bsigma=\mbI(\mfX\mfh\geq 0))\\
    \leq&P(\exists \mfh\neq 0, \bone=\mbI(\mfX\mfh\geq 0))+\sum_{\bsigma\in\{0,1\}^n, \bsigma\neq \bone }P(\|B(\bsigma)\mfw^*\|_2<1)\\
    =&P(\exists \mfh\neq 0, \bone=\mbI(\mfX\mfh\geq 0)).
\end{aligned}
\end{equation}
 According to the kinematic formula, for $n>2d$, $P(\exists \mfh\neq 0, \bone=\mbI(\mfX\mfh\geq 0))\leq \exp^{-\alpha n}$. This implies that $$P\pp{\max_{\mfh\in \mbR^d,\mfh\neq 0}\|A(\mfh)\mfw^*\|_2<1}=1-P\pp{\max_{\mfh\in \mbR^d,\mfh\neq 0}\|A(\mfh)\mfw^*\|_2=1}\geq 1-\exp^{-\alpha n}.$$

Therefore for $n>2d$, the neural isometry condition \eqref{equ:event_hu} holds with probability at least $1-\exp(-n\alpha)$. From Proposition \ref{prop:irrep_grelu_skip}, the neural isometry condition \eqref{equ:event_hu} implies that $\mfW$ is the unique optimal solution to \eqref{min_nrm:grelu_skip}.

\end{proof}
\subsection{Proof of Proposition \ref{prop:noisy}}
\begin{proof}
To ensure that $\mfW$ is an optimal solution to \eqref{gl:general}, we only require that the KKT conditions \eqref{kkt:gl_general} at $\mfW$ are satisfied, i.e., 
\begin{equation}\label{kkt:gl_general}
\begin{aligned}
    &\norm{\mfA_j^T\blbd}_2\leq \beta, &\text{ if } j\neq i^*,\\
    &\mfA_{i^*}^T\blbd = -\beta\frac{ \mfw_{i^*}}{\| \mfw_{i^*}\|_2},\\
    &\mfA_{i^*} ({\mfw}_{i^*}-\mfw^*)-\mfz=\blbd.
\end{aligned}
\end{equation}
The last two equations give
\begin{equation}
    \mfA_{i^*}^T\mfA_{i^*} {\mfw}_{i^*}+\beta \frac{\mfw_{i^*}}{\|{\mfw}_{i^*}\|_2}=\mfA_{i^*}^T\mfA_{i^*} \mfw^*+\mfA_{i^*}^T\mfz.
\end{equation}

Let us write ${\mfw}_{i^*}=r\mfw $ where $r=\|{\mfw}_{i^*}\|_2$ and $\mfw =\frac{{\mfw}_{i^*}}{\|{\mfw}_{i^*}\|_2}$. Then, the above expression gives 
\begin{equation}\label{equ:w}
    (r\mfA_{i^*}^T\mfA_{i^*} +\beta \mfI)\mfw = \mfA_{i^*}^T\mfA_{i^*} \mfw^*+\mfA_{i^*}^T\mfz.
\end{equation}
As $r\mfA_{i^*}^T\mfA_{i^*}+\beta \mfI$ is invertible, we obtain an explicit solution for $\mfw$, i.e.,
\[
\mfw=(r\mfA_{i^*}^T\mfA_{i^*}+\beta \mfI)^{-1}(\mfA_{i^*}^T\mfA_{i^*}\mfw^*+\mfA_{i^*}^T\mfz).
\]
Because $\mfw$ satisfies $\|\mfw\|_2=1$, the scalar $r$ shall satisfy
\begin{equation}\label{eq:noisy}
  1=\left\|(r\mfA_{i^*}^T\mfA_{i^*}+\beta \mfI)^{-1}(\mfA_{i^*}^T\mfA_{i^*}\mfw^*+\mfA_{i^*}^T\mfz)\right\|_2.  
\end{equation}
Because $\mfA_{i^*}^T\mfA_{i^*}=\mfI$, we have $r+\beta=\|\mfw^*+\mfA_{i^*}^T\mfz\|_2$. This implies that $r=\|\mfw^*+\mfA_{i^*}^T\mfz\|_2-\beta$. As $r\geq 0$, we require that $\beta\leq \|\mfw^*+\mfA_{i^*}^T\mfz\|_2$. A sufficient condition is that $\beta\leq \|\mfw^*\|_2-\|\mfz\|_2$. We can write the expression of $\blbd$ as
\begin{equation}
    \begin{aligned}
        \blbd =& \mfA_{i^*} ({\mfw}_{i^*}-\mfw^*)-\mfz\\
        =&\mfA_{i^*}(\mfA_{i^*}^T\mfA_{i^*}+\beta/r \mfI)^{-1}(\mfA_{i^*}^T\mfA_{i^*}\mfw^*+\mfA_{i^*}^T\mfz)-\mfA_{i^*}\mfw^*-\mfz\\
        =&\frac{r}{r+\beta}\mfA_{i^*}\mfw^*-\mfA_{i^*}\mfw+\frac{r}{r+\beta} \mfA_{i^*}\mfA_{i^*}^T\mfz-\mfz\\
        =&-\frac{\beta}{r+\beta} \mfA_{i^*}\mfw^*+\pp{\mfI-\frac{r}{r+\beta}\mfA_{i^*}\mfA_{i^*}^T}\mfz.
    \end{aligned}
\end{equation}
Because $\mfA_{i^*}\mfA_{i^*}^T$ is a projection matrix whose eigenvalues are $0$ and $1$, we have $\|\mfI-r/(r+\beta)\mfA_{i^*}\mfA_{i^*}^T\|_2\leq 1$.  Therefore, for $j\neq i^*$, we have the upper bound
\begin{equation}
    \begin{aligned}
        \|\mfA_j^T\blbd \|_2 &=\norm{-\frac{\beta}{r+\beta} \mfA_j^T\mfA_{i^*}\mfw^*+\mfA_j^T\pp{\mfI-\frac{r}{r+\beta}\mfA_{i^*}\mfA_{i^*}^T}\mfz}_2\\
        &\le \frac{\beta}{r+\beta}\|\mfA_j^T\mfA_{i^*}\mfw^*\|_2+\|\mfA_j\|_2\|\mfz\|_2\\
        &= \frac{\beta}{\norm{\mfw^*+\mfA_{i^*}^T\mfz}
        _2}\|\mfA_j^T\mfA_{i^*}\mfw^*\|_2+\|\mfA_j\|_2\|\mfz\|_2\\
        &\le \frac{\beta}{\|\mfw^*\|
        _2-\|\mfA_{i^*}^T\mfz\|_2}\|\mfA_j^T\mfA_{i^*}\mfw^*\|_2+\|\mfz\|_2\\
        &\le \frac{\beta}{\|\mfw^*\|
        _2-\|\mfz\|_2}\|\mfA_j^T\mfA_{i^*}\mfw^*\|_2+\|\mfz\|_2\\
        &\le \beta \frac{1-\gamma}{1-\|\mfz\|_2/\|\mfw^*\|_2}+\|\mfz\|_2\leq \beta.
    \end{aligned}
\end{equation}
Here we utilize that $\|\mfA_j\|_2\leq 1$ and
\begin{equation}
    \beta\geq \|\mfz\|_2\frac{1-\|\mfz\|_2/\|\mfw^*\|_2}{\gamma-\|\mfz\|_2/\|\mfw^*\|_2}.
\end{equation}
Therefore, it implies that there exists a solution $\mfW=(\mfw_1,\dots,{\mfw}_k)$ such
that $\mfw_j^*=0$ for $j\neq i^*$. In addition, we provide the upper bound for the $\ell_2$ norm $\norm{\mfw_{i^*}-\mfw^*}_2$.

\begin{equation}
    \begin{aligned}
        \norm{\mfw_{i^*}-\mfw^*}_2 & = \norm{r\mfw-\mfw^*}_2\\
        &=\norm{\frac{r}{r+\beta}\left(\mfw^*+\mfA_{i^*}^T\mfz\right)-\mfw^*}_2\\
        &\le \frac{\beta}{r+\beta}\|\mfw^*\|_2
        +\frac{r}{r+\beta}\norm{\mfA_{i^*}^T\mfz}_2\\
        &\le \frac{\beta\|\mfw^*\|_2}{\norm{\mfw^*+\mfA_{i^*}^T\mfz}_2} + \|\mfz\|_2\\
        &\le \frac{\beta\|\mfw^*\|_2}{\|\mfw^*\|_2-\|\mfz\|_2} + \|\mfz\|_2.
    \end{aligned}
\end{equation}

\end{proof}
\subsection{Proof of Theorem \ref{thm:noisy}}

In order to prove these theorems, we consider a generic group Lasso problem
\begin{equation}\label{gl:general}
 \min_{\left\{\mfw_{j}\right\}_{j=1}^{k}}\quad  \left\|\sum_{j=1}^{k} \mfA_{j}\mfw_{j}-\mfy\right\|_2^2+\beta\sum_{j=1}^{k}\left\|\mfw_{j}\right\|_{2}.
\end{equation}
The next result provides a sufficient condition on the regularization parameter and the norm of the noise component to ensure successful support recovery, as well as an estimation of the upper bound on the $\ell_2$ distance between the optimal solution and the embedded neuron.

\begin{proposition}\label{prop:noisy}
Let $\mfy=\mfA_{i^*}\mfw^*+\mfz$, where $\mfA_{i^*}$ satisfies $(\mfA_{i^*})^T\mfA_{i^*}=\mfI_d$. Assume that $\|\mfA_j\|_2\leq 1$ for $j\neq i^*$.  Suppose that the following condition holds.
$$
\max_{j\neq i^*}\|\mfA_j^T\mfA_{i^*}\mfw^*\|_2\leq (1-\gamma)\|\mfw^*\|_2
$$
for a certain scalar constant $\gamma>0$. Further, suppose that $\norm{\mfz}_2\leq \frac{\gamma}{2} \norm{\mfw^*}_2$. Then, for $\beta\in\bb{\norm{\mfz}_2\frac{\|\mfw^*\|_2-\norm{\mfz}_2}{\gamma\|\mfw^*\|_2-\norm{\mfz}_2},\|\mfw^*\|_2-\norm{\mfz}_2}$, 
there exists a solution $\mfW=(\mfw_1,\dots,{\mfw}_k)$ such
that $\mfw_j^*=0$ for $j\neq i^*$.
Moreover, the $\ell_2$ norm $\norm{\mfw_{i^*}-\mfw^*}_2$ is bounded as follows:
\[\norm{\mfw_{i^*}-\mfw^*}_2\le \frac{\beta\|\mfw^*\|_2}{\|\mfw^*\|_2-\|\mfz\|_2} + \|\mfz\|_2.
\]
\end{proposition}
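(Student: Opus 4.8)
The plan is to argue by the standard primal--dual (KKT) witness construction, exploiting that \eqref{gl:general} is a convex program, so that exhibiting a single primal point together with a valid dual certificate already proves global optimality. I would take as candidate primal solution $\mfW=(\mfw_1,\dots,\mfw_k)$ with $\mfw_j=0$ for every $j\neq i^*$ and $\mfw_{i^*}$ left to be determined, and as candidate dual vector the residual $\blbd=\mfA_{i^*}(\mfw_{i^*}-\mfw^*)-\mfz$. The stationarity condition on the active block $i^*$ reads $\mfA_{i^*}^T\blbd=-\beta\,\mfw_{i^*}/\|\mfw_{i^*}\|_2$; substituting the expression for $\blbd$ and using $\mfA_{i^*}^T\mfA_{i^*}=\mfI_d$ together with $\mfA_{i^*}^T\mfy=\mfw^*+\mfA_{i^*}^T\mfz$ reduces it, after writing $\mfw_{i^*}=r\,\mfw$ with $r=\|\mfw_{i^*}\|_2$ and $\|\mfw\|_2=1$, to $(r+\beta)\,\mfw=\mfw^*+\mfA_{i^*}^T\mfz$. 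Hence $\mfw$ must point along $\mfw^*+\mfA_{i^*}^T\mfz$ and $r=\|\mfw^*+\mfA_{i^*}^T\mfz\|_2-\beta$, which is nonnegative as soon as $\beta\le\|\mfw^*\|_2-\|\mfz\|_2$, using $\|\mfA_{i^*}^T\mfz\|_2\le\|\mfz\|_2$ (the columns of $\mfA_{i^*}$ are orthonormal). This pins the candidate down completely and yields the upper endpoint of the admissible $\beta$-interval.

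With $\mfw_{i^*}=\tfrac{r}{r+\beta}\big(\mfw^*+\mfA_{i^*}^T\mfz\big)$ now explicit, I would substitute back into $\blbd$ to obtain $\blbd=-\tfrac{\beta}{r+\beta}\mfA_{i^*}\mfw^*+\big(\mfI_n-\tfrac{r}{r+\beta}\mfA_{i^*}\mfA_{i^*}^T\big)\mfz$, and then verify the remaining feasibility inequalities $\|\mfA_j^T\blbd\|_2\le\beta$ for $j\neq i^*$. Since $\mfA_{i^*}\mfA_{i^*}^T$ is an orthogonal projector, $\mfI_n-\tfrac{r}{r+\beta}\mfA_{i^*}\mfA_{i^*}^T$ has operator norm at most $1$; combining this with $\|\mfA_j\|_2\le1$, the incoherence hypothesis $\|\mfA_j^T\mfA_{i^*}\mfw^*\|_2\le(1-\gamma)\|\mfw^*\|_2$, the triangle inequality, and $r+\beta=\|\mfw^*+\mfA_{i^*}^T\mfz\|_2\ge\|\mfw^*\|_2-\|\mfz\|_2$ gives $\|\mfA_j^T\blbd\|_2\le\beta\,\tfrac{1-\gamma}{1-\|\mfz\|_2/\|\mfw^*\|_2}+\|\mfz\|_2$. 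Requiring the right-hand side to be at most $\beta$ and solving the resulting linear inequality for $\beta$ yields the lower endpoint $\|\mfz\|_2\,\tfrac{\|\mfw^*\|_2-\|\mfz\|_2}{\gamma\|\mfw^*\|_2-\|\mfz\|_2}$ of the interval, which is meaningful (positive denominator) exactly when $\|\mfz\|_2<\gamma\|\mfw^*\|_2$; a short computation then shows the closed $\beta$-interval between these two endpoints is nonempty precisely under the standing assumption $\|\mfz\|_2\le\tfrac{\gamma}{2}\|\mfw^*\|_2$. With all KKT conditions in force, $\mfW$ is a global minimizer of \eqref{gl:general} supported on $\{i^*\}$.

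The error bound then follows from a direct estimate: $\mfw_{i^*}-\mfw^*=\tfrac{r}{r+\beta}\mfA_{i^*}^T\mfz-\tfrac{\beta}{r+\beta}\mfw^*$, so the triangle inequality together with $\tfrac{r}{r+\beta}\le1$, $\|\mfA_{i^*}^T\mfz\|_2\le\|\mfz\|_2$, and $r+\beta\ge\|\mfw^*\|_2-\|\mfz\|_2$ give $\|\mfw_{i^*}-\mfw^*\|_2\le\|\mfz\|_2+\tfrac{\beta\|\mfw^*\|_2}{\|\mfw^*\|_2-\|\mfz\|_2}$, as claimed.

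I expect the main obstacle to be not any single estimate but the reconciliation of the two opposing constraints on $\beta$: dual feasibility of the inactive blocks forces $\beta$ to be large enough to dominate the noise transmitted through the cross terms $\mfA_j^T\blbd$, while nonnegativity of $r$---equivalently, that the active block does not collapse to the zero vector---forces $\beta$ to stay below $\|\mfw^*\|_2-\|\mfz\|_2$, and one must check that the two thresholds are simultaneously satisfiable and that this is captured exactly by the clean condition $\|\mfz\|_2\le\tfrac{\gamma}{2}\|\mfw^*\|_2$. A minor loose end is the degenerate boundary case $r=0$ (which can only occur at the upper endpoint of the $\beta$-interval and only when $\mfA_{i^*}^T\mfz$ is exactly anti-aligned with $\mfw^*$), where the active-block stationarity must be read as the subgradient inclusion $\|\mfA_{i^*}^T\blbd\|_2\le\beta$; the same chain of bounds then still certifies the zero-supported witness.
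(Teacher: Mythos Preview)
Your proposal is correct and follows essentially the same route as the paper's own proof: the KKT/primal--dual witness with $\mfw_j=0$ for $j\neq i^*$, the parametrization $\mfw_{i^*}=r\mfw$ leading to $r+\beta=\|\mfw^*+\mfA_{i^*}^T\mfz\|_2$, the explicit residual $\blbd=-\tfrac{\beta}{r+\beta}\mfA_{i^*}\mfw^*\pm\big(\mfI_n-\tfrac{r}{r+\beta}\mfA_{i^*}\mfA_{i^*}^T\big)\mfz$, and the same chain of operator-norm and triangle inequalities to certify the inactive blocks and derive the $\ell_2$ error bound. Your additional remarks on the nonemptiness of the $\beta$-interval under $\|\mfz\|_2\le\tfrac{\gamma}{2}\|\mfw^*\|_2$ and on the boundary case $r=0$ are not spelled out in the paper but are welcome clarifications.
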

In order to apply Proposition \ref{prop:noisy}, we need to estimate the upper bound of 
$$
     \max_{\mfh\in\mbR^d: \mfh\neq 0}\norm{\mfU^T\diag(\mbI(\mfX\mfh\geq 0))\mfU}_2.
$$

From Lemma \ref{lem:net}, by taking $t=1/4$, for $n\geq  4000\sigma^2d\log(54n)$, we have 
\begin{equation}
\begin{aligned}
   P\pp{ \max_{\mfh\in\mbR^d: \mfh\neq 0}\|\mfX^T\diag(\mbI(\mfX\mfh\geq 0))\mfX\|_2\leq \frac{3}{4}}
   \geq 1-4\exp\pp{-\frac{n}{8000\sigma^2}}.
\end{aligned}
\end{equation}
Note that $\mfX=\mfU\bSigma\mfV^T$, therefore
\begin{equation}
\norm{\mfX^T\mfD_j\mfX}_2= \norm{\bSigma\mfU^T\mfD_j\mfU\bSigma}_2
\ge\sigma_{\min}^2\norm{\mfU^T\mfD_j\mfU}_2,
\end{equation}
where $\sigma_{\min}$ is the smallest singular value of $\mfX$.

On the other hand, from Theorem 4.6.1 in \citep{vershynin2018high}, we have the following lower bound.
\begin{equation}
    P\pp{\sigma_\mathrm{min}\geq 1-2\sqrt{\frac{d}{n}}}\geq 1-2\exp(-d).
\end{equation}
For $n\geq 1024 d$, we have
\begin{equation}
\begin{aligned}
    &P\pp{\sigma_\mathrm{min}^2\geq 7/8}
    \geq P\pp{\sigma_\mathrm{min}\geq 1-1/16}
    \geq P\pp{\sigma_\mathrm{min}\geq 1-2\sqrt{\frac{d}{n}}}
    \geq 1-2\exp(-d).
\end{aligned}
\end{equation}
This implies that for $n$ satisfying $n\geq \max\{1024d,4000\sigma^2d\log(54n)\}$, we have
\begin{equation}
\begin{aligned}
    &P\pp{\max_{j\in[p]}\norm{\mfX^T\mfD_{j}\mfX}_2\leq \frac{3}{4}}\le P\pp{\sigma_{\min}^2\max_{j\in[p]}\norm{\mfU^T\mfD_{j}\mfU}_2\leq\frac{3}{4} }      \\
    \le & P\pp{\sigma_{\min}^2\max_{j\in[p]}\norm{\mfU^T\mfD_{j}\mfU}_2\leq\frac{3}{4}\mid \sigma_{\min}^2\ge \frac{7}{8}} P\pp{\sigma_{\min}^2\ge \frac{7}{8}}\\
    &+ P\pp{\sigma_{\min}^2\max_{j\in[p]}\norm{\mfU^T\mfD_{j}\mfU}_2\leq\frac{3}{4}\mid \sigma_{\min}^2\le \frac{7}{8}} P\pp{\sigma_{\min}^2\le \frac{7}{8}}\\
    \le & P\pp{\max_{j\in[p]}\norm{\mfU^T\mfD_{j}\mfU}_2\leq \frac{6}{7}}P\pp{\sigma_{\min}^2\ge \frac{7}{8}} + 1 - P\pp{\sigma_{\min}^2\ge \frac{7}{8}}.
\end{aligned}
\end{equation}
Therefore, we calculate that
\begin{equation}
\begin{aligned}
& P\pp{\max_{j\in[p]}\norm{\mfU^T\mfD_{j}\mfU}_2\leq \frac{6}{7}}\\
 \ge & 1 - 4\exp\pp{-\frac{n}{8000\sigma^2}} / P\pp{\sigma_{\min}^2\ge \frac{7}{8}} \\
 \ge & 1 - 4\exp\pp{-\frac{n}{8000\sigma^2}} / (1-2\exp(-d))\\
 \ge & 1 - 2\exp(-d) - 4\exp\pp{-\frac{n}{8000\sigma^2}} 
\end{aligned}
\end{equation}
Applying Proposition \ref{prop:noisy} with $\gamma=\frac{1}{7}$ and $\mfA_{i^*}=\mfU,\mfA_{j}=\mfD_j\mfU,\mfw^*=\bSigma\mfV^T\mfw^*$, we conclude that if the assumptions in Theorem \ref{thm:noisy} is satisfied, then with probability at least $1-4\exp(-n/8000\sigma^2)-2\exp(-d)$ there exists $\mfW=(\mfw,0,\dots,0)$ such that $\mfW$ is the optimal solution to both \eqref{reg:normal_before_relu_skip} and \eqref{reg:normal_before_grelu_skip} whenever $n\geq  \max\{4000\sigma^2 d\log (54n),1024 d\}$. 

Moreover, we obtain the desired upper bound
\[\norm{\mfw-\bSigma\mfV^T\mfw^*}_2
\le \frac{\beta\eta}{\eta-\|\mfz\|_2} + \|\mfz\|_2.\]

Finally, we provide high probability upper and  lower bounds for $\eta=\|\bSigma\mfV^T\mfw^*\|_2$ as follows. Again from Theorem 4.6.1 in \citep{vershynin2018high}, we know that for $n\geq 1024 d$, with probability at least $1-2\exp(-d)$ it holds that
\begin{equation}
    1-1/16\le \sigma_\mathrm{min}\le \sigma_\mathrm{max} \le 1+1/16,
\end{equation}
which immediately implies that $(1-1/16)\|\mfw^*\|_2\le \eta \le (1+1/16)\|\mfw^*\|_2$.



\subsection{Numerical Verification}
In this subsection, we numerically verify Theorem \ref{thm:noisy}. We take $n=40,d=10$ and test for 
$\sigma=0,1/8,1/4$. For each $\sigma$, we solve the regularized group Lasso problem \eqref{reg:normal_before_grelu_skip} for $\beta\in[0,2]$. Then we analyze the solution and record the number of active neurons. The recovery is regarded as success if there is exactly one active neuron. In Figure \ref{fig:noisy}, the recovery displays a failure-success-failure pattern when $\beta$ increases. Besides, the lower bound of $\beta$ that ensures successful recovery shifts right
as $\sigma$ increases, while the upper bound generally remains the same.

\begin{figure}[H]
\centering
\setcounter{subfigure}{0}
    \subfigure[$\sigma=0$]{
      \centering
      \includegraphics[width=0.3\textwidth]{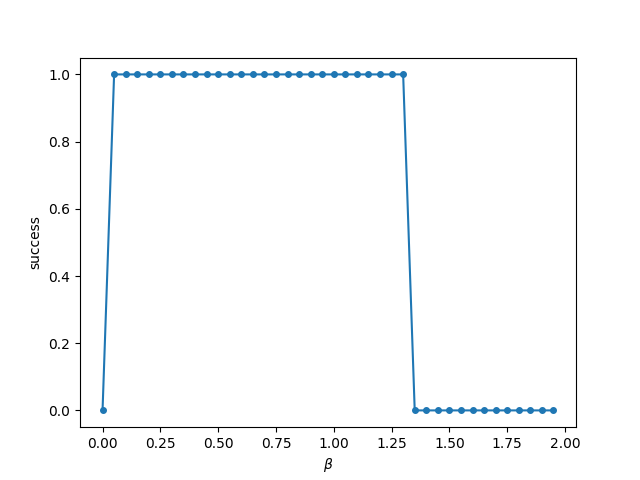}  
    }
    \centering
    \subfigure[$\sigma=1/8$]{
      \centering
      \includegraphics[width=0.3\textwidth]{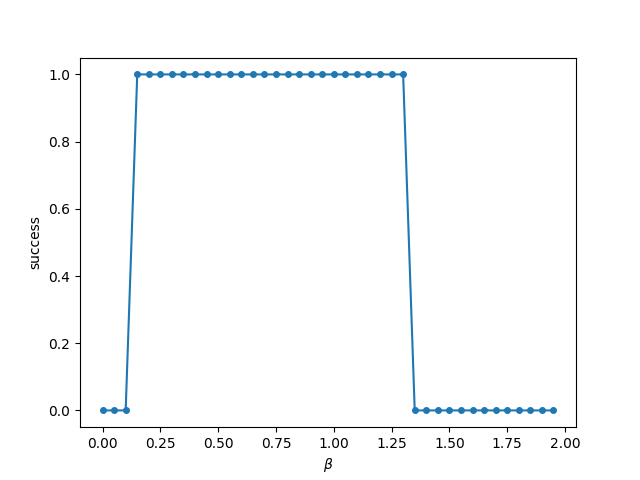}  
    }
    \subfigure[$\sigma=1/4$]{
      \centering
      \includegraphics[width=0.3\textwidth]{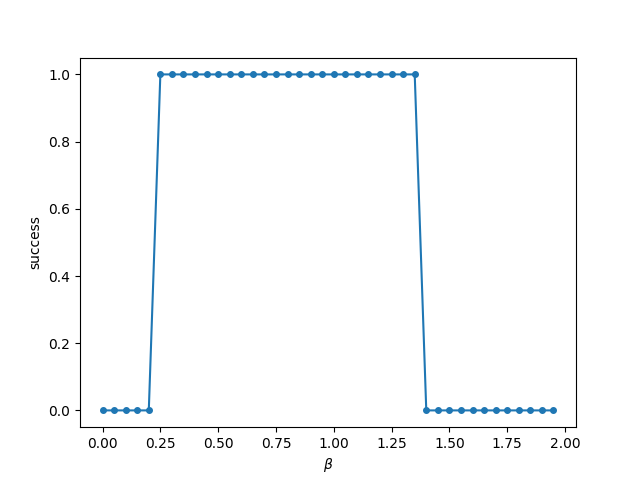}  
    }
    \caption{The pattern of successful recovery of one planted linear neuron by solving regularized group Lasso problem \eqref{gl:general} derived from training ReLU networks with skip connections. }\label{fig:noisy}
\end{figure}
\section{Proofs in Section \ref{sec:relu_normal}}

\subsection{Proof of Theorem \ref{thm:relu_normal}}
\begin{proof}
We first introduce an auxillary lemma:
\begin{lemma}\label{lem:uiuj}
Suppose that $\mfU_i, \mfU_j\in\mbR^{n\times d}$ are column orthonormal, i.e., $\mfU_i^T\mfU_i=\mfI_d$ and $\mfU_j^T\mfU_j=\mfI_d$. If $\mfU_i^T\mfU_j=\mfI_d$, then, we have $\mfU_i=\mfU_j$.
\end{lemma}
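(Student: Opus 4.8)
The statement to prove is Lemma~\ref{lem:uiuj}: if $\mfU_i,\mfU_j\in\mbR^{n\times d}$ are both column orthonormal and $\mfU_i^T\mfU_j=\mfI_d$, then $\mfU_i=\mfU_j$.

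The plan is to exploit the fact that the columns of each matrix form an orthonormal set, so that $\mfU_i\mfU_i^T$ and $\mfU_j\mfU_j^T$ are orthogonal projections onto the $d$-dimensional column spaces $\mathrm{col}(\mfU_i)$ and $\mathrm{col}(\mfU_j)$. First I would observe that for any vector $\mfv\in\mbR^d$, the Pythagorean identity gives $\|\mfU_i\mfv-\mfU_j\mfv\|_2^2 = \|\mfU_i\mfv\|_2^2 + \|\mfU_j\mfv\|_2^2 - 2\mfv^T\mfU_i^T\mfU_j\mfv = \|\mfv\|_2^2 + \|\mfv\|_2^2 - 2\mfv^T\mfv = 0$, using $\mfU_i^T\mfU_i=\mfU_j^T\mfU_j=\mfI_d$ and the hypothesis $\mfU_i^T\mfU_j=\mfI_d$. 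Since this holds for every $\mfv$, it follows that $\mfU_i\mfv=\mfU_j\mfv$ for all $\mfv$, hence $\mfU_i=\mfU_j$.

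Concretely, I would present it as follows: take $\mfv=\mfe_k$, the $k$-th standard basis vector of $\mbR^d$, so that $\mfU_i\mfe_k$ and $\mfU_j\mfe_k$ are the $k$-th columns of $\mfU_i$ and $\mfU_j$ respectively. Then
\[
\|\mfU_i\mfe_k-\mfU_j\mfe_k\|_2^2
= \mfe_k^T\mfU_i^T\mfU_i\mfe_k - 2\mfe_k^T\mfU_i^T\mfU_j\mfe_k + \mfe_k^T\mfU_j^T\mfU_j\mfe_k
= 1 - 2 + 1 = 0,
\]
so the $k$-th columns agree for each $k\in[d]$, which gives $\mfU_i=\mfU_j$. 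Alternatively one can note $\|\mfU_i-\mfU_j\|_F^2 = \tr(\mfU_i^T\mfU_i) - 2\tr(\mfU_i^T\mfU_j) + \tr(\mfU_j^T\mfU_j) = d - 2d + d = 0$ in one line, which is the cleanest route.

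There is essentially no obstacle here; this is a routine linear-algebra identity, and the only mild care needed is to state the Frobenius-norm computation correctly (i.e. that $\tr(\mfI_d)=d$ for each of the three traces). I would go with the $\|\mfU_i-\mfU_j\|_F^2$ computation as the main argument since it is shortest and most transparent.

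\medskip

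\noindent\textit{Proof (as I would write it).}
\begin{proof}
Since $\mfU_i$ and $\mfU_j$ are column orthonormal, $\mfU_i^T\mfU_i=\mfU_j^T\mfU_j=\mfI_d$. Using the hypothesis $\mfU_i^T\mfU_j=\mfI_d$ (and hence also $\mfU_j^T\mfU_i=(\mfU_i^T\mfU_j)^T=\mfI_d$), we compute
\begin{equation*}
\|\mfU_i-\mfU_j\|_F^2
= \tr\big((\mfU_i-\mfU_j)^T(\mfU_i-\mfU_j)\big)
= \tr(\mfU_i^T\mfU_i) - \tr(\mfU_i^T\mfU_j) - \tr(\mfU_j^T\mfU_i) + \tr(\mfU_j^T\mfU_j).
\end{equation*}
Each of the four traces equals $\tr(\mfI_d)=d$, so
\begin{equation*}
\|\mfU_i-\mfU_j\|_F^2 = d - d - d + d = 0.
\end{equation*}
Therefore $\mfU_i-\mfU_j=0$, i.e., $\mfU_i=\mfU_j$.
\end{proof}
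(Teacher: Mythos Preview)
Your proof is correct and essentially the same as the paper's: the paper argues column by column that two unit vectors with inner product $1$ must coincide, which is exactly your $\|\mfU_i\mfe_k-\mfU_j\mfe_k\|_2^2=1-2+1=0$ computation, while your Frobenius-norm version simply sums these over $k$ in one line.
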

\begin{proof}
For $k\in [d]$, we let $\mfw_{i,k}$ and $\mfw_{j,k}$ be the $k$-th column of $\mfU_i$ and $\mfU_j$. Note that $1=(\mfU_i^T\mfU_j)_{k,k}=\mfw_{i,k}^T\mfw_{j,k}$. As $\mfU_i^T\mfU_i=\mfI_d$ and $\mfU_j^T\mfU_j=\mfI_d$, we have $\|\mfw_{i,k}\|_2=\|\mfw_{j,k}\|_2=1$. Therefore, $\mfw_{i,k}^T\mfw_{j,k}=1$ implies that $\mfw_{i,k}=\mfw_{j,k}$. Hence, we have $\mfU_i=\mfU_j$. 
\end{proof}
Consider the event
\begin{equation}
    E_1=\bbbb{\sum_{i=1}^n\mbI(\mfx_i^T\tilde \mfw^*\geq 0)\geq d}.
\end{equation}
We first show that for $n>2d$, $E_1$ holds with high probability. As $\mfx_i\sim \mcN(0,I_d/n)$, $\mbI(\mfx_i^T \mfw^*\geq 0)$ are i.i.d. random variables following $\Bern(1/2)$. Denote $B_i=\mbI(\mfx_i^T \mfw^*\geq 0)$ and let $B=\frac{1}{n}\sum_{i=1}^nB_i$. We note that $\mbE[B]=\frac{1}{2}$. According to the Chernoff bound, we have
\begin{equation}
    P(E_1^c)=P\pp{B< \frac{d}{n}}=P\pp{B<\pp{1-\frac{n-2d}{n}}\mbE[B]}\leq \exp\pp{-\frac{1}{6}\pp{\frac{n-2d}{n}}^2n}.
\end{equation}
This implies that $P(E_1)\geq 1-\exp\pp{-\frac{1}{6}\pp{\frac{n-2d}{n}}^2n}$. 

Denote $E_2=\{\sigma_\mathrm{min}\pp{\mfX^T\mfD_{i^*}\mfX}>0\}$, where $\mfD_{i^*}=\diag(\mbI(\mfX\mfw^*\geq 0))$.  We note that 
$$
\mfX^T\mfD_{i^*}\mfX=\sum_{i=1}^n\mfx_i\mfx_i^T\mbI(\mfx_i^T\mfw^*\geq 0). 
$$
As $\mfx_i\sim \mcN(0,I_d/n)$, we have $P(E_2^c|E_1)=\frac{P(E_2^c, E_1)}{P(E_1)}=0$, which implies that  $P(E_2|E_1)=1$. 


Conditioned on the event $E_2$, we have $\mfU_{i^*}\in\mbR^{n\times d}$. For $j\neq i^*$, we show that $\mfU_{i^*}^T\mfU_j\mfU_{j}^T\mfU_{i^*}\neq \mfI_d$. Let $\mfU_j\in\mbR^{n\times r_j}$. If $r_j<d$, then $\mfU_{i^*}^T\mfU_j\mfU_{j}^T\mfU_{i^*}$ is with rank at most $r_j<d$. Hence, we have $\mfU_{i^*}^T\mfU_j\mfU_{j}^T\mfU_{i^*}\neq \mfI_d$. If $r_j=d$, suppose that $\mfU_{i^*}^T\mfU_j\mfU_{j}^T\mfU_{i^*}=\mfI_d$. Denote $\mfP=\mfU_j^T\mfU_{i^*}\in\mbR^{d\times d}$. Then, $\mfP^T\mfP=\mfI_d$. This implies that $\mfP$ is orthogonal. Hence, $\mfU_j^T(\mfU_{i^*}\mfP^T)=\mfP\mfP^T=\mfI_d$. We note that $\mfU_{i^*}\mfP^T$ is also column orthonormal. This is because 
$$
(\mfU_{i^*}\mfP^T)^T\mfU_{i^*}\mfP^T = \mfP\mfU_{i^*}^T\mfU_{i^*}\mfP^T=\mfP\mfP^T=\mfI_d.
$$
As the matrices $\mfU_j$ and $\mfU_{i^*}\mfP^T$ are column orthonormal and $\mfU_j^T(\mfU_{i^*}\mfP^T)=\mfI_d$, from Lemma \ref{lem:uiuj}, we have $\mfU_j=\mfU_{i^*}\mfP^T$. As $\mfD_{i^*}\neq \mfD_j$, there exists $k\in[n]$ such that either of following statements will hold.
\begin{itemize}
    \item $(\mfD_{i^*})_{k,k}=1$ and $(\mfD_j)_{k,k}=0$.
    \item $(\mfD_{i^*})_{k,k}=0$ and $(\mfD_j)_{k,k}=1$.
\end{itemize}
For the first case, we note that
\begin{equation}
    \mfw_{i^*,k} = (\mfD_i)_{k,k}\mfx_{k}^T\mfV_{i^*}\bSigma_{i^*}, \mfw_{j,k} = (\mfD_j)_{k,k}\mfx_{k}^T\mfV_j\bSigma_j.
\end{equation}
Because $\mfx_k\neq 0$ and $\mfV_{i^*}\bSigma_{i^*}$ is invertible, we have $\|\mfw_{i^*,k}\|_2>0$. As $\mfU_{i^*}=\mfU_j\mfP^T$, we have $\mfw_{i^*,k}^T=\mfw_{j,k}^T\mfP^T$. As $\mfw_{j,k}=0$, this implies that $\mfw_{i^*,k}=0$, which leads to a contradiction. For the second case, we note that $\mfU_{i^*}=\mfU_j\mfP$. Similarly, this will lead to a contradiction.

Therefore, conditioned on $E_2$, for $j\neq i^*$, we have $\mfU_{i^*}^T\mfU_j\mfU_{j}^T\mfU_{i^*}\neq \mfI_d$. We note that $\mfU_{i^*}^T\mfU_j\mfU_{j}^T\mfU_{i^*}\succeq \mfU_{i^*}^T\mfU_{i^*}=\mfI_d$. This implies that $\|\mfU_{j}^T\mfU_{i^*}\tilde \mfw^*\|_2=1$ if and only if we have $\|\mfU_{i^*}^T\mfU_j\mfU_{j}^T\mfU_{i^*}\|_2=1$ and $\tilde \mfw^*\in\eigmax\{\mfU_{i^*}^T\mfU_j\mfU_{j}^T\mfU_{i^*}\}$. As $\mfU_{i^*}^T\mfU_j\mfU_{j}^T\mfU_{i^*}\neq \mfI_d$, if it follows that $\|\mfU_{i^*}^T\mfU_j\mfU_{j}^T\mfU_{i^*}\|_2=1$, then $\eigmax\{\mfU_{i^*}^T\mfU_j\mfU_{j}^T\mfU_{i^*}\}$ is with dimension at most $d-1$. In other words, conditioned on $\|\mfU_{i^*}^T\mfU_j\mfU_{j}^T\mfU_{i^*}\|_2=1$, $\eigmax\{\mfU_{i^*}^T\mfU_j\mfU_{j}^T\mfU_{i^*}\}$ is a random subspace with dimension at most $d-1$. This implies that
\begin{equation}
\begin{aligned}
 &P(\tilde \mfw^*\in \eigmax\{\mfU_{i^*}^T\mfU_j\mfU_{j}^T\mfU_{i^*}\},\|\mfU_{i^*}^T\mfU_j\mfU_{j}^T\mfU_{i^*}\|_2=1|E_2)\\
    =&P(\bSigma_{i^*}\mfV_{i^*}\mfw^*\in \eigmax\{\mfU_{i^*}^T\mfU_j\mfU_{j}^T\mfU_{i^*}\},\|\mfU_{i^*}^T\mfU_j\mfU_{j}^T\mfU_{i^*}\|_2=1|E_2)=0.
\end{aligned}
\end{equation}
Immediately, for $j\neq i^*$, we have
\begin{equation}
    P(\|\mfU_{j}^T\mfU_{i^*}\tilde \mfw^*\|_2=1)=0
\end{equation}
For $\bsigma\in\{0,1\}^n$, define $\mfU(\bsigma)$ as the left singular vector of $\diag(\bsigma)\mfX$. Then, we have
\begin{equation}
\begin{aligned}
    &P\pp{\left.\max_{\mfh\in \mbR^d:\mfh\neq 0, \mbI(\mfX\mfh\geq 0)\neq \mbI(\mfX\tilde \mfw^*\geq 0)}\|\mfU(\mbI(\mfX\mfh\geq 0))^T\mfU_{i^*}\tilde \mfw^*\|_2=1\right|E_2}\\
    =& P\pp{\left.\max_{\bsigma\in \{0,1\}^n: \exists \mfh\neq 0,  \mbI(\mfX\mfh\geq 0)\neq \mbI(\mfX\tilde \mfw^*\geq 0), \bsigma=\mbI(\mfX\mfh\geq 0)}\|\mfU(\bsigma)^T\mfU_{i^*}\tilde \mfw^*\|_2=1\right|E_2}\\
    \leq & \sum_{\bsigma\in\{0,1\}^n} P(\|\mfU(\bsigma)^T\mfU_{i^*}\tilde \mfw^*\|_2=1, \mbI(\mfX\mfh\geq 0)\neq \mbI(\mfX\tilde \mfw^*\geq 0), \bsigma=\mbI(\mfX\mfh\geq 0)|E_2)=0.
\end{aligned}
\end{equation}
We note that $P(E_2)\geq P(E_1,E_2)=P(E_1)\geq 1- \exp\pp{-\frac{1}{6}\pp{\frac{n-2d}{n}}^2n}$. This implies that 
\begin{equation}
\begin{aligned}
    &P\pp{\max_{\substack{\mfh\in \mbR^d:\mfh\neq 0, \\\mbI(\mfX\mfh\geq 0)\neq \mbI(\mfX\tilde \mfw^*\geq 0)}}\|\mfU(\mbI(\mfX\mfh\geq 0))^T\mfU_{i^*}\tilde \mfw^*\|_2<1}\\
    =&P\pp{\max_{\mfh\in \mbR^d:\mfh\neq 0, \mbI(\mfX\mfh\geq 0)\neq \mbI(\mfX\tilde \mfw^*\geq 0)}\|\mfU(\mbI(\mfX\mfh\geq 0))^T\mfU_{i^*}\tilde \mfw^*\|_2<1,E_2}\\
    =&P\pp{\left.\max_{\mfh\in \mbR^d:\mfh\neq 0, \mbI(\mfX\mfh\geq 0)\neq \mbI(\mfX\tilde \mfw^*\geq 0)}\|\mfU(\mbI(\mfX\mfh\geq 0))^T\mfU_{i^*}\tilde \mfw^*\|_2<1\right|E_2} P(E_2)\\
    =&P(E_2)\geq 1- \exp\pp{-\frac{1}{6}\pp{\frac{n-2d}{n}}^2n}.
\end{aligned}
\end{equation}
This completes the proof.
\end{proof}

\subsection{Proof of Proposition \ref{prop:asymp_two_neu}}



\begin{proof}
For simplicity, we assume that $\|\mfw_1\|_2=\|\mfw_2\|_2=\|\mfh_j\|_2=1$. We first consider the case where $\cos\angle(\mfw_1,\mfw_2)=-1$.  Then, we have $\mfw_2=-\mfw_1$. In this case, we have $\mfD_1\mfD_2=0$. Hence, it follows that
\begin{equation}
    \bmbm{\mfU_{s_1}^T\mfU_{s_1}&\mfU_{s_1}^T\mfU_{s_2}\\
    \mfU_{s_2}^T\mfU_{s_1}&\mfU_{s_2}^T\mfU_{s_2}}=\bmbm{\mfI_d&0\\0&\mfI_d}.
\end{equation}
Then, we can simply that
\begin{equation}
\begin{aligned}
&\mfU_j^T\bmbm{\mfU_{1}&\mfU_{2}}\bmbm{\mfU_{1}^T\mfU_{1}&\mfU_{1}^T\mfU_{2}\\
    \mfU_{2}^T\mfU_{1}&\mfU_{2}^T\mfU_{2}}^{-1}\bmbm{\tilde \mfw_1\\\tilde \mfw_2}\\
=    &\mfU_j^T\mfU_1\tilde \mfw_1+\mfU_j^T\mfU_2\tilde \mfw_2 \\
    = &\bSigma_j^{-1}\mfV_j^T\mfX^T\mfD_j\pp{\mfD_1\mfX\mfV_1\bSigma_1^{-1}\frac{\bSigma_1\mfV_1^T\mfw_1}{\|\bSigma_1\mfV_1^T\mfw_1\|_2}}+\bSigma_j^{-1}\mfV_j^T\mfX^T\mfD_j\pp{\mfD_2\mfX\mfV_2\bSigma_2^{-1}\frac{\bSigma_2\mfV_2^T\mfw_2}{\|\bSigma_2\mfV_2^T\mfw_2\|_2}}\\
    =&\frac{1}{\|\bSigma_1\mfV_1^T\mfw_1\|_2}\bSigma_j^{-1}\mfV_j^T\mfX^T\mfD_j\mfD_1\mfX\mfw_1+\frac{1}{\|\bSigma_2\mfV_2^T\mfw_2\|_2}\bSigma_j^{-1}\mfV_j^T\mfX^T\mfD_j\mfD_2\mfX\mfw_2\\
\end{aligned}
\end{equation}
As $n\to\infty$, $\mfX^T\mfD_i\mfD_j\mfX$ converges in probability to
\begin{equation}
    \mfM(\mfh_i,\mfh_j)=\mbE_{\mfx\sim\mcN(0,\mfI_d)}[\mfx\mfx^T\mbI(\mfx^T\mfh_i\geq 0)\mbI(\mfx^T\mfh_j\geq 0)].
\end{equation}
As $n\to \infty$, we have $\mfX^T\mfD_j\mfX \overset{p}{\to} \frac{1}{2}\mfI_d$ and $\Sigma_j \overset{p}{\to} \frac{1}{\sqrt{2}}\mfI_d$. Therefore, we have
\begin{equation}
    \norm{\mfU_j^T\mfU_1\tilde \mfw_1+\mfU_j^T\mfU_2\tilde \mfw_2}_2
    \overset{p}{\to} 2\norm{\mfM(\mfh_j,\mfw_1)\mfw_1-\mfM(\mfh_j,-\mfw_1)\mfw_1}_2.
\end{equation}
According to Lemma 7 in \citep{ghorbani2021linearized}, for $\mfh_i,\mfh_j$ satisfying $\|\mfh_i\|_2=\|\mfh_j\|_2=1$, the matrix $\mfM(\mfh_i,\mfh_j)$ takes the form 
\begin{equation}
    \mfM(\mfh_i,\mfh_j)=c_1(\gamma)\mfI_d+c_2(\gamma)(\mfh_i\mfh_j^T+\mfh_j\mfh_i^T)+c_3(\gamma)(\mfh_i\mfh_i^T+\mfh_j\mfh_j^T). 
\end{equation}
Here $c_1,c_2,c_3$ are functions of $\gamma$. 
Then, we have
\begin{equation}
\begin{aligned}
&\mfM(\mfh_j,\mfw_1)\mfw_1-\mfM(\mfh_j,-\mfw_1)\mfw_1\\
    =&(c_1(\gamma)+\gamma c_2(\gamma) +c_3(\gamma))\mfw_1+2(c_2(\gamma)+\gamma c_3(\gamma))\mfh_j\\
    &-(c_1(-\gamma)-\gamma c_2(\gamma) -c_3(\gamma))\mfw_1+2(c_2(-\gamma)-\gamma c_3(-\gamma))\mfh_j.
\end{aligned}
\end{equation}
We observe that $\|\mfM(\mfh_i,\mfh_j)\mfh_i-\mfM(-\mfh_i,\mfh_j)\mfh_i\|_2^2$ only depends on $\gamma$. Denote $g_1(\gamma)=2\|\mfM(\mfh_i,\mfh_j)\mfh_i-\mfM(-\mfh_i,\mfh_j)\mfh_i\|_2$. Therefore, it is sufficient to compute the case where $\mfh_i=\mfe_1$ and $\mfh_j=\gamma \mfe_1+\sqrt{1-\gamma^2}\mfe_2$. Utilizing Lemma \ref{lem:cal}, we have
\begin{equation}
\begin{aligned}
g_1(\gamma)^2=&4\norm{\mfM(\mfh_i,\mfh_j)\mfh_i-\mfM(-\mfh_i,\mfh_j)\mfh_i}_2^2\\
  =&4\pp{\mbE_{\mfx}\bb{\sigma'(x_1)\sigma'(\gamma x_1+\sqrt{1-\gamma^2}x_2)x_1^2}-\mbE_{\mfx}\bb{\sigma'(x_1)\sigma'(-\gamma x_1+\sqrt{1-\gamma^2}x_2)x_1^2}}^2\\
  &+4\pp{\mbE_{\mfx}\bb{\sigma'(x_1)\sigma'(\gamma x_1+\sqrt{1-\gamma^2}x_2)x_1x_2}-\mbE_{\mfx}\bb{\sigma'(x_1)\sigma'(-\gamma x_1+\sqrt{1-\gamma^2}x_2)x_1x_2}}^2\\
  =&4\pp{\int_0^\infty \pp{F\pp{\frac{\gamma}{\sqrt{1-\gamma^2}}x}-F\pp{-\frac{\gamma}{\sqrt{1-\gamma^2}}x}} p(x)x^2 dx}^2.
\end{aligned}
\end{equation}
We plot $g_1(\gamma)^2$ in Figure \ref{fig:g1_gamma}.

\begin{figure}[H]
\centering
\begin{minipage}[t]{0.45\textwidth}
\centering
\includegraphics[width=\linewidth]{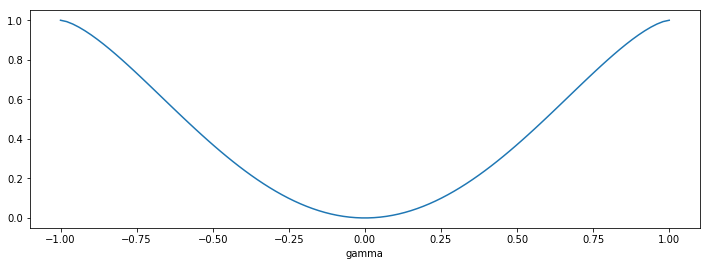}
\end{minipage}
\centering
\begin{minipage}[t]{0.45\textwidth}
\centering
\includegraphics[width=\linewidth]{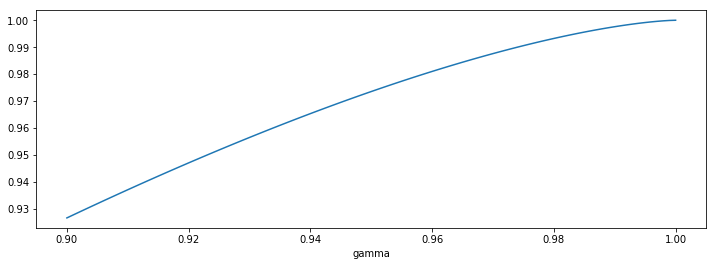}
\end{minipage}
\caption{The plot of $g_1(\gamma)^2$.}\label{fig:g1_gamma}
\end{figure}

Then, we consider the case where $\mfw_1^T\mfw_2=0$. In this case, 
\begin{equation}
\begin{aligned}
&\bmbm{\mfI_d& \mfU_1^T\mfU_2 \\
    \mfU_2^T\mfU_1&\mfI_d}
    =\bmbm{\mfI_d&\bSigma_1^{-1}\mfV_1^T\mfX^T\mfD_1\mfD_2\mfX\mfV_2\bSigma_2^{-1}\\ \bSigma_2^{-1}\mfV_2^T\mfX^T\mfD_2\mfD_1\mfX\mfV_1\bSigma_1^{-1}&\mfI_d}\\
\end{aligned}
\end{equation}
This implies that
\begin{equation}
    \bmbm{\mfV_1&0\\0&\mfV_2}\bmbm{\mfI_d& \mfU_1^T\mfU_2 \\
    \mfU_2^T\mfU_1&\mfI_d} \bmbm{\mfV_1^T&0\\0&\mfV_2^T}\\
    \overset{p}{\to}\bmbm{\mfI_d&2\mfM(\mfw_1,\mfw_2)\\ 2\mfM(\mfw_1,\mfw_2)&\mfI_d}
\end{equation}
On the other hand, we note that
\begin{equation}
\begin{aligned}
    &\mfV_j^T\mfU_j^T\bmbm{\mfU_1&\mfU_2}\bmbm{\mfV_1^T&0\\0&\mfV_2^T}\\
    =&\mfV_j^T\bSigma_j^{-1}\mfV_j\mfX^T\mfD_j\bmbm{\mfD_1\mfX\mfV_1\bSigma_1^{-1}\mfV_1^T&\mfD_2\mfX\mfV_2\bSigma_2^{-1}\mfV_2^T}
    \overset{p}{\to}2\bmbm{\mfM(\mfh_j,\mfw_1)&\mfM(\mfh_j,\mfw_2)}.
\end{aligned}
\end{equation}
We also have
\begin{equation}
\begin{aligned}
\bmbm{\mfV_1&0\\0&\mfV_2}\bmbm{\tilde\mfw_1\\\tilde\mfw_2}=\bmbm{\frac{\mfV_1^T\bSigma_1\mfV_1\mfw_1}{{\|\bSigma_1\mfV_1\mfw_1\|_2}}&\frac{\mfV_2^T\bSigma_2\mfV_2\mfw_2}{{\|\bSigma_2\mfV_2\mfw_2\|_2}}}
\overset{p}{\to}\bmbm{\mfw_1\\\mfw_2}.
\end{aligned}
\end{equation}
Therefore, we have the limit
\begin{equation*}
\begin{aligned}
\norm{\mfU_j\bmbm{\mfU_1&\mfU_2} \bmbm{\mfI_d& \mfU_1^T\mfU_2 \\
    \mfU_2^T\mfU_1&\mfI_d}^{-1}\bmbm{\tilde\mfw_1\\\tilde\mfw_2}}_2
\overset{p}{\to}2\norm{\bmbm{\mfM(\mfh_j,\mfw_1)&\mfM(\mfh_j,\mfw_2)} \bmbm{\mfI_d&2\mfM(\mfw_1,\mfw_2)\\ 2\mfM(\mfw_1,\mfw_2)&\mfI_d}^{-1}\bmbm{\mfw_1\\\mfw_2}}_2
\end{aligned}
\end{equation*}

\begin{lemma}\label{lem:gamma0}
Suppose that $\mfw_1^T\mfw_2=0$. Then, we have $c_1(0)=\frac{1}{4},c_2(0)=\frac{1}{2\pi}$ and $c_3(0)=0$. In other words, it follows that
\begin{equation}
    \mfM(\mfw_1,\mfw_2)=\frac{1}{4}\mfI_d+\frac{1}{2\pi}(\mfw_1\mfw_2^T+\mfw_2^T\mfw_1).
\end{equation}
\end{lemma}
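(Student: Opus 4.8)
The plan is to evaluate the matrix $\mfM(\mfw_1,\mfw_2)=\mbE_{\mfx\sim\mcN(0,\mfI_d)}[\mfx\mfx^T\mbI(\mfx^T\mfw_1\geq 0)\mbI(\mfx^T\mfw_2\geq 0)]$ directly in a convenient orthonormal frame and then read off the coefficients $c_1(0),c_2(0),c_3(0)$ by comparing with the structural form of $\mfM$ from Lemma~7 of \citep{ghorbani2021linearized}. Since $\mfw_1^T\mfw_2=0$ and $\|\mfw_1\|_2=\|\mfw_2\|_2=1$, rotational invariance of the standard Gaussian allows us to assume without loss of generality that $\mfw_1=\mfe_1$ and $\mfw_2=\mfe_2$; here I use that both sides of the claimed identity transform covariantly under orthogonal change of basis, so it suffices to verify it for this particular pair.

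Next I would compute the entries of $\mfM:=\mbE[\mfx\mfx^T\mbI(x_1\geq 0)\mbI(x_2\geq 0)]$ coordinatewise, exploiting independence of the coordinates $x_1,\dots,x_d$. For the diagonal, $(\mfM)_{11}=\mbE[x_1^2\mbI(x_1\geq 0)]\,P(x_2\geq 0)=\tfrac12\cdot\tfrac12=\tfrac14$ (using $\mbE[x_1^2\mbI(x_1\geq0)]=\tfrac12\mbE[x_1^2]=\tfrac12$ by symmetry), and identically $(\mfM)_{22}=\tfrac14$, while for $k\geq 3$, $(\mfM)_{kk}=\mbE[x_k^2]\,P(x_1\geq0)P(x_2\geq0)=\tfrac14$. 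For the off-diagonal, $(\mfM)_{12}=\mbE[x_1\mbI(x_1\geq0)]\,\mbE[x_2\mbI(x_2\geq0)]=\bigl(\tfrac{1}{\sqrt{2\pi}}\bigr)^2=\tfrac{1}{2\pi}$, using $\mbE[x\mbI(x\geq0)]=\int_0^\infty x\,\phi(x)\,dx=\tfrac{1}{\sqrt{2\pi}}$; and every remaining off-diagonal entry $(\mfM)_{kl}$ with $\{k,l\}\neq\{1,2\}$ contains a factor $\mbE[x_m]=0$ for some index $m\notin\{1,2\}$ (or a product of two such centered factors), hence vanishes. Therefore
\begin{equation*}
\mfM=\tfrac14\mfI_d+\tfrac{1}{2\pi}\bigl(\mfe_1\mfe_2^T+\mfe_2\mfe_1^T\bigr)=\tfrac14\mfI_d+\tfrac{1}{2\pi}\bigl(\mfw_1\mfw_2^T+\mfw_2\mfw_1^T\bigr).
\end{equation*}

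Finally I would match this with the general expansion $\mfM(\mfh_i,\mfh_j)=c_1(\gamma)\mfI_d+c_2(\gamma)(\mfh_i\mfh_j^T+\mfh_j\mfh_i^T)+c_3(\gamma)(\mfh_i\mfh_i^T+\mfh_j\mfh_j^T)$ at $\gamma=\cos\angle(\mfw_1,\mfw_2)=0$. Since $\mfw_1\mfw_1^T+\mfw_2\mfw_2^T$, $\mfw_1\mfw_2^T+\mfw_2\mfw_1^T$, and $\mfI_d$ are linearly independent (for instance, the first has nonzero $(1,1)$ entry and zero $(1,2)$ entry, the second the reverse, while $\mfI_d$ has nonzero entries at positions $k\geq 3$ where the other two vanish), the coefficients are uniquely determined, yielding $c_1(0)=\tfrac14$, $c_2(0)=\tfrac1{2\pi}$, $c_3(0)=0$. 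The computation is elementary; the only point requiring a word of care is the reduction to the standard basis, which rests on the orthogonal covariance of $\mfM$ already implicit in the form supplied by Lemma~7 of \citep{ghorbani2021linearized}, so there is no real obstacle beyond bookkeeping the half-Gaussian moments.
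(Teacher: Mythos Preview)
Your proof is correct and follows essentially the same route as the paper: reduce to $\mfw_1=\mfe_1,\mfw_2=\mfe_2$ by rotational invariance of the Gaussian, compute the entries of $\mfM(\mfe_1,\mfe_2)$ using independence and the half-Gaussian moments $\mbE[x^2\mbI(x\ge0)]=\tfrac12$ and $\mbE[x\,\mbI(x\ge0)]=\tfrac{1}{\sqrt{2\pi}}$, and then read off $c_1(0),c_2(0),c_3(0)$. Your explicit linear-independence remark is a nice addition that makes the coefficient identification airtight, whereas the paper simply matches entries.
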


From Lemma \ref{lem:gamma0}, we can compute that
\begin{equation}
    \bmbm{\mfI_d&2\mfM(\mfw_1,\mfw_2)\\ 2\mfM(\mfw_1,\mfw_2)&\mfI_d}= \bmbm{\mfI_d&\frac{1}{2}\mfI_d+\frac{1}{\pi}(\mfw_1\mfw_2^T+\mfw_2\mfw_1^T)\\\frac{1}{2}\mfI_d+\frac{1}{\pi}(\mfw_1\mfw_2^T+\mfw_2\mfw_1^T)&\mfI_n}.
\end{equation}
Suppose that
\begin{equation}
    \bmbm{\mfI_d&\frac{1}{2}\mfI_d+\frac{1}{\pi}(\mfw_1\mfw_2^T+\mfw_2^T\mfw_1)\\\frac{1}{2}\mfI_d+\frac{1}{\pi}(\mfw_1\mfw_2^T+\mfw_2^T\mfw_1)&\mfI_n}\bmbm{\beta_1\\\beta_2}=\bmbm{\mfw_1\\\mfw_2}.
\end{equation}
Then, we note that
\begin{equation}
\begin{aligned}
    &\beta_1+\frac{1}{2}\beta_2+\frac{\mfw_2^T\beta_2}{\pi}\mfw_1+\frac{\mfw_1^T\beta_2}{\pi}\mfw_2=\mfw_1,\quad \frac{1}{2}\beta_1+\frac{\mfw_2^T\beta_1}{\pi}\mfw_1+\frac{\mfw_1^T\beta_1}{\pi}\mfw_2+\beta_2=\mfw_2.
\end{aligned}
\end{equation}
This implies that $\beta_1,\beta_2\in \text{span}\{\mfw_1,\mfw_2\}$. Let $\bmbm{\beta_1^T\\\beta_2^T}=\mfA\bmbm{\mfw_1^T\\\mfw_2^T}$, where $\mfA=\bmbm{a_{1,1}&a_{1,2}\\a_{2,1}&a_{2,2}}$. Then, the above linear system reduces to
\begin{equation}
\begin{aligned}
&a_{1,1}+\frac{1}{2}a_{2,1}+\frac{1}{\pi} a_{2,2}=1,\\
&a_{1,2}+\frac{1}{2}a_{2,2}+\frac{1}{\pi}a_{1,2}=0,\\
&\frac{1}{2}a_{1,1}+\frac{1}{\pi} a_{2,1}+a_{2,1}=0,\\
&\frac{1}{\pi} a_{1,1}+\frac{1}{2}a_{1,2}+a_{2,2}=1.
\end{aligned}
\end{equation}
We can solve that
\begin{equation}
\begin{aligned}
    &a_{1,1}=a_{2,2}=\pp{\pp{1+\frac{1}{\pi}}^2-\frac{1}{4}}^{-1}\pp{1+\frac{1}{\pi}},a_{2,1}=a_{1,2}=\pp{\pp{1+\frac{1}{\pi}}^2-\frac{1}{4}}^{-1}\frac{1}{2}.
\end{aligned}
\end{equation}
In other words, we have $\mfA=\bmbm{1+\frac{1}{\pi}&\frac{1}{2}\\\frac{1}{2}&1+\frac{1}{\pi}}^{-1}$. 

Denote $\gamma_1=\mfh_j^T\mfw_1$ and $\gamma_2=\mfh_j^T\mfw_2$. As $\mfw_1^T\mfw_2=0$, we have $\gamma_1^2+\gamma_2^2\leq 1$. Thus, we have
\begin{equation}
\begin{aligned}
    &\bmbm{\mfM(\mfh_j,\mfw_1)&\mfM(\mfh_j,\mfw_2)} \bmbm{\mfI_d&2\mfM(\mfw_1,\mfw_2)\\ 2\mfM(\mfw_1,\mfw_2)&\mfI_d}^{-1}\bmbm{\mfw_1\\\mfw_2}\\
    =& \mfM(\mfw_1,\mfh_j)(a_{1,1}\mfw_{1}+a_{1,2}\mfw_2)+\mfM(\mfw_2,\mfh_j)(a_{2,1}\mfw_{1}+a_{2,2}\mfw_2)\\
    =&a_{1,1} (c_1(\gamma_1)+\gamma_1 c_2(\gamma_1) +c_3(\gamma_1))\mfw_1+2(c_2(\gamma_1)+\gamma_1 c_3(\gamma_1))\mfh_j)\\
    &+a_{1,2} (\gamma_2c_2(\gamma_1)\mfw_1+c_1(\gamma_1)\mfw_2+\gamma_2 c_3(\gamma_1)\mfh_j)\\
    &+a_{2,1} (\gamma_1 c_2(\gamma_1)\mfw_2+c_1(\gamma_2)\mfw_1+\gamma_1c_3(\gamma_2)\mfh_j)\\
    &+a_{2,2} (c_1(\gamma_2)+\gamma_2 c_2(\gamma_2)+c_3(\gamma_2)\mfw_1+2(c_2(\gamma_2)+\gamma_2 c_3(\gamma_2))\mfh_j)).
\end{aligned}
\end{equation}

Similarly, the norm of this quantity only depends on $\gamma_1$ and $\gamma_2$. Therefore, it is sufficient to consider $\mfw_1=\mfe_1$, $\mfw_2=\mfe_2$ and $\mfh_j=\gamma_1\mfe_1+\gamma_2\mfe_2+\sqrt{1-\gamma_1^2-\gamma_2^2}\mfe_3$. In this case, we note that
\begin{equation}\label{equ:muhj}
\begin{aligned}
    &\mfM(\mfw_1,\mfh_j)(a_{1,1}\mfw_1+a_{1,2}\mfw_2)+\mfM(\mfw_2,\mfh_j)(a_{2,1}\mfw_1+a_{2,2}\mfw_2)\\
    =&a_{1,1}\bmbm{\mbE_\mfx\bb{\sigma'(x_1)\sigma'(\gamma_1 x_1+\gamma_2 x_2+\sqrt{1-\gamma_1^2-\gamma_2^2}x_3)x_1^2}\\
    \mbE_\mfx\bb{\sigma'(x_1)\sigma'(\gamma_1 x_1+\gamma_2 x_2+\sqrt{1-\gamma_1^2-\gamma_2^2}x_3)x_1x_2}\\
    \mbE_\mfx\bb{\sigma'(x_1)\sigma'(\gamma_1 x_1+\gamma_2 x_2+\sqrt{1-\gamma_1^2-\gamma_2^2}x_3)x_1x_3}
    }\\
    &+a_{1,2}\bmbm{\mbE_\mfx\bb{\sigma'(x_1)\sigma'(\gamma_1 x_1+\gamma_2 x_2+\sqrt{1-\gamma_1^2-\gamma_2^2}x_3)x_1x_2}\\
    \mbE_\mfx\bb{\sigma'(x_1)\sigma'(\gamma_1 x_1+\gamma_2 x_2+\sqrt{1-\gamma_1^2-\gamma_2^2}x_3)x_2^2}\\
    \mbE_\mfx\bb{\sigma'(x_1)\sigma'(\gamma_1 x_1+\gamma_2 x_2+\sqrt{1-\gamma_1^2-\gamma_2^2}x_3)x_2x_3}
    }\\
    &+a_{2,1}\bmbm{\mbE_\mfx\bb{\sigma'(x_2)\sigma'(\gamma_1 x_1+\gamma_2 x_2+\sqrt{1-\gamma_1^2-\gamma_2^2}x_3)x_1^2}\\
    \mbE_\mfx\bb{\sigma'(x_2)\sigma'(\gamma_1 x_1+\gamma_2 x_2+\sqrt{1-\gamma_1^2-\gamma_2^2}x_3)x_1x_2}\\
    \mbE_\mfx\bb{\sigma'(x_2)\sigma'(\gamma_1 x_1+\gamma_2 x_2+\sqrt{1-\gamma_1^2-\gamma_2^2}x_3)x_1x_3}
    }\\
    &+a_{2,2}\bmbm{\mbE_\mfx\bb{\sigma'(x_2)\sigma'(\gamma_1 x_1+\gamma_2 x_2+\sqrt{1-\gamma_1^2-\gamma_2^2}x_3)x_1x_2}\\
    \mbE_\mfx\bb{\sigma'(x_2)\sigma'(\gamma_1 x_1+\gamma_2 x_2+\sqrt{1-\gamma_1^2-\gamma_2^2}x_3)x_2^2}\\
    \mbE_\mfx\bb{\sigma'(x_2)\sigma'(\gamma_1 x_1+\gamma_2 x_2+\sqrt{1-\gamma_1^2-\gamma_2^2}x_3)x_2x_3}
    }
\end{aligned}
\end{equation}
Let $F(x)$ and $p(x)$ represent the CDF and pdf of $\mcN(0,1)$ respectively. We introduce two lemmas for computing the expectations in \eqref{equ:muhj}.
\begin{lemma}\label{lem:cal}
Suppose that $\gamma\in[-1,1]$. We have the following computations:
\begin{equation}
\begin{aligned}
&\mbE_{\mfx}\bb{\sigma'(x_1)\sigma'(\gamma x_1+\sqrt{1-\gamma^2}x_2)x_1^2}=\int_0^\infty \pp{1-F\pp{-\frac{\gamma}{\sqrt{1-\gamma^2}}x}} p(x)x^2 dx,\\
&\mbE_{\mfx}\bb{\sigma'(x_1)\sigma'(\gamma x_1+\sqrt{1-\gamma^2}x_2)x_1x_2}=\frac{1-\gamma^2}{2\pi}.
\end{aligned}
\end{equation}
\end{lemma}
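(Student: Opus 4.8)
The plan is to evaluate both expectations by a single conditioning argument: fix $x_1$, integrate out the independent standard Gaussian $x_2$ (which is possible in closed form because $\sigma'$ is the indicator $\mbI(\,\cdot\,\ge 0)$), and then integrate over $x_1\ge 0$. Here $F$ and $p$ denote the standard normal CDF and density, and I treat throughout the main case $\gamma^2<1$; the boundary cases $\gamma=\pm 1$ are dispatched separately since $\sqrt{1-\gamma^2}=0$ there and both formulas hold trivially (e.g.\ $\sigma'(x_1)\sigma'(\pm x_1)x_1x_2$ has mean $0=\tfrac{1-\gamma^2}{2\pi}$, and $\sigma'(x_1)\sigma'(\pm x_1)x_1^2$ reduces to $\mbE[x_1^2\mbI(x_1\ge 0)]=\tfrac12$, matching the $\gamma\to\pm1$ limit of the right-hand side).

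For the first identity I would condition on $x_1=x\ge 0$. The event $\{\sigma'(\gamma x_1+\sqrt{1-\gamma^2}x_2)=1\}$ is exactly $\{x_2\ge -\tfrac{\gamma}{\sqrt{1-\gamma^2}}x\}$, whose probability over $x_2$ is $1-F\pp{-\tfrac{\gamma}{\sqrt{1-\gamma^2}}x}$. Multiplying by $x^2$ and integrating against $p(x)$ on $[0,\infty)$ gives directly
\[
\mbE_{\mfx}\bb{\sigma'(x_1)\sigma'(\gamma x_1+\sqrt{1-\gamma^2}x_2)x_1^2}=\int_0^\infty\pp{1-F\pp{-\tfrac{\gamma}{\sqrt{1-\gamma^2}}x}}p(x)x^2\,dx,
\]
which is the claimed expression.

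For the second identity I would again condition on $x_1=x\ge 0$, but now use the elementary Gaussian tail-moment identity $\int_a^\infty z\,p(z)\,dz=p(a)$, which follows from $p'(z)=-z\,p(z)$. Taking $a=-\tfrac{\gamma}{\sqrt{1-\gamma^2}}x$ and using that $p$ is even yields $\mbE\bb{x_2\,\sigma'(\gamma x_1+\sqrt{1-\gamma^2}x_2)\mid x_1=x}=p\pp{\tfrac{\gamma}{\sqrt{1-\gamma^2}}x}$, so the expectation equals $\int_0^\infty x\,p\pp{\tfrac{\gamma}{\sqrt{1-\gamma^2}}x}p(x)\,dx$. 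Since $p\pp{\tfrac{\gamma}{\sqrt{1-\gamma^2}}x}p(x)=\tfrac{1}{2\pi}\exp\pp{-\tfrac{x^2}{2(1-\gamma^2)}}$, the substitution $s=\tfrac{x^2}{2(1-\gamma^2)}$ converts this into $\tfrac{1-\gamma^2}{2\pi}\int_0^\infty e^{-s}\,ds=\tfrac{1-\gamma^2}{2\pi}$.

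There is no substantive obstacle here: both statements are one-dimensional Gaussian integrals. The only points needing care are the bookkeeping of the threshold $-\tfrac{\gamma}{\sqrt{1-\gamma^2}}x$ (i.e.\ correctly expressing the half-plane event $\{\gamma x_1+\sqrt{1-\gamma^2}x_2\ge 0\}$ in terms of $x_2$ given $x_1$), and the degenerate cases $\gamma=\pm1$, which I would handle in a sentence by continuity or by direct substitution as above. I would present the lemma's proof as essentially the two displays and the change of variables shown here.
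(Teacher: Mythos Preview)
Your proof is correct and follows the same approach as the paper: condition on $x_1\ge 0$, integrate out $x_2$ using the CDF for the first identity and the tail-moment identity $\int_a^\infty z\,p(z)\,dz=p(a)$ for the second, then evaluate the remaining one-dimensional Gaussian integral via the same exponent combination $p\bigl(\tfrac{\gamma}{\sqrt{1-\gamma^2}}x\bigr)p(x)=\tfrac{1}{2\pi}\exp\bigl(-\tfrac{x^2}{2(1-\gamma^2)}\bigr)$. One tiny slip in your parenthetical on edge cases: for $\gamma=-1$ the quantity $\sigma'(x_1)\sigma'(-x_1)x_1^2$ has expectation $0$, not $\tfrac12$, but your suggested continuity argument handles this correctly and the paper itself does not even treat the boundary cases.
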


\begin{lemma}\label{lem:cal2}
Suppose that $\gamma_1^2+\gamma_2^2+\gamma_3^2=1$ and $\gamma_3\geq 0$. Then, we have
\begin{equation}
\begin{aligned}
    &\mbE\bb{\sigma'(x_1)\sigma'(\gamma_1x_1+\gamma_2x_2+\gamma_3x_3 )x_1x_2}=\frac{\sqrt{1-\gamma_1^2}|\gamma_2|}{2\pi},\\
    &\mbE\bb{\sigma'(x_1)\sigma'(\gamma_1x_1+\gamma_2x_2+\gamma_3x_3)x_2^2}=\int_{-\infty}^{\infty}\pp{\int_{0}^{\infty}\pp{1-F\pp{-\frac{\gamma_1x_1+\gamma_2x_2}{\gamma_3}}}p(x_1)dx_1}p(x_2)x_2^2dx_2,\\
    &\mbE\bb{\sigma'(x_1)\sigma'(\gamma_1x_1+\gamma_2x_2+\gamma_3x_3)x_2x_3}=\frac{|\gamma_2|\gamma_1\gamma_3}{2\pi\pp{\gamma_3^2+\gamma_2^2}^{3/2}}.
\end{aligned}
\end{equation}
\end{lemma}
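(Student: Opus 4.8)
The plan is to treat each of the three expectations as a Gaussian orthant moment: an integral of a quadratic in $(x_1,x_2,x_3)$ against the standard Gaussian density on $\mbR^3$, weighted by the two half-space indicators $\sigma'(x_1)=\mbI(x_1\ge 0)$ and $\sigma'(\gamma_1 x_1+\gamma_2 x_2+\gamma_3 x_3\ge 0)$. Only two directions enter the indicators, namely $\mfe_1$ and $(\gamma_1,\gamma_2,\gamma_3)$, so the main device will be an orthonormal change of variables in the $x_2$--$x_3$ plane adapted to this second direction. I would set $w:=(\gamma_2 x_2+\gamma_3 x_3)/\sqrt{\gamma_2^2+\gamma_3^2}$ and its orthogonal partner $w^{\perp}:=(\gamma_3 x_2-\gamma_2 x_3)/\sqrt{\gamma_2^2+\gamma_3^2}$; these are independent $\mcN(0,1)$ variables, jointly independent of $x_1$, and $\gamma_2 x_2+\gamma_3 x_3=\sqrt{1-\gamma_1^2}\,w$, so the second indicator becomes $\sigma'\!\big(\gamma_1 x_1+\sqrt{1-\gamma_1^2}\,w\big)$, a function of the two-dimensional Gaussian $(x_1,w)$ alone.

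The $x_2^2$ identity needs no rotation. Since $x_3$ enters only through the second indicator, I would apply Fubini and integrate $x_3$ out first: conditioning on $(x_1,x_2)$, the inner integral is $P\big(\gamma_3 x_3\ge -(\gamma_1 x_1+\gamma_2 x_2)\big)=1-F\!\big(-(\gamma_1 x_1+\gamma_2 x_2)/\gamma_3\big)$ for $\gamma_3>0$, and integrating what remains over $x_1\in(0,\infty)$ against $p(x_1)$ and over $x_2\in\mbR$ against $x_2^2 p(x_2)$ reproduces the stated iterated integral verbatim; the boundary case $\gamma_3=0$ reduces to a two-dimensional computation.

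For the $x_1 x_2$ identity, I would write $x_2=\tfrac{\gamma_2}{\sqrt{1-\gamma_1^2}}\,w+c\,w^{\perp}$ for the appropriate constant $c$. The rest of the integrand depends only on $x_1$, $w$ and the two indicators, while $w^{\perp}$ is mean-zero and independent of $(x_1,w)$, so the $w^{\perp}$ part drops out, leaving $\tfrac{\gamma_2}{\sqrt{1-\gamma_1^2}}\,\mbE\!\big[\sigma'(x_1)\,\sigma'(\gamma_1 x_1+\sqrt{1-\gamma_1^2}\,w)\,x_1 w\big]$. By the second identity of Lemma~\ref{lem:cal} (with $\gamma=\gamma_1$) this bivariate expectation equals $\tfrac{1-\gamma_1^2}{2\pi}$, and multiplying through gives the claim. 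The $x_2 x_3$ identity uses the full rotation: expanding $x_2 x_3$ in the orthonormal basis $(w,w^{\perp})$ yields a multiple of $w^2-(w^{\perp})^2$ plus a multiple of $w\,w^{\perp}$; the latter integrates to zero by independence, the $(w^{\perp})^2$ term produces a multiple of the orthant probability $\mbE[\sigma'(x_1)\,\sigma'(\gamma_1 x_1+\sqrt{1-\gamma_1^2}\,w)]$, and grouping it with the $w^2$ term reduces everything to $\mbE\!\big[\sigma'(x_1)\,\sigma'(\gamma_1 x_1+\sqrt{1-\gamma_1^2}\,w)\,(w^2-1)\big]$. I would evaluate this by conditioning on $x_1>0$: the inner $w$-integral is $\int_{-\gamma_1 x_1/\sqrt{1-\gamma_1^2}}^{\infty}(w^2-1)p(w)\,dw$, which by the elementary identity $\int_{-a}^{\infty}(w^2-1)p(w)\,dw=-a\,p(a)$ (integration by parts) equals $-\tfrac{\gamma_1 x_1}{\sqrt{1-\gamma_1^2}}\,p\!\big(\tfrac{\gamma_1 x_1}{\sqrt{1-\gamma_1^2}}\big)$; the remaining integral over $x_1>0$ then closes because the product $p(x_1)\,p\!\big(\gamma_1 x_1/\sqrt{1-\gamma_1^2}\big)$ collapses to $\tfrac{1}{2\pi}e^{-x_1^2/(2(1-\gamma_1^2))}$, yielding a closed-form value and hence the third identity.

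The only genuine difficulty is bookkeeping around the change of variables in the $x_2$--$x_3$ plane: one must verify that $w$ is precisely the combination appearing in the second indicator, that $w^{\perp}$ is truly independent of $(x_1,w)$, and keep the scalar factors $\gamma_2/\sqrt{\gamma_2^2+\gamma_3^2}$, $\gamma_2\gamma_3/(\gamma_2^2+\gamma_3^2)$ and the signs of $\gamma_2$ straight. Once the reduction to two-dimensional Gaussian expectations is carried out, each remaining integral is either an application of Lemma~\ref{lem:cal} or a one-dimensional Gaussian integral with an elementary antiderivative.
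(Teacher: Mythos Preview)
Your plan is sound, and for the $x_2^2$ identity it coincides with the paper's argument (integrate out $x_3$ first). For the other two identities you take a genuinely different route. The paper proceeds by brute-force sequential integration: it integrates $x_2$ against $x_2\,p(x_2)$ over the half-line determined by the second indicator, producing $\tfrac{1}{\sqrt{2\pi}}\exp\!\big(-(\gamma_1 x_1+\gamma_3 x_3)^2/(2\gamma_2^2)\big)$; it then completes the square in $x_3$ and integrates over $\mbR$, and finally integrates $x_1$ over $(0,\infty)$. Your orthogonal change of variables $(x_2,x_3)\mapsto(w,w^{\perp})$ with $w=(\gamma_2 x_2+\gamma_3 x_3)/\sqrt{1-\gamma_1^2}$ is more structural: the second indicator becomes a function of $(x_1,w)$ only, so the $x_1x_2$ expectation collapses directly to the second formula of Lemma~\ref{lem:cal}, and the $x_2x_3$ expectation reduces, via your $(w^2-1)$ grouping and the antiderivative $\tfrac{d}{dw}[-w\,p(w)]=(w^2-1)p(w)$, to a single half-line Gaussian integral in $x_1$. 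The payoff is that you sidestep the square-completion algebra and reuse an identity already proved.

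One caution: carried through exactly, your route gives $\gamma_2\sqrt{1-\gamma_1^2}/(2\pi)$ for the first expectation (no absolute value) and $-\gamma_1\gamma_2\gamma_3\big/\!\big(2\pi(\gamma_2^2+\gamma_3^2)^{1/2}\big)$ for the third, not the printed $|\gamma_2|\gamma_1\gamma_3\big/\!\big(2\pi(\gamma_2^2+\gamma_3^2)^{3/2}\big)$. Those are the correct values---a direct check at $\gamma_1=\gamma_2=\gamma_3=1/\sqrt{3}$ confirms the third expectation equals $-1/(6\sqrt{2}\,\pi)$---and the discrepancy traces to two slips in the paper's derivation: the center of the completed square in $x_3$ should be $-\gamma_1\gamma_3 x_1/(\gamma_2^2+\gamma_3^2)$ rather than $+$, and the residual factor $\exp\!\big(-\gamma_1^2 x_1^2/(2(\gamma_2^2+\gamma_3^2))\big)$ is omitted from the final $x_1$-integral. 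So do not be surprised when your method does not reproduce the third formula verbatim; the method is right, the printed formula is not.
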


Denote $\gamma_3=\sqrt{1-\gamma_1^2-\gamma_2^2}$. Hence, we can compute that
\begin{equation}
\begin{aligned}
 &\mfM(\mfw_1,\mfh_j)(a_{1,1}\mfw_1+a_{1,2}\mfw_2)+\mfM(\mfw_2,\mfh_j)(a_{2,1}\mfw_1+a_{2,2}\mfw_2)\\
 =&a_{1,1}\bmbm{\int_0^\infty \pp{1-F\pp{-\frac{\gamma_1}{\sqrt{1-\gamma_1^2}}x}} p(x)x^2 dx\\
 \frac{\sqrt{1-\gamma_2^2}|\gamma_1|}{2\pi}\\
 \frac{(1-\gamma_1^2)\gamma_3}{2\pi}}+a_{1,2}\bmbm{\frac{\sqrt{1-\gamma_2^2}|\gamma_1|}{2\pi}\\ 
  \int_{-\infty}^{\infty}\pp{\int_{0}^{\infty}\pp{1-F\pp{-\frac{\gamma_1x_1+\gamma_2x_2}{\gamma_3}}}p(x_1)dx_1}p(x_2)x_2^2dx_2 \\ 
  \frac{|\gamma_2|\gamma_1\gamma_3}{2\pi\pp{\gamma_3^2+\gamma_2^2}^{3/2}} }\\
  &+a_{2,1}\bmbm{\int_{-\infty}^{\infty}\pp{\int_{0}^{\infty}\pp{1-F\pp{-\frac{\gamma_1x_1+\gamma_2x_2}{\gamma_3}}}p(x_2)dx_2}p(x_1)x_1^2dx_1\\
  \frac{\sqrt{1-\gamma_1^2}|\gamma_2|}{2\pi}\\
  \frac{|\gamma_1|\gamma_2\gamma_3}{2\pi\pp{\gamma_3^2+\gamma_2^2}^{3/2}}
  }+a_{2,2}\bmbm{\frac{\sqrt{1-\gamma_1^2}|\gamma_2|}{2\pi}\\
\int_0^\infty \pp{1-F\pp{-\frac{\gamma_2}{\sqrt{1-\gamma_2^2}}x}} p(x)x^2 dx\\
\frac{(1-\gamma_1^2)\gamma_3}{2\pi}
}.
\end{aligned}
\end{equation}

Denote $g_2(\gamma_1,\gamma_2)=2\norm{\mfM(\mfw_1,\mfh_j)(a_{1,1}\mfw_1+a_{1,2}\mfw_2)+\mfM(\mfw_2,\mfh_j)(a_{2,1}\mfw_1+a_{2,2}\mfw_2)}_2$. We numerically verify that $g_2(\gamma_1,\gamma_2)^2\leq 1$ and it is maximized at $(0,1)$ and $(1,0)$. 

We plot $g_2(\gamma_1,\gamma_2)^2$ in Figure \ref{fig:norm_e1e2}. We note that $g_2(\gamma_1,\gamma_2)^2$ is maximized at the boundary. Hence, we plot $g_2(\cos\theta,\sin\theta)$ for $\theta\in\bb{-\frac{\pi}{4},\frac{3\pi}{4}}$ in Figure \ref{fig:norm_e1e2_radius1}. We note that $g_2(\cos\theta,\sin\theta)$ is maximized at $\theta=0$ or $\theta=\pi/2$. Therefore,  $g_2(\gamma_1,\gamma_2)$ is maximized at $(0,1)$ and $(1,0)$ and the optimal value is $1$.

\begin{figure}[H]
\centering
\begin{minipage}[t]{0.3\textwidth}
\centering
\includegraphics[width=\linewidth]{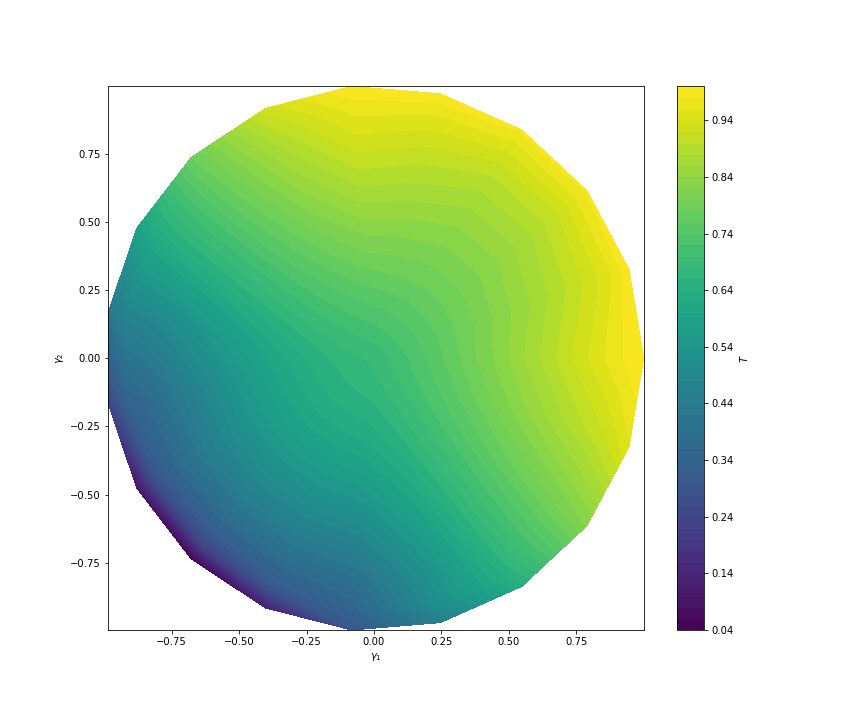}
\caption{The contour plot of $g_2(\gamma_1,\gamma_2)^2$. }\label{fig:norm_e1e2}
\end{minipage}
\centering
\begin{minipage}[t]{0.6\textwidth}
\centering
\includegraphics[width=\linewidth]{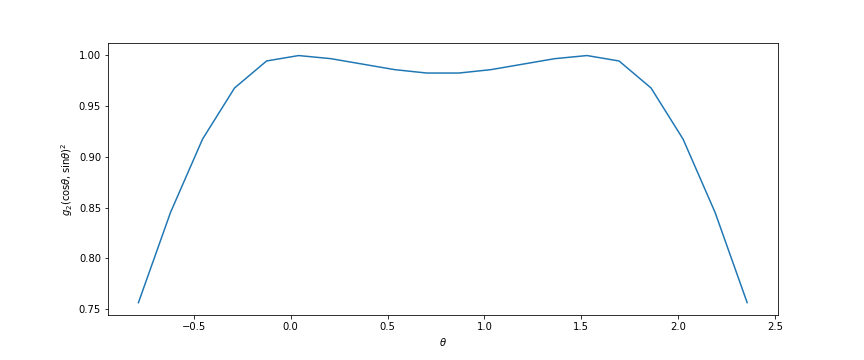}
\caption{The plot of $g_2(\cos\theta,\sin\theta)^2$. }\label{fig:norm_e1e2_radius1}
\end{minipage}
\end{figure}
\end{proof}

\subsection{Proof of Lemma \ref{lem:gamma0}}
\begin{proof}
Consider the rotation matrix $\mfP\in\mbR^{d\times d}$ such that $\mfP \mfw_1=\mfe_1$ and $\mfP\mfw_2=\mfe_2$. Then, we have 
\begin{equation}
\begin{aligned}
    &\mfM(\mfh_i,\mfh_j)\\
    =&\mbE[\mfx\mfx^T\mbI(\mfx^T\mfh_i\geq 0)\mbI(\mfx^T\mfh_j\geq 0)]\\
    =&\mbE[\mfx\mfx^T\mbI(\mfx^T\mfP^T\mfP\mfh_i\geq 0)\mbI(\mfx^T\mfP^T\mfP\mfh_j\geq 0)]\\
    =&\mfP^T\mbE[\tilde \mfx\tilde \mfx^T\mbI(\tilde \mfx^T\mfP\mfh_i\geq 0)\mbI(\tilde \mfx^T\mfP\mfh_j\geq 0)]\mfP\\
    =&\mfP^T\mfM(\mfP\mfh_i,\mfP\mfh_j)\mfP
\end{aligned}
\end{equation}
where we write $\tilde \mfx=\mfP\mfx$. Thus, it is sufficient to compute $\mfM(\mfP\mfh_i,\mfP\mfh_j)=\mfM(\mfe_1,\mfe_2)$. We note that
\begin{equation}
    \mfM(\mfe_1,\mfe_2)=\bmbm{\mfM_{1:2,1:2}&0\\0&\mbE_{\mfx}\bb{\sigma'(x_1)\sigma'(x_2)x_3^2}\mfI_{d-2}},
\end{equation}
where $\mfM_{1:2,1:2}$ follows
\begin{equation}
\begin{aligned}
    \mfM_{1:2,1:2}=&\bmbm{\mbE_{\mfx}\bb{\sigma'(x_1)\sigma'(x_2)x_1^2}&\mbE_{\mfx}\bb{\sigma'(x_1)\sigma'(x_2)x_1x_2}\\
    \mbE_{\mfx}\bb{\sigma'(x_1)\sigma'(x_2)x_1x_2}&\mbE_{\mfx}\bb{\sigma'(x_1)\sigma'(x_2)x_2^2}}=\bmbm{\frac{1}{4}&\frac{1}{2\pi}\\\frac{1}{2\pi}&\frac{1}{4}}.
\end{aligned}
\end{equation}
This implies that $c_1(0)=\mbE_{\mfx}\bb{\sigma'(x_1)\sigma'(x_2)x_3^2}=\frac{1}{4}$. We also note that
\begin{equation}
    \mfM_{1:2,1:2} = \bmbm{u_1(0)+u_3(0)&u_2(0)\\u_2(0)&u_1(0)+u_3(0)}.
\end{equation}
Hence, we have $c_1(0)=\frac{1}{4},c_2(0)=\frac{1}{2\pi}$ and $c_3(0)=0$. 

\end{proof}

\subsection{Proof of Lemma \ref{lem:cal}}
\begin{proof}
For the first equation, by integrating w.r.t. $x_2$ first, we immediately obtain that
\begin{equation}
\begin{aligned}
    &\mbE_{\mfx}\bb{\sigma'(x_1)\sigma'(\gamma x_1+\sqrt{1-\gamma^2}x_2)x_1^2}=\int_0^\infty \pp{1-F\pp{-\frac{\gamma}{\sqrt{1-\gamma^2}}x}} p(x)x^2 dx.
\end{aligned}
\end{equation}
Note that
\begin{equation}
\begin{aligned}
    &\int_{-\frac{\gamma}{\sqrt{1-\gamma^2}}x_1}^\infty xp(x)dx=\pp{\int_{-\frac{\gamma}{\sqrt{1-\gamma^2}}x_1}^0+\int_0^\infty} xp(x)dx\\
    =&\frac{1}{\sqrt{2\pi}} \pp{\int_{-\frac{\gamma}{\sqrt{1-\gamma^2}}x_1}^0+\int_0^\infty} e^{-\frac{x^2}{2}}d\frac{x^2}{2}=\frac{1}{\sqrt{2\pi}}\exp^{-\frac{\gamma^2}{2(1-\gamma^2)}x^2}.
\end{aligned}
\end{equation}
Therefore, we have
\begin{equation}
\begin{aligned}
    &\mbE_{\mfx}\bb{\sigma'(x_1)\sigma'(\gamma x_1+\sqrt{1-\gamma^2}x_2)x_1x_2}=\int_0^\infty \frac{1}{\sqrt{2\pi}}e^{-\frac{\gamma^2}{2(1-\gamma^2)}x^2} p(x)x dx\\
    =&\frac{1}{2\pi} \int_0^\infty e^{-\frac{1}{2(1-\gamma^2)}x^2}xdx=\frac{1-\gamma^2}{2\pi}.
\end{aligned}
\end{equation}
\end{proof}

\subsection{Proof of Lemma \ref{lem:cal2}}
\begin{proof}
We can compute that
\begin{equation}
\begin{aligned}
 &\mbE\bb{\sigma'(x_1)\sigma'(\gamma_1x_1+\gamma_2x_2+\gamma_3x_3\geq 0)x_1x_2}\\
 =&\int_0^\infty\int_{-\infty}^\infty\int_{\left|\frac{\gamma_1x_1+\gamma_3x_3}{\gamma_2}\right|}^\infty x_1x_2p(x_1)p(x_2)p(x_3)dx_1dx_3dx_2\\
 =&\int_0^\infty\int_{-\infty}^\infty x_1p(x_1)p(x_3)\frac{1}{\sqrt{2\pi}}\exp\pp{-\frac{(\gamma_1x_1+\gamma_3x_3)^2}{2\gamma_2^2}}dx_1dx_3.
\end{aligned}
\end{equation}
Note that
\begin{equation}
\begin{aligned}
&\int_{-\infty}^\infty  p(x_3)\frac{1}{\sqrt{2\pi}}\exp\pp{-\frac{(\gamma_1x_1+\gamma_3x_3)^2}{2\gamma_2^2}} dx_3\\
=&\frac{1}{2\pi}\int_{-\infty}^\infty \exp\pp{-\frac{(\gamma_1x_1+\gamma_3x_3)^2-\gamma_2^2x_3^2}{2\gamma_2^2}}dx_3\\
=&\frac{1}{2\pi}\int_{-\infty}^\infty \exp\pp{-\frac{(\gamma_3^2+\gamma_2^2)x_3^2+2\gamma_1\gamma_3x_1x_3+\gamma_1^2x_1^2}{2\gamma_2^2}}dx_3\\
=&\frac{1}{2\pi}\int_{-\infty}^\infty \exp\pp{-\frac{(\gamma_3^2+\gamma_2^2)\pp{x_3-\frac{\gamma_1\gamma_3}{\gamma_3^2+\gamma_2^2}x_1}^2+\gamma_1^2x_1^2-\frac{\gamma_1^2\gamma_3^2}{\gamma_3^2+\gamma_2^2}x_1^2}{2\gamma_2^2}}dx_3\\
=&\frac{1}{2\pi}\int_{-\infty}^\infty \exp\pp{-\frac{(\gamma_3^2+\gamma_2^2)\pp{x_3-\frac{\gamma_1\gamma_3}{\gamma_3^2+\gamma_2^2}x_1}^2+\frac{\gamma_1^2\gamma_2^2}{\gamma_3^2+\gamma_2^2}x_1^2}{2\gamma_2^2}}dx_3\\
=&\frac{|\gamma_2|}{\sqrt{2\pi}\sqrt{\gamma_3^2+\gamma_2^2}}\exp\pp{-\frac{\gamma_1^2}{2(\gamma_3^2+\gamma_2^2)}x_1^2}. 
\end{aligned}
\end{equation}
Then, we have
\begin{equation}
\begin{aligned}
&\int_0^\infty\int_{-\infty}^\infty x_1p(x_1)p(x_3)\frac{1}{\sqrt{2\pi}}\exp\pp{-\frac{(\gamma_1x_1+\gamma_3x_3)^2}{2\gamma_2^2}}dx_1dx_3\\
=&\int_0^\infty \frac{|\gamma_2|}{2\pi\sqrt{\gamma_3^2+\gamma_2^2}}\exp\pp{-\frac{\gamma_1^2}{2(\gamma_3^2+\gamma_2^2)}x_1^2-\frac{x_1^2}{2}}x_1dx\\
=&\int_0^\infty \frac{|\gamma_2|}{2\pi\sqrt{\gamma_3^2+\gamma_2^2}}\exp\pp{-\frac{1}{2(\gamma_3^2+\gamma_2^2)}x_1^2}x_1dx\\
=&\frac{1-\gamma_1^2}{2\pi}\frac{\gamma_2}{\sqrt{\gamma_3^2+\gamma_2^2}}=\frac{\sqrt{1-\gamma_1^2}\gamma_2}{2\pi}.
\end{aligned}    
\end{equation}
For the second equation, we note that
\begin{equation}
\begin{aligned}
&\mbE\bb{\sigma'(x_1)\sigma'(\gamma_1x_1+\gamma_2x_2+\gamma_3x_3)x_2^2}\\
=&\int_{-\infty}^{\infty}\int_{-\infty}^{\infty}\int_{\max\{0,-\frac{\gamma_2x_2+\gamma_3x_3}{\gamma_1}\}}^{\infty}p(x_1)p(x_2)p(x_3)x_2^2dx_2dx_3dx_1\\
=&\int_{-\infty}^{\infty}\pp{\int_{0}^{\infty}\pp{1-F\pp{-\frac{\gamma_1x_1+\gamma_2x_2}{\gamma_3}}}p(x_1)dx_1}p(x_2)x_2^2dx_2.
\end{aligned}
\end{equation}
For the third equation, we have
\begin{equation}
\begin{aligned}
\mbE\bb{\sigma'(x_1)\sigma'(\gamma_1x_1+\gamma_2x_2+\gamma_3x_3)x_2x_3}=\int_0^\infty\int_{-\infty}^\infty x_3p(x_1)p(x_3)\frac{1}{\sqrt{2\pi}}\exp\pp{-\frac{(\gamma_1x_1+\gamma_3x_3)^2}{2\gamma_2^2}}dx_1dx_3.
\end{aligned}
\end{equation}
Following the previous calculation, we can compute that
\begin{equation}
\begin{aligned}
&\int_{-\infty}^\infty x_3 p(x_3)\frac{1}{\sqrt{2\pi}}\exp\pp{-\frac{(\gamma_1x_1+\gamma_3x_3)^2}{2\gamma_2^2}} dx_3\\
=&\frac{1}{2\pi}\int_{-\infty}^\infty \exp\pp{-\frac{(\gamma_3^2+\gamma_2^2)\pp{x_3-\frac{\gamma_1\gamma_3}{\gamma_3^2+\gamma_2^2}x_1}^2+\frac{\gamma_1^2\gamma_2^2}{\gamma_3^2+\gamma_2^2}x_1^2}{2\gamma_2^2}}dx_3\\
=&\frac{|\gamma_2|}{\sqrt{2\pi}\sqrt{\gamma_3^2+\gamma_2^2}}\frac{\gamma_1\gamma_3}{\gamma_3^2+\gamma_2^2}x_1.
\end{aligned}
\end{equation}
Therefore, we have
\begin{equation}
\begin{aligned}
&\mbE\bb{\sigma'(x_1)\sigma'(\gamma_1x_1+\gamma_2x_2+\gamma_3x_3)x_2x_3}=\int_0^\infty \frac{|\gamma_2|}{\sqrt{2\pi}\sqrt{\gamma_3^2+\gamma_2^2}}\frac{\gamma_1\gamma_3}{\gamma_3^2+\gamma_2^2}x_1 p(x_1)dx_1=\frac{|\gamma_2|\gamma_1\gamma_3}{2\pi\pp{\gamma_3^2+\gamma_2^2}^{3/2}}.
\end{aligned}
\end{equation}
\end{proof}

\section{Proofs in Section \ref{sec:asymp}}
\subsection{Proof of Proposition \ref{prop:asymp_relu}}
\begin{proof}
We denote $\mfh_i=\frac{\mfw^*}{\|\mfw^*\|_2}$. For simplicity, we can assume that $\|\mfh_j\|_2=1$. Hence, it follows that $\gamma=\mfh_i^T\mfh_j$. As $n\to\infty$,  $\mfX^T\mfD_i\mfD_j\mfX$ converges in probability to
\begin{equation}
    \mfM(\mfh_i,\mfh_j)=\mbE_{\mfx\sim\mcN(0,\mfI_d)}[\mfx\mfx^T\mbI(\mfx^T\mfh_i\geq 0)\mbI(\mfx^T\mfh_j\geq 0)].
\end{equation}
According to Lemma 7 in \citep{ghorbani2021linearized}, the above expectation takes the form
\begin{equation}\label{equ:m}
    \mfM(\mfh_i,\mfh_j)=c_1(\gamma)\mfI_d+c_2(\gamma)(\mfh_i\mfh_j^T+\mfh_j\mfh_i^T)+c_3(\gamma)(\mfh_i\mfh_i^T+\mfh_j\mfh_j^T). 
\end{equation}
Here $c_1,c_2,c_3$ are functions of $\gamma$. 
As $n\to\infty$, we note that $\mfX^T\mfD_i\mfX\overset{p}{\to}\mbE_{\mfx\sim\mcN(0,\mfI_d)}[\mbI(\mfx^T\mfh_i\geq 0)\mfx\mfx^T]=\frac{1}{2}\mfI_d$. Thus, as $n\to\infty$, we have
\begin{equation}
\begin{aligned}
T\overset{p}{\to}&2\norm{\mfM(\mfh_i,\mfh_j)\mfh_i}_2.
\end{aligned}
\end{equation}
According to the expression \eqref{equ:m}, we have
\begin{equation}
   \mfM(\mfh_i,\mfh_j)\mfh_i =(c_1(\gamma)+\gamma c_2(\gamma)+c_3(\gamma)) \mfh_i+(c_2(\gamma)+\gamma c_3(\gamma))\mfh_j.
\end{equation}
As $\|\mfh_i\|_2=\|\mfh_j\|_2=1$, the quantity $\norm{\mfM(\mfh_i,\mfh_j)\mfh_i}_2$ only depends on $\gamma=\mfh_i^T\mfh_j$. Denote $g(\gamma)=4\norm{\mfM(\mfh_i,\mfh_j)\mfh_i}_2^2$. Thus, we can simply consider $\mfh_i=\mfe_1$ and $\mfh_j=\gamma \mfe_1+\sqrt{1-\gamma^2}\mfe_2$. According to Lemma \ref{lem:cal}, we can compute that
\begin{equation}
\begin{aligned}
g(\gamma)^2=&4\norm{\mfM(\mfh_i,\mfh_j)\mfh_i}_2^2\\
    =&4\pp{\mbE_{\mfx}\bb{\sigma'(x_1)\sigma'(\gamma x_1+\sqrt{1-\gamma^2}x_2)x_1^2}}^2+4\pp{\mbE_{\mfx}\bb{\sigma'(x_1)\sigma'(\gamma x_1+\sqrt{1-\gamma^2}x_2)x_1x_2}}^2\\
    =&4\pp{\int_0^\infty \pp{1-F\pp{-\frac{\gamma}{\sqrt{1-\gamma^2}}x}} p(x)x^2 dx}^2+4\pp{\frac{1-\gamma^2}{2\pi}}^2.
\end{aligned}
\end{equation}

We plot $g(\gamma)^2$ as a function of $\gamma$ as follows.

\begin{figure}[H]
\centering
\begin{minipage}[t]{0.45\textwidth}
\centering
\includegraphics[width=\linewidth]{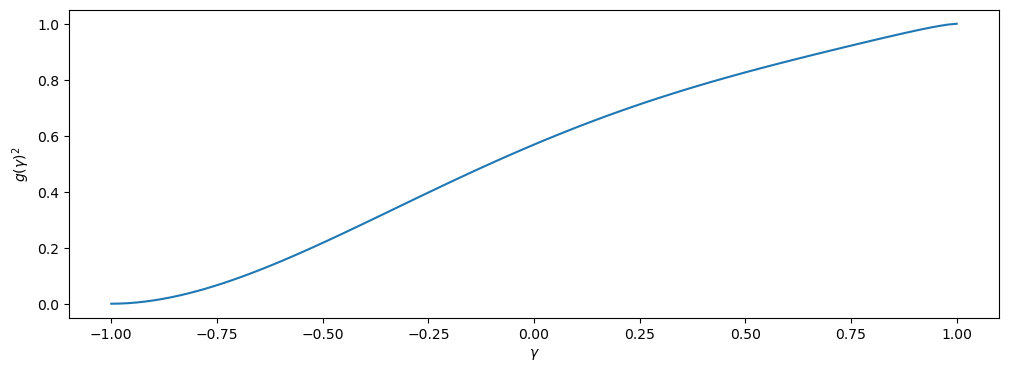}
\end{minipage}
\centering
\begin{minipage}[t]{0.45\textwidth}
\centering
\includegraphics[width=\linewidth]{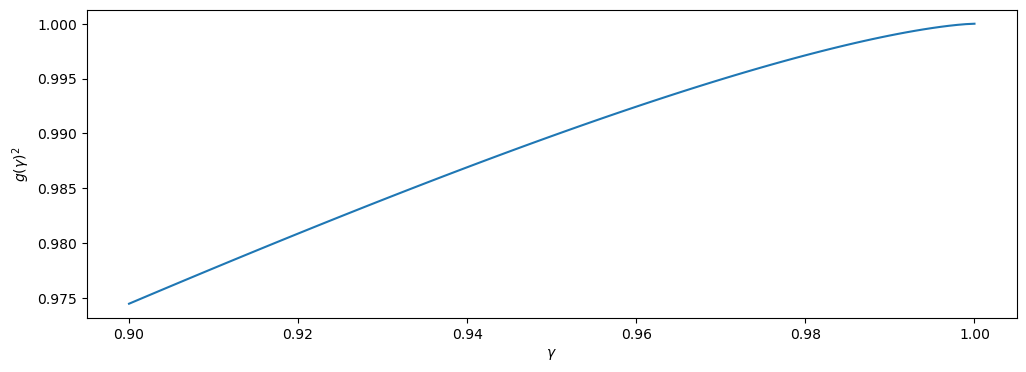}
\end{minipage}
\caption{$g(\gamma)^2$ as a function of $\gamma$.}
\end{figure}


\end{proof}

\section{Necessary condition and sufficient condition of the neural isometry condition for ReLU networks}
For the recovery of single-neuron ReLU network, we first present a necessary condition for the neural isometry condition \eqref{irrep:grelu}. 
\begin{definition}
We say that a diagonal arrangement pattern $\mfD_i\in H$ satisfies the maximal condition if for all index $j\in[p]$ and $j\neq i$, we have $\mfD_i \mfD_j \neq \mfD_{i}$.
\end{definition}
\begin{lemma}\label{lem:maximal}
A necessary condition for \eqref{irrep:grelu} is that the matrix $\mfD_{i^*}$ satisfies the maximal condition.
\end{lemma}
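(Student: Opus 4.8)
The plan is to establish the contrapositive: I will show that if $\mfD_{i^*}$ fails the maximal condition, then the neural isometry condition \eqref{irrep:grelu} is violated for at least one index $j\neq i^*$. So suppose there exists $j\in[p]$ with $j\neq i^*$ and $\mfD_{i^*}\mfD_j=\mfD_{i^*}$. Since $\mfD_{i^*}$ and $\mfD_j$ are $0$–$1$ diagonal matrices, they are symmetric and commute, so the hypothesis is equivalent to $\mfD_j\mfD_{i^*}=\mfD_{i^*}$; geometrically, the set of coordinates activated by $\mfD_{i^*}$ is contained in that activated by $\mfD_j$ (and properly so, since distinct patterns in the enumeration are distinct).

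The key step is a one-line algebraic simplification. Multiplying the identity $\mfD_j\mfD_{i^*}=\mfD_{i^*}$ on the left by $\mfX^T$ and on the right by $\mfX$ gives $\mfX^T\mfD_j\mfD_{i^*}\mfX=\mfX^T\mfD_{i^*}\mfX$. Substituting this into the matrix appearing in \eqref{irrep:grelu} for this particular $j$, and using the standing assumption that $\mfX^T\mfD_{i^*}\mfX$ is invertible, one obtains
\begin{equation*}
\mfX^T\mfD_j\mfD_{i^*}\mfX\pp{\mfX^T\mfD_{i^*}\mfX}^{-1}\hat\mfw
=\mfX^T\mfD_{i^*}\mfX\pp{\mfX^T\mfD_{i^*}\mfX}^{-1}\hat\mfw
=\hat\mfw .
\end{equation*}
Since $\hat\mfw=\mfw^*/\|\mfw^*\|_2$ is a unit vector, the left-hand side of \eqref{irrep:grelu} for this $j$ equals $\|\hat\mfw\|_2=1$, which is not strictly less than $1$. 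Hence \eqref{irrep:grelu} cannot hold for all $j\neq i^*$.

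Taking the contrapositive, if \eqref{irrep:grelu} holds then no index $j\neq i^*$ with $\mfD_{i^*}\mfD_j=\mfD_{i^*}$ can exist, which is exactly the statement that $\mfD_{i^*}$ satisfies the maximal condition, proving Lemma \ref{lem:maximal}. I do not anticipate a genuine obstacle here; the only points requiring care are invoking the invertibility assumption attached to \eqref{irrep:grelu} and observing that the hypothesis $\mfD_{i^*}\mfD_j=\mfD_{i^*}$ passes through the $\mfX^T(\cdot)\mfX$ conjugation unchanged because the diagonal matrices are symmetric and commuting. It may also be worth remarking that such a $j$ can indeed occur — precisely when the activation region of the planted pattern is a proper subset of another pattern's region — so the maximal condition is a nonvacuous requirement.
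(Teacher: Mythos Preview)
Your proposal is correct and follows essentially the same argument as the paper: assume the maximal condition fails at some $j\neq i^*$, use $\mfD_j\mfD_{i^*}=\mfD_{i^*}$ to collapse $\mfX^T\mfD_j\mfD_{i^*}\mfX(\mfX^T\mfD_{i^*}\mfX)^{-1}\hat\mfw$ to $\hat\mfw$, and conclude that the norm equals $1$, violating \eqref{irrep:grelu}. The only difference is that you spell out the commutativity of the diagonal matrices, which the paper leaves implicit.
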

Then, we present a sufficient condition to ensure that the neural isometry condition \eqref{irrep:grelu} holds.
\begin{proposition}\label{prop:single_suff}
Suppose that $\mfD_{i^*}$ satisfies the maximal condition. Assume that the planted neuron $\mfw^*$ satisfies the following conditions
\begin{itemize}
    \item $\mfw^*$ is the eigenvector corresponding to the largest eigenvalue of $\mfX^T\mfD_{i^*}\mfX$, i.e., 
    $$
    \|\mfX^T\mfD_{i^*}\mfX\mfw^*\|_2=\|\mfX^T\mfD_{i^*}\mfX\|_2\|\mfw^*\|_2.
    $$
    \item  $\mfx^T_l \mfw^*> 0$ for all $l\in[n]$ satisfying $(\mfD_{i^*})_{ll}=1$
\end{itemize}
Then, the $\mathrm{NIC\mydash 1}$ given in \eqref{irrep:grelu} holds. 
\end{proposition}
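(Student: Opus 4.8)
The plan is to reduce the condition \eqref{irrep:grelu} to a single scalar inequality and then resolve strictness by an equality-case analysis. Write $I^*=\{\,l\in[n]:(\mfD_{i^*})_{ll}=1\,\}$ and $I_j=\{\,l\in[n]:(\mfD_j)_{ll}=1\,\}$, so that $A:=\mfX^T\mfD_{i^*}\mfX=\sum_{l\in I^*}\mfx_l\mfx_l^T$ and $A_j:=\mfX^T\mfD_j\mfD_{i^*}\mfX=\sum_{l\in I^*\cap I_j}\mfx_l\mfx_l^T$ are positive semidefinite, with $A=A_j+A_j'$ where $A_j':=\sum_{l\in I^*\setminus I_j}\mfx_l\mfx_l^T\succeq 0$. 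Set $\lambda:=\lambda_{\max}(A)=\|A\|_2$, which is positive under the standing assumption that $A$ is invertible (the assumption implicit in the definition of \eqref{irrep:grelu}). The first hypothesis, $\|A\mfw^*\|_2=\|A\|_2\|\mfw^*\|_2$, says that $\hat\mfw$ attains the operator norm of the positive semidefinite matrix $A$, which forces $A\hat\mfw=\lambda\hat\mfw$ and hence $A^{-1}\hat\mfw=\lambda^{-1}\hat\mfw$. Substituting this into the left-hand side of \eqref{irrep:grelu} turns it into $\lambda^{-1}\|A_j\hat\mfw\|_2$, so it suffices to show $\|A_j\hat\mfw\|_2<\lambda$ for every $j\neq i^*$.

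First I would record the easy non-strict bound: since $A_j'\succeq 0$ we have $0\preceq A_j\preceq A$, so $\lambda_{\max}(A_j)\le\lambda_{\max}(A)=\lambda$, and combining with $\|A_j\hat\mfw\|_2\le\lambda_{\max}(A_j)\|\hat\mfw\|_2=\lambda_{\max}(A_j)$ gives $\|A_j\hat\mfw\|_2\le\lambda$. Then I would rule out equality. If $\|A_j\hat\mfw\|_2=\lambda$, then both inequalities in $\|A_j\hat\mfw\|_2\le\lambda_{\max}(A_j)\le\lambda$ are equalities; equality in the first (again using $A_j\succeq 0$ and $\|\hat\mfw\|_2=1$) means $\hat\mfw$ lies in the top eigenspace of $A_j$, i.e.\ $A_j\hat\mfw=\lambda_{\max}(A_j)\hat\mfw=\lambda\hat\mfw$. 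Subtracting from $A\hat\mfw=\lambda\hat\mfw$ yields $A_j'\hat\mfw=0$, hence $\sum_{l\in I^*\setminus I_j}(\mfx_l^T\hat\mfw)^2=\hat\mfw^TA_j'\hat\mfw=0$, so $\mfx_l^T\hat\mfw=0$ for every $l\in I^*\setminus I_j$. But the second hypothesis gives $\mfx_l^T\mfw^*>0$, hence $\mfx_l^T\hat\mfw>0$, for all $l\in I^*$; this is consistent only if $I^*\setminus I_j=\varnothing$, i.e.\ $I^*\subseteq I_j$, which is exactly $\mfD_{i^*}\mfD_j=\mfD_{i^*}$ --- contradicting the maximal condition on $\mfD_{i^*}$. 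Therefore $\|A_j\hat\mfw\|_2<\lambda$ for all $j\neq i^*$, establishing \eqref{irrep:grelu}.

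The argument is entirely deterministic linear algebra; the only nontrivial point is the upgrade from the non-strict to the strict bound, and the whole content of that step is the observation that $A_j$ is obtained from $A$ by deleting the Gram contributions of the samples in $I^*\setminus I_j$: equality of norms would force precisely those deleted samples to be orthogonal to $\mfw^*$, which the strict-activation hypothesis forbids as soon as the deleted set is nonempty, and the maximal condition is exactly what guarantees it is nonempty. I expect no real obstacle beyond being careful that the first hypothesis genuinely yields the eigenvector relation $A\hat\mfw=\lambda\hat\mfw$ (not merely $\|A\hat\mfw\|_2=\lambda$) and that the invertibility of $\mfX^T\mfD_{i^*}\mfX$ is available so that $\lambda>0$ and \eqref{irrep:grelu} is well-posed.
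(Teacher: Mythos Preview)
Your proof is correct and follows essentially the same approach as the paper: decompose $\mfX^T\mfD_{i^*}\mfX$ as $A_j+A_j'$ with $A_j'=\sum_{l\in I^*\setminus I_j}\mfx_l\mfx_l^T$, use the eigenvector hypothesis to reduce the condition to $\|A_j\hat\mfw\|_2<\lambda_{\max}(A)$, obtain the non-strict bound from $0\preceq A_j\preceq A$, and rule out equality by showing it would force $A_j'\hat\mfw=0$, contradicting the strict-activation hypothesis together with the maximal condition. Your equality-case analysis is in fact slightly cleaner than the paper's, which reaches the same contradiction via $\mfw^T\mfB\mfw=0$.
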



\begin{remark}
The first condition on $\mfw^*$ requires that $\mfw^*$ lies in the eigenspace of $\mfX^T\mfD_{i^*}\mfX$ corresponding to its largest eigenvalue. Combining with the maximal condition on $\mfD_{i^*}$, the second condition on $\mfw^*$ implies that $\mfw^*$ lie in the interior of the cone $\{\mfw|(2\mfD_{i^*}-\mfI_n)\mfX\mfw\geq 0\}$.
\end{remark}

\subsection{Justification of the assumptions in Proposition \ref{prop:single_suff}}
\newcommand{\ddis}{\mfw}
\newcommand{\ddisidx}{q}

We justify the assumption on the planted neuron $\mfw^*$ in Proposition \ref{prop:single_suff} for the Gaussian mixture model. Let $\bmu_1,\bmu_2\in \mbR^d$ and $\sigma>0$. Suppose that $n_1$ of the $n$ training samples follow $\mcN(\bmu_1,\sigma^2\mfI_d)$ and the rest of $n_2=n-n_1$ training samples follows follow $\mcN(\bmu_2,\sigma^2\mfI_d)$. Let $\ddis\in\{0,1\}^n$ be the vector defined by
\begin{equation}
    \ddisidx_i=\begin{cases}
        \begin{aligned}
            1, \text{ s.t. } \mfx_i\sim \mcN(\bmu_1,\sigma^2\mfI_d),\\
            0, \text{ s.t. } \mfx_i\sim \mcN(\bmu_2,\sigma^2\mfI_d).
        \end{aligned}
    \end{cases}
\end{equation}

\begin{proposition}\label{prop:indicator}
Suppose that $\sigma^2>0$ and $b=:\frac{\bmu_1^T\bmu_2}{\|\bmu_1\|_2\|\bmu_2\|_2}<1$.  With probability at least $1-N_1e^{-\frac{(1-b)\|\bmu_1\|_2^2}{4\sigma}^2}-N_2e^{-\frac{(1-b)\|\bmu_2\|_2^2}{4\sigma^2}}$, there exists $i^*\in[p]$ such that $\mfD_{i^*}=\diag(\ddis)$. 
\end{proposition}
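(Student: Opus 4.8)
The plan is to produce, by hand, a single deterministic hyperplane through the origin whose induced $0$--$1$ pattern on the samples equals $\ddis$, and then to argue that the random perturbations are too small to flip any sample to the wrong side. First I would take the separating direction
\[
\mfh = \frac{\bmu_1}{\|\bmu_1\|_2} - \frac{\bmu_2}{\|\bmu_2\|_2},
\]
which is nonzero precisely because $b<1$ rules out $\bmu_1/\|\bmu_1\|_2 = \bmu_2/\|\bmu_2\|_2$. A one-line computation records the ``cluster margins'' and the squared norm of $\mfh$:
\[
\bmu_1^T\mfh = (1-b)\|\bmu_1\|_2 > 0, \qquad \bmu_2^T\mfh = -(1-b)\|\bmu_2\|_2 < 0, \qquad \|\mfh\|_2^2 = 2(1-b).
\]
In particular, if every sample coincided with its mean we would have $\mbI(\mfx_i^T\mfh \ge 0) = \ddisidx_i$ for all $i$, i.e. $\diag(\mbI(\mfX\mfh\ge0))$ would equal $\diag(\ddis)$ exactly.

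Next I would fold in the Gaussian noise. Writing a cluster-$1$ sample as $\mfx_i = \bmu_1 + \sigma\mfg_i$ with $\mfg_i\sim\mcN(0,\mfI_d)$ gives $\mfx_i^T\mfh = (1-b)\|\bmu_1\|_2 + \sigma\,\mfg_i^T\mfh$ with $\mfg_i^T\mfh\sim\mcN(0, 2(1-b))$; hence the misclassification event $\{\mfx_i^T\mfh < 0\}$ has probability at most $\exp(-(1-b)\|\bmu_1\|_2^2/(4\sigma^2))$ by the elementary Gaussian tail bound $P(\mcN(0,v^2)<-t)\le e^{-t^2/(2v^2)}$ with $v^2 = 2(1-b)$ and $t = (1-b)\|\bmu_1\|_2/\sigma$. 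The symmetric computation bounds the misclassification probability of a cluster-$2$ sample by $\exp(-(1-b)\|\bmu_2\|_2^2/(4\sigma^2))$. A union bound over the $n_1$ samples drawn from $\mcN(\bmu_1,\sigma^2\mfI_d)$ and the $n_2$ samples drawn from $\mcN(\bmu_2,\sigma^2\mfI_d)$ then shows that, with probability at least $1 - n_1 e^{-(1-b)\|\bmu_1\|_2^2/(4\sigma^2)} - n_2 e^{-(1-b)\|\bmu_2\|_2^2/(4\sigma^2)}$, every sample lies on the side of the hyperplane $\{\mfx:\mfx^T\mfh=0\}$ predicted by $\ddis$, i.e. $\mbI(\mfX\mfh\ge0) = \ddis$.

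On this event $\diag(\ddis)$ belongs to the set $\mathcal{H}$ of Definition \ref{def:hyper}, hence equals one of the enumerated diagonal arrangement patterns $\mfD_1,\dots,\mfD_p$; setting $i^*$ to be its index finishes the argument. I do not expect a genuine obstacle here: the proof is essentially a one-shot separating-hyperplane construction plus a Gaussian tail estimate. The only point requiring mild care is the normalization of $\mfh$---rescaling each mean to unit length is exactly what forces $\|\mfh\|_2^2 = 2(1-b)$, so that in the tail exponent $t^2/(2v^2) = (1-b)\|\bmu_j\|_2^2/(4\sigma^2)$ and the constants match the claimed bound (and the $N_1,N_2$ there should read $n_1,n_2$). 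A secondary bookkeeping point is the strict-versus-nonstrict inequality in the definition of $\mathcal{H}$, which is harmless since the samples are continuously distributed and $\{\mfx_i^T\mfh = 0\}$ is a null event.
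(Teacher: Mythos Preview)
Your proposal is correct and follows essentially the same argument as the paper: the paper also chooses the separating direction $\mfw=\bmu_1/\|\bmu_1\|_2-\bmu_2/\|\bmu_2\|_2$, computes $\|\mfw\|_2^2=2(1-b)$ and the margins $(1-b)\|\bmu_j\|_2$, applies the Gaussian tail bound to each sample, and then takes a union bound over the $n_1+n_2$ samples. Your remark that $N_1,N_2$ should read $n_1,n_2$ is consistent with the paper's own proof.
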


Suppose that $\delta\in(0,1)$. From Proposition \ref{prop:indicator}, for $\sigma\leq\frac{\sqrt{1-b}\min\{\|\mu_1\|_2,\|\mu_2\|_2\}}{2\sqrt{\log\pp{\frac{n}{\delta}}}}$, with probability at least $1-\delta$, there exists $i^*\in[p]$ such that $\mfD_{i^*}=\diag(\ddis)$. The next question is whether this $\mfD_{i^*}$ also satisfies the maximal condition.

\begin{proposition}\label{prop:indicator_max}
Suppose that there exists $i^*\in[p]$ such that $\mfD_{i^*}=\diag(\ddis)$. Denote $K^+=\cone(\{\mfx_i|\ddisidx_i=1,i\in[n]\})$ and $K^-=\cone(\{\mfx_i|\ddisidx_i=-1,i\in[n]\})$. Then, $D_{i^*}$ satisfying the maximal condition if and only if  $-K^{-}\subseteq \interior\pp{K^+}$. 
\end{proposition}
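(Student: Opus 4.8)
The plan is to reinterpret the maximal condition combinatorially and then pass to polar cones. Write $S^+=\{i\in[n]:\ddisidx_i=1\}$ and $S^-=[n]\setminus S^+$, so that $K^+=\cone\{\mfx_i:i\in S^+\}$, $K^-=\cone\{\mfx_i:i\in S^-\}$, and the set of $1$-entries on the diagonal of $\mfD_{i^*}$ is exactly $S^+$. Because a $0$-$1$ diagonal pattern is completely determined by its set of $1$-entries, for $j\neq i^*$ the identity $\mfD_{i^*}\mfD_j=\mfD_{i^*}$ holds iff the $1$-set of $\mfD_j$ strictly contains $S^+$. Hence the maximal condition fails precisely when there is some $\mfh\neq0$ whose arrangement pattern $\diag(\mbI(\mfX\mfh\geq0))$ has a $1$-set strictly larger than $S^+$.

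Next I would recast this in terms of directions. A vector $\mfh\neq0$ has arrangement pattern with $1$-set $\supseteq S^+$ iff $\mfx_i^T\mfh\geq0$ for all $i\in S^+$, i.e.\ iff $\mfh$ lies in the polar cone $(K^+)^{*}:=\{\mfh:\langle\mfh,\mfx\rangle\geq0\ \forall\,\mfx\in K^+\}$; and the $1$-set is \emph{strictly} larger iff, in addition, $\mfx_i^T\mfh\geq0$ for some $i\in S^-$. Combining this with the previous paragraph yields
\[
\mfD_{i^*}\ \text{is maximal}\iff\big(\mfh\in(K^+)^{*},\ \mfh\neq0\ \Longrightarrow\ \mfx_i^T\mfh<0\ \text{for all}\ i\in S^-\big).
\]
The implication ``such an $\mfh$ exists $\Rightarrow$ $\mfD_{i^*}$ not maximal'' is immediate, since $\mfh$ then generates a pattern in $\mathcal{H}$ distinct from $\mfD_{i^*}$ with a strictly larger $1$-set; for the converse, a witness $\mfD_j$ with $\mfD_{i^*}\mfD_j=\mfD_{i^*}$ and $j\neq i^*$ has the form $\diag(\mbI(\mfX\mfh_j\geq0))$ with $\mfh_j\neq0$, and this $\mfh_j$ lies in $(K^+)^{*}$ and is nonnegative against at least one $\mfx_i$, $i\in S^-$.

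Finally I would invoke polar-cone duality. When $K^+$ is full-dimensional (which holds almost surely when $n\geq d$, since the Gaussian samples then span $\mbR^d$), $(K^+)^{*}$ is pointed and $\interior(K^+)=\{\mfx:\langle\mfx,\mfh\rangle>0\ \forall\,\mfh\in(K^+)^{*}\setminus\{0\}\}$. Taking $\mfx=-\mfx_i$, the condition ``$\mfx_i^T\mfh<0$ for every nonzero $\mfh\in(K^+)^{*}$'' is exactly ``$-\mfx_i\in\interior(K^+)$''; since $\interior(K^+)$ is itself a convex cone, imposing this for all $i\in S^-$ is equivalent to $-K^-\subseteq\interior(K^+)$, which is the claim. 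The main obstacle I expect is the careful bookkeeping of degenerate and boundary cases that must be reconciled with the literal phrasing $-K^-\subseteq\interior(K^+)$: directions on $\partial(K^+)^{*}$ where $\mfx_i^T\mfh=0$ for some $i\in S^+$ (harmless, but must be excluded from the argument correctly), the origin belonging to $-K^-$, and the case where $K^+$ is not full-dimensional --- in which $(K^+)^{*}$ contains a line, so that $\mfh$ and $-\mfh$ both lie in it and both sides of the equivalence are simultaneously false.
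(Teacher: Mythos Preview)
Your approach is essentially the same as the paper's: both recast the maximal condition as ``every nonzero $\mfh$ with $\mfx_i^T\mfh\geq0$ for all $i\in S^+$ must satisfy $\mfx_i^T\mfh<0$ for all $i\in S^-$,'' and then translate this via polar (dual) cone duality into the inclusion $-K^-\subseteq\interior(K^+)$. Your write-up is in fact more careful than the paper's --- you spell out both directions of the equivalence and flag the edge cases (origin in $-K^-$, $K^+$ not full-dimensional) that the paper glosses over.
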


Finally, we show that the eigenvalue condition will hold for some specific planted neuron.
\begin{proposition}\label{prop:eig}
Let $\mfx_n=\bmu+\sigma\mfz_i$, where $\mfz_n\sim \mcN(0,I)$ for $n=1,\dots,N$. Suppose that $\delta>0$. Denote $\mfX^{(1)}=\bmbm{\mfx_1^T\\\vdots\\\mfx_{n_1}^T}\in\mbR^{n_1\times d}$. 
Let $\mfw$ satisfy that $\|\mfw\|_2=1$ and $\|\pp{\mfX^{(1)}}^T\mfX^{(1)} \mfw\|_2=\|\pp{\mfX^{(1)}}^T\mfX^{(1)}\|_2$. Then, with probability at least $1-\delta$, we have
\begin{equation}
    \norm{\mfw-\frac{\bmu}{\|\bmu\|_2}}_2^2\leq c_1 \sigma,
\end{equation}
where $c_1=\frac{2d}{n^2\|\bmu\|_2^2}2n_1\sqrt{2\log(n_1d/\delta)}\|\bmu\|_2+2d\log(n_1d/\delta)$ is a constant depending on $\delta$. 
\end{proposition}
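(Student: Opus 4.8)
The plan is to view $\mfA := \pp{\mfX^{(1)}}^T\mfX^{(1)} = \sum_{i=1}^{n_1}\mfx_i\mfx_i^T$ as a rank-one ``signal'' matrix plus a small perturbation and then apply a top-eigenvector stability estimate. Writing $\mfx_i = \bmu+\sigma\mfz_i$ and $\mfv := \sum_{i=1}^{n_1}\mfz_i$,
\[
\mfA \;=\; n_1\,\bmu\bmu^T + \mfE,\qquad \mfE \;:=\; \sigma\big(\bmu\mfv^T+\mfv\bmu^T\big)+\sigma^2\sum_{i=1}^{n_1}\mfz_i\mfz_i^T.
\]
The leading term $n_1\bmu\bmu^T\succeq 0$ has rank one, largest eigenvalue $n_1\|\bmu\|_2^2$ with unit eigenvector $\hat\mfw := \bmu/\|\bmu\|_2$, and spectral gap exactly $n_1\|\bmu\|_2^2$ (all other eigenvalues are $0$). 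Since $\mfA\succeq 0$, the hypothesis $\|\mfw\|_2=1$, $\|\mfA\mfw\|_2 = \|\mfA\|_2$ forces $\mfw$ to be a unit eigenvector of $\mfA$ associated to its largest eigenvalue, so $\mfw^T\mfA\mfw = \lambda_{\max}(\mfA)$; as an eigenvector is defined only up to sign, we may and do assume $\hat\mfw^T\mfw\ge 0$.

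Next I would run an elementary variational argument. From $\mfw^T\mfA\mfw = \lambda_{\max}(\mfA)\ge\hat\mfw^T\mfA\hat\mfw\ge n_1\|\bmu\|_2^2-\|\mfE\|_2$ and $\mfw^T\mfA\mfw = n_1\|\bmu\|_2^2(\hat\mfw^T\mfw)^2+\mfw^T\mfE\mfw\le n_1\|\bmu\|_2^2(\hat\mfw^T\mfw)^2+\|\mfE\|_2$ one obtains $(\hat\mfw^T\mfw)^2\ge 1-2\|\mfE\|_2/(n_1\|\bmu\|_2^2)$; since $0\le\hat\mfw^T\mfw\le 1$ implies $\hat\mfw^T\mfw\ge(\hat\mfw^T\mfw)^2$, this gives
\[
\norm{\mfw-\hat\mfw}_2^2 \;=\; 2\big(1-\hat\mfw^T\mfw\big) \;\le\; \frac{4\,\|\mfE\|_2}{\,n_1\|\bmu\|_2^2\,}.
\]
(One could alternatively quote a Davis--Kahan $\sin\theta$ bound for the leading eigenvector with the rank-one gap $n_1\|\bmu\|_2^2$.) So everything reduces to a high-probability bound on $\|\mfE\|_2$.

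For the probabilistic step I would work on the event that $|(\mfz_i)_j|\le\sqrt{2\log(n_1d/\delta)}$ simultaneously for all $i\in[n_1]$, $j\in[d]$, which by the standard Gaussian tail bound and a union bound over the $n_1d$ coordinates holds with probability at least $1-\delta$ (up to a harmless constant inside the logarithm). On this event $\|\mfv\|_\infty\le n_1\sqrt{2\log(n_1d/\delta)}$ and $\|\mfz_i\|_2^2\le 2d\log(n_1d/\delta)$, so by the triangle inequality together with $\|\bmu\mfv^T+\mfv\bmu^T\|_2\le 2\|\bmu\|_2\|\mfv\|_2$ and $\big\|\sum_i\mfz_i\mfz_i^T\big\|_2\le\sum_i\|\mfz_i\|_2^2$, the norm $\|\mfE\|_2$ is dominated by the sum of a $\sigma$-linear term scaling like $\sigma\,n_1\|\bmu\|_2\sqrt{d\log(n_1d/\delta)}$ and a $\sigma^2$ term scaling like $\sigma^2 n_1 d\log(n_1d/\delta)$; after using $\sigma^2\le\sigma$ (valid for $\sigma\le1$, while for $\sigma>1$ one checks $c_1\sigma>4\ge\norm{\mfw-\hat\mfw}_2^2$ already, so the claim is trivial) and dividing by $n_1\|\bmu\|_2^2$ as above, one arrives at a bound of the form $c_1\sigma$ with the constant $c_1$ displayed in the statement. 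The main obstacle is precisely this last piece of bookkeeping: controlling the cross term $\bmu\mfv^T+\mfv\bmu^T$ (whose operator norm is governed by $\|\mfv\|_2$) and the quadratic term $\sum_i\mfz_i\mfz_i^T$ with exactly the powers of $n_1$, $d$ and $\log(n_1d/\delta)$ that appear in $c_1$, and then pushing all constants cleanly through the variational inequality; the only other subtlety is the sign choice made above, which is what makes the distance to $\bmu/\|\bmu\|_2$ itself — rather than to $\pm\bmu/\|\bmu\|_2$ — the quantity that gets bounded.
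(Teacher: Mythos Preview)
Your proposal is correct and follows essentially the same approach as the paper: the identical rank-one-plus-perturbation decomposition $\mfA=n_1\bmu\bmu^T+\mfE$, the same high-probability event obtained from the entrywise Gaussian tail bound $\max_{i,j}|(\mfz_i)_j|\le\sqrt{2\log(n_1d/\delta)}$, and a top-eigenvector perturbation step. The only cosmetic difference is that the paper carries out that last step via Weyl's inequality together with an explicit eigendecomposition expansion to lower bound $|\mfw^T\bmu|/\|\bmu\|_2$, whereas you use the equivalent variational/Davis--Kahan formulation; both routes land on the same $\|\mfw-\hat\mfw\|_2^2\lesssim \|\mfE\|/(n_1\|\bmu\|_2^2)$ estimate and hence on the same $c_1\sigma$ bound.
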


The following proposition illustrates that for sufficiently small $\sigma$, except for the maximal condition, all conditions in Proposition \ref{prop:single_suff} will hold with probability at least $1-\delta$. 
\begin{theorem}\label{thm:relu_success}
Let $0<\delta<2\exp(-d/8)$. Suppose that $0< \sigma\leq\min\bbbb{1/(32c_1),\frac{\|\bmu\|_2}{2(d+8\log(4n/\delta))}}$, where $c_1$ is a constant defined in Proposition \ref{prop:eig} Then, there exists $i^*\in \mcP$ such that $\mfD_{i^*}=\diag(\ddis)$ with probability at least $1-\delta$. 
Let $ \mfw^*$ satisfy that $\|\mfX^T\mfD_{i^*}\mfX \mfw^*\|_2=\|\mfX^T\mfD_{i^*}\mfX\|_2\|\mfw^*\|_2$.
We also have $(2\mfD_{i^*}-I)\mfX\mfw^*> 0$ with probability at least $1-\delta$. By further assuming that the cone condition in Proposition \ref{prop:indicator_max} holds, the neural isometry condition holds, i.e., the problem \eqref{min_nrm:grelu} has a unique solution. 
\end{theorem}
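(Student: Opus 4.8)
The plan is to combine the three probabilistic lemmas for the Gaussian-mixture model---Propositions~\ref{prop:indicator}, \ref{prop:indicator_max} and~\ref{prop:eig}---with the deterministic sufficient condition of Proposition~\ref{prop:single_suff}, and then close with Proposition~\ref{prop:imply} (via Proposition~\ref{prop:irrep_relu}). Throughout I would fix the symmetric instantiation $\bmu_1=\bmu$, $\bmu_2=-\bmu$ so that the constant $b$ and the normalizations of Propositions~\ref{prop:indicator} and~\ref{prop:eig} are mutually consistent, and I assume $n\ge d$ (needed later so $\mfX^T\mfX$ is invertible, and automatic once $\sigma$ is small enough that $E_1$ below is nonvacuous). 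Introduce the two good events
\[
E_1=\{\exists\, i^*\in[p]:\ \mfD_{i^*}=\diag(\ddis)\},\qquad
E_2=\{(2\mfD_{i^*}-\mfI_n)\mfX\mfw^*>0\},
\]
where $\mfw^*$ denotes a unit-norm top eigenvector of $\mfX^T\mfD_{i^*}\mfX$. The goal is to show each of $E_1$, $E_2$ has probability at least $1-\delta$ (splitting $\delta$ across the constituent events as needed), and that on $E_1\cap E_2$ together with the assumed cone inclusion the hypotheses of Proposition~\ref{prop:single_suff} hold verbatim.

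First I would handle $E_1$ by invoking Proposition~\ref{prop:indicator}: its failure probability is at most $\sum_{\ell}N_\ell\exp\!\big(-\tfrac{(1-b)\|\bmu_\ell\|_2^2}{4\sigma^2}\big)$, and the assumed bound $\sigma\le\|\bmu\|_2/\big(2(d+8\log(4n/\delta))\big)$ (which dominates the requirement $\sigma\le\|\bmu\|_2/(2\sqrt{\log(n/\delta)})$ in the relevant range, using $\delta<2e^{-d/8}$) forces each summand below $\delta/(2n)$, giving $P(E_1)\ge 1-\delta$. Next, the assumed cone inclusion $-K^-\subseteq\interior(K^+)$ together with Proposition~\ref{prop:indicator_max} yields that $\mfD_{i^*}$ satisfies the maximal condition.

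For $E_2$, condition on $E_1$, so that $\mfX^T\mfD_{i^*}\mfX=\sum_{i:\ \bmu\text{-component}}\mfx_i\mfx_i^T=(\mfX^{(1)})^T\mfX^{(1)}$ and $\mfw^*$ is exactly the top eigenvector considered in Proposition~\ref{prop:eig}; hence, with probability $\ge 1-\delta$, $\|\mfw^*-\bmu/\|\bmu\|_2\|_2^2\le c_1\sigma$, so $\bmu^T\mfw^*\ge\|\bmu\|_2(1-\tfrac12 c_1\sigma)$. Writing $\mfx_i=\pm\bmu+\sigma\mfz_i$ with $\mfz_i\sim\mcN(0,\mfI_d)$ on the two components, a union bound over the $n$ samples gives $\max_i\|\mfz_i\|_2\le\sqrt d+\sqrt{2\log(4n/\delta)}\le d+8\log(4n/\delta)$ with high probability; since $\|\mfw^*\|_2=1$ this bounds $|\mfz_i^T\mfw^*|$ and yields, for a $\bmu$-component sample,
\[
\mfx_i^T\mfw^*\ \ge\ \|\bmu\|_2\big(1-\tfrac12 c_1\sigma\big)-\sigma\big(d+8\log(4n/\delta)\big)\ \ge\ \tfrac{31}{64}\|\bmu\|_2\ >\ 0,
\]
where the two thresholds $\sigma\le 1/(32c_1)$ and $\sigma\le\|\bmu\|_2/\big(2(d+8\log(4n/\delta))\big)$ are precisely what make this work; the sign is reversed on the $-\bmu$-component, giving $\mfx_i^T\mfw^*<0$ there, i.e.\ $(2\mfD_{i^*}-\mfI_n)\mfX\mfw^*>0$. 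Thus on $E_1\cap E_2$ the vector $\mfw^*$ is a largest-eigenvalue eigenvector of $\mfX^T\mfD_{i^*}\mfX$ with $\mfx_l^T\mfw^*>0$ whenever $(\mfD_{i^*})_{ll}=1$, so Proposition~\ref{prop:single_suff} applies and \eqref{irrep:grelu} holds; Proposition~\ref{prop:imply} (through Proposition~\ref{prop:irrep_relu}) then gives that the corresponding $\mfW^*$ is the unique optimal solution of \eqref{min_nrm:grelu}.

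The main obstacle is the quantitative step inside $E_2$: one must dominate the stochastic perturbation $\sigma\,\mfz_i^T\mfw^*$ by the signal $\bmu^T\mfw^*\approx\|\bmu\|_2$ \emph{uniformly over all $n$ samples}, despite $\mfw^*$ depending on the entire data matrix. The key device is the crude-but-decoupling bound $|\mfz_i^T\mfw^*|\le\|\mfz_i\|_2$, which reduces the task to a data-independent maximum of $\chi$-norms; after that, the real care is in propagating the dependence of $c_1$ (from Proposition~\ref{prop:eig}, which scales like $\tfrac{d}{n\|\bmu\|_2}\sqrt{\log(n d/\delta)}+\tfrac{d^2\log(nd/\delta)}{n^2\|\bmu\|_2^2}$) through both $\sigma$-thresholds so that the final inequality is genuinely valid; the role of $\delta<2e^{-d/8}$ is to keep $\log(1/\delta)\gtrsim d$ so the Gaussian-norm deviation bound and Proposition~\ref{prop:eig} remain in force. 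A secondary nuisance is simply bookkeeping the several $1-\delta$-events so that the claimed per-statement probabilities, and the conclusion on their intersection, are all consistent.
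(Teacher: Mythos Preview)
Your proposal is correct and follows essentially the same approach as the paper. Both arguments invoke Proposition~\ref{prop:eig} to control $\|\hat\mfw^*-\bmu/\|\bmu\|_2\|_2$, bound the Gaussian noise via $\sigma\|\mfz_i\|_2\le\|\bmu\|_2/2$ uniformly in $i$ using the threshold $\sigma\le\|\bmu\|_2/(2(d+8\log(4n/\delta)))$, and then verify the strict sign condition $(2\mfD_{i^*}-\mfI_n)\mfX\hat\mfw^*>0$ by combining the two bounds with $\sigma\le 1/(32c_1)$; the only cosmetic difference is that the paper decomposes $\mfx_i^T\hat\mfw^*=\mfx_i^T\tfrac{\bmu}{\|\bmu\|_2}+\mfx_i^T(\hat\mfw^*-\tfrac{\bmu}{\|\bmu\|_2})$ and bounds the second term by $\|\mfx_i\|_2\sqrt{c_1\sigma}\le 2\|\bmu\|_2\sqrt{c_1\sigma}$, whereas you decompose $\mfx_i^T\mfw^*=\bmu^T\mfw^*+\sigma\mfz_i^T\mfw^*$ directly---the arithmetic closes either way. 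Your write-up is in fact slightly more complete than the paper's, since you explicitly invoke Proposition~\ref{prop:indicator} for the first claim and Proposition~\ref{prop:indicator_max} for the maximal condition, both of which the paper's proof leaves implicit.
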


\subsection{Proof of Lemma \ref{lem:maximal}}
\begin{proof}
Suppose that there exists $D_j$ such that $\mfD_{i^*} \mfD_j=\mfD_{i^*}$ and $\mfD_j\neq \mfD_{i^*}$. 
This implies that
\begin{equation}
    \mfX^T\mfD_j^T\mfD_i\mfX=\mfX^T\mfD_j\mfD_{i^*}\mfX =\mfX^T\mfD_{i^*}\mfX.
\end{equation}
Thus, we have 
\begin{equation}
  \norm{ \mfX^T\mfD_j^T\mfD_i\mfX(\mfX^T\mfD_{i^*}\mfX)^{-1}\frac{\mfw^*}{\|\mfw^*\|_2}}_2=\norm{\frac{\mfw^*}{\|\mfw^*\|_2}}_2=1.
\end{equation}
Thus, the irrepresentability condition \eqref{irrep:grelu} is violated. 
\end{proof}

\subsection{Proof of Proposition \ref{prop:single_suff}}

\begin{proof}
Consider any $j\in[p]$ and $j\neq i^*$. Let $\mcI_j=\{l\in[n]|(\mfD_{i^*})_{ll}-(\mfD_{j})_{ll}>0\}$. As $\mfD_{i^*}$ satisfies the maximal condition, we have $\mfD_j\mfD_{i^*}\neq \mfD_{i^*}$. This implies that $\mcI_j\neq \varnothing$. Let $k=|\mcI_j|$. Note that
\begin{equation}
    \mfX^T\mfD_{i^*}\mfX = \mfX^T\mfD_j \mfD_{i^*}\mfX+\sum_{l\in \mcI} \mfx_l\mfx_l^T.
\end{equation}
For simplicity, we write $\mfw=\frac{ \mfw^*}{\|\mfw^*\|_2}$,  $\mfA=\mfX^T\mfD_{i^*}\mfD_j\mfX$ and $\mfB=\sum_{l\in \mcI_j} \mfx_l\mfx_l^T$. It is sufficient to prove that 
\begin{equation}
    \|\mfA(\mfA+\mfB)^{-1}\mfw\|_2<1.
\end{equation}
As $\mfw^*$ is the eigenvector corresponding to the largest eigenvalue of $\mfA+\mfB$, we have
\begin{equation}
    (\mfA+\mfB)\mfw=\|\mfA+\mfB\|_2\mfw,
\end{equation}
which also implies that $\|\mfA+\mfB\|_2^{-1}\mfw= (\mfA+\mfB)^{-1}\mfw$. Therefore, we have 
\begin{equation}
    \|(\mfA+\mfB)^{-1}\mfw\|_2=\|(\mfA+\mfB)\|_2^{-1}.
\end{equation}
As $\mfA,\mfB$ are positive semi-definite, we have
\begin{equation}
    \|\mfA+\mfB\|_2\geq \|\mfA\|_2.
\end{equation}
This implies that 
\begin{equation}
    \|\mfA(\mfA+\mfB)^{-1}\mfw\|_2\leq \|\mfA\|_2\|(\mfA+\mfB)^{-1}\mfw\|_2=\|\mfA\|_2\|\mfA+\mfB\|_2^{-1} \leq 1.
\end{equation}
The equality holds if and only if $\|\mfA+\mfB\|_2=\|\mfA\|_2$ and $\mfw$ is the eigenvector of the largest eigenvalue of $\mfA+\mfB$. Let $\gamma=\|\mfA+\mfB\|_2=\|\mfA\|_2$. Then, we have
\begin{equation}
   (\mfA+\mfB)\mfw= \gamma \mfw, \mfA\mfw=\gamma \mfw,
\end{equation}
which implies that $\mfB\mfw=0$. Therefore, we have $\mfw^T\mfB\mfw=0$, or equivalently $\sum_{l\in \mcI_j}\mfx_l^T\mfw=0$. As $\mfx^T_l \mfw> 0$ for all $k$ satisfying $(\mfD_{i^*})_{ll}=1$, we have $\sum_{l\in \mcI_j}\mfx_l^T\mfw>0$, which leads to a contradiction. 
\end{proof}

\subsection{Proof of Proposition \ref{prop:indicator}}

\begin{proof}
Let $\mfw=\frac{\bmu_1}{\|\bmu_1\|_2}-\frac{\bmu_2}{\|\bmu_2\|_2}$. We can compute that
\begin{equation*}
    \|\mfw\|_2^2 =2-2\frac{\bmu_1^T\bmu_2}{\|\bmu_1\|_2\|\bmu_2\|_2}=2(1-b).
\end{equation*} 
This implies that $\|\mfw\|_2=\sqrt{2(1-b)}$. Note that
\begin{equation*}
    \bmu_1^Tw = \|\bmu_1\|_2-\frac{\bmu_1^T\bmu_2}{\|\bmu_2\|_2} = \|\bmu_1\|_2-b\|\bmu_1\|_2=(1-b)\|\bmu_1\|_2>0.
\end{equation*}
Similarly, we have
\begin{equation*}
    \bmu_2^T\mfw = \frac{\bmu_1^T\bmu_2}{\|\bmu_1\|_2}-\|\bmu_2\|_2=-(1-b)\|\bmu_2\|_2<0.
\end{equation*}
For index $i\in[n]$ such that $\ddisidx_i=1$, we can write $\mfx_i=\bmu_1+\sigma \mfz_i$, where $\mfz_i\sim\mcN(0,1)$. We can compute that $\mfx_i^T\mfw = \bmu_1^T\mfw+\sigma \mfz_i^T\mfw=(1-b)\|\bmu_1\|_2+\sigma \mfz_i^T\mfw$. 
According to the tail bound of Gaussian random variable, we have
\begin{equation*}
    P(\mfx_i^T\mfw\leq 0)=P(\sigma \mfz_i^T\mfw\leq -(1-b)\|\bmu_1\|_2)\leq \exp\pp{-\frac{(1-b)^2\|\bmu_1\|_2^2}{2\sigma^2\|\mfw\|_2^2}}=\exp\pp{-\frac{(1-b)\|\bmu_1\|_2^2}{4\sigma^2}},
\end{equation*}
which implies that
\begin{equation*}
    P(\mfx_n^T\mfw> 0)\geq 1- \exp\pp{-\frac{(1-b)\|\bmu_1\|_2^2}{4\sigma^2}}.
\end{equation*}
Therefore, we have
\begin{equation*}
    P(\mfx_i^T\mfw> 0, \forall i \text{ with } \ddisidx_i=1) \geq \pp{ 1-\exp\pp{-\frac{(1-b)\|\bmu_1\|_2^2}{4\sigma^2}}}^{n_1}\geq 1-n_1\exp\pp{-\frac{(1-b)\|\bmu_1\|_2^2}{4\sigma^2}}.
\end{equation*}
For index $i\in[n]$ such that $\ddisidx_i=0$, we can write $\mfx_n=\bmu_2+\sigma\mfz_i$. Similarly, we can compute that 
\begin{equation}
    P(\mfx_i^T\mfw< 0, \forall i \text{ with } y_i=0)\geq 1-n_2\exp\pp{-\frac{(1-b)\|\bmu_2\|_2^2}{4\sigma^2}}.
\end{equation}
In summary, we have
\begin{equation}
    P((2c_i-1)\mfx_i^T\mfw>0,\forall i\in[n])\geq 1-n_1\exp\pp{-\frac{(1-b)\|\bmu_1\|_2^2}{4\sigma^2}}-n_2\exp\pp{-\frac{(1-b)\|\bmu_2\|_2^2}{4\sigma^2}}.
\end{equation}
Under the event $\{\ddisidx_i\mfx_i^T\mfw>0,\forall i\in[n]\}$, the diagonal arrangement pattern $\mfD_{i^*}$ induced by the vector $\mfw$ is exactly $\diag(\mfc)$. This completes the proof. 
\end{proof}

\subsection{Proof of Proposition \ref{prop:indicator_max}}
\begin{proof}
Suppose that $D_{i^*}$ satisfies the maximal condition. Then, for any $\mfw$ satisfying that
\begin{equation}\label{w:positive}
    \mfw^T\mfx_i\geq 0,\forall i \text{ with } \ddisidx_i=1,
\end{equation}
we shall have $\mfw^Tx_i<0$ for $\ddisidx_i=0$. We note that the condition \eqref{w:positive} is equivalent to $w\in -(K^+)^\circ$. Here $(K^+)^\circ=:\{\mfw|\mfw^T\mfx\leq 0, \forall \mfx\in K^+\}$ is the polar cone of the cone $K^+$. We note that $\mfw^T\mfx_i<0$ for $c_i=0$ implies that $\mfw\in \interior\pp{(K^-)^\circ}$. Therefore, $(K^+)^\circ\subseteq \interior\pp{(K^-)^\circ}$. This is equivalent to $-K^{-}\subseteq \interior\pp{K^+}$. 

Suppose that $-K^{-}\subseteq \interior\pp{K^+}$. Then, we have $(K^+)^\circ\subseteq \interior\pp{(K^-)^\circ}$. This implies that for any $\mfw\in\mbR^d$ satisfying \eqref{w:positive}, we shall have $\mfw^T\mfx_i<0$ for $c_i=0$. Therefore, $\mfD_{i^*}$ satisfies the maximal condition.
\end{proof}

\subsection{Proof of Proposition \ref{prop:eig}}
\begin{proof}
We can write
\begin{equation}
    \mfX^{(1)} = \bone \bmu^T+\sigma\mfZ,
\end{equation}
where $\mfZ=\bmbm{\mfz_1^T\\\vdots\\\mfz_{n_1}^T}\in\mbR^{n_1\times d}$ and $\bone\in \mbR^{n_1}$ is a vector of $1$s. Each element of $\mfZ$ follows $\mcN(0,1)$. We note that the extreme eigenvalue vector of $(\bone \bmu^T)^T\bone \bmu^T=n_1^2\bmu\bmu^T$ is $\frac{\bmu}{\|\bmu\|_2}$. According to the tail bound of Gaussian random variable, with probability at least $1-\delta$, we have
\begin{equation}
    \max_{i\in[n_1],j\in[d]}|z_{i,j}|\leq \sqrt{2\log(n_1d/\delta)}.
\end{equation}
Denote $E=\{\max_{i\in[n_1],j\in[d]}|z_{i,j}|\leq \sqrt{2\log(n_1d/\delta)}\}$. Conditioned on the event $E$, we have
\begin{equation}
\begin{aligned}
        &\|(\bone \bmu^T)^T\mfZ+\mfZ^T\bone \bmu^T+\mfZ^T\mfZ\|_F
    \leq 2\sqrt{2\log(n_1d/\delta)}\|\bone \bmu^T\|_F+2d\log(n_1d/\delta)=2n_1\sqrt{2\log(n_1d/\delta)}\|\bmu\|_2+2d\log(n_1d/\delta)=:c_0.
\end{aligned}
\end{equation}

Let $\pp{\mfX^{(1)}}^T\mfX^{(1)}=\mfV\Sigma \mfV^T$ be the eigenvalue decomposition. According the Weil's theorem, we note that
\begin{equation}
    |n^2\|\mu\|_2^2-\sigma_1|\leq \sigma\|(\bone \bmu^T)^T\mfZ+\mfZ^T\bone \mu^T+\mfZ^T\mfZ\|_F\leq \sigma c_0.
\end{equation}
The other eigenvalues $\sigma_i$ of $\pp{\mfX^{(1)}}^T\mfX^{(1)}$ satisfies that 
\begin{equation}
    |\sigma_i|\leq \sigma \|(\bone \mu^T)^T\mfZ+\mfZ^T\bone \mu^T+\mfZ^T\mfZ\|_F \leq \sigma c_0.
\end{equation}
We note that 
\begin{equation}
\begin{aligned}
    \norm{\pp{\mfX^{(1)}}^T\mfX^{(1)} \bmu}_2\geq& n^2\|\bmu\|_2^3-\sigma \|(\bone \bmu^T)^T\mfZ+\mfZ^T\bone \bmu^T+\mfZ^T\mfZ\|_2\|\bmu\|_2\\
    \geq& n^2\|\bmu\|_2^3-\sigma \|(\bone \bmu^T)^T\mfZ+\mfZ^T\bone \bmu^T+\mfZ^T\mfZ\|_F\|\bmu\|_2\\
    =&(n^2\|\bmu\|_2^2-\sigma c_0)\|\bmu\|_2.
\end{aligned}
\end{equation}
As $\pp{\mfX^{(1)}}^T\mfX^{(1)} \bmu= \mfV\Sigma \mfV^T \bmu = \sum_{i=1}^d \sigma_i \mfv_i(\mfv_i^T\bmu)$, we have
\begin{equation}
\begin{aligned}
    \sigma_1 |\mfv_1^T\bmu|\geq &\norm{\mfV\Sigma \mfV^T \bmu-\sum_{i=2}^d \sigma_i \mfv_i(\mfv_i^T\bmu)}_2=\|\pp{\mfX^{(1)}}^T\mfX^{(1)} \bmu\|_2-\sum_{i=2}^d \sigma_i |\mfw_i^T\bmu|\geq (n^2\|\bmu\|_2^2-(d-1)c_0\sigma)\|\bmu\|_2.
\end{aligned}
\end{equation}
Note that $\mfw=\mfv_1$. This implies that
\begin{equation}
    \left| \mfw^T\frac{\bmu}{\|\bmu\|_2}\right|\geq \frac{n^2\|\bmu\|_2^2-(d-1)c_0\sigma }{n^2\|\bmu\|_2^2+c_0\sigma }.
\end{equation}
Note that
\begin{equation}
    \norm{\mfw-\frac{\bmu}{\|\bmu\|_2}}_2^2 = 2\pp{1-\left| \mfw^T\frac{\bmu}{\|\bmu\|_2}\right|}\leq \frac{2dc_0\sigma }{n^2\|\bmu\|_2^2+c_0\sigma}\leq c_1 \sigma,
\end{equation}
where we let $
    c_1 = \frac{2dc_0}{n^2\|\bmu\|_2^2}.
$ This completes the proof. 

\end{proof}

\subsection{Proof of Theorem \ref{thm:relu_success}}
\begin{proof}
Denote $\hat \mfw^*=\frac{\mfw^*}{\|\mfw^*\|_2}$. According to Proposition \ref{prop:eig}, there exists a constant $c_1>0$ such that $\hat \mfw^*$ satisfies that
\begin{equation}
    \norm{\hat \mfw^*-\frac{\bmu}{\|\bmu\|_2}}_2^2\leq c_1\sigma,
\end{equation}
with probability at least $1-\delta/2$. For  $\sigma\leq\frac{\|\bmu\|_2}{2(d+8\log(4n/\delta))}$, the events
\begin{equation}
    \sigma\|\mfz_i\|_2\leq \|\bmu\|_2/2,
\end{equation}
holds with probability at least $1-\delta/(2n)$ respectively for all $i\in[n]$. Denote $E=\{\sigma\|\mfz_i\|_2\leq \|\bmu\|_2/2, \forall i\in[n]\}$. Then, $P(E)\geq 1-\delta/4$. For indices $i\in[n]$ such that $\ddisidx_i=1$, conditioned on $E$, we have $\sigma\left|\mfz_i^T\frac{\bmu}{\|\bmu\|_2}\right|\leq \|\bmu\|_2/2$ and
\begin{equation}
\begin{aligned}
        \mfx_i^T\hat \mfw^*\geq&\mfx_i^T\frac{\bmu}{\|\bmu\|_2}-\|\mfx_i\|\norm{\hat \mfw^*-\frac{\bmu}{\|\bmu\|_2}}_2 
        \geq \mfx_i^T\frac{\bmu}{\|\bmu\|_2}-\sqrt{c_1\sigma}\|\mfx_i\|_2
        \geq \|\bmu\|_2+\sigma\mfz_n^T\frac{\bmu}{\|\bmu\|_2} - 2\sqrt{c_1\sigma}\|\bmu\|_2>0
\end{aligned}
\end{equation}
Here we utilize that $\sigma\leq \frac{1}{32c_1}$. Similarly, for indices $i\in[n]$ such that $\ddisidx_i=-1$, conditioned on $E$, we have $\sigma\left|\mfz_i^T\frac{\bmu}{\|\bmu\|_2}\right|\leq \|\bmu\|_2/2$ and
\begin{equation}
\begin{aligned}
    \mfx_i^T\hat \mfw^*\leq&\mfx_i^T\frac{\bmu}{\|\bmu\|_2}+\|\mfx_i\|\norm{\hat \mfw^*-\frac{\bmu}{\|\bmu\|_2}}_2 \leq\mfx_i^T\frac{\bmu}{\|\bmu\|_2}+\sqrt{c_1\sigma}\|\mfx_i\|_2\mfx_i^T\frac{\bmu}{\|\bmu\|_2}
    \le -\|\bmu\|_2-\sigma\mfz_i^T\frac{\bmu}{\|\bmu\|_2} + 2\sqrt{c_1\sigma}\|\bmu\|_2<0.
\end{aligned}
\end{equation}
This implies that $(2\mfD_{i^*}-I)\mfX\hat \mfw^*> 0$. Overall, for sufficiently small $a>0$, the event $(2\mfD_{i^*}-I)\mfX\hat \mfw^*> 0$ holds with probability at least 
\begin{equation}
     1-\delta/2-\delta/2=1-\delta.
\end{equation}
\end{proof}

\subsection{Numerical verification}
In this subsection, we numerically verify Proposition \ref{prop:indicator}. We take $n=100,n_1=n_2=50$ and $d=20,40,60,80$, and test for three types of $\bmu_1,\bmu_2$.
\begin{itemize}
    \item $\bmu_1 = \mathbf{1}_d,\ \bmu_2 = -\mathbf{1}_d$.
    \item $\bmu_1,\bmu_2 \sim \mcN(0,\mfI_d)$.
    \item $\bmu_1,\bmu_2 \sim \mcU(\mbS^{d-1})$.
\end{itemize}
 For each $d$, we compute the probability that the diagonal arrangement pattern induced by the vector
$\mfw=\frac{\bmu_1}{\|\bmu_1\|_2}-\frac{\bmu_2}{\|\bmu_2\|_2}$
equals to $\diag(\mfw)$ for $\sigma$ in a certain range. For each $\sigma$, we establish 5000 independent trials to compute the probability. The S-shaped curves shown in Figure \ref{fig:prop:indicator} correspond with formulation of the lower bound given in Proposition \ref{prop:indicator}.
\begin{figure}[H]
    \subfigure[$\bmu_1 = \mathbf{1}_d,\ \bmu_2 = -\mathbf{1}_d$]{
      \centering
      \includegraphics[width=.45\textwidth]{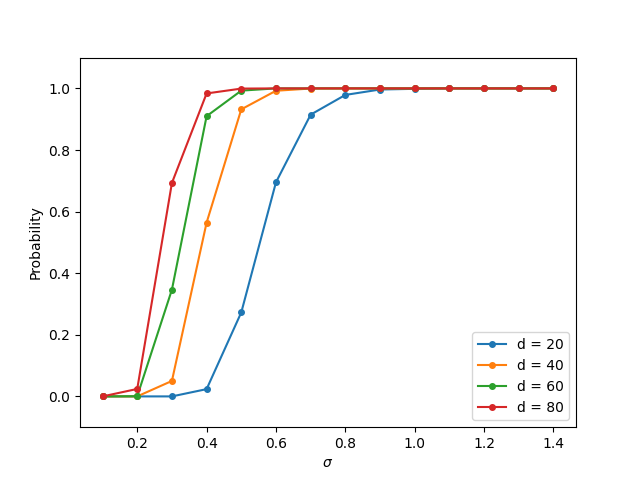}  
    }
    \subfigure[$\bmu_1,\bmu_2 \sim \mcN(0,\mcI_d)$]{
      \centering
      \includegraphics[width=.45\textwidth]{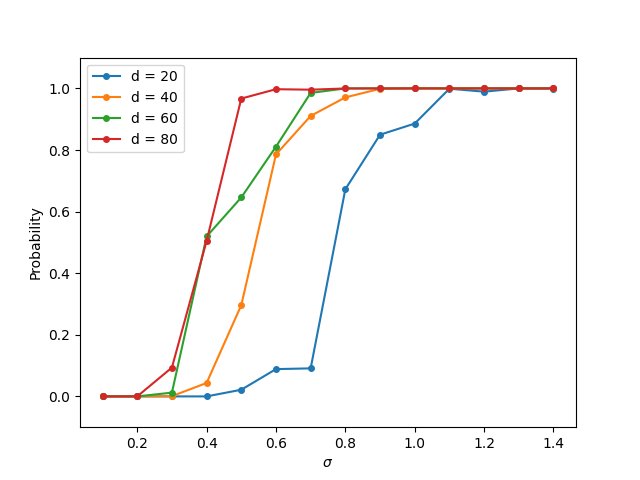}  
    }
    \end{figure}
\begin{figure}[H]
 \addtocounter{figure}{1}  
    \ContinuedFloat
    \centering
    \subfigure[$\bmu_1,\bmu_2 \sim \mcU(\mbS^{d-1})$]{
      \centering
      \includegraphics[width=.45\textwidth]{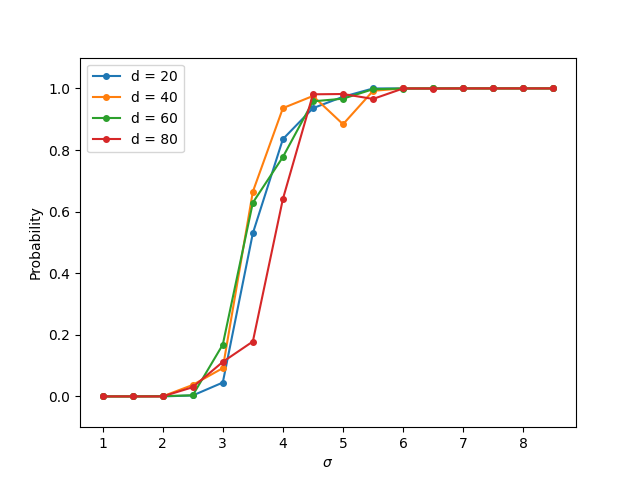}  
    }
    \caption{The probability that the statement in Proposition \ref{prop:indicator} holds over 5000 independent trials.}
    \label{fig:prop:indicator}
\end{figure}
\section{Additional numerical experiments}\label{num_res:add}
In this section, we present additional numerical results mentioned in Section \ref{num_res:main}. These results serve as a complement to our main numerical results.



\subsection{ReLU networks with skip connection}\label{num_res:skip}

In Figure \ref{fig:lin_phase_w0}, we show phase transition graph for the probability of successful recovery of the planted linear neuron by solving the group $\ell_1$-minimization problem \eqref{min_nrm:grelu_skip} when the planted neuron $\mfw^*$ is randomly generated from $\mcN(0,\mfI_d)$. In Figure \ref{fig:lin_phase_w1} below, we find similar results when the planted neuron $\mfw^*$ is the smallest right singular vector of $\mfX$.

\begin{figure}[H]
\centering
\setcounter{subfigure}{0}
    \subfigure[Gaussian]{
      \centering
      \includegraphics[width=\figscale\textwidth]{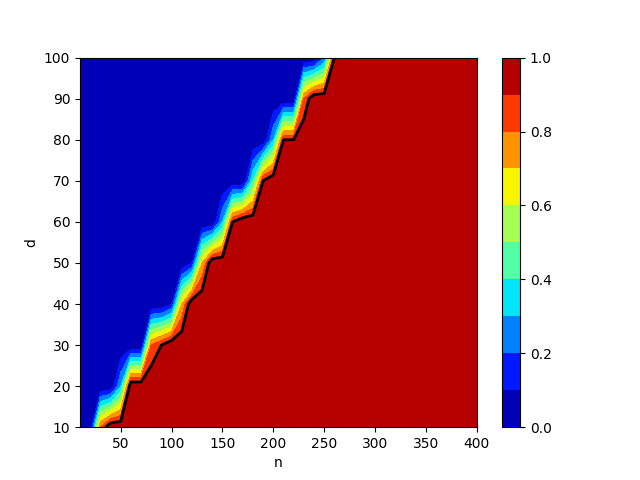}  
    }
    \subfigure[Cubic Gaussian]{
      \centering
      \includegraphics[width=\figscale\textwidth]{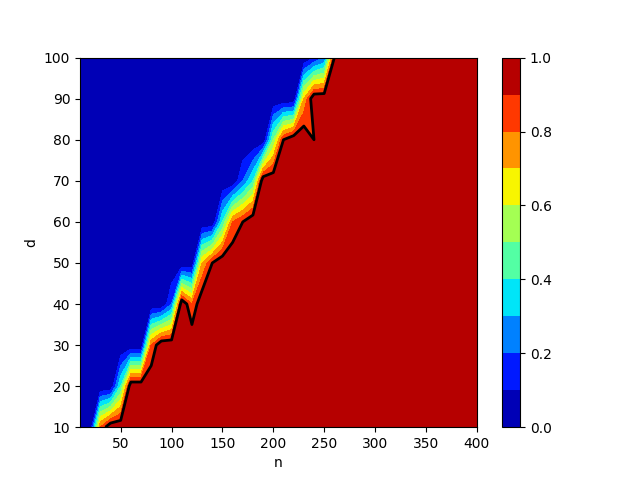}  
    }
\end{figure}
\begin{figure}[H]
 \addtocounter{figure}{1}  
    \ContinuedFloat
    \subfigure[Haar]{
      \centering
      \includegraphics[width=\figscale\textwidth]{figs/lin/lin_phase_w0_X2.png}  
    }
    \subfigure[Cubic Gaussian + whitened]{
      \centering
      \includegraphics[width=\figscale\textwidth]{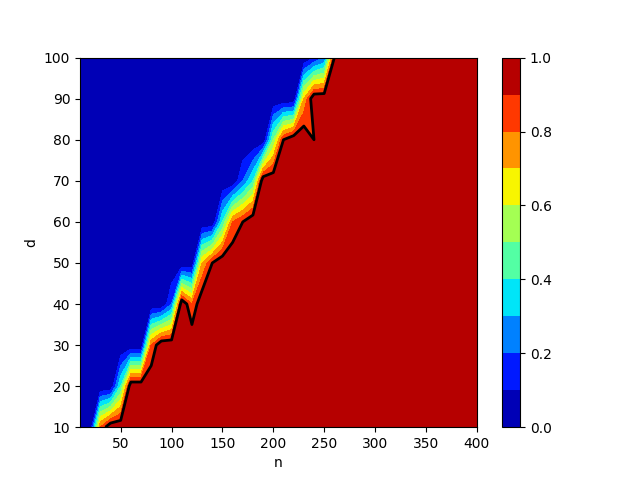}  
    }
    \caption{The probability of successful recovery of the planted linear neuron by solving the group $\ell_1$-minimization problem \eqref{min_nrm:grelu_skip} over $5$ independent trials. The black lines represent the boundaries of successful recovery with probability $1$. Here the planted neuron $\mfw^*$ is the smallest right singular vector of $\mfX$.}\label{fig:lin_phase_w1}
\end{figure}

In Figure \ref{fig:lin_phase_noise_abs}, we show phase transition graph for absolute distance by solving the group $\ell_1$-minimization problem \eqref{min_nrm:grelu_skip} derived from training ReLU networks with skip connection. In Figure \ref{fig:lin_phase_noise_test} below, we find similar pattern of the phase transition.

\begin{figure}[H]
\setcounter{subfigure}{0}
\centering
    \subfigure[$\sigma=0$(noiseless)]{
      \centering
      \includegraphics[width=\figscale\textwidth]{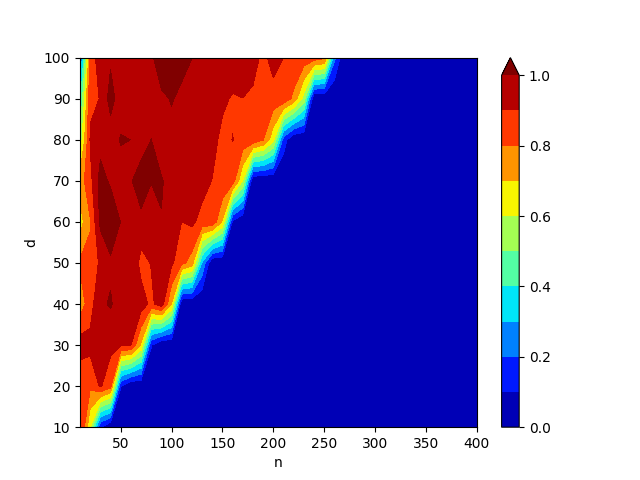}  
    }
    \centering
    \subfigure[$\sigma=0.05$]{
      \centering
      \includegraphics[width=\figscale\textwidth]{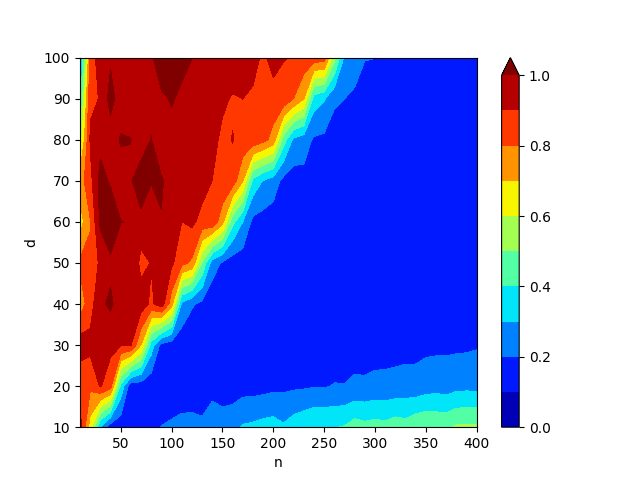}  
    }
\end{figure}
\begin{figure}[H]
 \addtocounter{figure}{1}  
    \ContinuedFloat
\centering
    \subfigure[$\sigma=0.1$]{
      \centering
      \includegraphics[width=\figscale\textwidth]{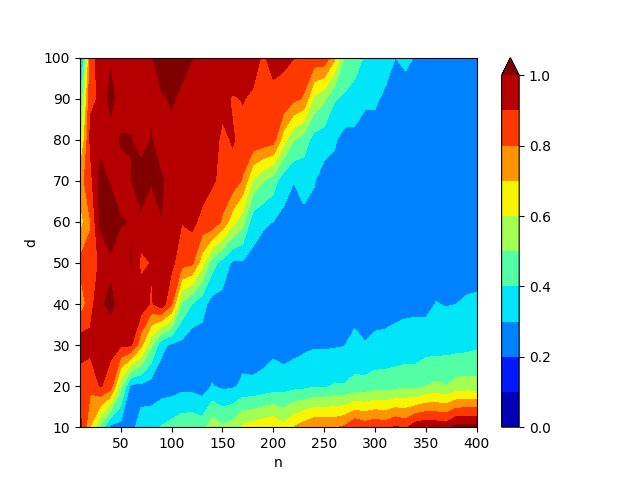}  
    }
    \centering
    \subfigure[$\sigma=0.2$]{
      \centering
      \includegraphics[width=\figscale\textwidth]{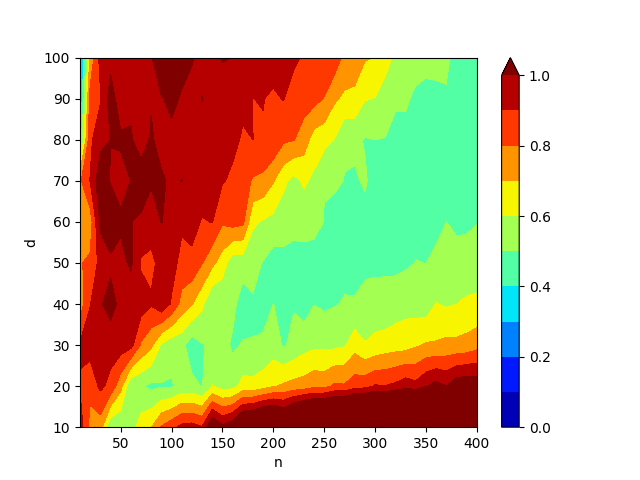}  
    }
    \caption{Averaged test distance by solving the group $\ell_1$-minimization problem \eqref{min_nrm:grelu_skip} derived from training ReLU networks with skip connection over $5$ independent trials.}
    \label{fig:lin_phase_noise_test}
\end{figure}

As a complement to Figure \ref{fig:ncvx_train_skip}, we study the generalization property of ReLU networks with skip connections using convex training methods.
We solve the following regularized training problem with small regularization parameter $\beta=10^{-6}$ as an approximation of the minimal-norm problem \eqref{min_nrm:relu_skip}.

\begin{equation}\label{cvx_train:relu_skip}
\begin{aligned}
    \min_{\mfw_0,\left\{\mfw_{j}, \mfw_{j}'\right\}_{j=1}^{p}}\quad &  \left\|\mfX\mfw_0+\sum_{j=1}^{p} \mfD_{j} \mfX\left(\mfw_{j}-\mfw_{j}'\right)-\mfy\right\|_2^2+ \beta\left(\|\mfw_0\|_2+\sum_{j=1}^{p}\left(\left\|\mfw_{j}\right\|_{2}+\left\|\mfw_{j}'\right\|_{2}\right)\right) ,\\
        \text{s.t.}\quad & (2\mfD_j-\mfI_n)\mfX\mfw_j\ge0,  (2\mfD_j-\mfI_n)\mfX\mfw_j'\ge0, j\in[p].
\end{aligned}
\end{equation}
Then we compute the corresponding absolute distance and test distance. The results shown in Figures \ref{fig:lin_cvx_abs} and \ref{fig:lin_cvx_test} have the same sharp $n=2d$ boundary of the red region. This validates Proposition \ref{prop:imply_linear}. Namely, the recovery of the group $\ell_1$-minimization problem \eqref{min_nrm:grelu_skip} implies the recovery of the convex program \eqref{min_nrm:relu_skip}.

\begin{figure}[H]
\setcounter{subfigure}{0}
\centering
    \subfigure[$\sigma=0$(noiseless)]{
      \centering
      \includegraphics[width=\figscale\textwidth]{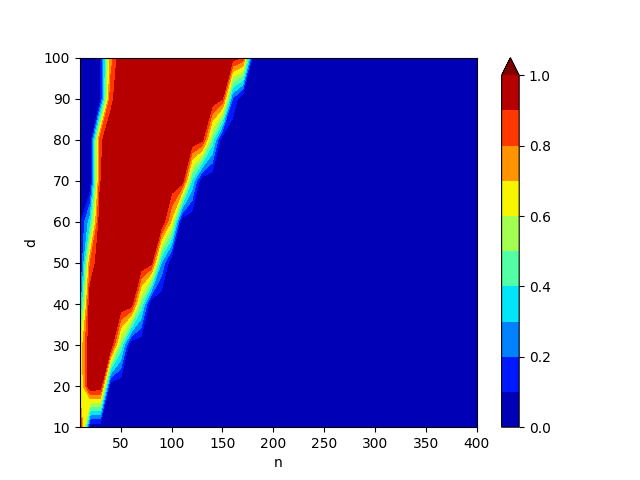}  
    }
    \centering
    \subfigure[$\sigma=0.05$]{
      \centering
      \includegraphics[width=\figscale\textwidth]{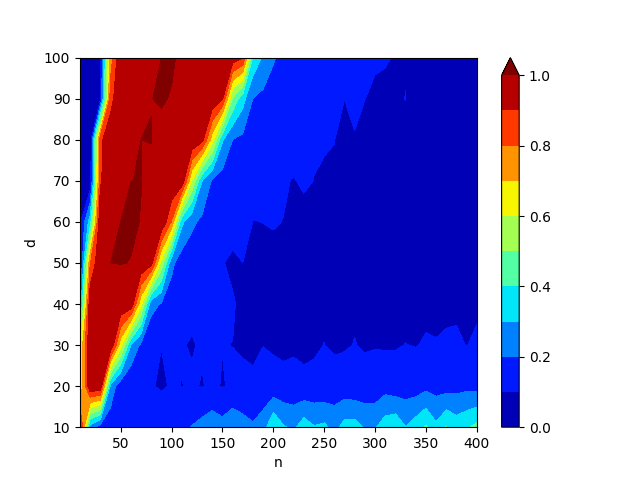}  
    }
\end{figure}
\begin{figure}[H]
 \addtocounter{figure}{1}  
    \ContinuedFloat
\centering
    \subfigure[$\sigma=0.1$]{
      \centering
      \includegraphics[width=\figscale\textwidth]{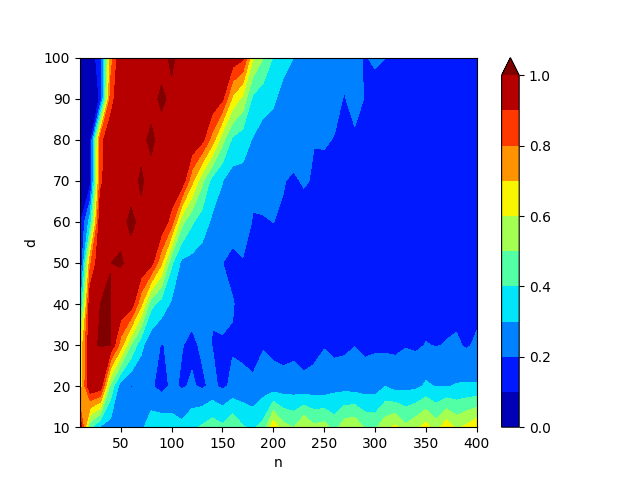}  
    }
    \centering
    \subfigure[$\sigma=0.2$]{
      \centering
      \includegraphics[width=\figscale\textwidth]{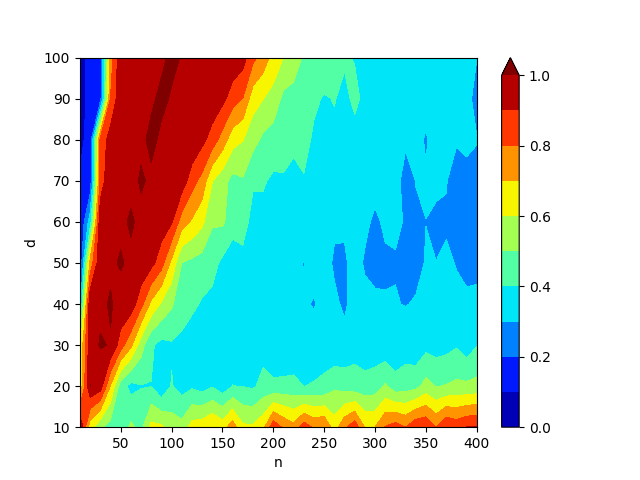}  
    }
     \caption{Averaged absolute distance to the planted linear neuron by solving the convex regularized training problem \eqref{cvx_train:relu_skip} for ReLU networks with skip connection over $5$ independent trials.}\label{fig:lin_cvx_abs}
\end{figure}
\begin{figure}[H]
\setcounter{subfigure}{0}
\centering
    \subfigure[$\sigma=0$(noiseless)]{
      \centering
      \includegraphics[width=\figscale\textwidth]{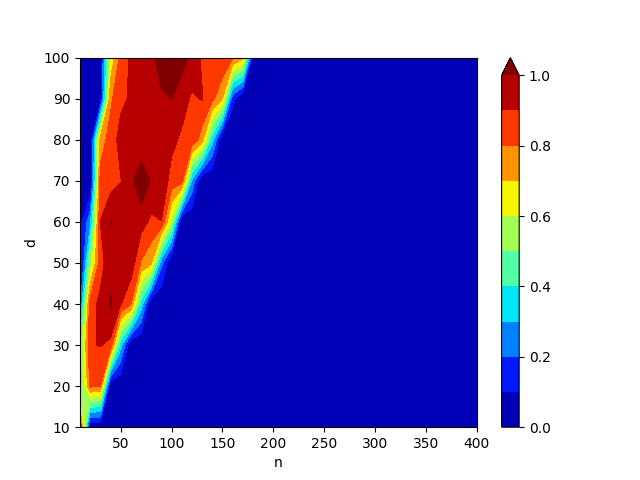}  
    }
    \centering
    \subfigure[$\sigma=0.05$]{
      \centering
      \includegraphics[width=\figscale\textwidth]{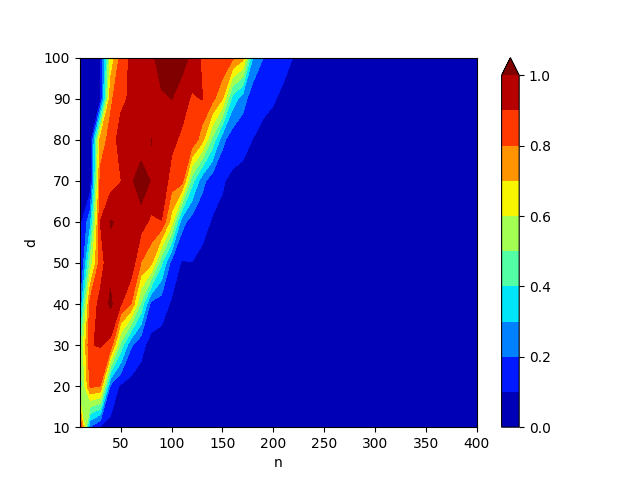}  
    }
    \centering
    \subfigure[$\sigma=0.1$]{
      \centering
      \includegraphics[width=\figscale\textwidth]{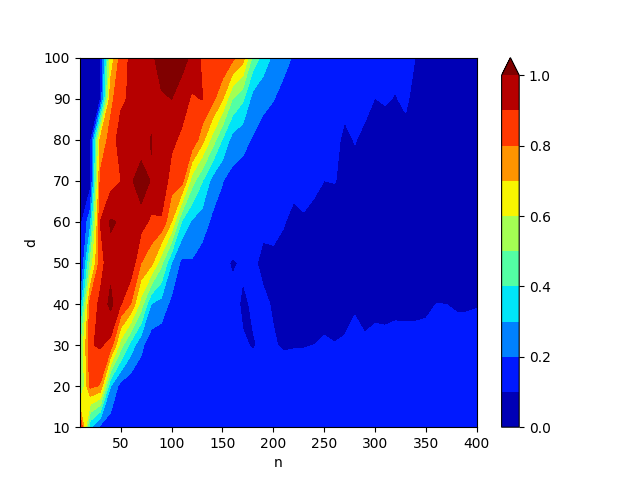}  
    }
    \centering
    \subfigure[$\sigma=0.2$]{
      \centering
      \includegraphics[width=\figscale\textwidth]{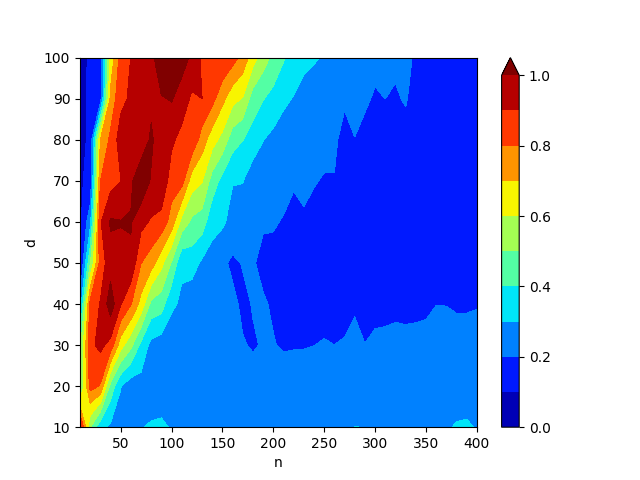}  
    }
    \caption{Averaged test distance by solving the convex regularized training problem \eqref{cvx_train:relu_skip} for ReLU networks with skip connection over $5$ independent trials.}\label{fig:lin_cvx_test}
\end{figure}
\vspace{1ex}

\subsection{ReLU networks with normalization layer}\label{num_res:normal}

We first present the phase transition graph for successful recovery of the planted normalized ReLU neuron by solving the group $\ell_1$-minimization \eqref{min_nrm:grelu_normal}.
Analogous to Section \ref{num_res_main:skip}, 
we compute the recovery rate for $d$ ranging from 10 to 100 and $n$ ranging from 10 to 400 and establish 5 independent trials for each pair of $(n,d)$. Here each element $x_{i,j}$ of the dataset $\mfX\in\mbR^{n\times d}$ is i.i.d. random variable following $\mcN(0,1/n)$. The planted neuron $\mfw^*$ is randomly generated from $\mcN(0,\mfI_d)$.
\begin{figure}[H]
      \centering
      \includegraphics[width=\figscale\textwidth]{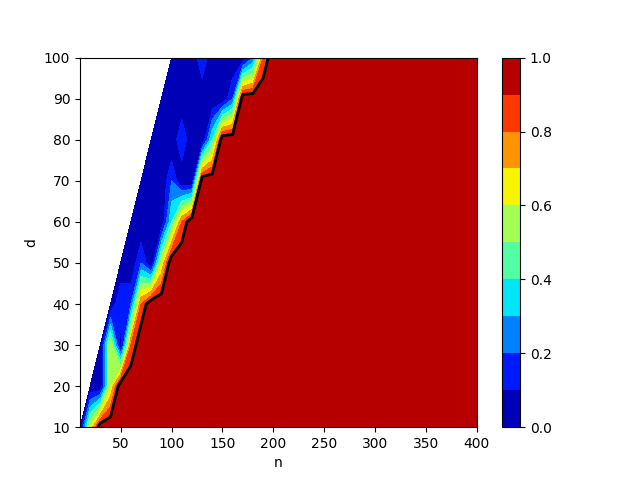}  
    \caption{The probability of successful recovery of the planted normalized ReLU neuron by solving the group $\ell_1$-minimization problem \eqref{min_nrm:grelu_normal} over $5$ independent trials. The black line represents the boundaries of successful recovery with probability $1$. Here the planted neuron $\mfw^*$ is randomly generated from $\mcN(0,\mfI_d)$.}
    \label{fig:lin_phase_normal}
\end{figure}

The second part is phase transition under noisy observation, i.e.,
$\mfy=\frac{(\mfX\mfw^*)_+}{\|(\mfX\mfw^*)_+\|_2}+\mfz$, where $\mfz\sim \mcN(0,\sigma^2/n)$. In Figure \ref{fig:lin_phase_noise_normal}, we focus on absolute distance, which is defined as the $\ell_2$ distance between the optimal solution $\mfw_{i^*}$ and $\tilde{\mfw}^*$. A sharp $n=2d$ boundary can be observed from the phase transition graphs, which corresponds with Theorem \ref{thm:relu_normal}.
\vspace{-4ex}
\begin{figure}[H]
\setcounter{subfigure}{0}
\centering
    \subfigure[$\sigma=0$(noiseless)]{
      \centering
      \includegraphics[width=\figscale\textwidth]{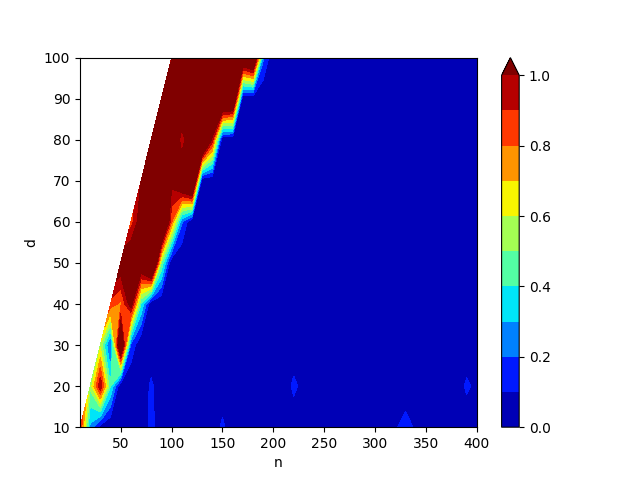}  
    }
    \centering
    \subfigure[$\sigma=0.05$]{
      \centering
      \includegraphics[width=\figscale\textwidth]{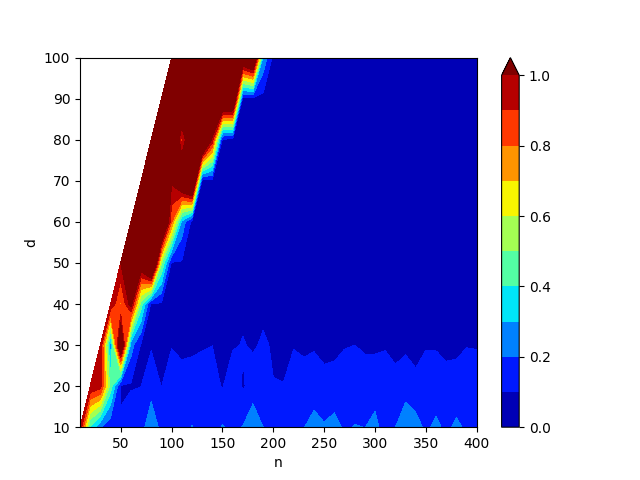}  
    }
    \newline
    \end{figure}
\begin{figure}[H]
 \addtocounter{figure}{1}  
    \ContinuedFloat
    \centering
    \subfigure[$\sigma=0.1$]{
      \centering
      \includegraphics[width=\figscale\textwidth]{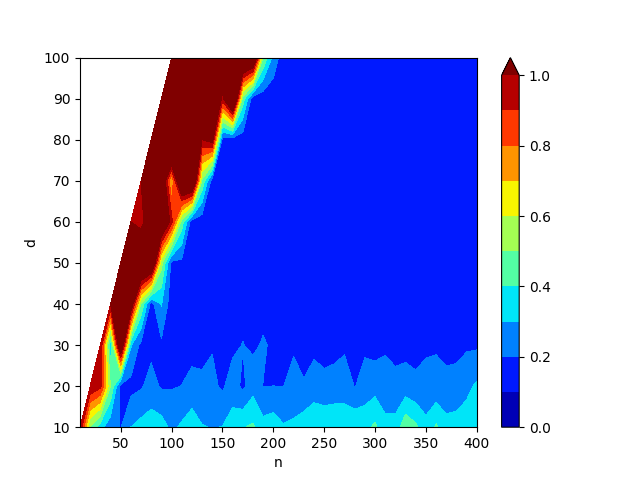}  
    }
    \centering
    \subfigure[$\sigma=0.2$]{
      \centering
      \includegraphics[width=\figscale\textwidth]{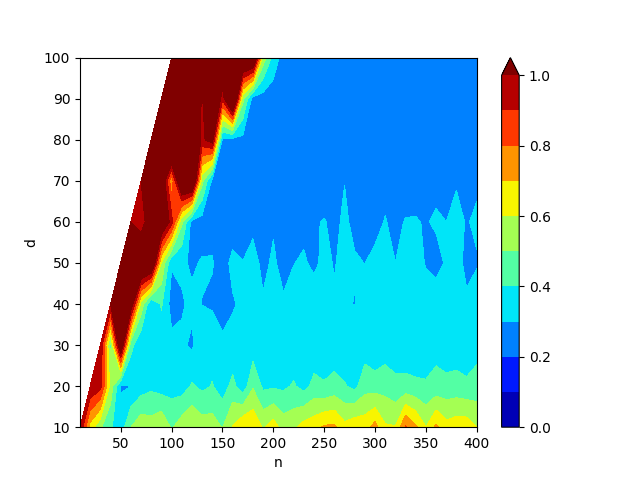}  
    }
    \caption{Averaged absolute distance to the planted normalized ReLU neuron by solving the group $\ell_1$-minimization problem \eqref{min_nrm:grelu_normal}
from training ReLU networks with normalization layer over 5 independent trials.}\label{fig:lin_phase_noise_normal}
\end{figure}

In the third part, we study the generalization property of ReLU networks with normalization layer using both the convex and non-convex
training procedures. For the non-convex training procedure, we minimize the regularized training objective with small
regularization parameter $\beta=10^{-6}$ to approximate the minimal-norm problem \eqref{min_nrm:relu_normal}.

\begin{equation}\label{cvx_train:relu_normal}
\begin{aligned}
    \min_{\left\{\mfw_{j}, \mfw_{j}'\right\}_{j=1}^{p}}\quad &  \left\|\sum_{j=1}^{p} \mfU_j\left(\mfw_{j}-\mfw_{j}'\right)-\mfy\right\|_2^2+\beta \sum_{j=1}^{p}\left(\left\|\mfw_{j}\right\|_{2}+\left\|\mfw_{j}'\right\|_{2}\right)\\
        \text{s.t.}\quad 
        & (2\mfD_j-\mfI_n)\mfX\mfV_j^T\bSigma_j^{-1}\mfw_j\ge0, (2\mfD_j-\mfI_n)\mfX\mfV_j^T\bSigma_j^{-1}\mfw_j'\ge0, j\in[p],
\end{aligned}
\end{equation}

Then we compute the corresponding absolute distance. The phase transition graphs are shown in Figure \ref{fig:cvxt_normal}. We can also observe a sharp $n=2d$ transition similar to the group $\ell_1$-minimization \eqref{min_nrm:grelu_normal}.

\begin{figure}[H]
\setcounter{subfigure}{0}
    \subfigure[$\sigma=0$(noiseless)]{
      \centering
      \includegraphics[width=.45\textwidth]{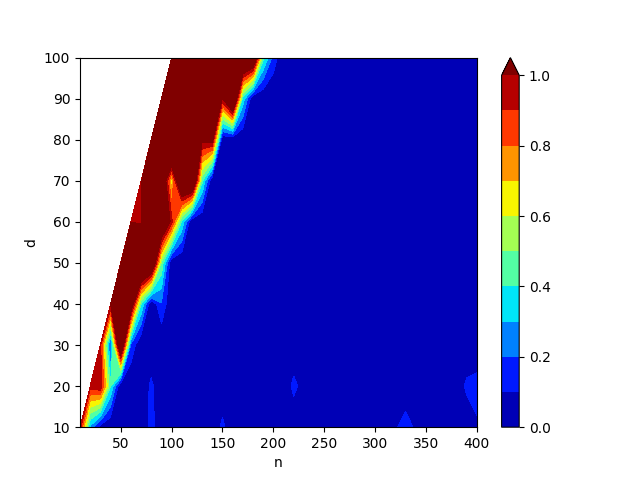}  
    }
    \subfigure[$\sigma=0.05$]{
      \centering
      \includegraphics[width=.45\textwidth]{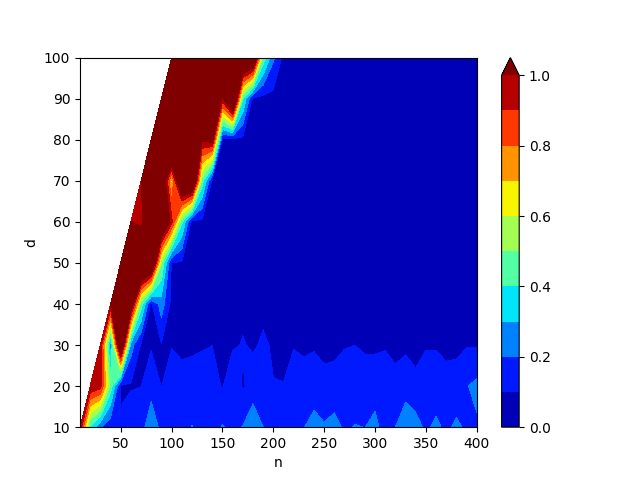}  
    }
    \newline
    \end{figure}
\begin{figure}[H]
 \addtocounter{figure}{1}  
    \ContinuedFloat
    \subfigure[$\sigma=0.1$]{
      \centering
      \includegraphics[width=.45\textwidth]{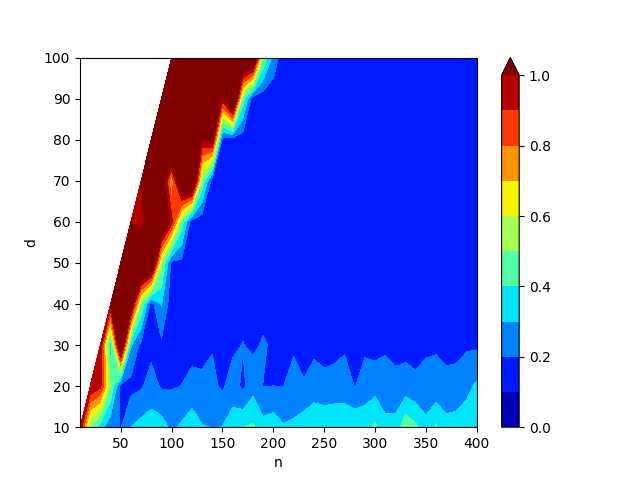}  
    }
    \subfigure[$\sigma=0.2$]{
      \centering
      \includegraphics[width=.45\textwidth]{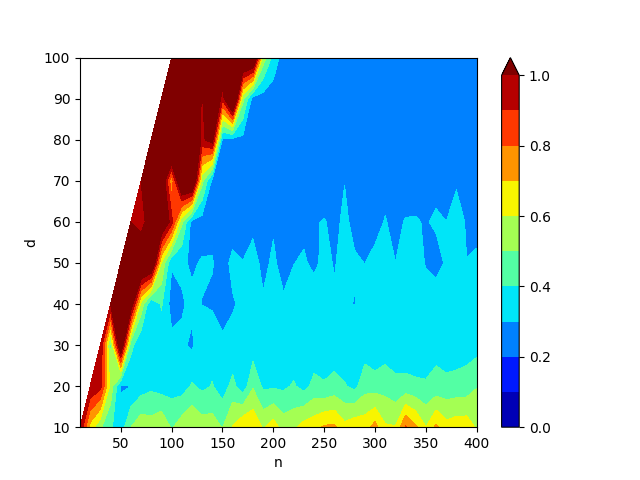}  
    }
    \caption{Averaged absolute distance by training ReLU networks with normalization layer on the regularized convex problem \eqref{prob:reg} over $5$ independent trials.}
    \label{fig:cvxt_normal}
\end{figure}

For nonconvex training method, we solve the regularized non-convex training problem \eqref{prob:reg} with the same 
setting as Section \ref{num_res_main:skip}. We set the number of neurons to be $m=n+1$ and train the ReLU neural network with normalization layer for 400 epochs. We use the AdamW optimizer with weight decay $\beta=10^{-6}$. We note that the nonconvex training may still reach local minimizers. Thus, the absolute distance do not show clear phase transition as the convex training. However, the transitions of test error generally follow the patterns of the group $\ell_1$-minimization problem.
In Figure \ref{fig:ncvx_train_normal}, we show that the test error increases as $n/d$ increases,
and the rate of increase becomes sharp around $n=2d$ (the boundary of orange region).

\begin{figure}[H]
\setcounter{subfigure}{0}
    \subfigure[$\sigma=0$(noiseless)]{
      \centering
      \includegraphics[width=\figscale\textwidth]{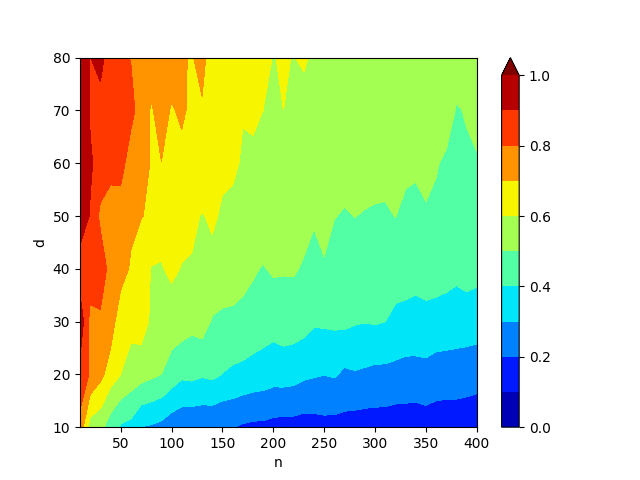}  
    }
    \subfigure[$\sigma=0.05$]{
      \centering
      \includegraphics[width=\figscale\textwidth]{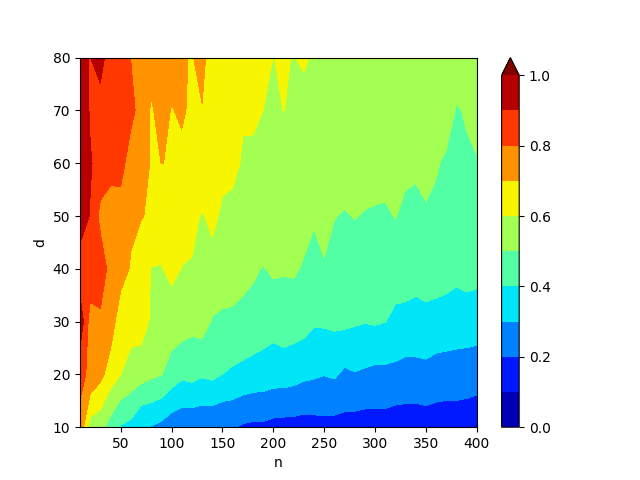}  
    }
    \newline
    \end{figure}
\begin{figure}[H]
 \addtocounter{figure}{1}  
    \ContinuedFloat
    \subfigure[$\sigma=0.1$]{
      \centering
      \includegraphics[width=\figscale\textwidth]{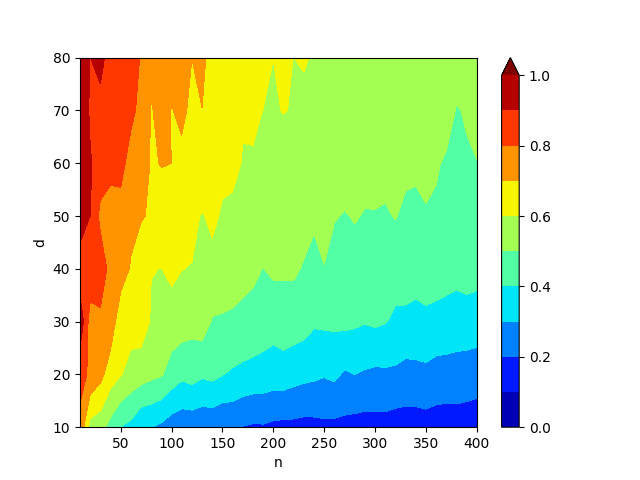}  
    }
    \subfigure[$\sigma=0.2$]{
      \centering
      \includegraphics[width=\figscale\textwidth]{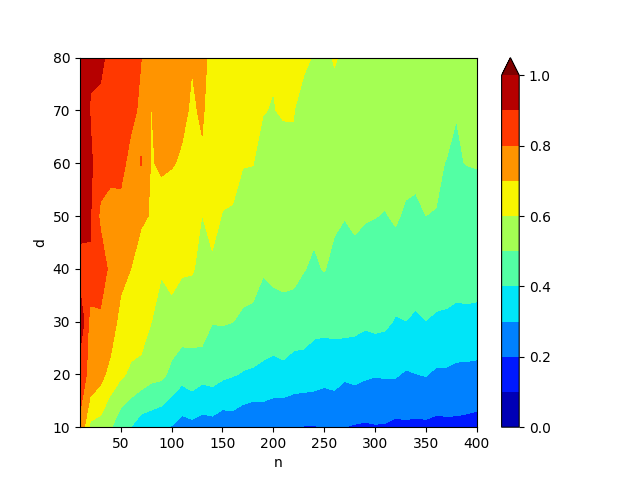}  
    }
    \caption{Averaged test error by training ReLU networks with normalization layer on the regularized non-convex problem \eqref{prob:reg} over $10$ independent trials.}
    \label{fig:ncvx_train_normal}
\end{figure}

\subsection{Multi-neuron recovery}\label{num_res:mul}
In this subsection, we present further numerical results on the recovery of ReLU networks with normalization layer when the label vector is the combination of several normalized ReLU neurons. Readers can refer to the details of the experiments in Section \ref{num_res_main:mul} and Appendix \ref{num_res:normal}.

We first show phase transition graphs for successful recovery of the planted normalized ReLU neurons.
We compute the recovery rate for $d$ ranging from 10 to 80 and $n$ ranging from 10 to 400 and establish 5 independent trials for each pair of $(n,d)$. 

\begin{figure}[H]
\setcounter{subfigure}{0}
    \centering
    \subfigure[$k=2,\ \mfw_1^* = \mfw^*,\ \mfw_2^* = -\mfw^*,\ \mfw^*\sim \mcU(\mbS^{n-1})$]{
      \centering
      \includegraphics[width=\figscale\textwidth]{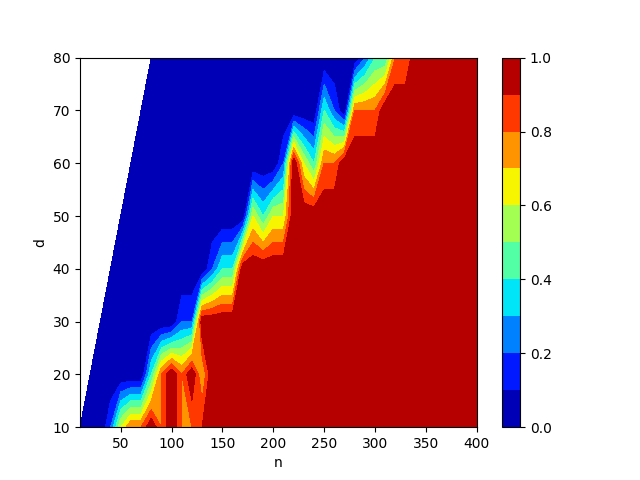}  
    }
    \centering
    \subfigure[$k=2,\ \mfw_1^*,\mfw_2^*\sim \mcU(\mbS^{n-1})$]{
      \centering
      \includegraphics[width=\figscale\textwidth]{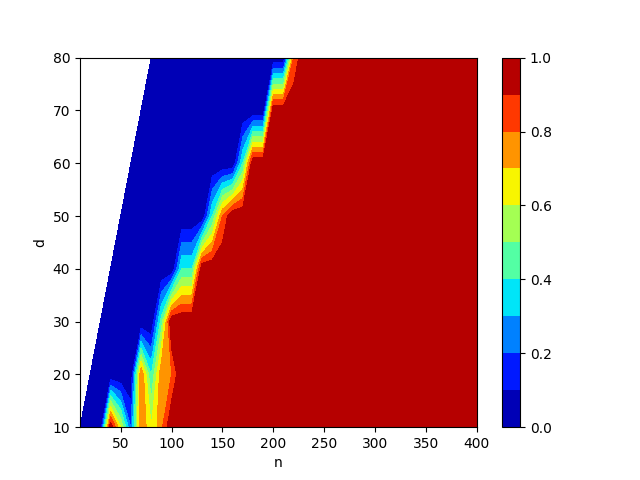} 
    }
        \end{figure}
\begin{figure}[H] 
 \addtocounter{figure}{1}  
 \ContinuedFloat
\centering
    \centering
    \subfigure[$k=2,\ \mfw_1^* = \mfe_1,\ \mfw_2^* = \mfe_2$,]{
      \centering
      \includegraphics[width=\figscale\textwidth]{figs/bn/hard_bncvx2_mn_n400_d80_neu2_w2_X0_sig0.0_sample5.png} 
    }
    \centering
    \subfigure[$k=3,\ \mfw_i^* = \mfe_i(i=1,2,3)$]{
      \centering
      \includegraphics[width=\figscale\textwidth]{figs/bn/hard_bncvx2_mn_n400_d80_neu3_w2_X0_sig0.0_sample5.png} 
    }
    \caption{The probability of successful recovery of the planted normalized ReLU neurons by solving the group $\ell_1$-minimization problem \eqref{min_nrm:grelu_normal} over $5$ independent trials. The label vector $\mfy$ is the combination of several normalized ReLU neurons.}
\end{figure}

The second part is noisy observation model. In Figure \ref{fig:lin_phase_noise_normal_mn21}, we show the phase transition graph when the planted neurons satisfy $\mfw_1^* = \mfw^*,\ \mfw_2^* = -\mfw^*,\ \mfw^*\sim \mcU(\mbS^{n-1})$. In Figure \ref{fig:lin_phase_noise_normal_mn22}, \ref{fig:lin_phase_noise_normal_mn23} and \ref{fig:lin_phase_noise_normal_mn33} below, we show results when $\mfw_1^*,\mfw_2^*\sim \mcU(\mbS^{n-1})$, $\mfw_1^* = \mfe_1,\ \mfw_2^* = \mfe_2$ and $\mfw_i^* = \mfe_i(i=1,2,3)$, respectively.

\begin{figure}[H]
\setcounter{subfigure}{0}
\centering
    \subfigure[$\sigma=0$(noiseless)]{
      \centering
      \includegraphics[width=\figscale\textwidth]{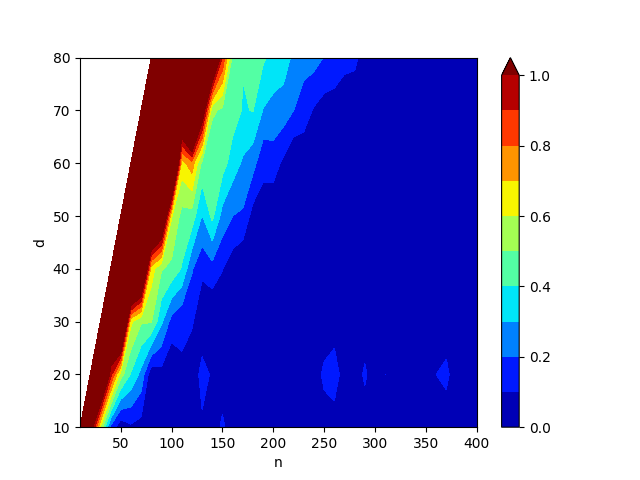}  
    }
    \centering
    \subfigure[$\sigma=0.05$]{
      \centering
      \includegraphics[width=\figscale\textwidth]{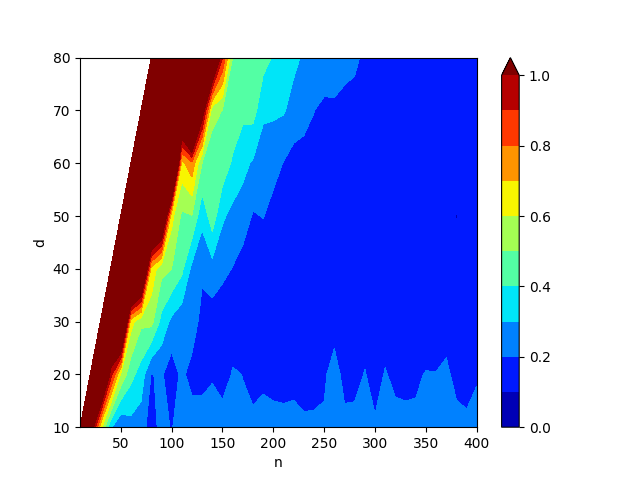}  
    }
    \end{figure}
\begin{figure}[H] 
 \addtocounter{figure}{1}  
 \ContinuedFloat
\centering
    \subfigure[$\sigma=0.1$]{
      \centering
      \includegraphics[width=\figscale\textwidth]{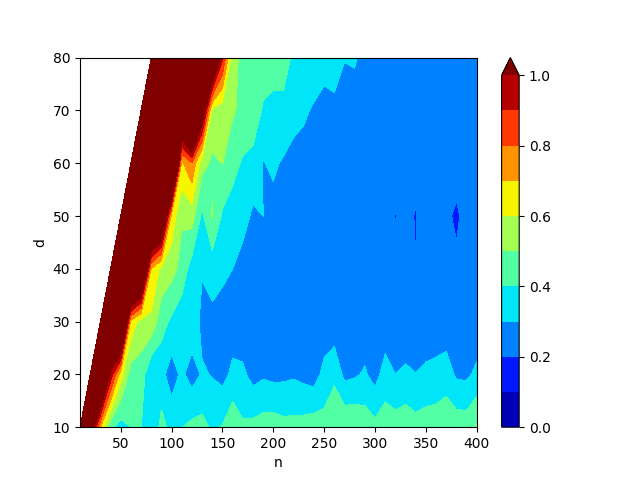}  
    }
    \centering
    \subfigure[$\sigma=0.2$]{
      \centering
      \includegraphics[width=\figscale\textwidth]{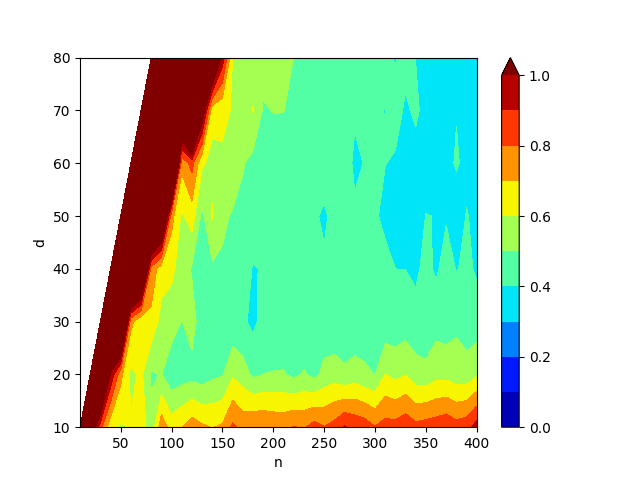}  
    }
    \caption{Averaged absolute distance to the planted normalized ReLU neurons by solving the group $\ell_1$-minimization problem \eqref{min_nrm:grelu_normal}
from training ReLU networks with normalization layer over 5 independent trials. Here we set $k=2$ planted neurons which satisfy $\mfw_1^*,\mfw_2^*\sim \mcU(\mbS^{n-1})$.}
\label{fig:lin_phase_noise_normal_mn22}
\end{figure}
\vspace{-6ex}
\begin{figure}[H]
\setcounter{subfigure}{0}
\centering
    \subfigure[$\sigma=0$(noiseless)]{
      \centering
      \includegraphics[width=\figscale\textwidth]{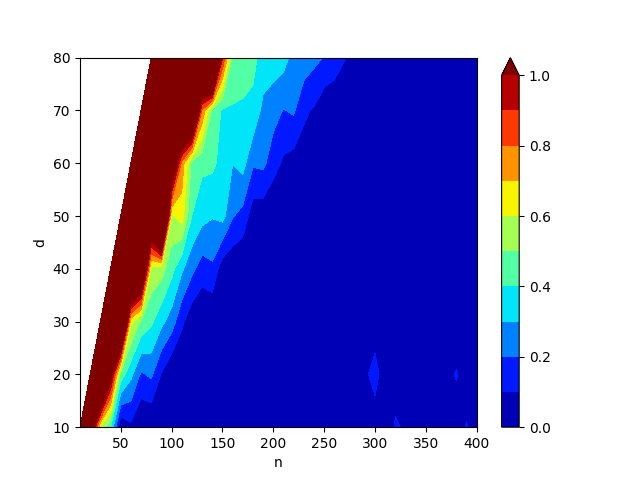}  
    }
    \centering
    \subfigure[$\sigma=0.05$]{
      \centering
      \includegraphics[width=\figscale\textwidth]{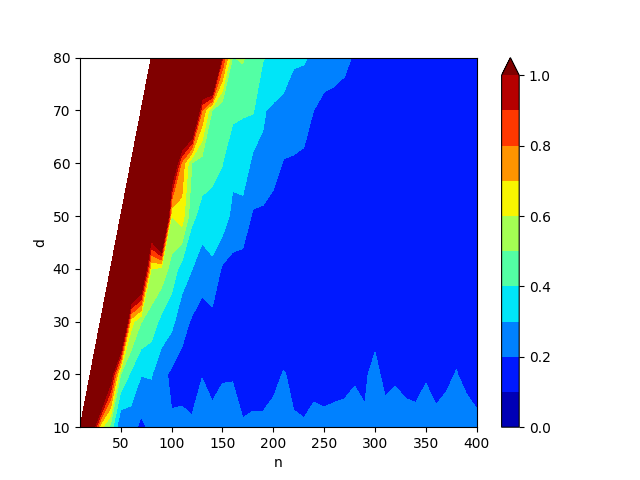}  
    }
    \centering
    \subfigure[$\sigma=0.1$]{
      \centering
      \includegraphics[width=\figscale\textwidth]{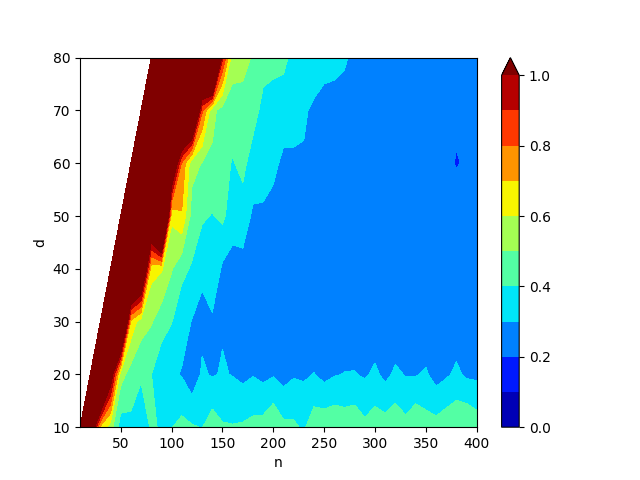}  
    }
    \centering
    \subfigure[$\sigma=0.2$]{
      \centering
      \includegraphics[width=\figscale\textwidth]{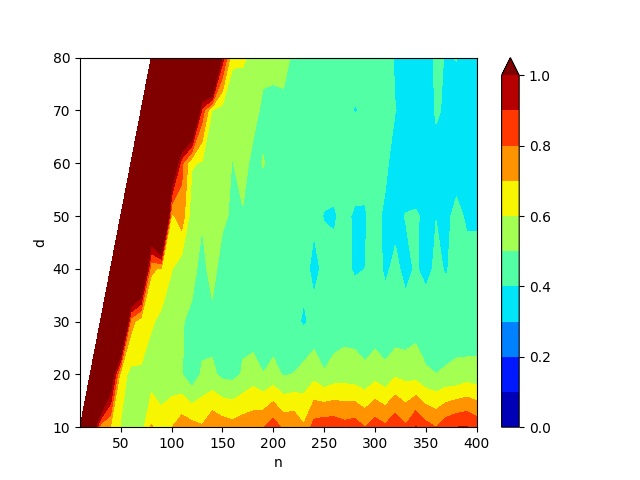}  
    }
    \caption{Averaged absolute distance to the planted normalized ReLU neurons by solving the group $\ell_1$-minimization problem \eqref{min_nrm:grelu_normal}
from training ReLU networks with normalization layer over 5 independent trials. Here we set $k=2$ planted neurons which satisfy $\mfw_1^* = \mfe_1,\ \mfw_2^* = \mfe_2$.}
\label{fig:lin_phase_noise_normal_mn23}
\end{figure}

\begin{figure}[H]
\setcounter{subfigure}{0}
\centering
    \subfigure[$\sigma=0$(noiseless)]{
      \centering
      \includegraphics[width=\figscale\textwidth]{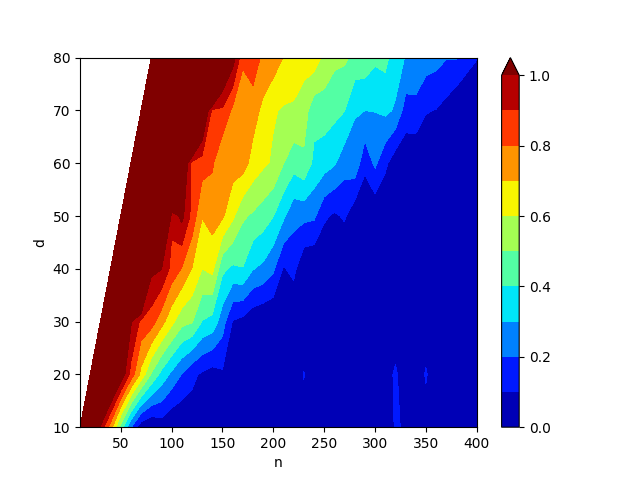}  
    }
    \centering
    \subfigure[$\sigma=0.05$]{
      \centering
      \includegraphics[width=\figscale\textwidth]{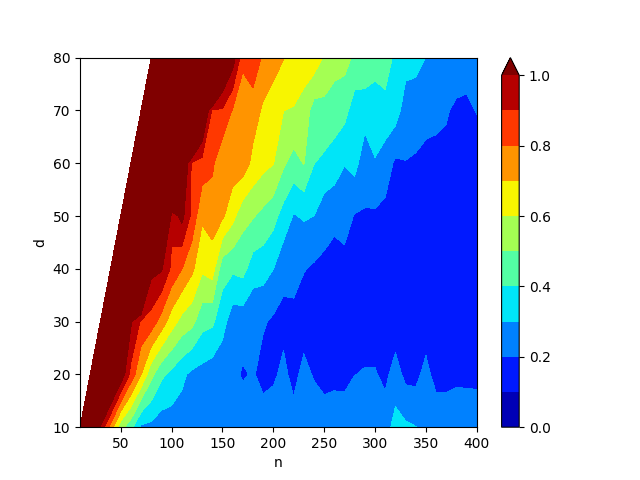}  
    }
    \centering
    \subfigure[$\sigma=0.1$]{
      \centering
      \includegraphics[width=\figscale\textwidth]{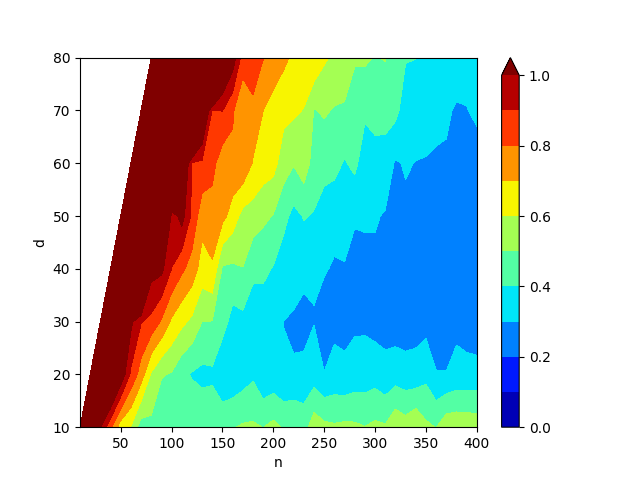}  
    }
    \centering
    \subfigure[$\sigma=0.2$]{
      \centering
      \includegraphics[width=\figscale\textwidth]{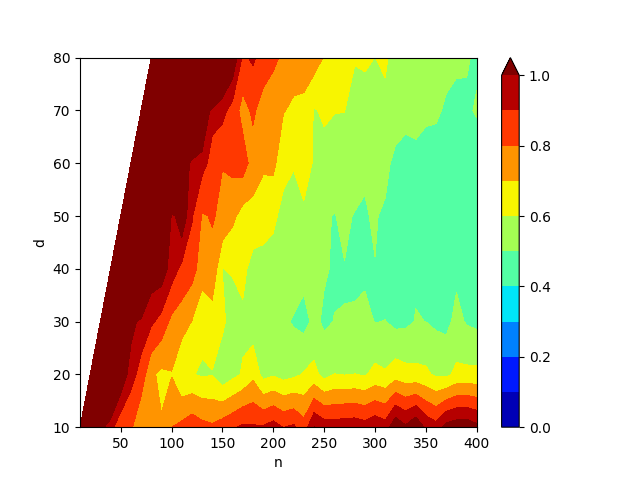}  
    }
    \caption{Averaged absolute distance to the planted normalized ReLU neurons by solving the group $\ell_1$-minimization problem \eqref{min_nrm:grelu_normal}
from training ReLU networks with normalization layer over 10 independent trials. Here we set $k=3$ planted neurons which satisfy $\mfw_i^* = \mfe_i(i=1,2,3)$.}
\label{fig:lin_phase_noise_normal_mn33}
\end{figure}
In the third part, we directly study the generalization property of ReLU networks with normalization layer using non-convex training methods. We note that the transitions of test error generally follow the patterns of the group $\ell_1$-minimization problem, analogous to our observation in Appendix \ref{num_res:normal}.
\begin{figure}[H]
\setcounter{subfigure}{0}
\centering
    \subfigure[$\sigma=0$(noiseless)]{
      \centering
      \includegraphics[width=\figscale\textwidth]{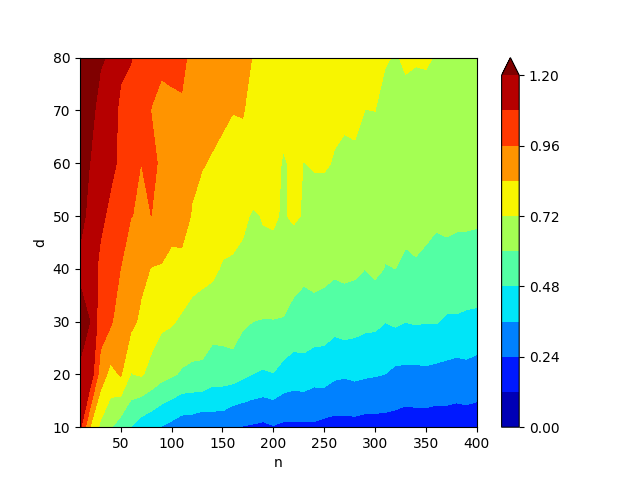}  
    }
    \centering
    \subfigure[$\sigma=0.05$]{
      \centering
      \includegraphics[width=\figscale\textwidth]{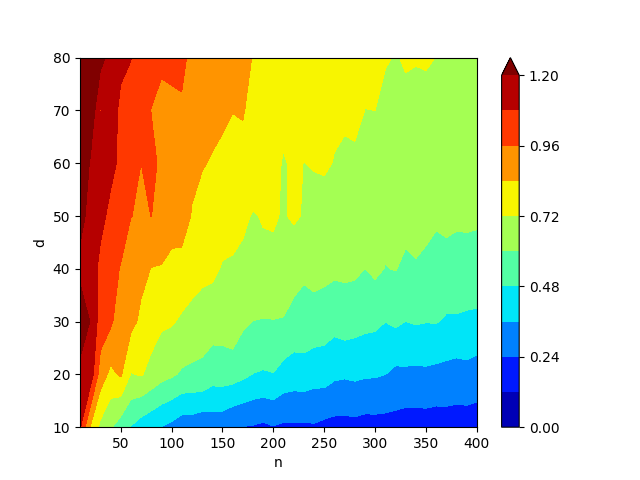}  
    }
\end{figure}
\begin{figure}[H]
 \addtocounter{figure}{1}  
    \ContinuedFloat
\centering
    \subfigure[$\sigma=0.1$]{
      \centering
      \includegraphics[width=\figscale\textwidth]{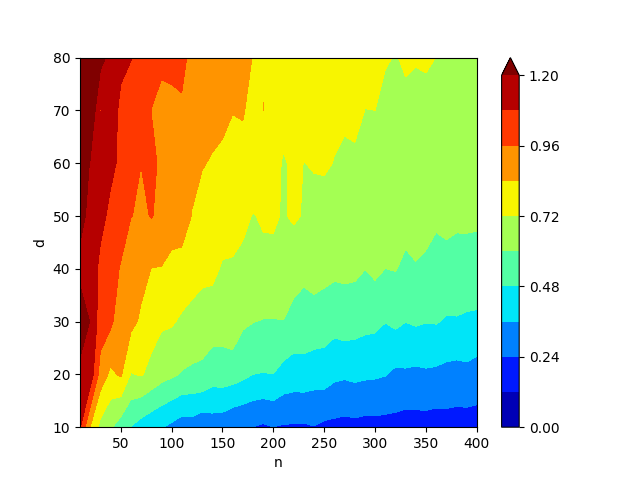}  
    }
    \centering
    \subfigure[$\sigma=0.2$]{
      \centering
      \includegraphics[width=\figscale\textwidth]{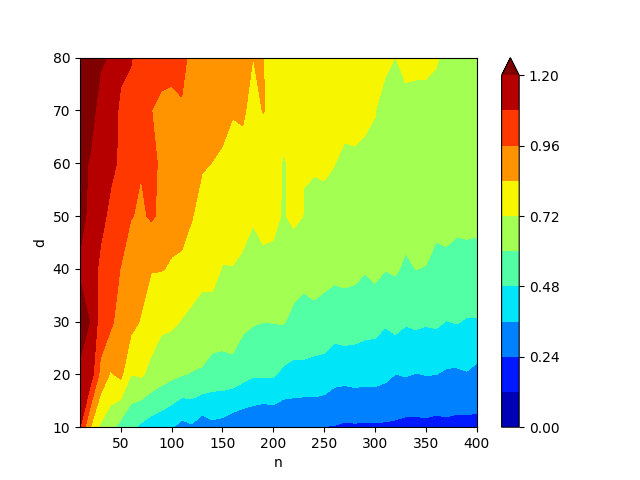}  
    }
    \caption{Averaged test error by training ReLU networks with normalization layer on the regularized non-convex problem \eqref{prob:reg} over $10$ independent trials. Here we set $k=2$ planted neurons which satisfy $\mfw_1^* = \mfw^*,\ \mfw_2^* = -\mfw^*,\ \mfw^*\sim \mcU(\mbS^{n-1})$.}
    \label{fig:ncvx_train_normal_mn21}
\end{figure}
\begin{figure}[H]
\setcounter{subfigure}{0}
\centering
    \subfigure[$\sigma=0$(noiseless)]{
      \centering
      \includegraphics[width=\figscale\textwidth]{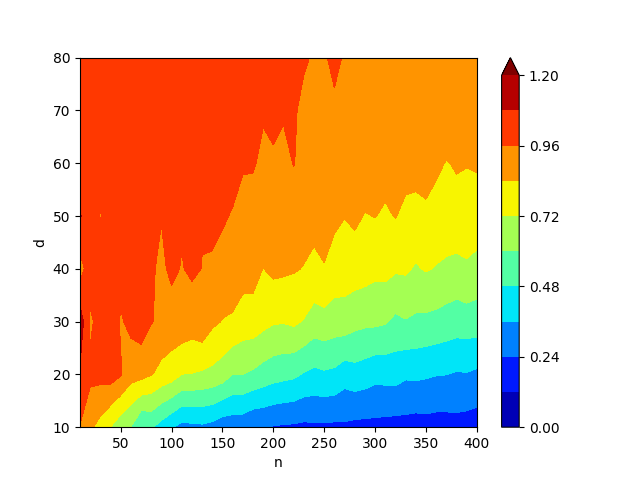}  
    }
    \centering
    \subfigure[$\sigma=0.05$]{
      \centering
      \includegraphics[width=\figscale\textwidth]{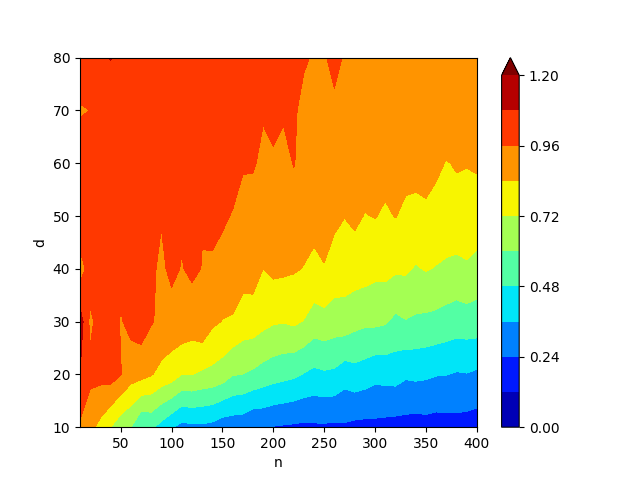}  
    }
    \centering
    \subfigure[$\sigma=0.1$]{
      \centering
      \includegraphics[width=\figscale\textwidth]{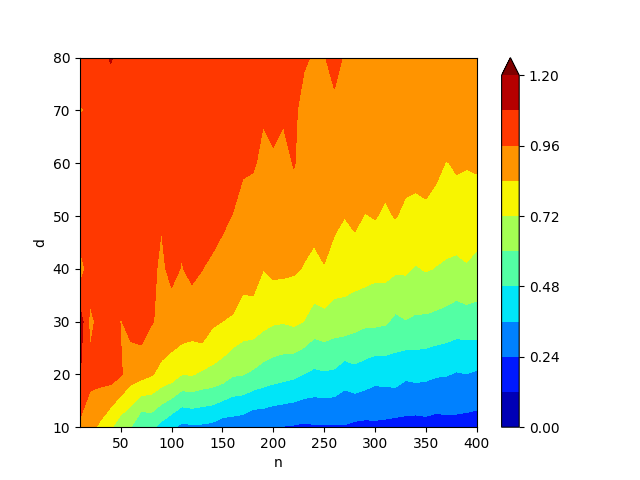}  
    }
    \centering
    \subfigure[$\sigma=0.2$]{
      \centering
      \includegraphics[width=\figscale\textwidth]{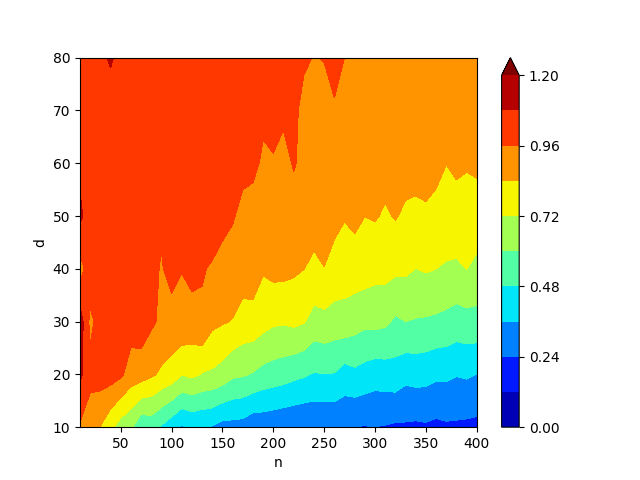}  
    }
    \caption{Averaged test error by training ReLU networks with normalization layer on the regularized non-convex problem \eqref{prob:reg} over $10$ independent trials. Here we set $k=2$ planted neurons which satisfy $\mfw_1^*,\mfw_2^*\sim \mcU(\mbS^{n-1})$.}
    \label{fig:ncvx_train_normal_mn22}
\end{figure}

\begin{figure}[H]
\setcounter{subfigure}{0}
\centering
    \subfigure[$\sigma=0$(noiseless)]{
      \centering
      \includegraphics[width=\figscale\textwidth]{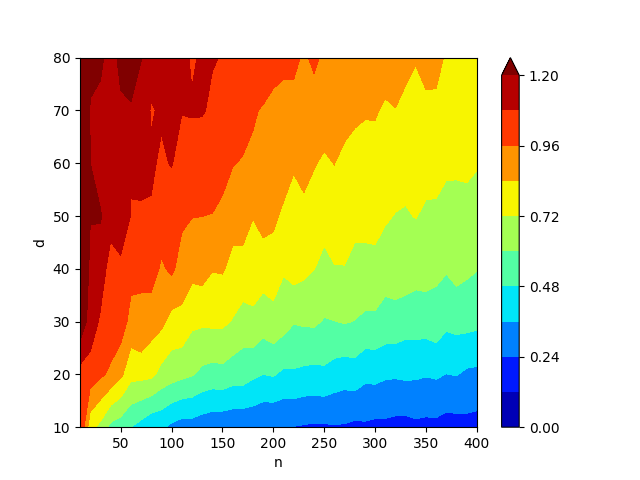}  
    }
    \centering
    \subfigure[$\sigma=0.05$]{
      \centering
      \includegraphics[width=\figscale\textwidth]{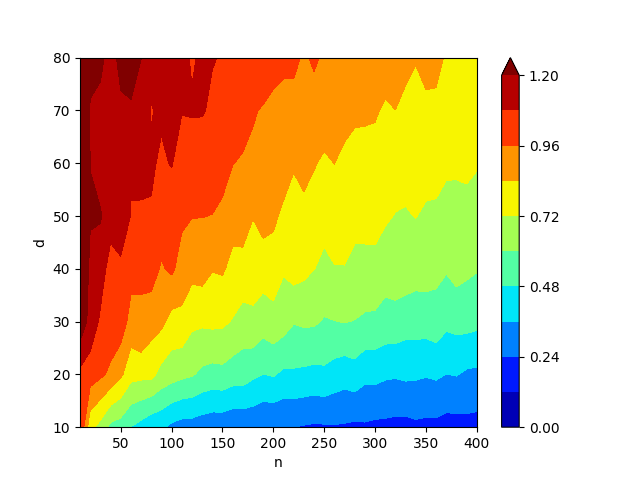}  
    }
    \centering
    \subfigure[$\sigma=0.1$]{
      \centering
      \includegraphics[width=\figscale\textwidth]{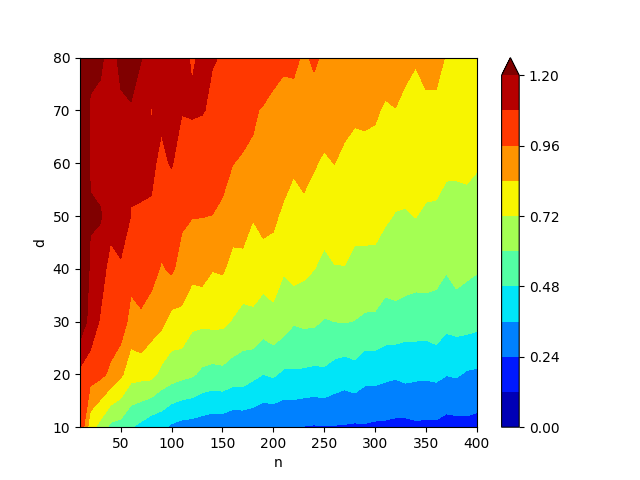}  
    }
    \centering
    \subfigure[$\sigma=0.2$]{
      \centering
      \includegraphics[width=\figscale\textwidth]{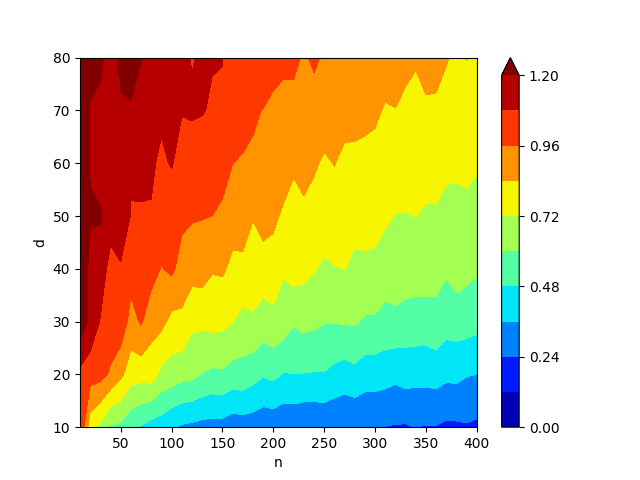}  
    }
    \caption{Averaged test error by training ReLU networks with normalization layer on the regularized non-convex problem \eqref{prob:reg} over $10$ independent trials. Here we set $k=2$ planted neurons which satisfy $\mfw_1^* = \mfe_1,\ \mfw_2^* = \mfe_2$.}
    \label{fig:ncvx_train_normal_mn23}
\end{figure}

\begin{figure}[H]
\setcounter{subfigure}{0}
\centering
    \subfigure[$\sigma=0$(noiseless)]{
      \centering
      \includegraphics[width=\figscale\textwidth]{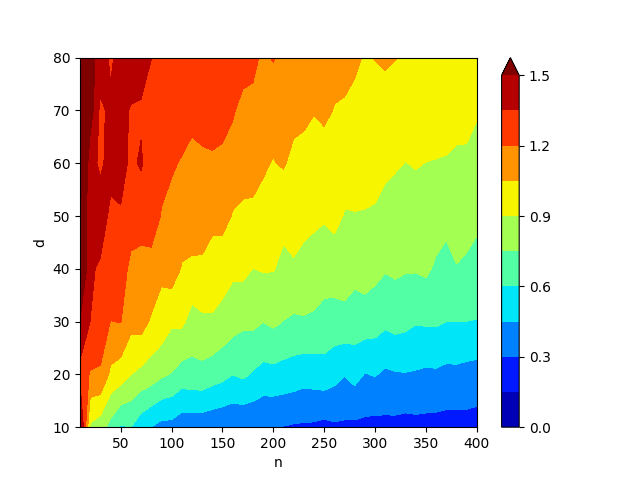}  
    }
    \centering
    \subfigure[$\sigma=0.05$]{
      \centering
      \includegraphics[width=\figscale\textwidth]{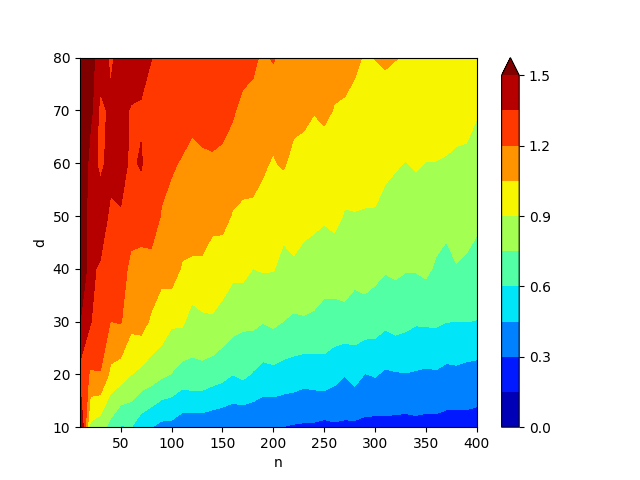}  
    }
    \end{figure}
\begin{figure}[H]
 \addtocounter{figure}{1}  
    \ContinuedFloat
\centering
    \subfigure[$\sigma=0.1$]{
      \centering
      \includegraphics[width=\figscale\textwidth]{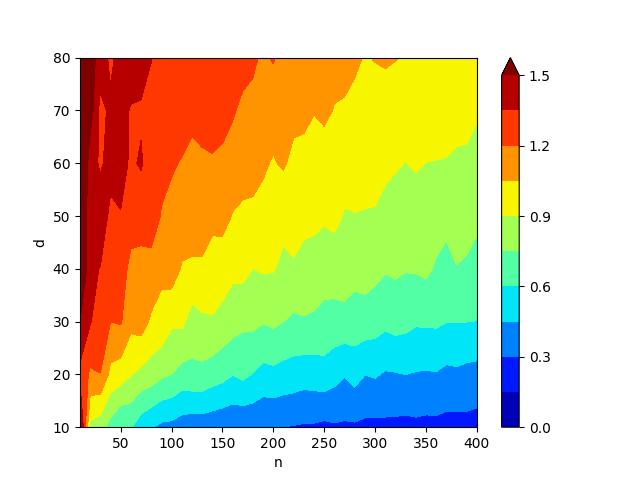}  
    }
    \centering
    \subfigure[$\sigma=0.2$]{
      \centering
      \includegraphics[width=\figscale\textwidth]{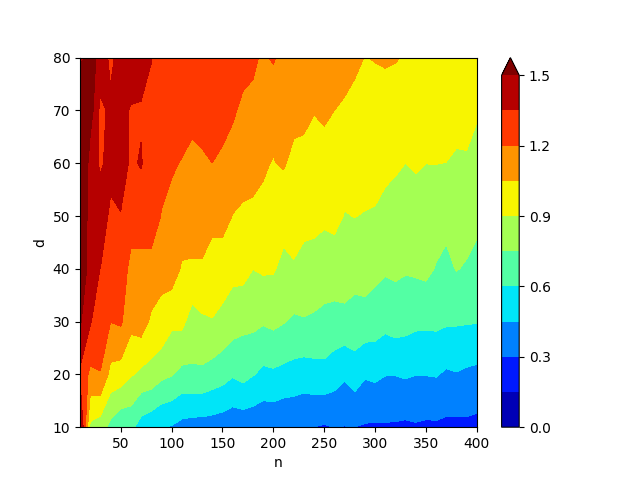}  
    }
    \caption{Averaged test error by training ReLU networks with normalization layer on the regularized non-convex problem \eqref{prob:reg} over $10$ independent trials. Here we set $k=3$ planted neurons which satisfy $\mfw_i^* = \mfe_i(i=1,2,3)$.}
    \label{fig:ncvx_train_normal_mn33}
\end{figure}

\appendices


\section*{Acknowledgment}
This work was partially supported by the National Science Foundation (NSF) under grants ECCS- 2037304, DMS-2134248, NSF CAREER award CCF-2236829, the U.S. Army Research
Office Early Career Award W911NF-21-1-0242, Stanford Precourt Institute, and the ACCESS – AI Chip Center for Emerging Smart Systems, sponsored by InnoHK funding, Hong Kong SAR.
\ifCLASSOPTIONcaptionsoff
  \newpage
\fi



%
\bibliographystyle{IEEEtran}
\bibliography{Newton}

\begin{thebibliography}{10}
\providecommand{\url}[1]{#1}
\csname url@samestyle\endcsname
\providecommand{\newblock}{\relax}
\providecommand{\bibinfo}[2]{#2}
\providecommand{\BIBentrySTDinterwordspacing}{\spaceskip=0pt\relax}
\providecommand{\BIBentryALTinterwordstretchfactor}{4}
\providecommand{\BIBentryALTinterwordspacing}{\spaceskip=\fontdimen2\font plus
\BIBentryALTinterwordstretchfactor\fontdimen3\font minus
  \fontdimen4\font\relax}
\providecommand{\BIBforeignlanguage}[2]{{%
\expandafter\ifx\csname l@#1\endcsname\relax
\typeout{** WARNING: IEEEtran.bst: No hyphenation pattern has been}%
\typeout{** loaded for the language `#1'. Using the pattern for}%
\typeout{** the default language instead.}%
\else
\language=\csname l@#1\endcsname
\fi
#2}}
\providecommand{\BIBdecl}{\relax}
\BIBdecl

\bibitem{amelunxen2014living}
D.~Amelunxen, M.~Lotz, M.~B. McCoy, and J.~A. Tropp, ``Living on the edge:
  Phase transitions in convex programs with random data,'' \emph{Information
  and Inference: A Journal of the IMA}, vol.~3, no.~3, pp. 224--294, 2014.

\bibitem{nnacr}
M.~Pilanci and T.~Ergen, ``Neural networks are convex regularizers: Exact
  polynomial-time convex optimization formulations for two-layer networks,''
  \emph{International Conference on Machine Learning (ICML)}, 2020.

\bibitem{ergen2020convex}
T.~Ergen and M.~Pilanci, ``Convex geometry and duality of over-parameterized
  neural networks,'' \emph{Journal of Machine Learning Research}, vol.~22, no.
  212, pp. 1--63, 2021.

\bibitem{ergen2021global}
------, ``Global optimality beyond two layers: Training deep relu networks via
  convex programs,'' in \emph{International Conference on Machine
  Learning}.\hskip 1em plus 0.5em minus 0.4em\relax PMLR, 2021, pp. 2993--3003.

\bibitem{wang2022convexgeometryofbp}
Y.~Wang and M.~Pilanci, ``The convex geometry of backpropagation: Neural
  network gradient flows converge to extreme points of the dual convex
  program,'' \emph{International Conference on Learning Representations
  (ICLR)}, 2022.

\bibitem{wang2020hidden}
Y.~Wang, J.~Lacotte, and M.~Pilanci, ``The hidden convex optimization landscape
  of two-layer relu neural networks: an exact characterization of the optimal
  solutions,'' \emph{International Conference on Learning Representations
  (ICLR)}, 2022.

\bibitem{zhao2006model}
P.~Zhao and B.~Yu, ``On model selection consistency of lasso,'' \emph{The
  Journal of Machine Learning Research}, vol.~7, pp. 2541--2563, 2006.

\bibitem{candes2005decoding}
E.~J. Candes and T.~Tao, ``Decoding by linear programming,'' \emph{IEEE
  transactions on information theory}, vol.~51, no.~12, pp. 4203--4215, 2005.

\bibitem{candes2008restricted}
E.~J. Candes, ``The restricted isometry property and its implications for
  compressed sensing,'' \emph{Comptes rendus mathematique}, vol. 346, no. 9-10,
  pp. 589--592, 2008.

\bibitem{sohl2015deep}
J.~Sohl-Dickstein, E.~Weiss, N.~Maheswaranathan, and S.~Ganguli, ``Deep
  unsupervised learning using nonequilibrium thermodynamics,'' in
  \emph{International Conference on Machine Learning}.\hskip 1em plus 0.5em
  minus 0.4em\relax PMLR, 2015, pp. 2256--2265.

\bibitem{ho2020denoising}
J.~Ho, A.~Jain, and P.~Abbeel, ``Denoising diffusion probabilistic models,''
  \emph{Advances in Neural Information Processing Systems}, vol.~33, pp.
  6840--6851, 2020.

\bibitem{ergen2021demystifying}
T.~Ergen, A.~Sahiner, B.~Ozturkler, J.~Pauly, M.~Mardani, and M.~Pilanci,
  ``Demystifying batch normalization in relu networks: Equivalent convex
  optimization models and implicit regularization,'' \emph{International
  Conference on Learning Representations (ICLR)}, 2022.

\bibitem{ergen2020implicit}
T.~Ergen and M.~Pilanci, ``Implicit convex regularizers of cnn architectures:
  Convex optimization of two- and three-layer networks in polynomial time,''
  \emph{International Conference on Learning Representations (ICLR)}, 2021.

\bibitem{bartan2021neural}
B.~Bartan and M.~Pilanci, ``Neural spectrahedra and semidefinite lifts: Global
  convex optimization of polynomial activation neural networks in fully
  polynomial-time,'' \emph{arXiv preprint arXiv:2101.02429}, 2021.

\bibitem{sahiner2022attention}
A.~Sahiner, T.~Ergen, B.~Ozturkler, J.~Pauly, M.~Mardani, and M.~Pilanci,
  ``Unraveling attention via convex duality: Analysis and interpretations of
  vision transformers,'' \emph{International Conference on Machine Learning},
  2022.

\bibitem{sahiner2021hidden}
A.~Sahiner, T.~Ergen, B.~Ozturkler, B.~Bartan, J.~Pauly, M.~Mardani, and
  M.~Pilanci, ``Hidden convexity of wasserstein gans: Interpretable generative
  models with closed-form solutions,'' \emph{International Conference on
  Learning Representations}, 2021.

\bibitem{resnet}
\BIBentryALTinterwordspacing
K.~He, X.~Zhang, S.~Ren, and J.~Sun, ``Deep residual learning for image
  recognition,'' \emph{CoRR}, vol. abs/1512.03385, 2015. [Online]. Available:
  \url{http://arxiv.org/abs/1512.03385}
\BIBentrySTDinterwordspacing

\bibitem{Jacot2018}
\BIBentryALTinterwordspacing
A.~Jacot, F.~Gabriel, and C.~Hongler, ``Neural tangent kernel: Convergence and
  generalization in neural networks,'' 2018. [Online]. Available:
  \url{https://arxiv.org/abs/1806.07572}
\BIBentrySTDinterwordspacing

\bibitem{Du2018}
\BIBentryALTinterwordspacing
S.~S. Du, X.~Zhai, B.~Poczos, and A.~Singh, ``Gradient descent provably
  optimizes over-parameterized neural networks,'' 2018. [Online]. Available:
  \url{https://arxiv.org/abs/1810.02054}
\BIBentrySTDinterwordspacing

\bibitem{allen2018learning}
Z.~Allen-Zhu, Y.~Li, and Y.~Liang, ``Learning and generalization in
  overparameterized neural networks, going beyond two layers,'' \emph{arXiv
  preprint arXiv:1811.04918}, 2018.

\bibitem{liu2020linearity}
C.~Liu, L.~Zhu, and M.~Belkin, ``On the linearity of large non-linear models:
  when and why the tangent kernel is constant,'' \emph{Advances in Neural
  Information Processing Systems}, vol.~33, pp. 15\,954--15\,964, 2020.

\bibitem{zou2018stochastic}
D.~Zou, Y.~Cao, D.~Zhou, and Q.~Gu, ``Stochastic gradient descent optimizes
  over-parameterized deep relu networks,'' \emph{arXiv preprint
  arXiv:1811.08888}, 2018.

\bibitem{allen2019convergence}
Z.~Allen-Zhu, Y.~Li, and Z.~Song, ``A convergence theory for deep learning via
  over-parameterization,'' in \emph{International Conference on Machine
  Learning}.\hskip 1em plus 0.5em minus 0.4em\relax PMLR, 2019, pp. 242--252.

\bibitem{allen2019learning}
Z.~Allen-Zhu, Y.~Li, and Y.~Liang, ``Learning and generalization in
  overparameterized neural networks, going beyond two layers,'' \emph{Advances
  in neural information processing systems}, vol.~32, 2019.

\bibitem{Chizat2018}
\BIBentryALTinterwordspacing
L.~Chizat, E.~Oyallon, and F.~Bach, ``On lazy training in differentiable
  programming,'' 2018. [Online]. Available:
  \url{https://arxiv.org/abs/1812.07956}
\BIBentrySTDinterwordspacing

\bibitem{arora2019fine}
S.~Arora, S.~Du, W.~Hu, Z.~Li, and R.~Wang, ``Fine-grained analysis of
  optimization and generalization for overparameterized two-layer neural
  networks,'' in \emph{International Conference on Machine Learning}.\hskip 1em
  plus 0.5em minus 0.4em\relax PMLR, 2019, pp. 322--332.

\bibitem{chen2019much}
Z.~Chen, Y.~Cao, D.~Zou, and Q.~Gu, ``How much over-parameterization is
  sufficient to learn deep relu networks?'' \emph{arXiv preprint
  arXiv:1911.12360}, 2019.

\bibitem{neyshabur2018role}
B.~Neyshabur, Z.~Li, S.~Bhojanapalli, Y.~LeCun, and N.~Srebro, ``The role of
  over-parametrization in generalization of neural networks,'' in
  \emph{International Conference on Learning Representations}, 2018.

\bibitem{hochreiter1997flat}
S.~Hochreiter and J.~Schmidhuber, ``Flat minima,'' \emph{Neural computation},
  vol.~9, no.~1, pp. 1--42, 1997.

\bibitem{jastrzkebski2017three}
S.~Jastrz{k{e}}bski, Z.~Kenton, D.~Arpit, N.~Ballas, A.~Fischer, Y.~Bengio, and
  A.~Storkey, ``Three factors influencing minima in sgd,'' \emph{arXiv preprint
  arXiv:1711.04623}, 2017.

\bibitem{wu2017towards}
L.~Wu, Z.~Zhu \emph{et~al.}, ``Towards understanding generalization of deep
  learning: Perspective of loss landscapes,'' \emph{arXiv preprint
  arXiv:1706.10239}, 2017.

\bibitem{goyal2017accurate}
P.~Goyal, P.~Doll{\'a}r, R.~Girshick, P.~Noordhuis, L.~Wesolowski, A.~Kyrola,
  A.~Tulloch, Y.~Jia, and K.~He, ``Accurate, large minibatch sgd: Training
  imagenet in 1 hour,'' \emph{arXiv preprint arXiv:1706.02677}, 2017.

\bibitem{hoffer2017train}
E.~Hoffer, I.~Hubara, and D.~Soudry, ``Train longer, generalize better: closing
  the generalization gap in large batch training of neural networks,''
  \emph{arXiv preprint arXiv:1705.08741}, 2017.

\bibitem{luo2020many}
V.~Luo and Y.~Wang, ``How many factors influence minima in sgd?'' \emph{arXiv
  preprint arXiv:2009.11858}, 2020.

\bibitem{ge2018learning}
R.~Ge, R.~Kuditipudi, Z.~Li, and X.~Wang, ``Learning two-layer neural networks
  with symmetric inputs,'' \emph{arXiv preprint arXiv:1810.06793}, 2018.

\bibitem{bakshi2019learning}
A.~Bakshi, R.~Jayaram, and D.~P. Woodruff, ``Learning two layer rectified
  neural networks in polynomial time,'' in \emph{Conference on Learning
  Theory}.\hskip 1em plus 0.5em minus 0.4em\relax PMLR, 2019, pp. 195--268.

\bibitem{tian2017analytical}
Y.~Tian, ``An analytical formula of population gradient for two-layered relu
  network and its applications in convergence and critical point analysis,'' in
  \emph{International Conference on Machine Learning}.\hskip 1em plus 0.5em
  minus 0.4em\relax PMLR, 2017, pp. 3404--3413.

\bibitem{soltanolkotabi2017learning}
M.~Soltanolkotabi, ``Learning relus via gradient descent,'' \emph{arXiv
  preprint arXiv:1705.04591}, 2017.

\bibitem{li2017convergence}
Y.~Li and Y.~Yuan, ``Convergence analysis of two-layer neural networks with
  relu activation,'' \emph{arXiv preprint arXiv:1705.09886}, 2017.

\bibitem{zhang2019learning}
X.~Zhang, Y.~Yu, L.~Wang, and Q.~Gu, ``Learning one-hidden-layer relu networks
  via gradient descent,'' in \emph{The 22nd international conference on
  artificial intelligence and statistics}.\hskip 1em plus 0.5em minus
  0.4em\relax PMLR, 2019, pp. 1524--1534.

\bibitem{zhong2017recovery}
K.~Zhong, Z.~Song, P.~Jain, P.~L. Bartlett, and I.~S. Dhillon, ``Recovery
  guarantees for one-hidden-layer neural networks,'' in \emph{International
  conference on machine learning}.\hskip 1em plus 0.5em minus 0.4em\relax PMLR,
  2017, pp. 4140--4149.

\bibitem{cover1965geometrical}
T.~M. Cover, ``Geometrical and statistical properties of systems of linear
  inequalities with applications in pattern recognition,'' \emph{IEEE
  transactions on electronic computers}, no.~3, pp. 326--334, 1965.

\bibitem{yuan2006model}
M.~Yuan and Y.~Lin, ``Model selection and estimation in regression with grouped
  variables,'' \emph{Journal of the Royal Statistical Society: Series B
  (Statistical Methodology)}, vol.~68, no.~1, pp. 49--67, 2006.

\bibitem{vershynin2018high}
R.~Vershynin, \emph{High-dimensional probability: An introduction with
  applications in data science}.\hskip 1em plus 0.5em minus 0.4em\relax
  Cambridge university press, 2018, vol.~47.

\bibitem{van2009conditions}
S.~A. Van De~Geer and P.~B{\"u}hlmann, ``On the conditions used to prove oracle
  results for the lasso,'' \emph{Electronic Journal of Statistics}, vol.~3, pp.
  1360--1392, 2009.

\bibitem{de2012decoupling}
V.~De~la Pena and E.~Gin{\'e}, \emph{Decoupling: from dependence to
  independence}.\hskip 1em plus 0.5em minus 0.4em\relax Springer Science \&
  Business Media, 2012.

\bibitem{ghorbani2021linearized}
B.~Ghorbani, S.~Mei, T.~Misiakiewicz, and A.~Montanari, ``Linearized two-layers
  neural networks in high dimension,'' \emph{The Annals of Statistics},
  vol.~49, no.~2, pp. 1029--1054, 2021.

\end{thebibliography}

\end{document}